\newtheorem{condition}[theorem]{Condition}
\title{\huge Understanding the Benefits of SimCLR Pre-Training in Two-Layer Convolutional Neural Networks}
\author
{
Han Zhang\thanks{Department of Statistics and Actuarial Science, The University of Hong Kong; e-mail: {\tt hzhang23@connect.hku.hk}}
~~~and~~~
	Yuan Cao\thanks{Department of Statistics and Actuarial Science, The University of Hong Kong;
  e-mail: {\tt yuancao@hku.hk}}
}
\date{}
\begin{document}

\maketitle

\begin{abstract}
  SimCLR is one of the most popular contrastive learning methods for vision tasks. It pre-trains deep neural networks based on a large amount of unlabeled data by teaching the model to distinguish between positive and negative pairs of augmented images. It is believed that SimCLR can pre-train a deep neural network to learn efficient representations that can lead to a better performance of future supervised fine-tuning. Despite its effectiveness, our theoretical understanding of the underlying mechanisms of SimCLR is still limited. In this paper, we theoretically introduce a case study of the SimCLR method. Specifically, we consider training a two-layer convolutional neural network (CNN) to learn a toy image data model. We show that, under certain conditions on the number of labeled data, SimCLR pre-training combined with supervised fine-tuning achieves almost optimal test loss. Notably, the label complexity for SimCLR pre-training is far less demanding compared to direct training on supervised data. Our analysis sheds light on the benefits of SimCLR in learning with fewer labels.
\end{abstract}

\section{Introduction}



In recent years, self-supervised learning has emerged as a promising machine learning paradigm, offering a way to learn meaningful representations from vast amounts of unlabeled data.
Self-supervised learning is of vital importance because the success of supervised learning is dependent on the accessibility of a large number of carefully labeled data, while the high-quality labeled data is expensive and time-consuming to obtain.
Self-supervised learning leverages a large amount of unlabeled data to pre-train the representations for the following supervised fine-tuning learning task without requiring more labeled data.

Major categories of self-supervised learning methods include contrastive learning \citep{oord2018representation, chen2020simple, he2020momentum} and generative self-supervised learning \citep{kingma2013auto, goodfellow2014generative}.
Among the various self-supervised learning methods, SimCLR \citep{chen2020simple} algorithm has gained significant attention due to its simplicity and remarkable performance for vision tasks. SimCLR leverages the idea of contrastive learning, where representations are learned by maximizing agreement between differently augmented views of the same image while minimizing agreement between views of different images. Compared with purely supervised learning, this approach has demonstrated exceptional capabilities in capturing high-level semantic information and achieving state-of-the-art results on various downstream tasks.

While such a contrastive learning method has demonstrated great success from empirical perspective, it remains relatively unclear how the pre-training scheme helps improve the performance of the fine-tuning.
Some recent papers have been devoted to the theoretical understanding of contrastive learning  \citep{saunshi2019theoretical, tsai2020demystifying, wen2021toward}. 
\cite{saunshi2019theoretical} introduced a theoretical framework that contains latent classes and presented the generalization bound to demonstrate provable good performance and reduced sample complexity of downstream tasks, but this framework fails to explain the case of over-parameterization.
\cite{tsai2020demystifying} provided an information-theoretical framework based on mutual information to explain the good performance of self-supervised learning.
However, the aforementioned papers focus on the setting where the hypothesis class has limited complexity, and cannot handle the setting where the number of model parameters is larger than the sample size, which is more common in modern deep learning, especially for vision tasks. \cite{wen2021toward} considered the over-parameterized setting and analyzed the feature learning process of contrastive learning and the dependence of learned features on  data augmentations. However, \cite{wen2021toward} only analyzed a very specific type of learning task solved by a slightly  non-standard optimization algorithm. Therefore, our current theoretical understanding of contrastive learning is still quite limited.  

%

%
In this paper, how SimCLR pre-training method makes improvements in the fine-tuning training of a two-layer convolutional neural network (CNN) is studied.
The case we focus on is a binary classification problem on a toy image data model, which has been studied in a series of recent works \citep{cao2022benign,jelassi2022towards,kou2023benign}. Under certain conditions on the number of labeled and unlabeled data and the signal-to-noise ratio (SNR), we study SimCLR-based pre-training followed by a supervised fine-tuning, and establish convergence as well as generalization guarantees of the obtained  two-layer CNN.


The contributions of this paper are summarized as follows.
\begin{itemize}[leftmargin = *]
    \item 
    We consider using CNNs given by SimCLR pre-training and supervised fine-tuning to learn a certain type of signal-noise data studied in recent works.  Under certain conditions on the amount of unlabeled data and labeled data, we establish training loss convergence guarantees as well as generalization guarantees for two-layer CNNs trained by SimCLR pre-training and supervised fine-tuning. Specifically, our results demonstrate that, although the training losses in the pre-training and fine-tuning are both highly non-convex, the training of the CNN will successfully minimize the training loss. Moreover, although we consider an over-parameterized setting where the CNN overfits the training data, our results demonstrate that the CNN will achieve a small test loss.
    
    
    \item The learning task we investigate is a standard toy data model that has been studied by many recent works \citep{cao2022benign,jelassi2022towards,kou2023benign}. This enables an easy comparison between the theoretical guarantees of learning with SimCLR pre-training and those without SimCLR pre-training. In particular, \cite{cao2022benign} showed that, direct supervised learning on the data model can achieve small test loss if and (almost) only if the condition $n\cdot \mathrm{SNR}^q = \tilde\Omega(1)$ holds, where $\mathrm{SNR}$ is a notion of the signal-to-noise ratio, $n$ is the labeled sample size, and $q$ is a constant related to the activation function. In comparison, the label complexity for SimCLR pre-training followed by supervised fine-tuning is far less demanding: our results show that when the unlabeled sample size $n_0$ and labeled sample size $n$ satisfy $n_0\cdot \mathrm{SNR}^2 = \tilde\Omega(1)$, $n = \tilde\Omega(1)$, the obtained CNN can achieve small training and test losses. Clearly, our analysis demonstrates the advantage of SimCLR in reducing label complexity in learning tasks with low signal-to-noise ratio. Our result serves as a concrete example where SimCLR-based pre-training is provably helpful.
    %
    \item In our theoretical analysis, we introduce many novel analysis tools that enable the study of the SimCLR algorithm. In particular, we establish a key result that, up to sufficiently many iterations, the SimCLR pre-training updates can be characterized  by the power method based on a matrix defined by the pre-training data and their augmentations. Notably, although our analysis focuses on a very specific toy data model, we believe similar results on the connection between SimCLR and power method should hold for more general settings. Therefore, this result may be of independent interest. Moreover, all of our analysis of the SimCLR algorithm should also hold for the case where the data inputs are generated from Gaussian mixtures. Therefore, a side product of our analysis is the effectiveness guarantee for using SimCLR to learn Gaussian mixtures.
    %
\end{itemize}
\paragraph{Notation.} 
$\|\cdot\|_2$ denotes the $\ell^2$-norm.~
$\|\cdot\|_F$ denotes the Frobenius norm.
For two sequences $\{a_n\}$ and $\{b_n\}$, denote $a_n = O(b_n)$ if there exists some absolute constant $C> 0$ and $N> 0$ such that $|a_n|\le C |b_n|$ holds for all $n\geq N$. Denote $a_n = \Omega(b_n)$ if there exist some absolute constant $C >0$ and $N > 0$ such that $|a_n|\ge C |b_n|$ holds for all $n\geq N$. Denote $a_n = \Theta(b_n)$ if both $a_n = O(b_n)$ and $a_n = \Omega(b_n)$ hold. 
$\tilde{O}(\cdot),~\tilde{\Omega}(\cdot),~\tilde{\Theta}(\cdot)$ are used to omit the logarithmic factors in these notations.

\section{Related Work}
\paragraph{Self-supervised Learning.}
Self-supervised learning has won great success in application and has covered many important fields of machine learning, for example, natural language processing \citep{mikolov2013efficient,devlin2018bert}, and computer vision \citep{chen2020simple}. 
As one of the important methods for vision tasks, contrastive learning began with learning the latent variable of the data \citep{carreira2005contrastive}.
\cite{cole2022does} investigated the factors that improve the performance of contrastive learning, but the analysis was still from an empirical aspect.
From a theoretical perspective, some works have also been done to understand contrastive learning.
\cite{wang2020understanding} identified alignment and uniformity as two important properties concerned with contrastive loss, and proved that contrastive loss optimizes these properties asymptotically.
Information theory was also introduced to establish theoretical framework that explains why contrastive learning works \citep{tsai2020demystifying, tian2020makes}.

\paragraph{Feature Learning Theory of Neural Networks.}
There are a series of works that provide theoretical foundations for feature learning theory of neural networks.
%
%
\cite{frei2022benign} considered the benign overfitting phenomenon in two-layer neural networks with smoothed leaky ReLU activations when both model and learning dynamics are nonlinear.
\cite{cao2022benign} analyzed the benign overfitting that appeared in the supervised learning of two-layer convolutional neural networks, and showed arbitrary small training and test loss can be achieved under certain conditions on SNR, but the analysis was based on specified initialization distribution.
Without requiring the smoothness of the activation function, \cite{kou2023benign} focused on benign overfitting of two-layer ReLU convolutional neural networks with label-flipping noise. They showed that, under mild conditions, the neural networks can achieve near-zero training loss and Bayes optimal test risk.
\cite{xu2023benign} demonstrated that benign overfitting and grokking provably appeared in the feature learning of two-layer ReLU neural networks trained by gradient descent on non-linearly separable data distribution. 
\cite{meng2023benign} analyzed one category of XOR-type classification tasks with label-flipping noises, showing two-layer ReLU convolutional neural networks can achieve near Bayes-optimal accuracy.
\cite{kou2023does} investigated a semi-supervised learning method that combines pre-training with linear probing for two-layer neural networks, and found the semi-supervised approach achieves nearly zero test loss.
However, how self-supervised learning improves the training of neural networks remains largely unexplored.

\section{Problem Setting}\label{sec:problemsetting}
This section presents the problem setup in this paper. We first introduce the data model considered in this paper, and then introduce the detailed setup for SimCLR pre-training and supervised fine-tuning respectively.





\subsection{A data model for the case study}\label{sec:problemsetup}
In this paper, we consider a simple binary classification task. We consider a toy data model that has been studied in a series of recent works \citep{cao2022benign,jelassi2022towards,kou2023benign}.
This paper is motivated by \citet{cao2022benign}, which analyzed the performance of direct supervised learning. This paper adopts the same toy data model as \citet{cao2022benign} to enable a direct comparison between SimCLR with fine-tuning and direct supervised learning.


\begin{definition}\label{def:trainingdata} 
Let  $\bmu\in\RR^d$ be a fixed vector. Each data point $(\xb,y)$ is given in the format of $\xb = [\xb^{(1)\top}, \xb^{(2)\top}]^\top\in\RR^{2d}$. Assume the data is generated from the following distribution $\cD$: 
    \begin{enumerate}[leftmargin = *]
        \item The label $y$ is generated as a Rademacher random variable with $y\in\{-1,1\}$.
        \item A noise vector $\bxi$ is generated from the Gaussian distribution $N(\mathbf{0}, \sigma_p^2 \cdot (\Ib - \bmu \bmu^\top \cdot \| \bmu \|_2^{-2}))$.
        \item One of the two patches $\xb^{(1)}, \xb^{(2)}$ generated and is assigned as $\xb^{(1)}= y\cdot \bmu$ which represents the signal patch, and the other patch is assigned as $\bxi$ which represents the noise patch.
    \end{enumerate}
\end{definition}

As is commented in \citet{cao2022benign}, the data distribution in Definition~\ref{def:trainingdata} is motivated by image data, where the data inputs consists of multiple patches, and only some of the patches are directly related to its corresponding label. Therefore, this data model is particularly suitable to study SimCLR, which is originally proposed for vision tasks. 
The data input consists of several patches among which some are signal patches and rest are noise patches. 
Following the notation given in \citet{cao2022benign}, we define the signal-to-noise ratio (SNR) as $\mathrm{SNR}= \| \bmu \|/(\sigma_p \sqrt{d})$ since $\|\bxi\|_2\approx\sigma_p \sqrt{d}$ when dimension $d$ is large. 
%

\subsection{Self-supervised pre-training with SimCLR}
We consider using SimCLR \citep{chen2020simple} to pre-train a simple linear CNN on unlabeled data. The linear CNN $\Fb(\Wb, \xb) $ with output dimension $2m$ is defined as follows:
\begin{align*}
    [\Fb(\Wb, \xb)]_r = \la \wb_r,\xb^{(1)} \ra +  \la \wb_r,\xb^{(2)} \ra, \quad r\in [2m].
\end{align*}
The linear CNN model defined above is the composition of a linear CNN layer with $2m$ convolution filters $\mathrm{LinearConv}(\cdot): \RR^{2d} \rightarrow \RR^{2m\times 2}$ and a fixed linear projection head $\mathrm{ProjHead}(\cdot): \RR^{2m\times 2}\rightarrow \RR^{2m}$ defined as follows:
\begin{align*}
    &[ \mathrm{LinearConv}(\Wb,\xb)]_{r,p}:= \la \wb_r , \xb^{(p)} \ra, ~  r\in[2m],~ p\in[2], ~\text{ and }~\mathrm{ProjHead}(\Zb) := \Zb [1 ~1]^\top.
\end{align*}
Then it is clear that 
\begin{align}\label{eq:CNN_pretrain}
    \Fb(\Wb, \xb) = \mathrm{ProjHead}[ \mathrm{LinearConv}(\Wb,\xb) ].
\end{align}

Suppose that we are given an unlabeled dataset $S_{\mathrm{unlabeled}} = \{ \xb^{\text{pre-training}}_1,\ldots, \xb^{\text{pre-training}}_{n_0}\}$, where $\xb^{\text{pre-training}}_i,~i\in[n_0]$ are unlabeled data independently generated from  distribution $\cD$ in Definition \ref{def:trainingdata}. In SimCLR, we train the linear CNN model $\Fb(\Wb,\xb)$ as follows:
For each data point $\xb^{\text{pre-training}}_i,~i\in[n_0]$, we apply data augmentation to obtain an augmented data point $\tilde\xb^{\text{pre-training}}_i$. We consider an ideal setting that $\tilde\xb^{\text{pre-training}}_i$ is generated from $ \PP(\xb | y= y_i)$. Then following Definition \ref{def:trainingdata}, it holds that
$\tilde{\xb}^{\text{pre-training}}_i = [\tilde{\xb}_i^{(1)\top},\tilde{\xb}_i^{(2)\top}]^\top$, where one of $\tilde{\xb}_i^{(1)}$, $\tilde{\xb}_i^{(2)}$ is randomly assigned as $y_i\cdot \bmu$ while the other is assigned as $\tilde{\bxi}_i \sim N(\mathbf{0} , \sigma_p^2\cdot(\Ib -\bmu\bmu^\top\cdot\| \bmu \|_2^{-2}))$. Based on $\xb^{\text{pre-training}}_i$ and $\tilde{\xb}^{\text{pre-training}}_i$, $i\in [n_0]$, we define similarity scores
\begin{align*}
    \text{sim}_i= \big\la \Fb(\Wb,\xb^{\text{pre-training}}_i),\Fb(\Wb,\tilde{\xb}^{\text{pre-training}}_i) \big\ra, \quad \text{sim}_{i,i^{\prime}}= \big\la \Fb(\Wb,\xb^{\text{pre-training}}_i),\Fb(\Wb,\xb^{\text{pre-training}}_{i^{\prime}}) \big\ra
\end{align*}
for all $i,i' \in [n_0]$ with $i\neq i'$.

The convolution filters $\wb_r\in\RR^{d},~r\in[2m]$ in the SimCLR pre-training are given by Gaussian initialization, namely $\wb_r^{(0)}\sim N(\mathbf{0}, {\sigma_{0}}^2 \Ib),~r\in[2m]$.
The loss function of the pre-training stage is defined as
\begin{align*}
    L(\Wb)= -\frac{1}{n_0}\sum_{i=1}^{n_0} \log\left( \frac{\exp(\text{sim}_i/\tau)}{\exp(\text{sim}_i/\tau) + \sum_{i^{\prime}\neq i}\exp(\text{sim}_{i,i^{\prime}}/\tau) } \right)
\end{align*}
where $\tau$ is a constant. $\tau$ is the temperature parameter in SimCLR \citep{chen2020simple}.
In the pre-training stage,  gradient descent with learning rate $\eta$ is used to minimize  $L(\Wb)$.

\subsection{Supervised Fine-tuning}
Suppose that a labeled training dataset is given as $S = \{(\xb^{\text{fine-tuning}}_1,y_1),\ldots, (\xb^{\text{fine-tuning}}_n,y_n)\}$, where $n$ is the number of labeled data, and each data point $(\xb^{\text{fine-tuning}}_i,y_i),~i\in[n]$ is generated from the distribution $\cD$ in Definition \ref{def:trainingdata}.
The two-layer convolutional neural network model is considered in the fine-tuning stage, namely~$f(\Wb, \xb) = F_{+1}(\Wb_{+1},\xb) - F_{-1}(\Wb_{-1},\xb)$,~
where
\begin{align}\label{eq:CNN}
F_j(\Wb_j,\xb)= \frac{1}{m}{\sum_{r=1}^m} \big[\sigma(\la\wb_{j,r},\xb^{(1)}\ra) + \sigma(\la\wb_{j,r}, \xb^{(2)}\ra)\big],~~j=\pm 1,
\end{align}
where $m$ is the number of filters in $F_{+1}$ and $F_{-1}$  respectively, $\sigma(z)=(\max\{0,z\})^q$ is the $\text{ReLU}^q$ activation function with $q>2$. 

\begin{figure}[htp]
\begin{center}
\includegraphics[width=\textwidth]{./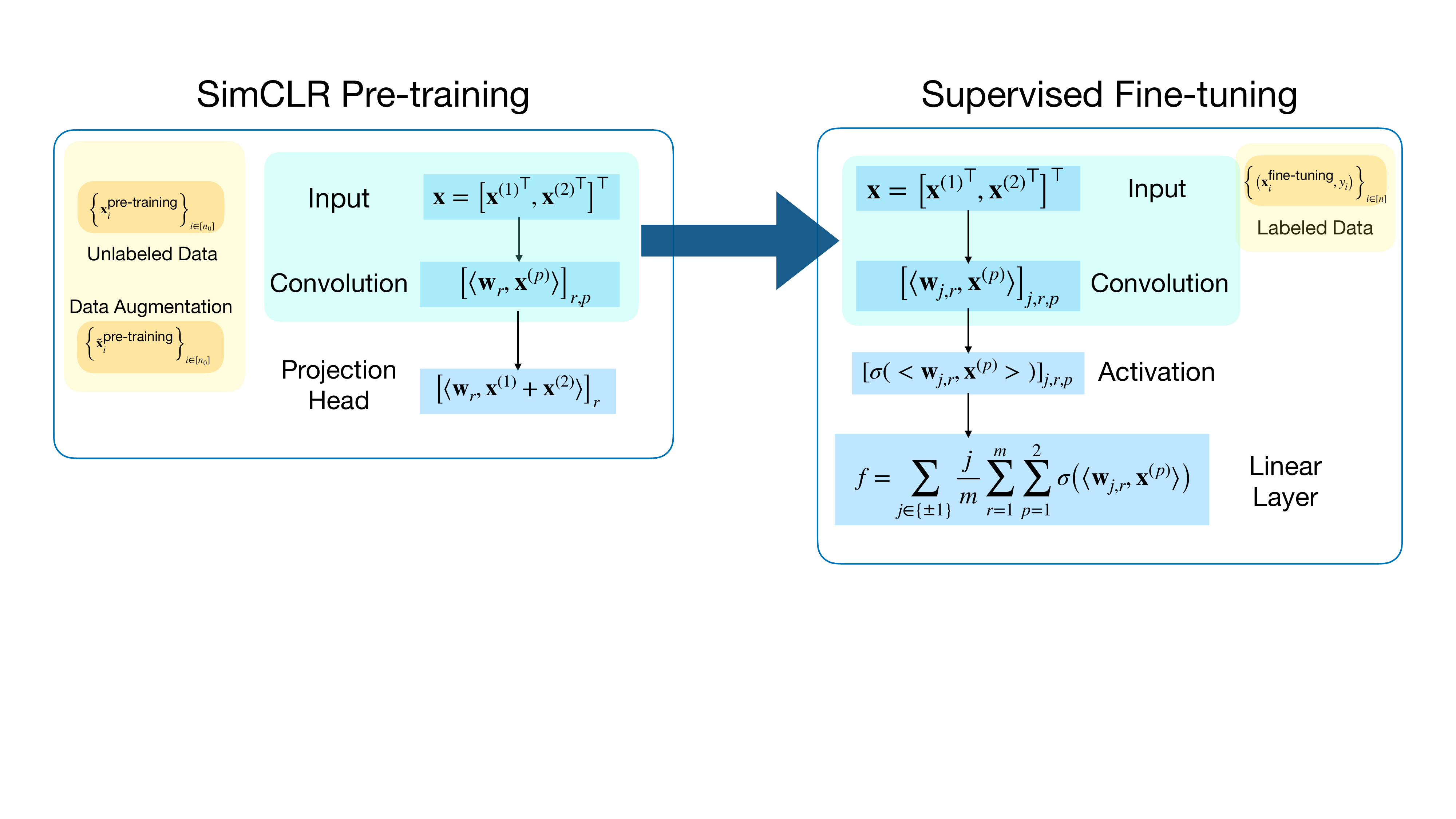}
\caption{Illustration of the SimCLR pre-training and supervised fine-tuning stages. 
 }\label{fig:diag1}
\end{center}
\end{figure}

In this paper, we consider the initialization $\Wb^{(0)}$ of the CNN \eqref{eq:CNN} given by the result of SimCLR pre-training. 
%
%
For the $2m$ filters $\wb_r^{(T_{\mathrm{SimCLR}})},~r\in[2m]$ obtained in the pre-training stage, we randomly sample $m$ filters  out of $2m$ filters and assign them to the initialization of filters in  $F_{+1}$, denote $\cM \subseteq [2m]$ the collection of these filters with $|\cM|=m$.
Correspondingly, the rest $m$ filters $\wb_r^{(T_{\mathrm{SimCLR}})},~r\in[2m]\cap\cM^c$ is assigned to the  initialization of filters in $F_{-1}$.
Therefore, the initialization of the supervised fine-tuning is given as
\begin{align*}
     \{\wb_{1,r}^{(0)}, r\in[m]\} = \{\wb_{r}^{(T_{\mathrm{SimCLR}})}, r\in\cM\}, \quad
     \{\wb_{-1,r}^{(0)}, r\in[m]\} =  \{\wb_{r}^{(T_{\mathrm{SimCLR}})}, r\in\cM^c\}.
\end{align*}
Clearly, the above procedure is equivalent to the practical implementations of SimCLR, where after pre-training, we essentially remove the projection head part of the model and attach another classifier to perform supervised fine-tuning. 

The training of this convolutional neural network is conducted by minimizing the empirical cross-entry loss function, namely
\begin{align*}
    L_S(\bW) = \frac{1}{n} {\sum_{i=1}^n} \ell[ y_i \cdot f(\Wb,\xb^{\text{fine-tuning}}_i) ],
\end{align*}
where $S$ denotes the training dataset in the fine-tuning stage given by Definition \ref{def:trainingdata}, and $\ell(z) = \log(1 + \exp(-z))$.
Based on Definition \ref{def:trainingdata}, the corresponding true loss is defined as 
$L_{\cD}(\Wb) \coloneqq \EE_{(\xb,y )\sim \cD} ~\ell[ y \cdot f(\Wb,\xb) ]$.

In the fine-tuning stage, based on the gradient descent algorithm and the CNN structure defined in \eqref{eq:CNN}, the filters of the CNN $\wb_{j,r},j\in\{-1,+1\},r\in[m]$ is trained according to the following gradient descent updating rules
\begin{align}
    \wb_{j,r}^{(t+1)} &= \wb_{j,r}^{(t)} - \eta \cdot \nabla_{\wb_{j,r}} L_S(\bW^{(t)}).\label{eq:updating rules}
\end{align}
The whole two-stage training procedure is depicted in
Figure~\ref{fig:diag1}.

\section{Main result}
In this section, we present the main learning guarantee of the two-layer CNN given by SimCLR pre-training and supervised fine-tuning. We first introduce several assumptions on  the number of unlabeled training  samples $n_0$, the number of labeled data samples $n$, the dimension $d$, the number of convolutional filters $m$, Gaussian initialization scale $\sigma_0$, and  the learning rate $\eta$.


\begin{condition}\label{con:pre-train_fine_tune}
    Suppose that the following conditions hold for the pre-training and fine-tuning stage,
    \begin{enumerate}[leftmargin = *]
        \item The number of unlabeled training samples $n_0$ satisfies $n_0 = \tilde\Omega(\max\{ \mathrm{SNR}^{-2} , 1 \})$.
        \item The number of labeled training samples $n$ satisfies $n = \tilde\Omega( 1  )$.
        \item The dimension $d$ is sufficiently large: $d \geq \tilde\Omega(n^{\frac{-6}{q-2}} \mathrm{SNR}^{-\frac{6q}{q-2}}\cdot \max\{ n_0^{-1}, \mathrm{SNR}^{-2} \} + n_0^4)$. 
        \item The number of convolutional filters $m$ satisfies $m = \Omega(\log(1 / \delta))$.
        \item Gaussian initialization scale $\sigma_0$ for SimCLR pre-training is sufficiently small:
            \begin{align*}
                \sigma_0 \leq \tilde{O} \big( \min\{ 1, d^{-1}  n^{\frac{4}{q-2}} \mathrm{SNR}^{\frac{4q}{q-2}} \cdot  \|\bmu\|_2^{-2} \} \cdot \min\{1,\mathrm{SNR}^{-1}, \mathrm{SNR}^{-2}\} \big).
            \end{align*}
        \item The learning rate satisfies that $\eta = \tilde{O}\big(\min\big\{ ( \sigma_{p}^{2}d)^{-1}, (\sigma_p \sqrt{d})^{-2}, \|\bmu\|_2^{-2} \big\}\big) $.
        %
    \end{enumerate}
\end{condition}
While Condition~\ref{con:pre-train_fine_tune} gives a long list of conditions, we remark that most of these assumptions are non-essential. In fact, the first two conditions in Condition~\ref{con:pre-train_fine_tune} are the key conditions in this paper. The condition on $d$ is essentially assuming that the learning happens in a sufficiently over-parameterized setting, which is common in a series of recent works \citep{chatterji2020finite,cao2021risk,cao2022benign,jelassi2022towards,kou2023benign}. The condition on the number of convolutional filters $m$ is mild as we only require $m = \tilde\Omega(1)$. Finally, the conditions on the initialization scale $\sigma_0$ and the learning rate $\eta$ essentially just assumes that the optimization is appropriately set up, and can be achieved by simply implementing a small enough initialization scale and a small enough learning rate.

The following Theorem~\ref{thm:main thm}  summarizes the main result of this paper.
\begin{theorem}\label{thm:main thm}
    Under Condition \ref{con:pre-train_fine_tune}, for any $\epsilon > 0$,
if $n_0\cdot \mathrm{SNR}^2 = \tilde\Omega(1)$,
then within $T_{\mathrm{SimCLR}} =  \tilde\Omega( \eta^{-1}\tau \|\bmu \|_2^{-2} )$ iterations of pre-training and $T = \tilde{\Theta}( \eta^{-1} m\sigma_0^{-(q-2)} \| \bmu \|_2^{-2} +  \eta^{-1}\epsilon^{-1} m^{3}\| \bmu \|_2^{-2})$ iterations of fine-tuning,  the obtained network in the fine-tuning stage satisfies that:
there exists some $0 \leq t \leq T$, such that
\begin{enumerate}[leftmargin = *]
        
        \item The training loss converges to $\epsilon$, i.e., $L_S(\Wb^{(t)}) \leq \epsilon$.
        \item The trained CNN achieves a small test loss: $L_{\cD}(\Wb^{(t)})\leq 6\epsilon + \exp(-\tilde\Omega(n^{2}))$.
    \end{enumerate}
\end{theorem}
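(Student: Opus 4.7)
The plan is to decompose the argument into two essentially independent pieces matching the two-stage training procedure, and then stitch them together through a careful characterization of the weights produced at the end of pre-training.

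For the pre-training stage, I would first Taylor-expand the SimCLR loss around the small Gaussian initialization. Because $\sigma_0$ is chosen very small, at initialization all similarity scores $\text{sim}_i$ and $\text{sim}_{i,i'}$ are of order $\sigma_0^2$, so the softmax inside the logarithm is close to the uniform distribution over the $n_0$ contrastive candidates. Expanding to leading order, the gradient of $L(\Wb)$ with respect to each filter $\wb_r$ should, up to lower-order terms, take the form $-\frac{c}{\tau}\Mb \wb_r$ where $\Mb = \frac{1}{n_0}\sum_{i=1}^{n_0} (\xb_i^{\text{pre-training}(1)} + \xb_i^{\text{pre-training}(2)})(\tilde\xb_i^{(1)} + \tilde\xb_i^{(2)})^\top$ after symmetrization, plus a negative contribution from the cross terms. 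Using $y_i^2 = 1$, the signal contributions aggregate deterministically into a rank-one piece proportional to $\bmu\bmu^\top$, while the noise-noise and noise-signal cross terms concentrate around zero at a rate controlled by $n_0^{-1/2}$ and $(\sigma_p^2 d/n_0)^{1/2}$ respectively, thanks to the independence of $\bxi_i$ and $\tilde\bxi_i$. Under $n_0\cdot\mathrm{SNR}^2 = \tilde\Omega(1)$ and the dimension condition, the signal eigenvalue $\Theta(\|\bmu\|_2^2)$ dominates the spectral norm of the noise part, so the top eigenvector of $\Mb$ is $\Theta(1)$-aligned with $\bmu/\|\bmu\|_2$.

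The next step is to show that the gradient descent iterates track this linearized (power-method) dynamics for $T_{\mathrm{SimCLR}} = \tilde\Omega(\eta^{-1}\tau\|\bmu\|_2^{-2})$ steps. This requires an inductive argument controlling the higher-order softmax corrections: as long as all $|\la\wb_r,\bmu\ra|$ and $|\la\wb_r,\bxi_i\ra|$ stay below some polylog threshold, the softmax remains in its linear regime, so $\wb_r^{(t+1)} \approx (\Ib + \eta\tau^{-1}\Mb)\wb_r^{(t)}$ plus errors that are lower-order in $\sigma_0$. A standard power-iteration bound then gives that $\la \wb_r^{(T_{\mathrm{SimCLR}})}, \bmu\ra / \|\bmu\|_2$ grows to some polylog value, while $|\la\wb_r^{(T_{\mathrm{SimCLR}})},\bxi\ra|$ for any fresh noise $\bxi$ remains of order $\sigma_0\sigma_p\sqrt{d}$ times a polylog factor. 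Crucially, this means the filters entering the fine-tuning stage have signal-to-noise ratio on their inner products that is vastly boosted relative to a fresh Gaussian initialization of scale $\sigma_0$; equivalently, the effective signal-to-noise ratio on the learned features scales like $\tilde\Theta(1)\cdot \mathrm{SNR}$ rather than $\tilde\Theta(\sigma_0\|\bmu\|_2)$.

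For the fine-tuning stage, I would follow the signal/noise coefficient decomposition framework of Cao-Chen-Gu (2022) and Kou et al.\ (2023): write $\wb_{j,r}^{(t)} = \wb_{j,r}^{(0)} + j\cdot\gamma_{j,r}^{(t)}\bmu/\|\bmu\|_2^2 + \sum_{i=1}^n \rho_{j,r,i}^{(t)}\bxi_i/\|\bxi_i\|_2^2$, and track the coefficients through the gradient updates of \eqref{eq:updating rules}. The key difference from the pure supervised analysis is the initial condition: $\gamma_{j,r}^{(0)}$ inherits the $\la\wb_r^{(T_{\mathrm{SimCLR}})},\bmu\ra$ boost from pre-training, so even when $n\cdot\mathrm{SNR}^q$ is below the threshold that supervised learning requires, the signal term is already above noise at initialization. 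I would then redo a two-phase analysis: a short initial phase where the signal coefficient $\gamma$ grows polynomially and saturates the ReLU$^q$ activation on signal patches, followed by a convergence phase where the logistic loss behaves essentially convexly in the signal direction and gradient descent drives $L_S(\Wb^{(t)})$ below $\epsilon$ in $\tilde O(\eta^{-1}\epsilon^{-1}m^3\|\bmu\|_2^{-2})$ steps. Throughout, one must show that the noise coefficients $\rho_{j,r,i}^{(t)}$ stay small, which follows because the per-sample noise magnitude times the small $\sigma_0$-scale initial activations stays bounded; the condition $n=\tilde\Omega(1)$ is exactly what is needed for concentration on the fine-tuning set. Finally, for the test loss, the fact that $\gamma^{(t)} \gg \max_i |\rho_{\cdot,\cdot,i}^{(t)}|$ means a fresh noise $\bxi$ does not activate the filters significantly, so on a fresh sample $y\cdot f(\Wb^{(t)},\xb)$ concentrates around its training value up to $\exp(-\tilde\Omega(n^2))$-probability events, giving $L_\cD \le 6\epsilon + \exp(-\tilde\Omega(n^2))$.

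The main obstacle I anticipate is justifying the linearization of the SimCLR dynamics over the full $T_{\mathrm{SimCLR}}$ horizon. The softmax normalization couples all pairs of samples, so controlling the non-linear corrections requires uniform control of all $\binom{n_0}{2}$ inner products $\la\Fb(\Wb,\xb_i),\Fb(\Wb,\xb_{i'})\ra$ simultaneously. A clean way is to set up an induction hypothesis bounding $\max_{r,i}|\la\wb_r^{(t)},\xb_i^{\text{pre-training}}\ra|$ by something like $\mathrm{polylog}\cdot (1+\eta\tau^{-1}\|\bmu\|_2^2)^t\cdot\sigma_0\cdot\mathrm{polylog}(n_0,d)$, which stays below the linearization radius until the signal component has been amplified by the desired factor; the dimension condition $d \ge \tilde\Omega(n_0^4)$ in Condition~\ref{con:pre-train_fine_tune} is presumably what makes all the required noise-concentration bounds work out with enough slack. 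A secondary technical point is that the random split $\cM$ of pre-trained filters between $F_{+1}$ and $F_{-1}$ symmetrizes the initialization for fine-tuning, which one uses to show that both classes' filters simultaneously develop positive signal alignment.
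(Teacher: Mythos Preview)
Your proposal is essentially correct and follows the same architecture as the paper: linearize the SimCLR dynamics as a perturbed power iteration on a data-defined matrix whose top eigenvector is close to $\bmu/\|\bmu\|_2$ (the paper's Lemmas~\ref{lemma:SimCLR_iterates} and~\ref{lemma:spectral_decomposition}), extract a signal--noise decomposition of the pre-trained weights (Theorem~\ref{thm:SimCLR_signal_noise}), and feed this into the Cao--Chen--Belkin--Gu two-phase supervised analysis (Theorem~\ref{thm:signal_learning_main}).

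Two refinements relative to your sketch are worth flagging. First, for the power-method phase the paper does not inductively control $\max_{r,i}|\la\wb_r^{(t)},\xb_i\ra|$ as you suggest; instead it tracks, for each filter, the pair $\bigl(\la\wb_r^{(t)},\vb_1\ra,\ \|\Pb_{\cX,\vb_1}^\perp\wb_r^{(t)}\|_2\bigr)$, where $\vb_1$ is the leading eigenvector of the linearized matrix $\Ab$ and $\Pb_{\cX,\vb_1}^\perp$ projects onto the orthogonal complement of $\vb_1$ within the data span. The induction hypotheses are \emph{ratio} bounds between these two quantities, which directly yield the exponential separation from the eigenvalue gap; your coarser uniform bound would work but requires an extra alignment calculation at the end. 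Second, on the random split $\cM$: the needed symmetry does not come from the split itself but from the Gaussian initialization, which produces disjoint index sets $\cI^+,\cI^-\subseteq[2m]$ of size $\Theta(m)$ with $\la\wb_r^{(0)},\vb_1\ra$ of opposite signs. The split's only role (Lemma~\ref{lemma:filter_convertion}) is a simple counting argument ensuring at least one $r\in\cI^+$ lands in $\cM$ and one $r\in\cI^-$ in $\cM^c$, so that after relabeling both $\gamma_{+1,r_+}^{(0)}$ and $\gamma_{-1,r_-}^{(0)}$ are positive.
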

Theorem~\ref{thm:main thm} presents the convergence and generalization guarantees of the two-layer CNN trained by SimCLR pre-training with supervised fine-tuning. 
%
According to Theorem~\ref{thm:main thm}, training loss convergence and small generalization are guaranteed for the two-layer CNNs when $n_0 = \tilde\Omega(\max\{ \mathrm{SNR}^{-2}, 1 \}$ and $n\geq\Omega (\log(1/\delta))$ together with some other conditions listed in Condition~\ref{con:pre-train_fine_tune}. Here we comment that the case where $\mathrm{SNR} = \Omega(1)$ is a very easy setting, as according to \citet[Theorem 4.3]{cao2022benign}, small test loss can be achieved even if there is only $\tilde\Omega(1)$ training data. Therefore, we see that Condition~\ref{con:pre-train_fine_tune} essentially requires that $n_0\cdot \mathrm{SNR}^2 = \tilde\Omega(1)$ and $n = \tilde\Omega(1)$. 

%

To compare the results of SimCLR pre-training followed by supervised fine-tuning with direct supervised learning, we cite the following theoretical results for direct supervised learning from \citet{cao2022benign}. 
\begin{theorem}[Theorems 4.3 and 4.4 in \citet{cao2022benign}, bounds of direct supervised learning]\label{thm:thm_in_benign}
    For any $\epsilon > 0$, let $T=\tilde{\Theta}(\eta^{-1}m\cdot n\sigma_0^{-(q-2)}\cdot\max\{(\sigma_{p}\sqrt{d})^{-q},\| \boldsymbol{\mu}\|_2^{-q}\}+\eta^{-1}\epsilon^{-1}nm^{3}\cdot\max\{(\sigma_{p}\sqrt{d})^{-2}, \|\boldsymbol{\mu}\|_2^{-2}\})$. 
    Under Condition 4.2 in \citet{cao2022benign}, the following results hold:
    \begin{enumerate}[leftmargin = *]
        \item If $n\cdot\mathrm{SNR}^{q}=\tilde\Omega(1)$, then with probability at least $1-d^{-1}$, there exists $0\leq t\leq T$ such that:
        \begin{enumerate}[leftmargin = *]
            \item The training loss converges to $\epsilon$, i.e., $L_S(\mathbf{w}^{(t)})\leq\epsilon$.
            \item The trained CNN achieves a small test loss: $L_{\mathcal{D}}({\mathbf{w}}^{(t)})=6\epsilon+\exp(-n^{2})$.
        \end{enumerate}
        \item If $n^{-1}\cdot\mathrm{SNR}^{-q}=\tilde\Omega(1)$, then with probability at least $1-d^{-1}$, there exists $0\leq t\leq T$ such that:
        \begin{enumerate}[leftmargin = *]
            \item The training loss converges to $\epsilon$, i.e., $L_S(\mathbf{w}^{(t)})\leq\epsilon$.
            \item The trained CNN has a constant order test loss: $L_{\mathcal{D}}({\mathbf{w}}^{(t)})=\Theta(1)$.
        \end{enumerate}
    \end{enumerate}
\end{theorem}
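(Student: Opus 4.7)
The plan is to prove Theorem~\ref{thm:thm_in_benign} by adapting the signal-noise decomposition framework of \citet{cao2022benign}. The central tool is to decompose each filter along three near-orthogonal directions: its Gaussian initialization $\wb_{j,r}^{(0)}$, the signal direction $\bmu$, and the $n$ noise directions $\bxi_1,\dots,\bxi_n$ from the training sample. Writing
\begin{align*}
\wb_{j,r}^{(t)} = \wb_{j,r}^{(0)} + j\gamma_{j,r}^{(t)}\|\bmu\|_2^{-2}\bmu + \sum_{i=1}^n \rho_{j,r,i}^{(t)}\|\bxi_i\|_2^{-2}\bxi_i,
\end{align*}
the filter-level gradient descent update in \eqref{eq:updating rules} reduces to scalar recursions for the coefficients $\gamma_{j,r}^{(t)}$ (signal) and $\rho_{j,r,i}^{(t)}$ (noise memorization), which is tractable to analyze. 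A preliminary step is to establish standard high-dimensional concentration facts: $\|\bxi_i\|_2^2 = \Theta(\sigma_p^2 d)$, $|\la\bxi_i,\bxi_{i'}\ra|=\tilde O(\sigma_p^2\sqrt d)$ for $i\neq i'$, and the initialization inner products $|\la\wb_{j,r}^{(0)},\bmu\ra|$, $|\la\wb_{j,r}^{(0)},\bxi_i\ra|$ are small. These make the decomposition nearly orthogonal and effectively decouple the per-sample noise coefficients.

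Next, I would run a two-phase dynamics argument. In phase one, while the logistic derivative $\ell'(y_i f(\Wb^{(t)},\xb_i))$ is of constant order, the dominant growth of the coefficients follows an approximate tensor power iteration of the form
\begin{align*}
\gamma^{(t+1)}\approx \gamma^{(t)} + C\eta\|\bmu\|_2^2(\gamma^{(t)})^{q-1},\qquad
\rho^{(t+1)}\approx \rho^{(t)} + C\eta\sigma_p^2 d\cdot n^{-1}(\rho^{(t)})^{q-1},
\end{align*}
stemming from the $\sigma(z)=z_+^q$ activation. Standard blow-up-time analysis for $x^{(t+1)}=x^{(t)}+c(x^{(t)})^{q-1}$ shows a coefficient with initialization scale $\sigma_0\|\bmu\|_2$ (resp.\ $\sigma_0\sigma_p\sqrt d$) reaches $\Theta(1)$ in $\tilde\Theta(\eta^{-1}\sigma_0^{-(q-2)}\|\bmu\|_2^{-q})$ (resp.\ $\tilde\Theta(\eta^{-1}\sigma_0^{-(q-2)}n(\sigma_p\sqrt d)^{-q})$) steps. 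The race between these two timescales is exactly governed by $n\cdot(\|\bmu\|_2/(\sigma_p\sqrt d))^q = n\cdot\mathrm{SNR}^q$, giving the stated threshold. In phase two, once the dominant coefficient type reaches order one, a convex-style online analysis on the logistic loss, using the convexity of $\ell$ and the homogeneity of $\sigma$, drives $L_S(\Wb^{(t)})\leq \epsilon$ within the stated extra $\tilde\Theta(\eta^{-1}\epsilon^{-1}nm^3\cdot\max\{(\sigma_p\sqrt d)^{-2},\|\bmu\|_2^{-2}\})$ iterations.

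Finally, for the train/test gap I would evaluate $f(\Wb^{(t)},\xb_{\mathrm{new}})$ on a fresh sample whose noise $\bxi_{\mathrm{new}}$ is independent of the training set. Concentration of $\la\wb_{j,r}^{(t)},\bxi_{\mathrm{new}}\ra$ shows the $\rho_{j,r,i}^{(t)}$ terms contribute only $\tilde O(\sigma_p^2\sqrt d/(\sigma_p^2 d))=\tilde O(d^{-1/2})$ per sample, so the test margin $y_{\mathrm{new}}f(\Wb^{(t)},\xb_{\mathrm{new}})$ is driven by the signal component $\sum_r \sigma(j\gamma_{j,r}^{(t)})$. In the regime $n\cdot\mathrm{SNR}^q=\tilde\Omega(1)$, signal wins in phase one, $\gamma_{j,r}^{(t)}=\Theta(1)$, and a sub-Gaussian tail argument on the residual noise yields $L_\cD(\Wb^{(t)})\leq 6\epsilon + \exp(-\tilde\Omega(n^2))$. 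In the regime $n^{-1}\mathrm{SNR}^{-q}=\tilde\Omega(1)$, noise memorization wins: training loss vanishes via the $\rho_{j,r,i}^{(t)}$'s (each input's own $\bxi_i$ aligned), but on fresh samples these contributions average out to $o(1)$, so $y_{\mathrm{new}}f(\Wb^{(t)},\xb_{\mathrm{new}})=o(1)$ and $L_\cD(\Wb^{(t)})=\Theta(1)$.

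The hardest part is sustaining the coefficient scale separations throughout $t\leq T$ via a delicate simultaneous induction: one must keep two-sided bounds on $\gamma_{j,r}^{(t)}$ and on every $\rho_{j,r,i}^{(t)}$, preserve the sign pattern (only $\rho_{j,r,i}^{(t)}$ with $j=y_i$ grows, the opposite-sign coefficients stay small), and ensure the activation pattern of $\sigma(\cdot)=(\cdot)_+^q$ remains consistent with the power-iteration heuristic, all while the $\tilde O(n\cdot d^{-1/2})$ cross-contamination between distinct $\bxi_i$'s is dominated by the intended diagonal growth. The assumption $d=\tilde\Omega(n^{4})$-type bound in the Cao et al.\ condition, together with the $\sigma_0$ upper bound, is precisely what makes this induction close.
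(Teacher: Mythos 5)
Theorem~\ref{thm:thm_in_benign} is not proved in this paper: it is quoted verbatim from \citet{cao2022benign} (their Theorems~4.3 and~4.4) as a benchmark for comparison. Your sketch is, in essence, a faithful recapitulation of the argument in \citet{cao2022benign}: the decomposition of each filter around its Gaussian initialization into $\wb_{j,r}^{(0)}$, a signal coefficient $\gamma_{j,r}^{(t)}$ along $\bmu$, and per-sample noise coefficients $\rho_{j,r,i}^{(t)}$; the near-orthogonality concentration facts; the two-phase dynamics in which phase one is a tensor-power race between signal and noise growth (with crossover exactly at $n\cdot\mathrm{SNR}^{q}\asymp 1$) and phase two is an online-convexity argument driving $L_S$ below~$\epsilon$; the fresh-sample test-loss evaluation in which the memorized $\rho_{j,r,i}^{(t)}$'s contribute only $\tilde O(n/\sqrt d)$ because $\bxi_{\mathrm{new}}$ is independent of the training noises; and the closing simultaneous induction preserving coefficient scales and sign patterns, made feasible by the large-$d$ and small-$\sigma_0$ conditions. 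This is also the same machinery that the present paper reuses and generalizes in Theorem~\ref{thm:signal_learning_main} for the fine-tuning phase (where the decomposition is taken around $\wb_{j,r}^{\perp}$ rather than $\wb_{j,r}^{(0)}$ because the initialization is no longer isotropic Gaussian). Two minor inaccuracies in your sketch that do not change the outcome: the phase-one signal recursion has an additional $1/m$ factor from the $1/m$ prefactor in $F_j$, which is where the $m$ in the stated $T$ comes from; and the $\rho$ contributions on a fresh sample are not cancelling out on average so much as each individually small, bounded by the cross-inner-product scale $\tilde O(\sigma_p^2\sqrt d)/\|\bxi_i\|_2^2=\tilde O(d^{-1/2})$ and summed over $n$ training samples.
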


Theorem~\ref{thm:thm_in_benign} above gives an upper bound on the test loss under the condition  $n\cdot\mathrm{SNR}^{q}=\tilde\Omega(1)$, while also gives a lower bound on the test loss under the almost complementary condition $n^{-1}\cdot\mathrm{SNR}^{-q}=\tilde\Omega(1)$. Note that we have the condition that the supervised sample size $n=\tilde{\Omega}(1)$ in \citet[Condition 4.2]{cao2022benign}. Therefore, Theorem~\ref{thm:thm_in_benign} demonstrates that for direct supervised learning to achieve small test accuracy, it is necessary to have a sample size at least $\tilde{\Omega}(\max\{ \mathrm{SNR}^{-q},1\})$. This result also indicates that the learning task is relatively challenging when $\mathrm{SNR}\ll1$, as smaller $\mathrm{SNR}$ requires more labeled training data. 
Notably, Theorem~\ref{thm:thm_in_benign} also showed that when $n^{-1} \cdot \mathrm{SNR}^{-q} = \tilde\Omega(1)$, direct supervised learning with $n$ labeled data is guaranteed to result in constant level test loss.

In comparison, Theorem \ref{thm:main thm} demonstrates that for SimCLR pre-training combined with supervised fine-tuning, 
as long as the unlabeled sample size $n_0$ is sufficiently large ($n_0=\tilde{\Omega}(\max\{\mathrm{SNR}^{-2},1\})$),
 $n = \tilde\Omega(1)$ labeled data suffice to lead to small test loss.
As we discussed, direct supervised learning requires that the number of labeled training data satisfies $n=\tilde{\Omega}(\max\{\mathrm{SNR}^{-q},1\})$. Comparing the above results, we can conclude that direct supervised learning requires more label complexity to achieve small test loss, especially in challenging tasks where the signal-to-noise ratio is low.
The clear difference between Theorem~\ref{thm:main thm} and Theorem~\ref{thm:thm_in_benign} demonstrates the effectiveness of SimCLR pre-training.


\begin{remark}
    To demonstrate the practical value of the theoretical results in this paper, experiments on both synthetic and real-world datasets are provided in Appendix \ref{sec:experiment}.
    Our theoretical results on the advantage of SimCLR pre-training and the results on direct supervised learning in \citet{cao2022benign} together indicate that when the signal-to-noise ratio is low, SimCLR pre-training followed by supervised fine-tuning may require far less labeled data compared with direct supervised learning.
    The experiments in this paper present typical cases where SimCLR pre-training followed by supervised fine-tuning achieves a much smaller test loss while direct supervised learning achieves a larger test loss.
\end{remark}


\section{Proof Sketch}\label{sec:proofsketch}
In this section, we discuss the key proof steps of Theorem~\ref{thm:main thm}. Our analysis heavily focuses on the pre-training of the linear CNN with SimCLR. Therefore, for the simplicity of the notation, we omit the superscripts of the data used for pre-training: we denote by $y_1,\ldots, y_{n_0}$ the (unseen) labels of the pre-training data, by $\bxi_1,\ldots, \bxi_{n_0}$ the noise patches in the data inputs, and denote by  $\tilde\bxi_1,\ldots, \tilde\bxi_{n_0}$ the noise patches in the augmented data inputs. On the other hand, the noise patches in the labeled data inputs are denoted as $\bxi_1^{\text{fine-tuning}},\ldots, \bxi_n^{\text{fine-tuning}}$.
We also introduce the following notations:
\begin{align*}
    \zb_i = y_i\cdot \bmu + \bxi_i, \quad \tilde\zb_i = y_i\cdot \bmu + \tilde\bxi_i,\quad i\in[n_0].
\end{align*}
The above notation is motivated by the observation that the linear CNN we consider in the pre-training stage is essentially a function of the summation of the two patches of the data input. Further notice that $\zb_i$, $\tilde\zb_i$, $i\in [n_0]$ defined above are essentially Gaussian mixture data, and hence our proof is essentially based on an analysis of the performance of SimCLR in learning Gaussian mixtures. 




\noindent\textbf{A characterization of SimCLR pre-training by power method.} Our proof for SimCLR is based on a key observation that the SimCLR updates of each CNN filter is very similar to those of a power method based on a matrix defined by the data. Specifically, we have the following lemma. 
\begin{lemma}\label{lemma:SimCLR_iterates}
For any $M>0$, suppose that $n_0 = \tilde\Omega(\max\{ \mathrm{SNR}^{-2} , 1 \})$, and
\begin{align*}
    \sigma_0 \leq \tilde{O} \big( \min\{ 1, d^{-1} M^{-\frac{4}{q-2}} n^{\frac{4}{q-2}} \mathrm{SNR}^{\frac{4q}{q-2}} \cdot  \|\bmu\|_2^{-2} \} \cdot \min\{1,\mathrm{SNR}^{-1}, \mathrm{SNR}^{-2}\} \big).
\end{align*}
Let $\Ab = \frac{\eta}{n_0^2\tau} \sum_{i=1}^{n_0} \sum_{i^{\prime}\neq i}  
    ( \zb_i\tilde{\zb_i}^{\top} + \tilde{\zb_i}\zb_i^{\top} - \zb_i \zb_{i^{\prime}}^{\top} - \zb_{i^{\prime}}\zb_i^{\top})$. 
Then for any ${T}_{\mathrm{SimCLR}}$ satisfying
\begin{align*}
    [1 + (1 - \cE_{\mathrm{SimCLR}} )\|\Ab\|_2]^{T_{\mathrm{SimCLR}}} = \tilde{O} \big( \max\{ M^{\frac{1}{q-2}} n^{-\frac{1}{q-2}} \mathrm{SNR}^{-\frac{q}{q-2}}  \} \big),
\end{align*}
with $\cE_{\mathrm{SimCLR}} = \tilde{O} ( \max\{ \mathrm{SNR}^{-1} n_0^{-1/2},  n_0^{-1}\})$ as specified in Lemma~\ref{lemma:spectral_decomposition},
we have for $t = 0,1,\ldots T_{\mathrm{SimCLR}}$,  the iterates of SimCLR satisfy
    \begin{align*}
        \wb_r^{(t+1)} = \wb_r^{(t)} + ( \Ab + \bXi^{(t)} ) \wb_r^{(t)} ,
    \end{align*}
where $\bXi^{(0)},\ldots, \bXi^{(T_{\mathrm{SimCLR}})}\in \RR^{d\times d}$ are matrices whose columns and rows are in the subspaces $\mathrm{span}\{ \bmu, \bxi_1,\ldots, \bxi_{n_0}, \tilde\bxi_1,\ldots, \tilde\bxi_{n_0} \}$ and $\mathrm{span}\{ \bmu^\top, \bxi_1^\top,\ldots, \bxi_{n_0}^\top, \tilde\bxi_1^\top,\ldots, \tilde\bxi_{n_0}^\top \}$ respectively, and  
    \begin{align*}
        \| \bXi^{(t)} \|_2 \leq \sigma_0  \cdot \| \Ab \|_2,
    \end{align*}
for all $t=0,\ldots, T_{\mathrm{SimCLR}}$, where
\begin{align*}
    \bXi^{(t)} =  -\frac{\eta}{n_0^2 \tau}  \sum_{i=1}^{n_0} 
        \sum_{i^{\prime}\neq i}  \left( \frac{n_0\cdot \exp(\text{sim}^{(t)}_{i,i^{\prime}}/\tau )}{\exp(\text{sim}^{(t)}_i/\tau) + \sum_{i''\neq i}\exp(\text{sim}^{(t)}_{i,i^{\prime}}/\tau) } -  1 \right)  (\zb_i \zb_{i^{\prime}}^{\top} + \zb_{i^{\prime}}\zb_i^{\top}  - \zb_i\tilde{\zb_i}^{\top} -\tilde{\zb_i}\zb_i^{\top}).
\end{align*}
\end{lemma}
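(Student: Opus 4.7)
The plan is to compute the SimCLR gradient in closed form, identify $\Ab \wb_r^{(t)}$ as its leading part, and inductively bound the residual $\bXi^{(t)}\wb_r^{(t)}$ via a softmax linearization.

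First, I would exploit the linearity $[\Fb(\Wb,\xb)]_r = \langle \wb_r,\zb\rangle$ with $\zb = \xb^{(1)}+\xb^{(2)}$, so that $\text{sim}_i = \sum_{r=1}^{2m}\langle \wb_r,\zb_i\rangle \langle \wb_r,\tilde\zb_i\rangle$ and $\text{sim}_{i,i'}$ has the analogous form. Let $p_i^{(t)}$ and $q_{i,i'}^{(t)}$ denote the softmax weights of the SimCLR loss at iteration $t$. Direct differentiation combined with the identity $1-p_i^{(t)} = \sum_{i'\neq i}q_{i,i'}^{(t)}$ yields
\begin{align*}
-\eta\nabla_{\wb_r}L(\Wb^{(t)}) = \frac{\eta}{n_0\tau}\sum_{i=1}^{n_0}\sum_{i'\neq i} q_{i,i'}^{(t)}\big[(\zb_i\tilde\zb_i^\top+\tilde\zb_i\zb_i^\top) - (\zb_i\zb_{i'}^\top+\zb_{i'}\zb_i^\top)\big]\wb_r^{(t)}.
\end{align*}
Splitting $q_{i,i'}^{(t)} = 1/n_0 + (q_{i,i'}^{(t)} - 1/n_0)$ matches this expression exactly to $(\Ab + \bXi^{(t)})\wb_r^{(t)}$ with the $\bXi^{(t)}$ stated in the lemma; the subspace constraints on $\bXi^{(t)}$ are immediate from its rank-one summand structure.

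Next, I would run an induction over $t \in \{0,\ldots,T_{\mathrm{SimCLR}}\}$ with the joint hypothesis that, for all $s \leq t$: (i) $\wb_r^{(s+1)} = \wb_r^{(s)} + (\Ab+\bXi^{(s)})\wb_r^{(s)}$; (ii) $\|\bXi^{(s)}\|_2 \leq \sigma_0\|\Ab\|_2$; and (iii) $\|\wb_r^{(s)}\|_2 \leq [1+(1+\sigma_0)\|\Ab\|_2]^s \|\wb_r^{(0)}\|_2$. Given (ii), claim (iii) follows from (i) by the operator-norm triangle inequality, so the substantive step is (ii). Since $n_0 q_{i,i'}^{(t)} = 1$ when all sim-over-$\tau$ values coincide, a first-order Taylor expansion of the softmax shows $|n_0 q_{i,i'}^{(t)} - 1|$ is linearly controlled by the sim-differences $|\text{sim}_i^{(t)}-\text{sim}_{i,i'}^{(t)}|/\tau$ and $|\text{sim}_{i,i''}^{(t)}-\text{sim}_{i,i'}^{(t)}|/\tau$ whenever these are $O(1)$.

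To bound the sim-differences, I would use the spectral structure of $\Ab$ from Lemma~\ref{lemma:spectral_decomposition}: its dominant eigendirection is essentially $\bmu$ with eigenvalue of order $\|\bmu\|_2^2/\tau$, and the remaining eigenvalues are smaller by the factor $\cE_{\mathrm{SimCLR}}$. Gaussian concentration at $t=0$ gives $\|\wb_r^{(0)}\|_2 = \tilde O(\sigma_0\sqrt d)$ together with $|\langle \wb_r^{(0)},\bmu\rangle|, |\langle \wb_r^{(0)},\bxi_i\rangle|, |\langle \wb_r^{(0)},\tilde\bxi_i\rangle|$ of order $\sigma_0$ times the respective vector norms. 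Under the induction, these inner products grow by at most $[1+(1+\cE_{\mathrm{SimCLR}})\|\Ab\|_2]^t$. In each sim-difference the $y_i^2\langle \wb_r,\bmu\rangle^2$ contribution either cancels (when $y_i=y_{i'}$) or is bounded by the label-concentration factor $\cE_{\mathrm{SimCLR}}$; what remains are cross terms containing at least one noise-direction inner product, which are of lower order. The stated upper bound on $[1+(1-\cE_{\mathrm{SimCLR}})\|\Ab\|_2]^{T_{\mathrm{SimCLR}}}$ is exactly calibrated so that each sim-difference remains $O(\sigma_0)$ throughout; multiplying $|n_0 q_{i,i'}^{(t)} - 1| = O(\sigma_0)$ by the operator norm of the rank-one matrix sum inside $\bXi^{(t)}$ and comparing to $\|\Ab\|_2$ then closes (ii). The main obstacle throughout is this coupled bookkeeping --- $\bXi^{(t)}$ depends on the sims, the sims on $\wb_r^{(t)}$, and $\wb_r^{(t)}$ on earlier $\bXi^{(s)}$ --- which forces careful separate tracking of signal and noise components of each filter, exploiting cancellation in the sim-differences, and leveraging the $T_{\mathrm{SimCLR}}$ bound to prevent geometric amplification of noise from overwhelming the $\sigma_0$-scale tolerance.
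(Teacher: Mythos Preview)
Your gradient computation and the splitting $q_{i,i'}^{(t)} = 1/n_0 + (q_{i,i'}^{(t)}-1/n_0)$ to identify $(\Ab+\bXi^{(t)})\wb_r^{(t)}$ match the paper, as does the inductive skeleton. The gap is in how you bound $|n_0 q_{i,i'}^{(t)}-1|$. You propose to control sim-\emph{differences} and exploit cancellation of the $\la\wb_r,\bmu\ra^2$ contributions, asserting that for opposite-label pairs this term ``is bounded by the label-concentration factor $\cE_{\mathrm{SimCLR}}$.'' That is not correct: $\cE_{\mathrm{SimCLR}}$ controls the aggregate $n_0^{-1}|\sum_i y_i|$, but for an individual pair with $y_i\neq y_{i'}$ the difference $\text{sim}_i^{(t)}-\text{sim}_{i,i'}^{(t)}$ contains the non-cancelling term $2\sum_r\la\wb_r^{(t)},\bmu\ra^2$, which carries no factor of $\cE_{\mathrm{SimCLR}}$. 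Since bounding $\|\bXi^{(t)}\|_2$ requires pointwise control of each coefficient $n_0 q_{i,i'}^{(t)}-1$ (the rank-one summands $\zb_i\zb_{i'}^\top$ do not themselves exhibit any useful aggregate cancellation), the argument as written does not close.

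The paper sidesteps this with a cruder route that needs neither cancellation nor any spectral input from Lemma~\ref{lemma:spectral_decomposition} at this stage. It bounds the \emph{absolute} sim values via $|\text{sim}_i^{(t)}|,\,|\text{sim}_{i,i'}^{(t)}|\le 2\|\Wb^{(t)}\|_2^2(\|\bmu\|_2^2+2\sigma_p^2 d)$, tracks $\|\Wb^{(t)}\|_2\le(1+(1+\sigma_0)\|\Ab\|_2)^t\cdot O(\sigma_0\sqrt d)$ inductively, and applies $|\exp(z)-1|\le 2|z|$ for $|z|\le 1$ to get $|n_0 q_{i,i'}^{(t)}-1|=O\big(\sigma_0^2 d(\|\bmu\|_2^2+\sigma_p^2 d)\big)$ times the accumulated growth factor. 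The smallness assumption on $\sigma_0$ in the lemma is calibrated exactly so that this quantity, multiplied by $\tfrac{\eta}{\tau}(\|\bmu\|_2^2+\sigma_p^2 d)$, stays below $\sigma_0\cdot\tfrac{\eta}{\tau}\|\bmu\|_2^2\le\sigma_0\|\Ab\|_2$ for all $t\le T_{\mathrm{SimCLR}}$. The separate signal/noise tracking of $\wb_r^{(t)}$ you describe is not needed for this lemma; that finer decomposition enters only later, in the proof of Theorem~\ref{thm:SimCLR_signal_noise}.
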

Lemma~\ref{lemma:SimCLR_iterates} gives an accurate characterization on how the CNN filters are updated during the SimCLR pre-training. In particular, when the initialization scale $\sigma_0$ is small, Lemma~\ref{lemma:SimCLR_iterates} implies that each convolutional filter is approximately updated according to the formula $\wb_r^{(t+1)} = (\Ib + \Ab) \wb_r^{(t)} $, which is essentially a power method in learning the leading eigenvector of the matrix $(\Ib + \Ab)$, which is also the leading eigenvector of the matrix $\Ab$. 

\noindent\textbf{Spectral analysis of the matrix $\Ab$.} According to Lemma~\ref{lemma:SimCLR_iterates}, SimCLR may approximately align the CNN filters along the leading direction of the matrix $\Ab$, and the convergence rate depends on the eigenvalue gap between the largest eigenvalue and the second largest eigenvalue. Motivated by  this, we give the following lemma on the spectral decomposition of $\Ab$.
\begin{lemma}\label{lemma:spectral_decomposition}
    Let $\Ab$ be the matrix defined in Lemma~\ref{lemma:SimCLR_iterates}, and let $\lambda_i$, $\vb_i$, $i\in [d]$ be the eigenvalues and eigenvectors of $\Ab$ respectively, where $\lambda_i$, $i=1,\ldots, d$ are in decreasing order. Suppose that $d\geq n_0$, $n_0\cdot \mathrm{SNR}^2 = \tilde\Omega(1) $, and Condition \ref{con:pre-train_fine_tune} hold. Then there exists
        \begin{align*}
            \cE_{\mathrm{SimCLR}} = \tilde{O} ( \max\{ \mathrm{SNR}^{-1} n_0^{-1/2} ,  n_0^{-1}  \} ) = o(1),
        \end{align*}
    such that the following results hold: 
    \begin{itemize}[leftmargin = *]
        \item The first eigenvalue of $\Ab$ is significantly larger than the rest: 
        \begin{align*}
        (1 - \cE_{\mathrm{SimCLR}})\cdot \frac{2\eta}{\tau}\cdot \|\bmu\|^2_2 \leq \lambda_1 \leq (1 + \cE_{\mathrm{SimCLR}})\cdot\frac{2\eta}{\tau} \cdot \|\bmu\|^2_2, \quad \max_{i\geq 2}\lambda_i \leq \frac{\eta}{\tau}\cdot \|\bmu\|_2^2 \cdot \cE_{\mathrm{SimCLR}},
    \end{align*}
        \item The leading eigenvector of $\Ab$ aligns well with $\bmu$: Denote by $\Pb_{\bmu}^{\perp} = \Ib - \bmu \bmu^\top / \| \bmu \|_2^2$ the projection matrix onto $\mathrm{span}\{\bmu\}^{\perp}$. Then it holds that 
        \begin{align*}
        &|\la \vb_1, \bmu \ra| \geq (1 - \cE_{\mathrm{SimCLR}}^2)\cdot \|\bmu\|_2,\qquad \| \Pb_{\bmu}^{\perp} \vb_1 \|_2 \leq \cE_{\mathrm{SimCLR}}.
    \end{align*}
    \end{itemize}
\end{lemma}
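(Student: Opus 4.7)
The plan is to show that $\Ab$ is a small operator-norm perturbation of the rank-one ``signal'' matrix $\Mb_0 := \frac{2\eta(n_0-1)}{n_0 \tau} \bmu \bmu^\top$, and then transfer this perturbation bound into the desired spectral statements via Weyl's inequality (for eigenvalues) and the Davis--Kahan theorem (for the leading eigenvector). Since $\Mb_0$ has its single nonzero eigenvalue $\tfrac{2\eta(n_0-1)}{n_0\tau}\|\bmu\|_2^2 \approx \tfrac{2\eta}{\tau}\|\bmu\|_2^2$ along the direction $\bmu/\|\bmu\|_2$ and a clean eigengap to $0$, once we control $\|\Ab-\Mb_0\|_2$ by $\cE_{\mathrm{SimCLR}} \cdot \tfrac{\eta\|\bmu\|_2^2}{\tau}$ the lemma follows mechanically.

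\paragraph{Decomposition and noise bounds.} First I substitute $\zb_i = y_i\bmu + \bxi_i$ and $\tilde\zb_i = y_i\bmu + \tilde\bxi_i$ into the definition of $\Ab$, use $y_i^2=1$, and apply the identity $\sum_{i\neq i'} \zb_i \zb_{i'}^\top = \big(\sum_i \zb_i\big)\big(\sum_i \zb_i\big)^\top - \sum_i \zb_i\zb_i^\top$ to rewrite $\Ab = \Mb_0 + \bDelta$, where $\bDelta$ is a sum of three families of terms: (a) a signal--signal fluctuation $\tfrac{2\eta}{n_0^2 \tau}(n_0 - S_y^2)\bmu\bmu^\top$ with $S_y := \sum_i y_i$; (b) signal--noise cross terms of the form $\bmu\big(\sum_i y_i \tilde\bxi_i\big)^\top$, $\big(\sum_i y_i \bxi_i\big)\bmu^\top$, and their analogues coming from $\sum_{i\neq i'} \zb_i \zb_{i'}^\top$; and (c) noise--noise terms $\sum_i \bxi_i \tilde\bxi_i^\top$ (plus its transpose) and $\sum_{i\neq i'}\bxi_i\bxi_{i'}^\top$. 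Next, I bound each family by standard Gaussian concentration: (a) is $\tilde O(n_0^{-1}\cdot\eta\|\bmu\|_2^2/\tau)$ since $|S_y|=\tilde O(\sqrt{n_0})$; (b) is the dominant term, with $\sum_i y_i \tilde\bxi_i \sim N(\mathbf 0, n_0\sigma_p^2 \Pb_{\bmu}^\perp)$ yielding $\|\sum_i y_i\tilde\bxi_i\|_2 = \tilde O(\sigma_p\sqrt{n_0 d})$ and hence a contribution of order $\tfrac{\eta}{n_0\tau}\|\bmu\|_2 \sigma_p\sqrt{n_0 d} = \tilde O\big((\mathrm{SNR}^{-1} n_0^{-1/2})\cdot \tfrac{\eta\|\bmu\|_2^2}{\tau}\big)$; (c) is smaller: matrix Bernstein gives $\|\sum_i \bxi_i\tilde\bxi_i^\top\|_2 = \tilde O(\sigma_p^2\sqrt{n_0 d})$, and the identity above together with $\|\sum_i \bxi_i\|_2 = \tilde O(\sigma_p\sqrt{n_0 d})$ and Wishart-type bounds on $\|\sum_i \bxi_i\bxi_i^\top\|_2$ give a contribution of order $\cE_{\mathrm{SimCLR}}^2\cdot \tfrac{\eta\|\bmu\|_2^2}{\tau}$. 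Summing yields $\|\Ab - \Mb_0\|_2 \leq \cE_{\mathrm{SimCLR}}\cdot \tfrac{\eta\|\bmu\|_2^2}{\tau}$.

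\paragraph{Eigenvalue and eigenvector conclusions.} For $\lambda_1$, the lower bound follows from the variational characterization combined with the key fact that $\bxi_i\perp\bmu$ deterministically (by the covariance structure in Definition~\ref{def:trainingdata}): a direct computation gives $\bmu^\top \Ab \bmu = \tfrac{2\eta \|\bmu\|_2^4}{n_0^2\tau}(n_0^2 - S_y^2)$, so $\lambda_1(\Ab) \geq \bmu^\top \Ab\bmu/\|\bmu\|_2^2 \geq (1-\tilde O(1/n_0))\tfrac{2\eta\|\bmu\|_2^2}{\tau}\geq (1-\cE_{\mathrm{SimCLR}})\tfrac{2\eta\|\bmu\|_2^2}{\tau}$. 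The matching upper bound comes from $\lambda_1(\Ab) \leq \|\Mb_0\|_2 + \|\bDelta\|_2$. For $i\geq 2$, Weyl's inequality and $\lambda_i(\Mb_0)=0$ yield $\lambda_i(\Ab) \leq \|\bDelta\|_2 \leq \cE_{\mathrm{SimCLR}}\cdot \tfrac{\eta\|\bmu\|_2^2}{\tau}$. Finally, the eigengap $\lambda_1(\Ab)-\lambda_2(\Ab)\geq (2-2\cE_{\mathrm{SimCLR}})\tfrac{\eta\|\bmu\|_2^2}{\tau}$ combined with Davis--Kahan gives $\sin\theta(\vb_1,\bmu/\|\bmu\|_2) \leq \cE_{\mathrm{SimCLR}}$, which translates directly into the claimed bounds $|\langle \vb_1,\bmu\rangle|\geq (1-\cE_{\mathrm{SimCLR}}^2)\|\bmu\|_2$ and $\|\Pb_{\bmu}^\perp \vb_1\|_2\leq \cE_{\mathrm{SimCLR}}$.

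\paragraph{Main obstacle.} The delicate part is the noise--noise family (c), in particular $\sum_{i\neq i'}\bxi_i\bxi_{i'}^\top$, whose individual summands are not independent. A naive union bound over $\|\sum_i\bxi_i\|_2^2 \sim n_0\sigma_p^2 d$ only yields a contribution of order $(n_0 \mathrm{SNR}^2)^{-1}= \cE_{\mathrm{SimCLR}}^2$, which is acceptable but forces careful bookkeeping to ensure the final error is dominated by the signal--noise cross term of family (b) at the $\cE_{\mathrm{SimCLR}}$ level; in particular, one must verify that all contributions are $o(\eta\|\bmu\|_2^2/\tau)$ precisely under the assumption $n_0\cdot\mathrm{SNR}^2=\tilde\Omega(1)$, without hidden polynomial losses in $n_0$ or $d$.
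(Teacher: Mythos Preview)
Your proposal is correct and follows essentially the same route as the paper: decompose $\Ab$ as a rank-one signal matrix along $\bmu\bmu^\top$ plus a perturbation bounded in operator norm, then apply Weyl and Davis--Kahan (the paper's reference matrix is $\Ab_0=\tfrac{2\eta}{n_0^2\tau}(n_0^2-S_y^2)\bmu\bmu^\top$ rather than your $\Mb_0$, and for $\max_{i\geq 2}\lambda_i$ the paper bounds $\|\Ab-\lambda_1\vb_1\vb_1^\top\|_2$ instead of invoking Weyl directly, but these are cosmetic differences). One small correction: in the regime $d\geq n_0$ assumed here, matrix Bernstein (or the paper's $\varepsilon$-net argument, Lemma~\ref{lemma:eigenvalue_bound}) gives $\big\|\sum_i\bxi_i\tilde\bxi_i^\top\big\|_2=\tilde O(\sigma_p^2 d)$ rather than $\tilde O(\sigma_p^2\sqrt{n_0 d})$, but after multiplying by the coefficient $\tfrac{\eta(n_0-1)}{n_0^2\tau}$ this still yields a contribution of order $(n_0\,\mathrm{SNR}^2)^{-1}\cdot\tfrac{\eta\|\bmu\|_2^2}{\tau}\leq\cE_{\mathrm{SimCLR}}\cdot\tfrac{\eta\|\bmu\|_2^2}{\tau}$, so your conclusion is unaffected.
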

Lemma~\ref{lemma:spectral_decomposition} gives a tight estimation on the eigenvalues and eigenvectors of the matrix $\Ab$, with a focus on the gap between the leading eigenvalue and the rest, and the relation between the leading eigenvector and the signal vector $\bmu$. According to Lemma~\ref{lemma:spectral_decomposition}, we can see that if the unlabeled sample size $n_0$ is sufficiently large, the leading eigenvalue and leading eigenvector will both be controlled by the signal $\bmu$, indicating that SimCLR can help enhance signal learning.

\noindent\textbf{Signal learning guarantee of SimCLR pre-training.} Based on Lemmas~\ref{lemma:SimCLR_iterates} and \ref{lemma:spectral_decomposition}, we can derive the following key theorem on the signal learning guarantee of SimCLR pre-training.
\begin{theorem}\label{thm:SimCLR_signal_noise}
Let $\Ab$ be defined in Lemma~\ref{lemma:SimCLR_iterates}. For any $M > 0$, suppose that $n_0 = \tilde\Omega(\max\{ \mathrm{SNR}^{-2} , 1 \}$, $\sigma_0 \leq \tilde{O}( \min\{ n_0^{-3}, n_0^{-3} \mathrm{SNR}^{-2}, M^{-\frac{1}{q-2}}\cdot n^{\frac{1}{q-2}} \mathrm{SNR}^{\frac{q}{q-2}} \} )$,
$d \geq \tilde\Omega( M^{\frac{6}{q-2}}\cdot  n^{\frac{-6}{q-2}} \mathrm{SNR}^{-\frac{6q}{q-2}}\cdot \max\{ n_0^{-1}, \mathrm{SNR}^{-2} \} )$. 
    Then with 
    \begin{align*}
        T_{\mathrm{SimCLR}} = \Bigg\lceil\frac{\log[288 M^{\frac{1}{q-2}}\cdot\log(1/\sigma_0)^{\frac{1}{q-2}}\cdot \sqrt{\log(dn) \cdot \log(md)} ] - \log[n^{\frac{1}{q-2}} \mathrm{SNR}^{\frac{q}{q-2}}] }{\log[1 + (1 - \cE_{\mathrm{SimCLR}} )\cdot \| \Ab \|_2]} \Bigg\rceil
    \end{align*}
    iterations, SimCLR gives CNN weights $\wb_{r}^{(T_{\mathrm{SimCLR}})}$, $r\in [2m]$ that can be decomposed as
    \begin{align*}
        \wb_r^{(T_{\mathrm{SimCLR}})} = \wb_r^\perp + \gamma_r\cdot \bmu / \| \bmu \|_2^2 + \sum_{i=1}^n \rho_{r,i} \cdot \bxi_{i}^\text{fine-tuning} / \| \bxi_{i}^\text{fine-tuning} \|_2^{2},
    \end{align*}
    where $\wb_r^\perp$ is perpendicular to $\bmu$ and $\bxi_{1}^\text{fine-tuning},\ldots, \bxi_{n}^\text{fine-tuning}$. Moreover, it holds that 
    \begin{itemize}[leftmargin = *]
        \item There exist disjoint index sets $\cI^+,\cI^- \subseteq [2m]$ with $| \cI^+| = |\cI^-| = 2m/5$ such that \begin{align*}
            \frac{\min_{r\in \cI^+}\gamma_r^{q-2} / \log(2/\gamma_r)}{\max_{r\in[2m],i\in[n]} |\rho_{r,i}|^{q-2}}\geq \frac{M}{n \mathrm{SNR}^{2}},\qquad \frac{\min_{r\in \cI^-}(-\gamma_r)^{q-2} / \log(-2/\gamma_r)}{\max_{r\in[2m],i\in[n]} |\rho_{r,i}|^{q-2}}\geq \frac{M}{n \mathrm{SNR}^{2}}.
        \end{align*}
        \item All $\wb_r^{(T_{\mathrm{SimCLR}})}$, $r\in[2m]$ are bounded: 
        \begin{align*}
            \max_{r\in[2m]}\| \wb_r^{\perp} \|_2 \leq \frac{1}{n},~ \max_{r\in[2m]}|\gamma_r|\leq \frac{ \mathrm{SNR}^{2/{q-2}} }{16m^{2/{q-2}} n_0}, ~ \max_{r\in[2m],i\in[n]} |\rho_{r,i}| \leq  \frac{ \mathrm{SNR}^{2/{q-2}} }{16m^{2/{q-2}} n_0}.
        \end{align*}
    \end{itemize}
\end{theorem}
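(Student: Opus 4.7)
\textbf{Proof plan for Theorem~\ref{thm:SimCLR_signal_noise}.} The plan is to combine the approximate power-method characterization of Lemma~\ref{lemma:SimCLR_iterates} with the spectral structure of $\Ab$ given by Lemma~\ref{lemma:spectral_decomposition} and track each filter $\wb_r^{(t)}$ in a convenient basis. First I would iterate Lemma~\ref{lemma:SimCLR_iterates} to write
\begin{align*}
\wb_r^{(T_{\mathrm{SimCLR}})}=\Big[\prod_{t=0}^{T_{\mathrm{SimCLR}}-1}(\Ib+\Ab+\bXi^{(t)})\Big]\wb_r^{(0)},
\end{align*}
and use $\|\bXi^{(t)}\|_2\leq\sigma_0\|\Ab\|_2$ with the identity $\|\Ab\|_2=\Theta(\eta\|\bmu\|_2^2/\tau)$ from Lemma~\ref{lemma:spectral_decomposition} to view this product as a small multiplicative perturbation of $(\Ib+\Ab)^{T_{\mathrm{SimCLR}}}$. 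Diagonalizing $\Ab=\sum_i\lambda_i\vb_i\vb_i^\top$, the $\vb_1$-component of $\wb_r^{(T_{\mathrm{SimCLR}})}$ is then essentially $(1+\lambda_1)^{T_{\mathrm{SimCLR}}}\la\wb_r^{(0)},\vb_1\ra$, while every orthogonal component grows by at most $(1+\lambda_2)^{T_{\mathrm{SimCLR}}}$ times its initial value plus cross-talk from the $\bXi^{(t)}$ terms.

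With this in hand, I would define $\gamma_r:=\la\wb_r^{(T_{\mathrm{SimCLR}})},\bmu\ra$, extract $\rho_{r,i}$ by Gram--Schmidt-style projection onto the fine-tuning noise directions (which are independent of $\Ab$), and let $\wb_r^\perp$ be the residual. Lemma~\ref{lemma:spectral_decomposition} yields $|\la\vb_1,\bmu\ra|\geq(1-\cE_{\mathrm{SimCLR}}^2)\|\bmu\|_2$, so $\gamma_r$ is controlled by $\la\wb_r^{(0)},\vb_1\ra\cdot(1+\lambda_1)^{T_{\mathrm{SimCLR}}}\cdot\|\bmu\|_2$ up to $1\pm o(1)$ factors. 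Since $\wb_r^{(0)}\sim N(\mathbf 0,\sigma_0^2\Ib)$, the scalars $\la\wb_r^{(0)},\vb_1\ra$ are i.i.d.\ $N(0,\sigma_0^2)$; Gaussian anti-concentration plus a Chernoff bound (using $m=\Omega(\log(1/\delta))$) gives disjoint sets $\cI^+,\cI^-\subseteq[2m]$ each of size $2m/5$ on which $\la\wb_r^{(0)},\vb_1\ra$ has the correct sign with magnitude $\geq c\sigma_0$, translating (via the choice of $T_{\mathrm{SimCLR}}$) into the stated lower bound on $\min_{r\in\cI^+}\gamma_r^{q-2}/\log(2/\gamma_r)$ and its negative analogue.

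For the noise side, the critical observation is that $\bxi_i^{\mathrm{fine\text{-}tuning}}$ is independent of every object generated in pre-training (the matrix $\Ab$, the perturbations $\bXi^{(t)}$, and the initialization $\wb_r^{(0)}$). Thus $\la\wb_r^{(0)},\bxi_i^{\mathrm{fine\text{-}tuning}}\ra=\tilde O(\sigma_0\sigma_p\sqrt d)$ uniformly in $r,i$ by a Gaussian tail bound, and this initial coefficient is amplified by at most the subdominant growth factor $(1+\lambda_2)^{T_{\mathrm{SimCLR}}}\leq(1+\lambda_1\cdot\cE_{\mathrm{SimCLR}}/(1-\cE_{\mathrm{SimCLR}}))^{T_{\mathrm{SimCLR}}}$, plus a cross-talk contribution whose columns/rows live in $\mathrm{span}\{\bmu,\bxi_j,\tilde\bxi_j\}_{j=1}^{n_0}$ and is therefore almost orthogonal to $\bxi_i^{\mathrm{fine\text{-}tuning}}$ when $d$ is large. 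Taking the ratio of $\gamma_r^{q-2}$ to $\rho_{r,i}^{q-2}$ cancels the shared factor $(1+\lambda_1)^{(q-2)T_{\mathrm{SimCLR}}}$ against the much smaller $(1+\lambda_2)^{(q-2)T_{\mathrm{SimCLR}}}$, and the eigenvalue gap in Lemma~\ref{lemma:spectral_decomposition} together with the $n_0\cdot\mathrm{SNR}^2=\tilde\Omega(1)$ condition yields the required $M/(n\,\mathrm{SNR}^2)$ ratio. The uniform upper bounds on $|\gamma_r|$, $|\rho_{r,i}|$, and $\|\wb_r^\perp\|_2$ come from the matching Gaussian upper tail $\max_r|\la\wb_r^{(0)},\vb\ra|\leq\tilde O(\sigma_0)$ (union bound over $r$ and the $O(n+1)$ relevant directions), the explicit ceiling in $T_{\mathrm{SimCLR}}$, and the assumed smallness of $\sigma_0$.

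The main technical obstacle is controlling the cumulative effect of the $\bXi^{(t)}$ perturbations over $T_{\mathrm{SimCLR}}=\tilde\Theta(\log(1/\sigma_0)/\log(1+\|\Ab\|_2))$ iterations. A naive bound $\prod_t(1+\|\Ab\|_2+\|\bXi^{(t)}\|_2)$ would give an acceptable operator norm but would not separate the $\vb_1$-component from the orthogonal ones. I would instead run a coordinate-wise induction in the eigenbasis of $\Ab$ augmented by the fine-tuning noise directions, using the column/row support of $\bXi^{(t)}$ stated in Lemma~\ref{lemma:SimCLR_iterates} to keep crosstalk into the $\bxi_i^{\mathrm{fine\text{-}tuning}}$ direction negligible, and using $\sigma_0\leq\tilde O(\min\{n_0^{-3},n_0^{-3}\mathrm{SNR}^{-2}\})$ to guarantee that each per-step perturbation contributes at most a $(1+o(1/T_{\mathrm{SimCLR}}))$ multiplicative error to each coordinate, so that all bounds are preserved up to constants throughout the pre-training trajectory.
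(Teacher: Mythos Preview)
Your high-level plan is right and matches the paper: a perturbed power method driven by the spectral gap of $\Ab$, with Gaussian anti-concentration at $t=0$ producing the sets $\cI^\pm$. Where the paper departs from your sketch is in \emph{what} is tracked inductively. Rather than a coordinate-wise induction in the full eigenbasis, the paper tracks only two scalars per filter, $\langle\wb_r^{(t)},\vb_1\rangle$ and $\|\Pb_{\cX,\vb_1}^\perp\wb_r^{(t)}\|_2$ (with $\cX=\mathrm{span}\{\bmu,\bxi_j,\tilde\bxi_j:j\in[n_0]\}$), and maintains the two-sided sandwich $\sqrt{\sigma_0}\,\|\Pb_{\cX,\vb_1}^\perp\wb_r^{(t)}\|_2\leq\langle\wb_r^{(t)},\vb_1\rangle\leq\sigma_0^{-1/2}\|\Pb_{\cX,\vb_1}^\perp\wb_r^{(t)}\|_2$. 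This invariant, together with a per-step Davis--Kahan bound on the eigenvectors of $\Ab+\bXi^{(t)}$ (proved as a separate lemma), is what lets the cross-terms between the $\vb_1$-direction and the rest of $\cX$ be absorbed; your ``$(1+o(1/T_{\mathrm{SimCLR}}))$ multiplicative error per step'' heuristic is not sharp enough by itself, because those cross-terms are only controllable once the two tracked quantities are known to be within a $\sigma_0^{\pm 1/2}$ factor of each other.

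There is a genuine gap in your treatment of $\rho_{r,i}$. You assert that the initial coefficient $\langle\wb_r^{(0)},\bxi_i^{\mathrm{fine\text{-}tuning}}\rangle$ is ``amplified by at most the subdominant growth factor $(1+\lambda_2)^{T_{\mathrm{SimCLR}}}$'', but $\bxi_i^{\mathrm{fine\text{-}tuning}}$ is not an eigenvector of $\Ab$; it has nonzero overlap with $\vb_1$ (of order $\sigma_p\cE_{\mathrm{SimCLR}}$, since $\vb_1$ is only approximately $\bmu/\|\bmu\|_2$), so the $\vb_1$-component of $\wb_r^{(T)}$ feeds into $\rho_{r,i}$ at the \emph{dominant} rate $(1+\lambda_1)^T$, and your ratio argument does not cancel as claimed. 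The paper never tracks $\rho_{r,i}$ through the iteration. Instead it first bounds $\|\Pb_{\bmu}^\perp\wb_r^{(T_{\mathrm{SimCLR}})}\|_2$ (via the induction above plus $\|\Pb_\bmu^\perp\vb_1\|_2\leq\cE_{\mathrm{SimCLR}}$ from Lemma~\ref{lemma:spectral_decomposition}), and only \emph{then} exploits that $\bxi_i^{\mathrm{fine\text{-}tuning}}$ is independent of $\wb_r^{(T_{\mathrm{SimCLR}})}$ to conclude, via a single Gaussian tail bound at the final time, that $|\rho_{r,i}|\leq\tilde O(\sigma_p\sqrt{\log(dn)})\cdot\|\Pb_\bmu^\perp\wb_r^{(T_{\mathrm{SimCLR}})}\|_2$. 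Your proposed ``eigenbasis of $\Ab$ augmented by the fine-tuning noise directions'' is not an orthogonal system, so no clean coordinate-wise induction is available there.
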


In Theorem~\ref{thm:SimCLR_signal_noise}, the decomposition 
\begin{align*}
    \wb_r^{(T_{\mathrm{SimCLR}})} = \wb_r^\perp + \gamma_r\cdot \bmu / \| \bmu \|_2^2 + \sum_{i=1}^n \rho_{r,i} \cdot \bxi_{i}^\text{fine-tuning} / \| \bxi_{i}^\text{fine-tuning} \|_2^{2}
\end{align*}
is inspired by the ``signal-noise decomposition'' proposed in \citet{cao2022benign}, where the authors have demonstrated that such a decomposition can be very helpful for the analysis of supervised fine-tuning. 
Theorem~\ref{thm:SimCLR_signal_noise} demonstrates that there exist at least $O(m)$ filters whose ``signal coefficients'' $\gamma_r$ are relatively large compared with the ``noise coefficients'' $\rho_{r,i}$ of \textit{all} the $2m$ filters. This result makes good preparation for our analysis of the downstream task. Notably, as we have discussed, Theorem~\ref{thm:SimCLR_signal_noise} can also serve as a theoretical guarantee for the setting of SimCLR pre-training on Gaussian mixture data, and hence we believe that the result of Theorem~\ref{thm:SimCLR_signal_noise} and its proof may be of independent interest.

\noindent\textbf{Analysis of supervised fine-tuning.} We can now analyze the supervised fine-tuning of the nonlinear CNN where the initial CNN weights are given by SimCLR. We remind the readers that the nonlinear CNN model has second layer weights fixed as $+1/m$ and $-1/m$, and we define $F_{+1}(\Wb_{+1},\xb)$, $F_{-1}(\Wb_{-1},\xb)$ in \eqref{eq:CNN} so that the CNN can be written as $f(\Wb, \xb) = F_{+1}(\Wb_{+1},\xb) - F_{-1}(\Wb_{-1},\xb)$. With filters  $\wb_r^{(T_{\mathrm{SimCLR}})},~r\in[2m]$ obtained by SimCLR pre-training, we randomly sample $m$ filters of them and assign them to the initialization of filters in $F_{+1}$, and we denote $\cM \subseteq [2m]$ the collection of these filters with $|\cM|=m$. In addition, the rest $m$ filters $\wb_r^{(T_{\mathrm{SimCLR}})},~r\in[2m]\cap\cM^c$ are randomly assigned to the initialization of filters in $F_{-1}$. Based on this random assignment procedure, we directly have the following lemma.
\begin{lemma}\label{lemma:filter_convertion}
    For any index sets $\cI^+,\cI^- \subseteq [2m]$ with $| \cI^+| = |\cI^-| = 2m/5$, 
    with probability at least $1 - 2^{-(2m/5-1)}$, there exist $r_+\in \cI^+$ and $r_- \in \cI^-$, such that $r_+\in \cM$ and $r_- \in \cM^c$.
\end{lemma}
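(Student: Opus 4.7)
The plan is to analyze the failure event by a simple union bound on two symmetric sub-events. The desired conclusion holds precisely when $\cI^+ \cap \cM \neq \emptyset$ and $\cI^- \cap \cM^c \neq \emptyset$ hold simultaneously, so the failure event is contained in $\{\cI^+ \subseteq \cM^c\} \cup \{\cI^- \subseteq \cM\}$. It therefore suffices to show that each of these two events has probability at most $2^{-2m/5}$; a union bound then immediately yields the claimed success probability of at least $1 - 2^{-(2m/5-1)}$.

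To handle the first bad event, I would exploit the fact that $\cM$ is a uniformly chosen $m$-subset of $[2m]$ (and therefore so is $\cM^c$). Writing $\PP(\cI^+ \subseteq \cM^c)$ as a ratio of binomial coefficients yields the telescoping product
\begin{align*}
    \PP(\cI^+ \subseteq \cM^c) \;=\; \frac{\binom{2m - 2m/5}{m - 2m/5}}{\binom{2m}{m}} \;=\; \prod_{j=0}^{2m/5 - 1} \frac{m - j}{2m - j}.
\end{align*}
The elementary inequality $(m - j)/(2m - j) \leq 1/2$ (which rearranges to $j \geq 0$ and is therefore always valid) makes every factor at most $1/2$, so the entire product is bounded above by $2^{-2m/5}$. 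The same bound applies to $\PP(\cI^- \subseteq \cM)$ by the obvious symmetry between $\cM$ and $\cM^c$, and combining the two via the union bound completes the argument.

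There is no substantive obstacle here; the statement is essentially a combinatorial warm-up for the fine-tuning analysis. The only subtlety to be careful about is that the events $\{r \in \cM\}$ for $r \in \cI^+$ are \emph{not} independent since the sampling is without replacement, so one cannot literally invoke a product of Bernoulli$(1/2)$ probabilities; the binomial factorization above rigorously replaces that heuristic and delivers precisely the same $2^{-k}$ bound. An alternative route via negative association of hypergeometric indicators would give the same conclusion but is heavier machinery than the situation calls for.
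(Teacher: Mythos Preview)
Your proof is correct. The paper does not actually give a proof of this lemma beyond remarking that it ``is a straightforward result on random sampling''; your hypergeometric computation with the factor-by-factor bound $(m-j)/(2m-j)\le 1/2$ and the union bound over the two failure events is precisely the natural way to make that remark rigorous.
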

Lemma~\ref{lemma:filter_convertion} is a straightforward result on random sampling. Following this result, we can see that with high probability, there exists a filter in $\cI^+$ whose second layer parameter is assigned as $+1$ for fine-tuning, and there also exists a filter in $\cI^-$ whose second layer parameter is assigned as $-1$ for fine-tuning. With this result, we further give the following main theorem on the learning guarantees in the fine-tuning stage.
\begin{theorem}\label{thm:signal_learning_main}
    Suppose that the supervised fine-tuning starts with initialization $\Wb_+^{(0)}$ and $\Wb_-^{(0)}$ where the convolution filters have decomposition
    \begin{align*}
        \wb_{j,r}^{(0)} = \wb_{j,r}^\perp + j\cdot \gamma_{j,r}\cdot \bmu / \| \bmu \|_2^2 + \sum_{i=1}^n \rho_{j,r,i} \cdot \bxi_{i}^\text{fine-tuning} / \| \bxi_{i}^\text{fine-tuning} \|_2^{2}
    \end{align*}
    for $j\in \{\pm1\}$ and $r\in[m]$. Moreover, suppose that the coefficients $\gamma_{j,r}$'s and $\rho_{j,r,i}$'s satisfy the following properties:
    \begin{itemize}[leftmargin = *]
        \item There exist $r_+,r_- \in [m]$ such that \begin{align*}
            \frac{\gamma_{+1,r_+}^{q-2} / \log(2/\gamma_{+1,r_+}) }{\max_{j,r,i} |\rho_{j,r,i}|^{q-2}}=  \Omega\bigg( \frac{ \log(d)}{n \mathrm{SNR}^{2}} \bigg),\quad \frac{\gamma_{-1,r_-}^{q-2}/\log(2/\gamma_{-1,r_-}) }{\max_{j,r,i} |\rho_{j,r,i}|^{q-2}}=  \Omega\bigg( \frac{ \log(d)}{n \mathrm{SNR}^{2}} \bigg).
        \end{align*}
        \item All $\wb_r^{(T_{\mathrm{SimCLR}})}$, $r\in[2m]$ are bounded: 
        \begin{align*}
            \max_{j,r}\| \wb_{j,r}^{\perp} \|_2 \leq \frac{1}{n},~ \max_{j,r}|\gamma_{j,r}|\leq \tilde{O}\bigg(\frac{\mathrm{SNR}^{2/{q-2}} }{m^{2/{q-2}}} \bigg), ~ \max_{j,r,i} |\rho_{j,r,i}| \leq \tilde{O}\bigg(\frac{\mathrm{SNR}^{2/{q-2}} }{m^{2/{q-2}}} \bigg).
        \end{align*}
    \end{itemize}
    Let $\gamma_0 = \min\{ \gamma_{+1,r_+}, \gamma_{-1,r_-} \}$. For any $\epsilon > 0$, let $T = \tilde{\Theta}( \eta^{-1} m\gamma_0 ^{-(q-2)} \| \bmu \|_2^{-2} +  \eta^{-1}\epsilon^{-1} m^{3}\| \bmu \|_2^{-2})$,
    then if $\eta = \tilde{O}(\min\{ ( \sigma_{p}^{2}d)^{-1}, (\sigma_p \sqrt{d})^{-2}, \|\bmu\|_2^{-2} \})$, with probability at least $ 1 - d^{-1}$, there exists $0 \leq t \leq T$ such that:
    \begin{enumerate}[leftmargin = *]
        \item The training loss converges to $\epsilon$, i.e., $L_S(\Wb^{(t)}) \leq \epsilon$.
        \item The trained CNN achieves a small test loss: $L_{\cD}(\Wb^{(t)})\leq 6\epsilon + \exp(-\tilde\Omega(n^{2}))$.
    \end{enumerate}
\end{theorem}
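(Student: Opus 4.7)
}
The plan is to adapt the signal--noise decomposition framework of \citet{cao2022benign} to the fine-tuning stage, exploiting the favorable decomposition of the initialization guaranteed by Theorem~\ref{thm:SimCLR_signal_noise}. Concretely, I would first show that the decomposition of the initial weights persists along the gradient descent trajectory: writing
\begin{align*}
\wb_{j,r}^{(t)} = \wb_{j,r}^{\perp(t)} + j \cdot \gamma_{j,r}^{(t)}\cdot \bmu / \| \bmu \|_2^2 + \sum_{i=1}^n \rho_{j,r,i}^{(t)}\cdot \bxi_i^{\text{fine-tuning}} / \| \bxi_i^{\text{fine-tuning}} \|_2^2,
\end{align*}
I would substitute into the update rule \eqref{eq:updating rules} and, using the near-orthogonality of $\bmu$ and the $\bxi_i^{\text{fine-tuning}}$'s in high dimension (which follows from $d=\tilde\Omega(\cdot)$ in Condition~\ref{con:pre-train_fine_tune}), derive recursive updates of the form $\gamma_{j,r}^{(t+1)} \approx \gamma_{j,r}^{(t)} + (\eta/m)\|\bmu\|_2^2 \cdot A_j^{(t)} \cdot \sigma'(\gamma_{j,r}^{(t)})$ and $\rho_{j,r,i}^{(t+1)} \approx \rho_{j,r,i}^{(t)} + (\eta/m)\|\bxi_i^{\text{fine-tuning}}\|_2^2 \cdot B_{j,i}^{(t)} \cdot \sigma'(\rho_{j,r,i}^{(t)})$, where $A_j^{(t)}$ and $B_{j,i}^{(t)}$ are averages of $-\ell'(y_i f)$ factors. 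The perpendicular part $\wb_{j,r}^{\perp(t)}$ stays essentially frozen at its initial value. This step is routine but book-keeping heavy.

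The heart of the argument is a two-stage analysis. In \emph{Stage~1}, I would track the ratio between signal and noise coefficients, showing that the strongest-signal filters $r_\pm$ grow much faster than every noise coefficient. The hypothesis $\gamma_{\pm 1, r_\pm}^{q-2}/\log(2/\gamma_{\pm 1,r_\pm}) \gtrsim \log(d)\cdot \rho_{\max}^{q-2}/(n\,\mathrm{SNR}^2)$, combined with $\|\bmu\|_2^2 / \|\bxi_i^{\text{fine-tuning}}\|_2^2 = \Theta(\mathrm{SNR}^2 \cdot \|\bmu\|_2^2/\|\bmu\|_2^2)$ and $|A_j^{(t)}|/|B_{j,i}^{(t)}| = \Theta(n)$ while $-\ell'$ is $\Theta(1)$, implies that $\gamma_{j,r_\pm}^{(t)}$ grows at rate $\gtrsim \eta m^{-1} \|\bmu\|_2^2$ in the $\mathrm{ReLU}^q$ sense, so after $T_1 = \tilde\Theta(\eta^{-1} m \gamma_0^{-(q-2)} \|\bmu\|_2^{-2})$ steps we have $\gamma_{\pm 1,r_\pm}^{(T_1)} = \Omega(1)$. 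By an inductive invariant tracking $\max_i \rho_{j,r,i}^{(t)} \leq \tilde O(1)$ throughout, no noise coefficient ever reaches constant order during this stage, which confirms that the gradient signal is dominated by the correctly-classified-by-signal mechanism.

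In \emph{Stage~2}, once a filter with $\gamma_{j,r_j}^{(t)} = \Omega(1)$ exists for each $j \in \{\pm1\}$, the output $y \cdot f(\Wb^{(t)}, \xb)$ on each training point is already of constant positive order. I would then invoke a standard online convex optimization / Polyak--Lojasiewicz style argument (mirroring Section 5 of \citet{cao2022benign}): comparing $\Wb^{(t)}$ to a reference weight $\Wb^*$ with large signal coefficients $\log(2/\epsilon) \cdot m \cdot \bmu/\|\bmu\|_2^2$ along $j\bmu$, expand $\|\Wb^{(t)}-\Wb^*\|_F^2$, use convexity of $\ell$ composed with the by-now essentially linear behavior of $F_j$ along $\bmu$, and telescope to obtain $\min_{t \leq T} L_S(\Wb^{(t)}) \leq \epsilon$ within the additional $\tilde O(\eta^{-1}\epsilon^{-1} m^3 \|\bmu\|_2^{-2})$ iterations allotted.

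Finally, for generalization I would bound $L_\cD(\Wb^{(t)})$ by splitting test data according to whether the noise patch $\bxi$ satisfies $|\langle \bxi, \wb_{j,r}^{(t)}\rangle| \leq \gamma_{j,r}^{(t)}/2$. On the high-probability event (using Gaussian concentration and the invariant $\max_{j,r}\|\wb_{j,r}^{\perp(t)}\|_2 + \sum_i |\rho_{j,r,i}^{(t)}| \cdot \|\bxi_i^{\text{fine-tuning}}\|_2^{-1}$ being small), the test output is driven by signal and $\ell[y \cdot f] \leq 6 \ell[y_i \cdot f_i]$ for some training point, giving $L_\cD \leq 6 L_S + \exp(-\tilde\Omega(n^2))$ via an $\exp(-\tilde\Omega(n^2))$-probability tail. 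The main obstacle throughout is the first step: maintaining the Stage-1 invariant that controls all $\rho_{j,r,i}^{(t)}$ uniformly while the privileged $\gamma_{\pm 1, r_\pm}^{(t)}$ blow up, which requires the careful ratio bound in the hypothesis and a delicate induction that couples all $2m(n+1)$ coordinates simultaneously.
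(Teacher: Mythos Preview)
Your proposal is correct and follows essentially the same approach as the paper: a signal--noise decomposition maintained along the trajectory, a two-stage analysis (Stage~1 showing $\max_r \gamma_{j,r}^{(t)}$ reaches $\Omega(1)$ while all $|\rho_{j,r,i}^{(t)}|$ stay $O(\rho_0)$, Stage~2 a telescoping argument against a reference $\Wb^*$ with signal coefficients of order $m\log(1/\epsilon)$), and a population loss bound via splitting on the Gaussian tail event for $\langle \wb_{j,r}^{(t)}, \bxi\rangle$. The paper's proof (Lemmas~\ref{lemma:phase1_main}--\ref{lemma:signal_polulation_loss}) matches your outline step for step, including the choice $\wb_{j,r}^* = \wb_{j,r}^\perp + 2qm\log(2q/\epsilon)\cdot j\cdot \bmu/\|\bmu\|_2^2$ and the constant-$6$ comparison $L_\cD \leq 6L_S + \exp(-\tilde\Omega(n^2))$.
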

Theorem~\ref{thm:signal_learning_main} starts with an assumption on the properties of the ``signal-noise decompositions'' of the initial weights for fine-tuning. This analysis is inspired by \citet{cao2022benign} where the signal-noise decomposition is proposed. However, compared with \citet{cao2022benign} where the analysis focuses on training starting from random Gaussian initialization, Theorem~\ref{thm:signal_learning_main}  is in fact more general -- $\wb_{j,r}^{(0)}$'s are not necessarily randomly generated. In fact, by direct calculations, we can verify that if  $\wb_{j,r}^{(0)}$'s  are all randomly generated from Gaussian distribution, then verifying the conditions in Theorem~\ref{thm:signal_learning_main}  will recover the condition that $n\cdot \mathrm{SNR}^q = \tilde\Omega(1)$ in \citet{cao2022benign} which guarantees benign overfitting.
Therefore, Theorem~\ref{thm:signal_learning_main} covers the result for benign overfitting in \citet{cao2022benign}.

Now it is clear that combining Theorem~\ref{thm:SimCLR_signal_noise}, Lemma~\ref{lemma:filter_convertion} and Theorem~\ref{thm:signal_learning_main} will immediately lead to Theorem~\ref{thm:main thm}. Therefore, our proof is finished.
\section{Conclusion}
In this paper, a case study on the benefits of SimCLR pre-training method for supervised fine-tuning is investigated.
Based on a toy image data model for binary classification problems, we theoretically analyze how SimCLR pre-training based on unlabeled data benefits fine-tuning in training two-layer over-parameterized CNNs.
Under mild conditions on the amount of labeled and unlabeled data and the signal-to-noise ratio (SNR), the training loss convergence and small test loss are guaranteed, while direct supervised learning requires more label complexity to achieve small training and test losses. Our work demonstrates the provable advantage of SimCLR pre-training on fine-tuning stage, which reduces label complexity.

This paper focuses on the benefits of the popular SimCLR pre-training method for fine-tuning training. Apart from the SimCLR method for vision tasks, other contrastive learning (or self-supervised learning) methods could also be investigated.
A more general question could be: how does the pre-training of representations influence the performance of fine-tuning  in the over-parameterized models? Various fine-tuning training processes could also be analyzed, including single-task supervised learning or multi-task learning.
Future works could explore the aforementioned directions.



\appendix

\section{Experiments}\label{sec:experiment}
This section presents the synthetic-data experiments and real-world data experiments to back up theoretical results and demonstrate the practical value of our theory.

\paragraph{Synthetic-data experiments.}
The synthetic dataset is generated following the data model in Definition \ref{def:trainingdata}
with the dimension $d=400,~\sigma_p=2,~\|\bmu\|_2=10$.
%
%
%
The training of the two-layer CNN model follows the two-stage (SimCLR pre-training followed by supervised fine-tuning) training procedure as depicted in Figure~\ref{fig:diag1}. 
Here, the number of filters is set as $m=40$ with $\mathrm{RELU}^3$ the activation function.
The number of unlabeled data in the SimCLR pre-training stage is $n_0=250$ and the number of labeled data in the fine-tuning stage is $n=40$.
The test loss is calculated using $400$ test data points.

In parallel, we also conduct the training of the two-layer CNN model \eqref{eq:CNN} through direct-supervised learning on $n=40$ labeled data for comparison, and all the conditions
of the labeled dataset are same as the SimCLR pre-training counterpart.
The results on synthetic data experiments are presented in Figure \ref{fig:Synthetic exp1}.

\begin{figure}[htp]
    \centering
    \subfigure[Direct supervised learning]{
      \begin{minipage}[t]{.5\textwidth}
        \centering
        \includegraphics[width=0.8\textwidth]{./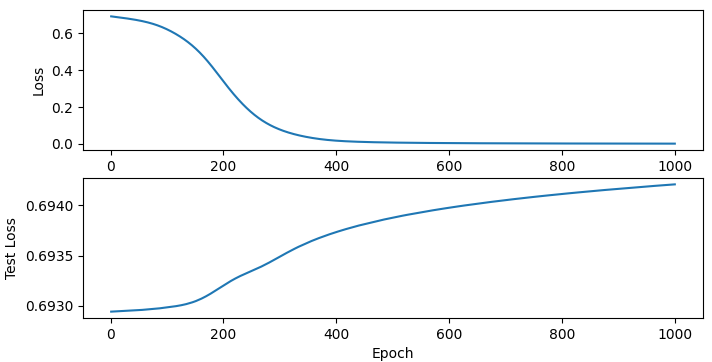}
      \end{minipage}
    }%
    \subfigure[SimCLR pre-training with supervised fine-tuning]{
      \begin{minipage}[t]{.5\textwidth}
        \centering
      \includegraphics[width=0.81\textwidth]{./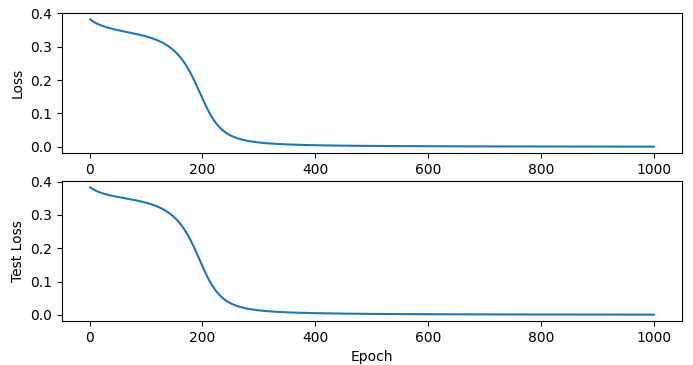}
      \end{minipage}
    }
    \caption{Synthetic-data Experiments: 
      Under same conditions on label complexity, SimCLR pre-training combined with supervised fine-tuning ($n_0=250, n=40$) achieves much smaller test loss 
  than direct supervised learning ($n=40$).
  }\label{fig:Synthetic exp1}
  \end{figure}

  \paragraph{Real-data experiments on MNIST dataset}
  %
  %
Real-world data experiments on MNIST dataset \citep{lecun1998gradient} are also conducted. Following the data model in Definition \ref{def:trainingdata}, a binary classification problem
  is considered, where the signal $\bmu$ is originated from MNIST dataset with the size of $28\times 28$. In the data model, the dimension is set as $d=28\times 28$, and $\sigma_p=200$.
  Figure \ref{fig:realdata_dataset} presents the signals and dataset in the real-data experiments. 
  
  We train a two-layer CNN with the number of filters $m=16$, and the activation function is  $\mathrm{ReLU}^3$, and 
  follows the two-stage training procedure of the CNN as depicted in the Figure~\ref{fig:diag1}.
    The training losses and the test losses 
    of SimCLR pre-training followed by supervised fine-tuning are compared with its direct supervised learning counterpart. 
    The comparison is made under the same label complexity condition where the number of unlabeled data in the SimCLR pre-training stage is $n_0=200$ and the number of labeled data in the fine-tuning stage as well as the direct supervised learning counterpart is $n=40$.
    The test loss is calculated using $400$ test data points.
    The results are presented in Figure~\ref{fig:realdata_single base ima}.
    Figure~\ref{fig:realdata_single base ima} presents an extreme case where SimCLR pre-training with supervised fine-tuning and direct supervised learning
    result in significantly different performances on noisy MNIST images despite of the same label complexity.

  \begin{figure}[htp]
    \centering
    \subfigure[Signals originated from MNIST]{
      \begin{minipage}[t]{.5\textwidth}
        \centering
        \includegraphics[width=0.8\textwidth]{./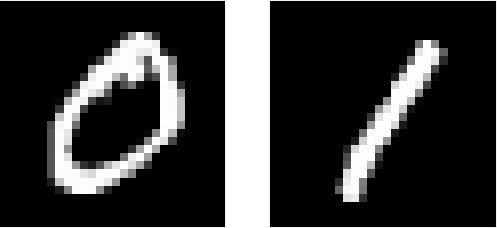}
      \end{minipage}
    }%
    \subfigure[Dataset]{
      \begin{minipage}[t]{.5\textwidth}
        \centering
      \includegraphics[width=0.80\textwidth]{./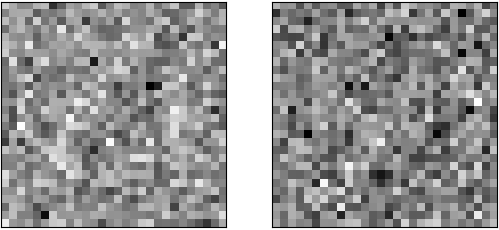}
      \end{minipage}
    }
    \caption{
     Signals and the dataset in real-data experiments. The signals are originated from MNIST dataset. Following the data model in Definition~\ref{def:trainingdata},
     noise is added to the signals and obtain the dataset used in the training.}\label{fig:realdata_dataset}
  \end{figure}

  \begin{figure}[htp]
    \centering
    \subfigure[Direct supervised learning]{
      \begin{minipage}[t]{.5\textwidth}
        \centering
        \includegraphics[width=0.8\textwidth]{./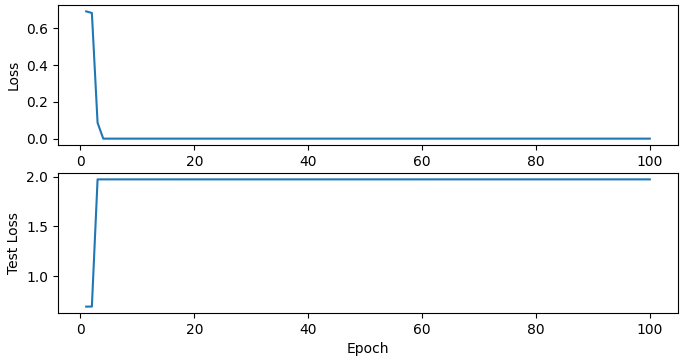}
      \end{minipage}
    }%
    \subfigure[SimCLR pre-training with supervised fine-tuning]{
      \begin{minipage}[t]{.5\textwidth}
        \centering
      \includegraphics[width=0.82\textwidth]{./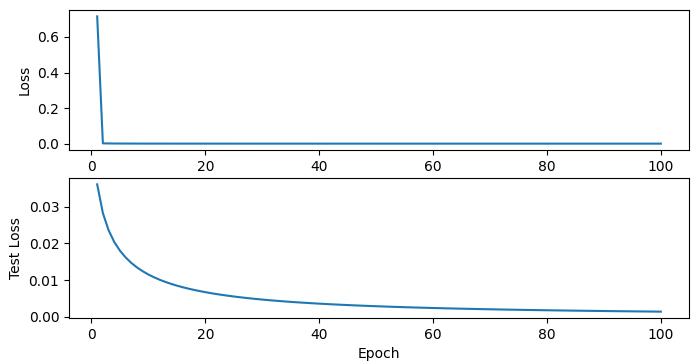}
      \end{minipage}
    }
    \caption{Real-data Experiments:~
  Under same conditions on label complexity ($n=40$), SimCLR pre-training combined with supervised fine-tuning ($n_0=200,n=40$) achieves much smaller test loss 
  than direct supervised learning.
     }\label{fig:realdata_single base ima}
  \end{figure}

  \paragraph{Summary of experiment results.} The experiments above present typical cases where SimCLR pre-training followed by supervised fine-tuning achieves a much smaller test loss, while direct supervised learning achieves a larger test loss under the same label complexity.
  Both synthetic and real-world experiments match our theoretical results and demonstrate that SimCLR pre-training could relax the requirement of label complexity to achieve a small test loss.

\section{Proofs for SimCLR pre-training}

\subsection{Proofs of lemmas in Section~\ref{sec:proofsketch}}

\subsubsection{Proof of Lemma~\ref{lemma:SimCLR_iterates}}\label{sec:proofofLemma5.1}
In this section, the following Lemma~\ref{lemma:norm_bound2} is introduced to prove Lemma~\ref{lemma:SimCLR_iterates}.

\begin{lemma}\label{lemma:norm_bound2}
    For any $\tilde{\delta} > 0$, with probability at least $1-\tilde{\delta}$, a union bound for $\|\bxi_i\|_2^2$ and $\|\tilde{\bxi}_i\|_2^2$, $i\in[n_0]$ is   
    \begin{align*}
        &d\sigma_p^2- C_2\sigma_p^2\sqrt{d\log(4n_0/\tilde{\delta})}    \leq \left\| \bxi_i\right\|^2_2 \leq d\sigma_p^2+ C_2\sigma_p^2\sqrt{d\log(4n_0/\tilde{\delta})}\\
        &d\sigma_p^2- C_2\sigma_p^2\sqrt{d\log(4n_0/\tilde{\delta})}    \leq \left\| \tilde{\bxi}_i\right\|^2_2 \leq d\sigma_p^2+ C_2\sigma_p^2\sqrt{d\log(4n_0/\tilde{\delta})}
    \end{align*}
    where $C_2$ is an absolute constant that does not depend on other variables.
\end{lemma}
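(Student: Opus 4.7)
The plan is to recognize Lemma~\ref{lemma:norm_bound2} as a routine concentration bound for the squared Euclidean norm of Gaussian vectors with a degenerate (rank $d-1$) covariance, combined with a union bound over the $2n_0$ noise vectors $\bxi_1,\ldots,\bxi_{n_0},\tilde\bxi_1,\ldots,\tilde\bxi_{n_0}$.

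First I would rewrite each $\bxi_i$ in terms of a standard Gaussian. Since $\bxi_i \sim N(\mathbf{0},\sigma_p^2\Pb_{\bmu}^{\perp})$ with $\Pb_{\bmu}^{\perp}=\Ib-\bmu\bmu^\top/\|\bmu\|_2^2$ a rank-$(d-1)$ orthogonal projector, we can write $\bxi_i = \sigma_p \Pb_{\bmu}^{\perp}\gb_i$ for $\gb_i\sim N(\mathbf{0},\Ib)$, which yields $\|\bxi_i\|_2^2 = \sigma_p^2 \|\Pb_{\bmu}^{\perp}\gb_i\|_2^2$. Because $\Pb_{\bmu}^{\perp}$ has $d-1$ unit eigenvalues and one zero eigenvalue, $\|\Pb_{\bmu}^{\perp}\gb_i\|_2^2$ is distributed as $\chi^2_{d-1}$ with mean $d-1$. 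The same holds for $\|\tilde\bxi_i\|_2^2$ by identical distribution.

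Next I would invoke the standard Laurent--Massart tail bound: for $Z\sim\chi^2_{k}$ and any $x>0$,
\begin{align*}
\Pr\bigl(|Z-k|\geq 2\sqrt{kx}+2x\bigr)\leq 2e^{-x}.
\end{align*}
Applying this with $k=d-1$ and $x=\log(4n_0/\tilde\delta)$, each individual deviation $\bigl|\|\bxi_i\|_2^2 - (d-1)\sigma_p^2\bigr|$ is bounded by $\sigma_p^2\bigl(2\sqrt{(d-1)\log(4n_0/\tilde\delta)}+2\log(4n_0/\tilde\delta)\bigr)$ with probability at least $1-\tilde\delta/(2n_0)$. A union bound over the $n_0$ vectors $\bxi_i$ and the $n_0$ vectors $\tilde\bxi_i$ (total $2n_0$ events) leaves the overall success probability at $1-\tilde\delta$.

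Finally I would absorb the constants. The gap between $(d-1)\sigma_p^2$ and $d\sigma_p^2$ is just $\sigma_p^2$, which is dominated by $\sigma_p^2\sqrt{d\log(4n_0/\tilde\delta)}$; likewise $2\log(4n_0/\tilde\delta)\leq 2\sqrt{d\log(4n_0/\tilde\delta)}$ as soon as $d\geq\log(4n_0/\tilde\delta)$, which is implied by the dimension condition in Condition~\ref{con:pre-train_fine_tune}. Both slack terms can therefore be folded into a single absolute constant $C_2$, giving the stated two-sided bound simultaneously for all $\bxi_i$ and $\tilde\bxi_i$. There is no real obstacle here; the only point requiring mild care is the degeneracy of the covariance, which is dispatched at the start by the projector argument.
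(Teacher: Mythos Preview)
Your proposal is correct and follows essentially the same approach as the paper: the paper also recognizes $\|\bxi_i\|_2^2$ as a sum of sub-exponential summands, applies Bernstein's inequality to get a per-vector bound with failure probability $\tilde\delta/(2n_0)$, and then union-bounds over the $2n_0$ noise vectors. The only cosmetic difference is that you invoke the Laurent--Massart $\chi^2$ tail and explicitly handle the rank-$(d-1)$ degeneracy, whereas the paper phrases it directly via the sub-exponential $\psi_1$ norm and Bernstein; both routes yield the stated bound with an absolute constant $C_2$.
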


Based on the Lemma~\ref{lemma:norm_bound2}, 
the proof of Lemma~\ref{lemma:SimCLR_iterates} is presented as follows.
\begin{proof}[Proof of Lemma~\ref{lemma:SimCLR_iterates}]
By direct calculation, we have 
\begin{align*}
    \nabla_{\wb_r}  L(\Wb) =&\frac{1}{n_0 \tau}\sum_{i=1}^{n_0} \mathrm{softmax}_i \cdot 
        \sum_{i^{\prime}\neq i}  \exp(\text{sim}_{i,i^{\prime}}/\tau - \text{sim}_i/\tau ) \cdot (\nabla_{\wb_r}\text{sim}_{i,i^{\prime}}   -  \nabla_{\wb_r}\text{sim}_i),
\end{align*}
where
\begin{align*}
    &\mathrm{softmax}_i = \frac{\exp(\text{sim}_i/\tau)}{\exp(\text{sim}_i/\tau) + \sum_{i^{\prime}\neq i}\exp(\text{sim}_{i,i^{\prime}}/\tau) } ,\\
    &\nabla_{\wb_r} \text{sim}_i = (\zb_i\tilde{\zb_i}^{\top} +\tilde{\zb_i}\zb_i^{\top})\cdot \wb_r ,\\
    &\nabla_{\wb_r} \text{sim}_{i,i^{\prime}} =(\zb_i \zb_{i^{\prime}}^{\top} + \zb_{i^{\prime}}\zb_i^{\top} )\cdot\wb_r.
\end{align*}
Reorganizing terms them gives 
\begin{align*}
    \nabla_{\wb_r}  L(\Wb) =&\frac{1}{n_0 \tau}\sum_{i=1}^{n_0} 
        \sum_{i^{\prime}\neq i}  \mathrm{softmax}_i \cdot \exp(\text{sim}_{i,i^{\prime}}/\tau - \text{sim}_i/\tau ) \cdot (\nabla_{\wb_r}\text{sim}_{i,i^{\prime}}   -  \nabla_{\wb_r}\text{sim}_i)\\
        =& \frac{1}{n_0 \tau}\sum_{i=1}^{n_0} 
        \sum_{i^{\prime}\neq i}  \left( \frac{\exp(\text{sim}_{i,i^{\prime}}/\tau )}{\exp(\text{sim}_i/\tau) + \sum_{i''\neq i}\exp(\text{sim}_{i,i^{\prime}}/\tau) } \right) \cdot (\zb_i \zb_{i^{\prime}}^{\top} + \zb_{i^{\prime}}\zb_i^{\top}  - \zb_i\tilde{\zb_i}^{\top} -\tilde{\zb_i}\zb_i^{\top}) \wb_r.
\end{align*}
Now define 
\begin{align*}
    \bXi^{(t)} =  -\frac{\eta}{n_0^2 \tau} \cdot \sum_{i=1}^{n_0} 
        \sum_{i^{\prime}\neq i}  \left( \frac{n_0\cdot \exp(\text{sim}^{(t)}_{i,i^{\prime}}/\tau )}{\exp(\text{sim}^{(t)}_i/\tau) + \sum_{i''\neq i}\exp(\text{sim}^{(t)}_{i,i^{\prime}}/\tau) } -  1 \right) \cdot (\zb_i \zb_{i^{\prime}}^{\top} + \zb_{i^{\prime}}\zb_i^{\top}  - \zb_i\tilde{\zb_i}^{\top} -\tilde{\zb_i}\zb_i^{\top}).
\end{align*}
Then by gradient descent update rule, we have
\begin{align*}
    \wb_r^{(t+1)} = \wb_r^{(t)} + (\Ab + \bXi^{(t)} ) \wb_r^{(t)}.
\end{align*}
Moreover, by the definition of $\Wb$ and the fact that $\Ab$, $\bXi^{(t)}$ are both symmetric matrices, we also have
\begin{align}\label{eq:proof_SimCLR_iterates_Wupdate}
    \Wb^{(t+1)} = \Wb^{(t)} +  \Wb^{(t)} (\Ab + \bXi^{(t)} ).
\end{align}
By the definition of $T_{\mathrm{SimCLR}}$ and the assumption that $\cE_{\mathrm{SimCLR}} \leq 1/2$, for $t \leq  T_{\mathrm{SimCLR}}$ we have
\begin{align*}
    \big[ 1 + (1 - \cE_{\mathrm{SimCLR}}) \cdot \| \Ab\|_2 \big]^t
    &\leq \max\Bigg\{ 288 M^{\frac{1}{q-2}}\cdot\frac{\log(2/\sigma_0)^{\frac{1}{q-2}}\cdot \sqrt{\log(dn)} \cdot \sqrt{\log(md)} }{n^{\frac{1}{q-2}} \mathrm{SNR}^{\frac{q}{q-2}}}, 2 \Bigg\}.
\end{align*}
This inequality further implies that 
\begin{align}
    \big[ 1 + (1 + \sigma_0) \cdot \| \Ab\|_2 \big]^t &\leq \big[ 1 + (1  - \cE_{\mathrm{SimCLR}}) \cdot \| \Ab\|_2 \big]^{\frac{1 +  \sigma_0}{1  - \cE_{\mathrm{SimCLR}}}t} \nonumber \\
    &\leq \big[ 1 + (1  - \cE_{\mathrm{SimCLR}}) \cdot \| \Ab\|_2 \big]^{2t}  \nonumber \\
    &\leq 
    \max\Bigg\{ 288 M^{\frac{1}{q-2}}\cdot\frac{\log(2/\sigma_0)^{\frac{1}{q-2}}\cdot \sqrt{\log(dn)} \cdot \sqrt{\log(md)} }{n^{\frac{1}{q-2}} \mathrm{SNR}^{\frac{q}{q-2}}}, 2 \Bigg\}^2 \label{eq:power_t_bound}
\end{align}
for all $t=0,1,\ldots, T_{\mathrm{SimCLR}}$, 
where the first inequality follows by the fact that $1 + a \leq (1 + b)^{a/b}$ for all $a > b>0$, and the second inequality follows by the assumption that $\cE_{\mathrm{SimCLR}} \leq 1/4$ and $\sigma_0 \leq 1/4$. 

In the following, we utilize \eqref{eq:power_t_bound} prove the upper bound $\| \bXi^{(t)} \|_2 \leq \sigma_0  \cdot \| \Ab \|_2$ by induction. To do so, we first check the bound at $t = 0$. We have
\begin{align}
    \| \bXi^{(0)} \|_2 &\leq \frac{\eta}{\tau} \max_{i\neq i'} \Bigg| \frac{n_0\cdot \exp(\text{sim}^{(0)}_{i,i^{\prime}}/\tau )}{\exp(\text{sim}^{(0)}_i/\tau) + \sum_{i''\neq i}\exp(\text{sim}^{(0)}_{i,i^{\prime}}/\tau) } -  1\Bigg| \cdot \| \zb_i \zb_{i^{\prime}}^{\top} + \zb_{i^{\prime}}\zb_i^{\top}  - \zb_i\tilde{\zb_i}^{\top} -\tilde{\zb_i}\zb_i^{\top}\|_2\nonumber \\
    & \leq \frac{2\eta}{\tau} \max_{i\neq i'} \Bigg| \frac{n_0\cdot \exp(\text{sim}^{(0)}_{i,i^{\prime}}/\tau )}{\exp(\text{sim}^{(0)}_i/\tau) + \sum_{i''\neq i}\exp(\text{sim}^{(0)}_{i,i^{\prime}}/\tau) } -  1\Bigg|\cdot ( \| \zb_i \|_2 \| \zb_{i'} \|_2  + \| \zb_i \|_2 \| \tilde\zb_{i} \|_2) \nonumber\\
    &\leq \frac{8\eta}{\tau}\cdot (\| \bmu\|_2^2 + 2\sigma_p^2 d) \cdot \max_{i\neq i'} \Bigg| \frac{n_0\cdot \exp(\text{sim}^{(0)}_{i,i^{\prime}}/\tau )}{\exp(\text{sim}^{(0)}_i/\tau) + \sum_{i''\neq i}\exp(\text{sim}^{(0)}_{i,i^{\prime}}/\tau) } -  1\Bigg|, \label{eq:proof_SimCLR_iterates_eq0}
\end{align}
where the last inequality follows by Lemma~\ref{lemma:norm_bound2}, which implies that $\| \bxi_i\|_2^2, \| \tilde{\bxi}_i\|_2^2 \leq 2\sigma_p^2 d$ with probability at least $1 - d^{-3}$.  Moreover, by definition, we have
\begin{align*}
    |\text{sim}^{(0)}_i| = |\la \Wb^{(0)} \zb_i,\Wb^{(0)} \tilde{\zb_i} \ra| \leq \|  \Wb^{(0)} \|_2^2 \cdot  \| \zb_i \|_2\cdot  \| \tilde{\zb}_i \|_2 \leq 2 \|  \Wb^{(0)} \|_2^2\cdot (\| \bmu\|_2^2 + 2\sigma_p^2 d)
\end{align*}
for all $i\in [n_0]$. 
Since $ \Wb^{(0)} $ is a random matrix whose entries are independently generated from $N(0, \sigma_0^2)$, with probability at least $1 - d^{-3}$, we have
\begin{align*}
    \|  \Wb^{(0)} \|_2^2 \leq c_1 \cdot \sigma_0^2\cdot (d + n_0) \leq 2c_1 \cdot \sigma_0^2 d, 
\end{align*}
where $c_1$ is an absolute constant. Therefore, we have
\begin{align*}
    |\text{sim}^{(0)}_i| \leq c_2 \cdot \sigma_0^2 d\cdot (\| \bmu\|_2^2 + 2\sigma_p^2 d) 
\end{align*}
for all $i\in [n_0]$, where $c_2$ is an absolute constant. With exactly the same proof, we also have
\begin{align*}
    |\text{sim}^{(0)}_{i,i'}| \leq 2 \|  \Wb^{(0)} \|_2^2\cdot (\| \bmu\|_2^2 + 2\sigma_p^2 d) \leq c_3 \cdot \sigma_0^2 d\cdot (\| \bmu\|_2^2 + 2\sigma_p^2 d)
\end{align*}
for all $i,i'\in [n]$ with $i\neq i'$, 
where $c_3$ is an absolute constant. 
Now by the assumption that $\sigma_0\leq O(\tau^{1/2} \cdot d^{-1/2}\cdot \min\{ \| \bmu\|_2^{-1}, \sigma_p^{-1} d^{-1/2} \} )$, we have $|\text{sim}^{(0)}_i|, |\text{sim}^{(0)}_{i,i'}| \leq \tau /4$. Since $|\exp(z) - 1| \leq 2|z|$ for all $z\leq 1$, we have 
\begin{align}
    &|\exp(\text{sim}^{(0)}_i/\tau) - 1| \leq 2 |\text{sim}^{(0)}_i/\tau| \leq c_4 \cdot \sigma_0^2 d\cdot (\| \bmu\|_2^2 + 2\sigma_p^2 d) \leq 1/2,\label{eq:proof_SimCLR_iterates_eq1} \\
    &|\exp(\text{sim}^{(0)}_{i,i^{\prime}}/\tau ) - 1| \leq 2 |\text{sim}^{(0)}_{i,i^{\prime}}/\tau | \leq c_5 \cdot \sigma_0^2 d\cdot (\| \bmu\|_2^2 + 2\sigma_p^2 d)  \leq 1/2\label{eq:proof_SimCLR_iterates_eq2}
\end{align}
for all  $i,i'\in [n]$ with $i\neq i'$, 
where $c_4,c_5$ are absolute constants. Therefore, for all  $i,i'\in [n]$ with $i\neq i'$, we have
\begin{align*}
    &\Bigg| \frac{n_0\cdot \exp(\text{sim}^{(0)}_{i,i^{\prime}}/\tau )}{\exp(\text{sim}^{(0)}_i/\tau) + \sum_{i''\neq i}\exp(\text{sim}^{(0)}_{i,i^{\prime}}/\tau) } -  1\Bigg|\\
    &\qquad \leq \Bigg| \frac{n_0\cdot [\exp(\text{sim}^{(0)}_{i,i^{\prime}}/\tau ) - 1]}{\exp(\text{sim}^{(0)}_i/\tau) + \sum_{i''\neq i}\exp(\text{sim}^{(0)}_{i,i^{\prime}}/\tau) } \Bigg| + \Bigg| \frac{n_0}{\exp(\text{sim}^{(0)}_i/\tau) + \sum_{i''\neq i}\exp(\text{sim}^{(0)}_{i,i^{\prime}}/\tau) } - 1\Bigg|\\
    &\qquad \leq \Bigg| \frac{n_0\cdot [\exp(\text{sim}^{(0)}_{i,i^{\prime}}/\tau ) - 1]}{\exp(\text{sim}^{(0)}_i/\tau) + \sum_{i''\neq i}\exp(\text{sim}^{(0)}_{i,i^{\prime}}/\tau) } \Bigg| + \Bigg| \frac{1 - \exp(\text{sim}^{(0)}_i/\tau)  + \sum_{i''\neq i}[1 -\exp(\text{sim}^{(0)}_{i,i^{\prime}}/\tau)]}{\exp(\text{sim}^{(0)}_i/\tau) + \sum_{i''\neq i}\exp(\text{sim}^{(0)}_{i,i^{\prime}}/\tau) } \Bigg|\\
    &\qquad \leq \Bigg| \frac{n_0\cdot [\exp(\text{sim}^{(0)}_{i,i^{\prime}}/\tau ) - 1]}{n_0/2 } \Bigg| + \Bigg| \frac{1 - \exp(\text{sim}^{(0)}_i/\tau)  + \sum_{i''\neq i}[1 -\exp(\text{sim}^{(0)}_{i,i^{\prime}}/\tau)]}{n_0 /2}\Bigg|\\
    &\qquad \leq c_6 \cdot \sigma_0^2 d\cdot (\| \bmu\|_2^2 + 2\sigma_p^2 d), 
\end{align*}
where $c_6$ is an absolute constant, and the first inequality above follows the triangle inequality, the third inequality applies \eqref{eq:proof_SimCLR_iterates_eq1} and \eqref{eq:proof_SimCLR_iterates_eq2} to the denominators, and the last inequality applies \eqref{eq:proof_SimCLR_iterates_eq1} and \eqref{eq:proof_SimCLR_iterates_eq2} to the numerators. Plugging the bound above into \eqref{eq:proof_SimCLR_iterates_eq0} then gives
\begin{align}
    \| \bXi^{(0)} \|_2 &\leq \frac{c_7\eta}{\tau}\cdot  \sigma_0^2 d\cdot (\| \bmu\|_2^2 + 2\sigma_p^2 d)^2  \leq  \sigma_0\cdot \frac{\eta}{\tau}\cdot \|\bmu\|^2_2 \nonumber \\
    &\leq \sigma_0\cdot(1 - \cE_{\mathrm{SimCLR}})\cdot \frac{2\eta}{\tau}\cdot \|\bmu\|^2_2 \leq \sigma_0\cdot \|\Ab \|_2, \label{eq:proof_SimCLR_iterates_iteration0}
\end{align}
 where $c_7$ is an absolute constant, and
 the second inequality follows by the assumption that $\sigma_0 \leq O(d^{-1} \cdot \|\bmu\|^2_2\cdot\min\{    \|\bmu\|_2^{-4}, (\sigma_p^2d)^{-2}\}) $, the third inequality is by the assumption that $\cE_{\mathrm{SimCLR}} \leq 1/2$,
and the last inequality is by $ (1 - \cE_{\mathrm{SimCLR}})\cdot \frac{2\eta}{\tau}\cdot \|\bmu\|^2_2 \leq\|\Ab \|_2 $ in Lemma \ref{lemma:spectral_decomposition}.
Now let us suppose that there exists $t_0 \leq T_{\mathrm{SimCLR}} - 1$ such that for $t=0,\ldots, t_0$, it holds that 
\begin{align*}
    \| \bXi^{(t)} \|_2 \leq \sigma_0\cdot \| \Ab \|_2.
\end{align*}
Then we have
\begin{align*}
    \| \Ab + \bXi^{(t)} \|_2 \leq (1 + \sigma_0)\cdot \| \Ab \|_2,
\end{align*}
for all $t = 0,\ldots, t_0$. Then by \eqref{eq:proof_SimCLR_iterates_Wupdate}, we have
\begin{align}
    \|\Wb^{(t_0+1)}\|_2 &\leq \|\Wb^{(t_0)}\|_2 \cdot  \| \Ib+ \Ab + \bXi^{(t_0)} \|_2\nonumber \\
    &\leq \|\Wb^{(t_0)}\|_2\cdot (1 + \| \Ab \|_2 + \| \bXi^{(t_0)} \|_2) \nonumber \\
    &\leq (1 + (1+\sigma_0)\cdot\|\Ab \|_2)\cdot \|\Wb^{(t_0)}\|_2\nonumber \\
    &\leq \cdots \nonumber \\
    &\leq (1 + (1+\sigma_0)\cdot\|\Ab \|_2)^{t_0+1}\cdot \| \Wb^{(0)}\|_2 \nonumber\\
    &\leq 
    \max\Bigg\{ 288 M^{\frac{1}{q-2}}\cdot\frac{\log(2/\sigma_0)^{\frac{1}{q-2}}\cdot \sqrt{\log(dn)} \cdot \sqrt{\log(md)} }{n^{\frac{1}{q-2}} \mathrm{SNR}^{\frac{q}{q-2}}}, 2 \Bigg\}^2 \cdot O(\sigma_0\sqrt{d}) ,\label{eq:Wnorm_bound_for_induction}
\end{align}
where the last inequality follows by \eqref{eq:power_t_bound}. The rest of the proof follows almost the same derivation as the bound for $\| \bXi^{(0)} \|_2$. We have

\begin{align}
    \| \bXi^{(t_0+1)} \|_2 &\leq \frac{\eta}{\tau} \max_{i\neq i'} \Bigg| \frac{n_0\cdot \exp(\text{sim}^{(t_0+1)}_{i,i^{\prime}}/\tau )}{\exp(\text{sim}^{(t_0+1)}_i/\tau) + \sum_{i''\neq i}\exp(\text{sim}^{(t_0+1)}_{i,i^{\prime}}/\tau) } -  1\Bigg| \cdot \| \zb_i \zb_{i^{\prime}}^{\top} + \zb_{i^{\prime}}\zb_i^{\top}  - \zb_i\tilde{\zb_i}^{\top} -\tilde{\zb_i}\zb_i^{\top}\|_2\nonumber \\
    & \leq \frac{2\eta}{\tau} \max_{i\neq i'} \Bigg| \frac{n_0\cdot \exp(\text{sim}^{(t_0+1)}_{i,i^{\prime}}/\tau )}{\exp(\text{sim}^{(t_0+1)}_i/\tau) + \sum_{i''\neq i}\exp(\text{sim}^{(t_0+1)}_{i,i^{\prime}}/\tau) } -  1\Bigg|\cdot ( \| \zb_i \|_2 \| \zb_{i'} \|_2  + \| \zb_i \|_2 \| \tilde\zb_{i} \|_2) \nonumber\\
    &\leq \frac{8\eta}{\tau}\cdot (\| \bmu\|_2^2 + 2\sigma_p^2 d) \cdot \max_{i\neq i'} \Bigg| \frac{n_0\cdot \exp(\text{sim}^{(t_0+1)}_{i,i^{\prime}}/\tau )}{\exp(\text{sim}^{(t_0+1)}_i/\tau) + \sum_{i''\neq i}\exp(\text{sim}^{(t_0+1)}_{i,i^{\prime}}/\tau) } -  1\Bigg|, \label{eq:proof_SimCLR_iterates_induction_eq1}
\end{align}
where the last inequality follows by Lemma~\ref{lemma:norm_bound2}, which implies that $\| \bxi_i\|_2^2, \| \tilde{\bxi}_i\|_2^2 \leq 2\sigma_p^2 d$ with probability at least $1 - d^{-3}$.  Moreover, by definition, we have
\begin{align*}
    |\text{sim}^{(t_0+1)}_i| = |\la \Wb^{(t_0+1)} \zb_i,\Wb^{(t_0+1)} \tilde{\zb_i} \ra| \leq \|  \Wb^{(t_0+1)} \|_2^2 \cdot  \| \zb_i \|_2\cdot  \| \tilde{\zb}_i \|_2 \leq 2 \|  \Wb^{(t_0+1)} \|_2^2\cdot (\| \bmu\|_2^2 + 2\sigma_p^2 d)
\end{align*}
for all $i\in [n_0]$. 
By \eqref{eq:Wnorm_bound_for_induction}, we have 
\begin{align*}
    \|  \Wb^{(t_0+1)} \|_2^2 \leq  \max\Bigg\{288 M^{\frac{1}{q-2}}\cdot\frac{\log(2/\sigma_0)^{\frac{1}{q-2}}\cdot \sqrt{\log(dn)} \cdot \sqrt{\log(md)} }{n^{\frac{1}{q-2}} \mathrm{SNR}^{\frac{q}{q-2}}}, 2 \Bigg\}^4 \cdot O(\sigma_0^2 d).
\end{align*} 
Therefore, we have
\begin{align*}
    |\text{sim}^{(t_0+1)}_i| \leq \max\Bigg\{ 288 M^{\frac{1}{q-2}}\cdot\frac{\log(2/\sigma_0)^{\frac{1}{q-2}}\cdot \sqrt{\log(dn)} \cdot \sqrt{\log(md)} }{n^{\frac{1}{q-2}} \mathrm{SNR}^{\frac{q}{q-2}}}, 2 \Bigg\}^4 \cdot O( \sigma_0^2 d \cdot(\| \bmu\|_2^2 + 2\sigma_p^2 d))
\end{align*}
for all $i\in [n_0]$. With exactly the same proof, we also have
\begin{align*}
    |\text{sim}^{(t_0+1)}_{i,i'}| \leq \max\Bigg\{ 288 M^{\frac{1}{q-2}}\cdot\frac{\log(2/\sigma_0)^{\frac{1}{q-2}}\cdot \sqrt{\log(dn)} \cdot \sqrt{\log(md)} }{n^{\frac{1}{q-2}} \mathrm{SNR}^{\frac{q}{q-2}}}, 2 \Bigg\}^4 \cdot O( \sigma_0^2 d \cdot(\| \bmu\|_2^2 + 2\sigma_p^2 d))
\end{align*}
for all $i,i'\in [n]$ with $i\neq i'$. 
Now by the assumption that $\sigma_0 = \min\Bigg\{ 288 M^{\frac{1}{q-2}}\cdot\frac{\log(2/\sigma_0)^{\frac{1}{q-2}}\cdot \sqrt{\log(dn)} \cdot \sqrt{\log(md)} }{n^{\frac{1}{q-2}} \mathrm{SNR}^{\frac{q}{q-2}}}, 2 \Bigg\}^{-2}    \cdot O( \tau^{1/2} \cdot d^{-1/2}\cdot \min\{ \| \bmu\|_2^{-1}, \sigma_p^{-1} d^{-1/2} \} ) $, we have $|\text{sim}^{(t_0+1)}_i|, |\text{sim}^{(t_0+1)}_{i,i'}| \leq \tau / 4$. Since $|\exp(z) - 1| \leq 2|z|$ for all $z\leq 1$, we have 
\begin{align}
    |\exp(\text{sim}^{(t_0+1)}_i/\tau) - 1| \leq&~2 |\text{sim}^{(t_0+1)}_i/\tau| \nonumber\\
    \leq&~\max\Bigg\{ 288 M^{\frac{1}{q-2}}\cdot\frac{\log(2/\sigma_0)^{\frac{1}{q-2}}\cdot \sqrt{\log(dn)} \cdot \sqrt{\log(md)} }{n^{\frac{1}{q-2}} \mathrm{SNR}^{\frac{q}{q-2}}}, 2 \Bigg\}^4 \cdot O( \sigma_0^2 d \cdot(\| \bmu\|_2^2 + 2\sigma_p^2 d)) \nonumber\\
    \leq& ~1/2,\label{eq:proof_SimCLR_iterates_induction_eq2}
\end{align}
and 
\begin{align}
    |\exp(\text{sim}^{(t_0+1)}_{i,i^{\prime}}/\tau ) - 1| \leq&~2 |\text{sim}^{(t_0+1)}_{i,i^{\prime}}/\tau | \nonumber\\
    \leq&~\max\Bigg\{ 288 M^{\frac{1}{q-2}}\cdot\frac{\log(2/\sigma_0)^{\frac{1}{q-2}}\cdot \sqrt{\log(dn)} \cdot \sqrt{\log(md)} }{n^{\frac{1}{q-2}} \mathrm{SNR}^{\frac{q}{q-2}}}, 2 \Bigg\}^4 \cdot O( \sigma_0^2 d \cdot(\| \bmu\|_2^2 + 2\sigma_p^2 d)) \nonumber\\
    \leq&~1/2\label{eq:proof_SimCLR_iterates_induction_eq3}
\end{align}
for all  $i,i'\in [n]$ with $i\neq i'$. Therefore, for all  $i,i'\in [n]$ with $i\neq i'$, we have
\begin{align*}
    &\Bigg| \frac{n_0\cdot \exp(\text{sim}^{(t_0+1)}_{i,i^{\prime}}/\tau )}{\exp(\text{sim}^{(t_0+1)}_i/\tau) + \sum_{i''\neq i}\exp(\text{sim}^{(t_0+1)}_{i,i^{\prime}}/\tau) } -  1\Bigg|\\
    &\qquad \leq \Bigg| \frac{n_0\cdot [\exp(\text{sim}^{(t_0+1)}_{i,i^{\prime}}/\tau ) - 1]}{\exp(\text{sim}^{(t_0+1)}_i/\tau) + \sum_{i''\neq i}\exp(\text{sim}^{(t_0+1)}_{i,i^{\prime}}/\tau) } \Bigg| + \Bigg| \frac{n_0}{\exp(\text{sim}^{(t_0+1)}_i/\tau) + \sum_{i''\neq i}\exp(\text{sim}^{(t_0+1)}_{i,i^{\prime}}/\tau) } - 1\Bigg|\\
    &\qquad \leq \Bigg| \frac{n_0\cdot [\exp(\text{sim}^{(t_0+1)}_{i,i^{\prime}}/\tau ) - 1]}{\exp(\text{sim}^{(t_0+1)}_i/\tau) + \sum_{i''\neq i}\exp(\text{sim}^{(t_0+1)}_{i,i^{\prime}}/\tau) } \Bigg| + \Bigg| \frac{1 - \exp(\text{sim}^{(t_0+1)}_i/\tau)  + \sum_{i''\neq i}[1 -\exp(\text{sim}^{(0)}_{i,i^{\prime}}/\tau)]}{\exp(\text{sim}^{(t_0+1)}_i/\tau) + \sum_{i''\neq i}\exp(\text{sim}^{(t_0+1)}_{i,i^{\prime}}/\tau) } \Bigg|\\
    &\qquad \leq \Bigg| \frac{n_0\cdot [\exp(\text{sim}^{(t_0+1)}_{i,i^{\prime}}/\tau ) - 1]}{n_0/2 } \Bigg| + \Bigg| \frac{1 - \exp(\text{sim}^{(t_0+1)}_i/\tau)  + \sum_{i''\neq i}[1 -\exp(\text{sim}^{(t_0+1)}_{i,i^{\prime}}/\tau)]}{n_0 /2}\Bigg|\\
    &\qquad \leq \max\Bigg\{ 288 M^{\frac{1}{q-2}}\cdot\frac{\log(2/\sigma_0)^{\frac{1}{q-2}}\cdot \sqrt{\log(dn)} \cdot \sqrt{\log(md)} }{n^{\frac{1}{q-2}} \mathrm{SNR}^{\frac{q}{q-2}}}, 2 \Bigg\}^4 \cdot O( \sigma_0^2 d \cdot(\| \bmu\|_2^2 + 2\sigma_p^2 d)),
\end{align*}
where the first inequality follows by triangle inequality, the third inequality applies \eqref{eq:proof_SimCLR_iterates_induction_eq2} and \eqref{eq:proof_SimCLR_iterates_induction_eq3} to the denominators, and the last inequality applies \eqref{eq:proof_SimCLR_iterates_induction_eq2} and \eqref{eq:proof_SimCLR_iterates_induction_eq3} to the numerators. Plugging the bound above into \eqref{eq:proof_SimCLR_iterates_induction_eq1} then gives
\begin{align*}
    \| \bXi^{(t_0+1)} \|_2 &\leq \max\Bigg\{ 288 M^{\frac{1}{q-2}}\cdot\frac{\log(2/\sigma_0)^{\frac{1}{q-2}}\cdot \sqrt{\log(dn)} \cdot \sqrt{\log(md)} }{n^{\frac{1}{q-2}} \mathrm{SNR}^{\frac{q}{q-2}}}, 2 \Bigg\}^4 \cdot
    \frac{\eta}{\tau}\cdot O\big(\sigma_0^2 d\cdot (\| \bmu\|_2^2 + 2\sigma_p^2 d)^2 \big)\nonumber\\ 
    &\leq  \sigma_0\cdot \frac{\eta}{\tau}\cdot \|\bmu\|^2_2 
    \leq \sigma_0\cdot(1 - \cE_{\mathrm{SimCLR}})\cdot \frac{2\eta}{\tau}\cdot \|\bmu\|^2_2 \leq \sigma_0\cdot \|\Ab \|_2, 
\end{align*}
 where
 the second inequality follows by the assumption that $\sigma_0 \leq \min\Bigg\{ 288 M^{\frac{1}{q-2}}\cdot\frac{\log(2/\sigma_0)^{\frac{1}{q-2}}\cdot \sqrt{\log(dn)} \cdot \sqrt{\log(md)} }{n^{\frac{1}{q-2}} \mathrm{SNR}^{\frac{q}{q-2}}}, 2 \Bigg\}^{-4} \cdot O(d^{-1}\cdot\min\{ \|\bmu\|_2^{-4}, (\sigma_p^2d)^{-2}\}\cdot \|\bmu\|^2_2)$, the third inequality is by the assumption that $\cE_{\mathrm{SimCLR}} \leq 1/2$,
and the last inequality is by $ (1 - \cE_{\mathrm{SimCLR}})\cdot \frac{2\eta}{\tau}\cdot \|\bmu\|^2_2 \leq\|\Ab \|_2 $ in Lemma \ref{lemma:spectral_decomposition}. Therefore, by induction, we conclude that 
\begin{align*}
    \| \bXi^{(t)} \|_2 &\leq \sigma_0\cdot \|\Ab \|_2
\end{align*}
for all $t=0,\ldots, T_{\mathrm{SimCLR}}$. This finishes the proof.
\end{proof}

\subsubsection{Proof of Lemma~\ref{lemma:spectral_decomposition}}\label{sec:proofofLemma5.2}
In this section, the following Lemma~\ref{lemma:upperbDelta} is introduced to prove Lemma~\ref{lemma:spectral_decomposition}.

\begin{lemma}\label{lemma:upperbDelta}
    Let $\Ab$ be the matrix defined in Lemma~\ref{lemma:SimCLR_iterates}, and define $\Ab_0 = \frac{2\eta}{n_0^2\tau}\left[n_0^2-(\sum_{i=1}^{n_0}y_i)^2\right]\bmu\bmu^{\top}$, $\bDelta = \Ab - \Ab_0$. Then it holds that
    \begin{align*}
        \| \bDelta \|_2 \leq&  \Bigg[\tilde{O}(\mathrm{SNR}^{-1} \cdot n_0^{-1}) +\tilde{O}(\mathrm{SNR}^{-1} \cdot \frac{1}{\sqrt{n_0}}) +\tilde{O}(\mathrm{SNR}^{-2} \cdot n_0^{-1})\\
        &+ \tilde{O}(\mathrm{SNR}^{-2})\cdot  \max\Bigg\{\sqrt{\frac{\log(9/\tilde{\delta})}{dn_0}} , \frac{\log(9/\tilde{\delta})}{n_0}\Bigg\}\Bigg] 
        \frac{\eta}{\tau} \|\bmu\|_2^2.
    \end{align*}
\end{lemma}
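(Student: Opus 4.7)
The plan is to compute $\bDelta = \Ab - \Ab_0$ in closed form by substituting $\zb_i = y_i\bmu + \bxi_i$, $\tilde\zb_i = y_i\bmu + \tilde\bxi_i$ into the definition of $\Ab$ and bounding each surviving cross term with a concentration inequality of the appropriate flavor. First I would rewrite $\Ab = \frac{\eta}{n_0^2\tau}\bigl[(n_0-1)\sum_i(\zb_i\tilde\zb_i^\top + \tilde\zb_i\zb_i^\top) - 2\sum_{i\neq i'}\zb_i\zb_{i'}^\top\bigr]$, expand using $y_i^2 = 1$, and verify that the coefficient of $\bmu\bmu^\top$ is exactly $2(n_0^2 - S^2)$ where $S := \sum_i y_i$, so that the pure-signal piece cancels $\Ab_0$ precisely. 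Setting $\mathbf{s} := \sum_i \bxi_i$ for convenience, what remains splits cleanly into (i) a signal-noise rank-two block $\bmu\vb^\top + \vb\bmu^\top$ with $\vb := (n_0-1)\sum_i y_i(\bxi_i+\tilde\bxi_i) - 2(S\mathbf{s} - \sum_i y_i\bxi_i)$; (ii) a rank-one noise-noise piece $-2\mathbf{s}\mathbf{s}^\top$; and (iii) the genuinely random matrix sum $(n_0-1)\sum_i(\bxi_i\tilde\bxi_i^\top + \tilde\bxi_i\bxi_i^\top) + 2\sum_i \bxi_i\bxi_i^\top$.

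For block (i), the operator norm is at most $2\|\bmu\|_2\|\vb\|_2$, so the triangle inequality on $\vb$ reduces everything to Gaussian tail estimates: $\|\sum_i y_i\bxi_i\|_2$, $\|\sum_i y_i\tilde\bxi_i\|_2$, $\|\mathbf{s}\|_2$ each concentrate at $\tilde O(\sigma_p\sqrt{dn_0})$ by standard $\chi^2$ tail bounds on the $(d-1)$-dimensional Gaussian projected onto $\mathrm{span}(\bmu)^\perp$, while $|S| \lesssim \sqrt{n_0\log(1/\tilde\delta)}$ by Hoeffding. The dominant $(n_0-1)\sum_i y_i(\bxi_i+\tilde\bxi_i)$ component of $\vb$, after dividing by $n_0^2/\eta\tau$ and using $\mathrm{SNR} = \|\bmu\|_2/(\sigma_p\sqrt d)$, produces the $\tilde O(\mathrm{SNR}^{-1}n_0^{-1/2})\cdot(\eta/\tau)\|\bmu\|_2^2$ term; the subdominant $2S\mathbf{s}$ component produces the $\tilde O(\mathrm{SNR}^{-1}n_0^{-1})$ term. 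For block (ii), $\|\mathbf{s}\mathbf{s}^\top\|_2 = \|\mathbf{s}\|_2^2 \lesssim \sigma_p^2 dn_0\log(1/\tilde\delta)$ rescales directly to the $\tilde O(\mathrm{SNR}^{-2}n_0^{-1})$ contribution.

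Block (iii) is the technical core and is handled by matrix Bernstein applied to the mean-zero sum $\sum_i X_i$ with $X_i := \bxi_i\tilde\bxi_i^\top + \tilde\bxi_i\bxi_i^\top$. Using Lemma~\ref{lemma:norm_bound2} to truncate $\max(\|\bxi_i\|_2, \|\tilde\bxi_i\|_2)^2 \lesssim \sigma_p^2 d$ on a high-probability event supplies the deterministic uniform bound $R = \max_i\|X_i\|_2 \lesssim \sigma_p^2 d$, while the variance proxy $\|\EE X_i^2\|_2 \lesssim \sigma_p^4 d$ comes from $\EE[\|\tilde\bxi_i\|_2^2]\cdot\EE[\bxi_i\bxi_i^\top] \preceq d\sigma_p^4\Pb_\bmu^\perp$ together with mixed cross terms of the strictly lower order $\sigma_p^4$. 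Matrix Bernstein then yields $\|\sum_i X_i\|_2 \lesssim \sigma_p^2\sqrt{dn_0\log(9/\tilde\delta)} + \sigma_p^2 d\log(9/\tilde\delta)$, and rescaling by $\eta(n_0-1)/(n_0^2\tau)$ exactly reproduces the $\tilde O(\mathrm{SNR}^{-2})\cdot\max\{\sqrt{\log(9/\tilde\delta)/(dn_0)}, \log(9/\tilde\delta)/n_0\}$ contribution. The residual $2\sum_i\bxi_i\bxi_i^\top$ has mean $2n_0\sigma_p^2\Pb_\bmu^\perp$ but contributes only $\tilde O(\mathrm{SNR}^{-2}/(n_0 d))$, which is strictly lower order. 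The chief obstacle is preserving the variance-versus-uniform-norm gap in the Bernstein step: each single $X_i$ has operator norm of order $\sigma_p^2 d$ (very large), yet the variance proxy stays at $\sigma_p^4 d$ (only $d$-linear, not $d^2$), and preserving this gap is exactly what yields the tight $\sqrt{\log/(dn_0)}$ rate rather than a loose $\sqrt{\log/n_0}$; a union bound across the $O(1)$ tail events delivers the single probability $1-\tilde\delta$ stated in the lemma.
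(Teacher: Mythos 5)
Your decomposition and the paper's are the same up to regrouping: you fold the paper's $\bDelta_1,\bDelta_2,\bDelta_3$ into a single rank-two block $\bmu\vb^\top+\vb\bmu^\top$ (your block (i)), keep $-2\mathbf{s}\mathbf{s}^\top$ as block (ii) (the paper's $\bDelta_4$), and merge the paper's $\bDelta_5$ and $\bDelta_6$ into block (iii). The algebra is right, the cancellation of the pure $\bmu\bmu^\top$ piece against $\Ab_0$ checks out, and the scalings of blocks (i) and (ii) into the $\tilde O(\mathrm{SNR}^{-1}n_0^{-1/2})$, $\tilde O(\mathrm{SNR}^{-1}n_0^{-1})$, and $\tilde O(\mathrm{SNR}^{-2}n_0^{-1})$ contributions are correct. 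The genuine divergence from the paper is in block (iii): the paper proves Lemma~\ref{lemma:eigenvalue_bound} by a $1/4$-net over $\SSS^{d-1}$ plus scalar Bernstein for the sub-exponential products $\la\hat\ab,\bxi_i\ra\la\hat\ab,\tilde\bxi_i\ra$, collecting a $9^d$ union bound that produces the $d\log(9/\tilde\delta)$ factor; you instead propose matrix Bernstein on $X_i=\bxi_i\tilde\bxi_i^\top+\tilde\bxi_i\bxi_i^\top$ with a truncation step to supply the uniform bound $R\lesssim\sigma_p^2 d$, and the variance-proxy computation $\|\EE X_i^2\|_2\lesssim\sigma_p^4 d$ you sketch is correct (the dominant piece is $\EE\|\tilde\bxi_i\|_2^2\cdot\EE\bxi_i\bxi_i^\top$; the mixed pieces $\sigma_p^2\EE\bxi_i\bxi_i^\top$ are indeed $d$-fold smaller). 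Both routes land at $\sigma_p^2\max\{\sqrt{dn_0\,\text{polylog}},\,d\,\text{polylog}\}$, and after rescaling by $\eta(n_0-1)/(n_0^2\tau)$ reproduce the stated $\tilde O(\mathrm{SNR}^{-2})\cdot\max\{\sqrt{\log/(dn_0)},\,\log/n_0\}$ contribution; the net approach keeps the failure probability in the form $\log(9/\tilde\delta)$ while matrix Bernstein costs a harmless extra $\log d$ absorbed by $\tilde O$. Two small things to tighten if you write this out: (1) after the truncation you invoke from Lemma~\ref{lemma:norm_bound2}, the conditional $X_i$ are no longer exactly mean-zero, so you need either a one-line correction showing the conditional mean is negligible, or a sub-exponential matrix Bernstein variant that avoids truncation; (2) your bound of $\tilde O(\mathrm{SNR}^{-2}/(n_0 d))$ for the residual $2\sum_i\bxi_i\bxi_i^\top$ accounts only for its mean $2n_0\sigma_p^2\Pb_\bmu^\perp$ -- the fluctuation around the mean is of order $\sigma_p^2 d$ (cf.\ the paper's use of Vershynin's Theorem 5.39 for $\bDelta_5$), which after scaling gives $\tilde O(\mathrm{SNR}^{-2}/n_0^2)$; this is still strictly lower order than the displayed $\tilde O(\mathrm{SNR}^{-2}/n_0)$ term, so your conclusion stands, but the exponent you wrote is off.
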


Based on the  Lemma~\ref{lemma:upperbDelta}, we give the following proof of Lemma~\ref{lemma:spectral_decomposition}.

\begin{proof}[Proof of Lemma~\ref{lemma:spectral_decomposition}]
Consider $\hat\bSigma = \Ab_0 = \frac{2\eta}{n_0^2\tau}\left[n_0^2-(\sum_{i=1}^{n_0}y_i)^2\right]\bmu\bmu^{\top}$, $\bSigma = \Ab$, and  
denote $\lambda_1\geq \cdots\geq\lambda_d$, $\hat{\lambda}_1\geq \cdots\geq\hat{\lambda}_d$, $\tilde{\lambda}_1\geq \cdots\geq\tilde{\lambda}_d$ the eigenvalues of matrix $\bSigma$, $\hat\bSigma$, and $\bDelta=\bSigma-\hat\bSigma$ respectively. 
By Lemma~\ref{lemma:upperbDelta}, we have
\begin{align*}
    \|\bDelta\|_2 \leq& \Bigg[\tilde{O}(\mathrm{SNR}^{-1} \cdot n_0^{-1}) +\tilde{O}(\mathrm{SNR}^{-1} \cdot \frac{1}{\sqrt{n_0}}) +\tilde{O}(\mathrm{SNR}^{-2} \cdot n_0^{-1})+c_3\cdot \mathrm{SNR}^{-2}\cdot \frac{\log(9/\delta)}{n_0}\Bigg] \cdot
    \frac{\eta}{\tau} \|\bmu\|_2^2.
\end{align*}
Denote
\begin{align*}
    U = \tilde{O}(\mathrm{SNR}^{-1} \cdot n_0^{-1}) +\tilde{O}(\mathrm{SNR}^{-1} \cdot \frac{1}{\sqrt{n_0}}) +\tilde{O}(\mathrm{SNR}^{-2} \cdot n_0^{-1})+c_3\cdot \mathrm{SNR}^{-2}\cdot \frac{\log(9/\delta)}{n_0},
\end{align*}
then by assumption we have $U=\tilde{O}(\mathrm{SNR}^{-1}\cdot n_0^{-1/2})$ and $U \leq 1/2$.
Then we have
\begin{align}
    \lambda_1  &\geq \|A_0\|_2 -\|\bDelta\|_2\nonumber \\
    &= 
    \frac{2\eta}{n_0^2\tau}\left[n_0^2-\Bigg(\sum_{i=1}^{n_0}y_i\Bigg)^2\right] \cdot \|\bmu\|^2_2  - \|\bDelta\|_2 \nonumber\\
    &\geq \frac{2\eta}{\tau}\cdot \bigg(1 - \frac{2\log(2 / \tilde{\delta}) }{n_0} \bigg)\cdot \|\bmu\|^2_2 -\|\bDelta\|_2 \nonumber \\
    &\geq \bigg(1 - \frac{2\log(2 / \tilde{\delta}) }{n_0} - \frac{U}{2} \bigg)\cdot \frac{2\eta}{\tau}\|\bmu\|^2_2,\label{eq:lowerbound_lambda1}
\end{align}
where the second inequality is by  $\left|\sum_{i=1}^{n_0}y_i \right| \leq \sqrt{2n_0\log(2/\tilde{\delta})}$ in Lemma \ref{lemma:pretrain_inq1}, and the last inequality is by Lemma~\ref{lemma:upperbDelta}. This proves the lower bound of $\lambda_1$. Similarly, for the upper bound, we have that
\begin{align}\label{eq:lambda1_upperbound}
     \lambda_1 \leq \|A_0\|_2 +\|\bDelta\|_2
    \leq \frac{2\eta}{\tau}\|\bmu\|^2_2 +\|\bDelta\|_2
    = (2+U)\cdot \frac{\eta}{\tau}\cdot \|\bmu\|^2_2,
\end{align}
where the second inequality is by  $\hat{\lambda}_1= \frac{2\eta}{n_0^2\tau}\left[n_0^2-(\sum_{i=1}^{n_0}y_i)^2\right]\|\bmu\|_2^2 \leq\frac{2\eta}{\tau}\|\bmu\|_2^2$ and the upper bound of $\|\bDelta\|_2$ in Lemma~\ref{lemma:upperbDelta}.

By the variant of Davis-Kahan Theorem (Theorem 1 in \cite{yu2015useful}), we have that 
\begin{align}
    \sin\theta(\hat{\vb}_1, \vb_1) &\leq \frac{\| \bSigma - \hat\bSigma \|_{2}}{|\hat{\lambda}_2 -\lambda_1|} \nonumber\\
    &\overset{(i)}{=} \frac{\| \bSigma - \hat\bSigma \|_{2}}{|\lambda_1|} \nonumber\\
    &\overset{(ii)}{\leq} \frac{\| \bDelta \|_{2}}{\frac{\eta}{\tau}\|\bmu\|^2_2 -\|\bDelta\|_2} \nonumber\\
    &= \frac{1}{\frac{\eta}{\tau}\|\bmu\|^2_2/\|\bDelta\|_2-1} \nonumber\\
    &\overset{(iii)}{\leq} \frac{U}{1-U},\label{eq:sineigenvec}
\end{align}
where $(i)$ is by $\hat{\lambda}_2=0$  based on the definition of $\hat\bSigma$, and $(ii)$ is by the lower bound of $\lambda_1$ in \eqref{eq:lowerbound_lambda1}, 
and $(iii)$ is by the upper bound of $\|\bDelta\|_2$ in Lemma~\ref{lemma:upperbDelta}.
Since $\hat{\vb}_1= \bmu$ by the definition of $\Ab_0$, denote $\bar{\bmu}=(1/\|\bmu\|) \bmu$, it follows that
\begin{align}\label{eq:v1mu}
    \la \vb_1, \bar{\bmu} \ra = \cos \theta(\bmu, \vb_1) \geq 1-\sin^2 \theta(\bmu, \vb_1) \geq \frac{1-2U}{(1-U)^2},
\end{align}
where without loss of generality, we assume $ \la \vb_1, \bar{\bmu} \ra>0$, and the second inequality is by \eqref{eq:sineigenvec}.
Therefore, 
\begin{align}\label{eq:v1mu_conclusion}
    \la \vb_1, \bmu \ra = \|\bmu\|_2 \la\vb_1, \bar{\bmu}\ra \geq \frac{1-2U}{(1-U)^2} \|\bmu\|_2. 
\end{align}
Moreover, by definition, we also have
\begin{align*}
    \| \Pb_{\bmu}^{\perp} \vb_1 \|_2^2 &= \big\| \vb_1 -  \la \vb_1, \bar{\bmu} \ra \cdot \bar{\bmu}\big\|_2^2\\
    &= 1 -2 \la \vb_1, \bar{\bmu} \ra^2 + \la \vb_1, \bar{\bmu} \ra^2\\
    &= 1 - \la \vb_1, \bar{\bmu} \ra^2\\
    &\leq 1 - \bigg[\frac{1-2U}{(1-U)^2} \bigg]^2\\
    &= \frac{U^2(U^2-4U+2)}{(1-U)^4}\\
    &\leq 32 U^2,
\end{align*}
where the first inequality follows by \eqref{eq:v1mu}, and the second inequality follows by the assumption that $0\leq U\leq 1/2$. Therefore, we have
\begin{align}\label{eq:projection_bound}
    \| \Pb_{\bmu}^{\perp} \vb_1 \|_2 \leq 4\sqrt{2}\cdot U.
\end{align}
Finally, we prove the upper bound of $\max_{i\geq 2}\lambda_i$. Since we have
\begin{align}
    \|\Ab-\lambda_1\vb_1\vb_1^{\top}\|_2 \leq& \|\Ab-\Ab_0\|_2 +  \|\Ab_0 -\lambda_1\vb_1\vb_1^{\top}\|_2  \nonumber\\
    =& \|\bDelta\|_2 +  \|\hat{\lambda}_1\bar{\bmu}\bar{\bmu}^{\top} -\hat{\lambda}_1\bar{\bmu}\vb_1^{\top} +\hat{\lambda}_1\bar{\bmu}\vb_1^{\top} -\hat{\lambda}_1\vb_1\vb_1^{\top}+\hat{\lambda}_1\vb_1\vb_1^{\top}  -\lambda_1\vb_1\vb_1^{\top}\|_2 \nonumber\\
    \leq& \|\bDelta\|_2 + 2\hat{\lambda}_1 \|\bar{\bmu}- \vb_1\|_2 + \|\bDelta\|_2 \nonumber\\
    \overset{(i)}{\leq}& 2U\frac{\eta}{\tau}\|\bmu\|_2^2 + \frac{4\eta}{\tau}\|\bmu\|_2^2 \frac{\sqrt{2}U}{1-U} \nonumber\\ 
    = & \bigg(2U +\frac{4\sqrt{2}U}{1-U}\bigg)\frac{\eta}{\tau}\|\bmu\|_2^2,\label{eq:second_eigenvalue_bound}
\end{align}
where $(i)$ is by $\|\bar{\bmu}- \vb_1\|_2^2 =\|\bar{\bmu}\|_2^2 +\|\vb_1\|_2^2 -2\la\vb_1, \bar{\bmu}\ra= 2(1-\la\vb_1, \bar{\bmu}\ra) \leq 2[1-\frac{1-2U}{(1-U)^2}] =2U^2/(1-U)^2$ based on \eqref{eq:v1mu}
and $\hat{\lambda}_1= \frac{2\eta}{n_0^2\tau}\left[n_0^2-(\sum_{i=1}^{n_0}y_i)^2\right]\|\bmu\|_2^2 \leq\frac{2\eta}{\tau}\|\bmu\|_2^2$, and the bound of $\|\bDelta\|_2$ in Lemma~\ref{lemma:upperbDelta}.

Now, combining the conclusions in \eqref{eq:lowerbound_lambda1}, \eqref{eq:lambda1_upperbound}, \eqref{eq:v1mu_conclusion}, \eqref{eq:projection_bound}, and \eqref{eq:second_eigenvalue_bound}, we see that by setting 
$\cE_{\mathrm{SimCLR}} = \Theta(U + n_0^{-1}\log(2 / \tilde{\delta})) = \tilde{O} ( \max\{ \mathrm{SNR}^{-1} n_0^{-1/2} ,  n_0^{-1}  \} )$, all the conclusions in Lemma~\ref{lemma:spectral_decomposition} hold.
\end{proof}

\subsubsection{Proof of Theorem~\ref{thm:SimCLR_signal_noise}}\label{sec:proofofthm5.3}

In this section, the following Lemmas~\ref{lemma:initialization_innerproducts} and \ref{lemma:spectral_decomposition_perturbation} are introduced to prove Theorem~\ref{thm:SimCLR_signal_noise}.

\begin{lemma}\label{lemma:initialization_innerproducts}
Let $\Ab$ be the matrix defined in Lemma~\ref{lemma:SimCLR_iterates}, and let $\lambda_i$, $\vb_i$, $i\in [d]$ be the eigenvalues and eigenvectors of $\Ab$ respectively. Suppose that $d \geq \Omega(\log(2mn_0/\delta))$, $ m = \Omega(\log(1 / \delta))$. Then with probability at least $1 - \delta$, it holds that 
\begin{align*}
    &|\la  \wb_r^{(0)}, \vb_i\ra| \leq \sqrt{2\log(16m /\delta)} \cdot \sigma_0
\end{align*}
for all $r\in[2m]$ and all $i\in [d]$. Moreover, there exist disjoint index sets $\cI^+,\cI^- \subseteq [2m]$ with $| \cI^+| = |\cI^-| = 2m/5$ such that 
\begin{align*}
    &\la \wb_{r}^{(0)}, \vb_1 \ra \geq \sigma_0/2  \text{ for all }r\in \cI^+, \qquad \la \wb_{r}^{(0)}, \vb_1 \ra \leq -\sigma_0/2  \text{ for all }r\in \cI^-.
\end{align*}
\end{lemma}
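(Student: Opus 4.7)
The plan is to exploit that the filters $\wb_r^{(0)}$ are independent draws from $N(\mathbf{0}, \sigma_0^2\Ib)$, and that the eigenvectors $\vb_i$ of the symmetric matrix $\Ab$ form an orthonormal set; consequently, for every fixed $\vb_i$ and every $r$, the scalar $\langle \wb_r^{(0)},\vb_i\rangle$ is distributed as $N(0,\sigma_0^2)$. Both claims then reduce to one-dimensional Gaussian calculations on (weakly) correlated random variables: across different $r$ the inner products are fully independent, and for a fixed $r$ the collection $\{\langle \wb_r^{(0)},\vb_i\rangle\}_i$ is jointly Gaussian with identity covariance (up to scaling).

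For the uniform magnitude bound in the first part, I would apply the standard sub-Gaussian tail inequality $\Pr[|Z|\geq t]\leq 2\exp(-t^2/2)$ to $Z=\langle \wb_r^{(0)},\vb_i\rangle/\sigma_0$ with $t=\sqrt{2\log(16m/\delta)}$, so that each individual failure event has probability at most $\delta/(8m)$. A union bound over $r\in[2m]$, together with the observation that only the non-trivial eigen-directions of $\Ab$ (of which there are at most $O(n_0)$) really need to be controlled under the conditions of Condition~\ref{con:pre-train_fine_tune} (the remaining directions contribute the same pointwise Gaussian bound, which can be absorbed in the stated log factor provided $d$ is at most polynomial), controls the first claim with probability at least $1-\delta/2$.

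For the existence of the sign sets $\cI^{\pm}$, I would use that $\langle \wb_r^{(0)},\vb_1\rangle\sim N(0,\sigma_0^2)$, so
\begin{align*}
    p:=\Pr\big[\langle \wb_r^{(0)},\vb_1\rangle\geq \sigma_0/2\big]=\Pr\big[N(0,1)\geq 1/2\big]>0.3,
\end{align*}
and by symmetry the event $\{\langle \wb_r^{(0)},\vb_1\rangle\leq -\sigma_0/2\}$ has the same probability. Writing $N_{+}=|\{r\in[2m]:\langle \wb_r^{(0)},\vb_1\rangle\geq \sigma_0/2\}|$ and $N_{-}=|\{r\in[2m]:\langle \wb_r^{(0)},\vb_1\rangle\leq -\sigma_0/2\}|$, each of $N_{\pm}$ is a sum of $2m$ independent Bernoulli$(p)$ variables, so $\EE[N_\pm]\geq 0.6m>2m/5$. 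A Hoeffding (or Chernoff) bound then gives $\Pr[N_{\pm}<2m/5]\leq \exp(-cm)$ for an absolute constant $c>0$ coming from the slack $p-1/5>0.1$. Under the assumption $m=\Omega(\log(1/\delta))$, this is at most $\delta/4$ for each sign, so both $N_+\geq 2m/5$ and $N_-\geq 2m/5$ hold with probability $\geq 1-\delta/2$. Finally I would let $\cI^+$ be any $2m/5$ indices from $\{r:\langle \wb_r^{(0)},\vb_1\rangle\geq \sigma_0/2\}$ and $\cI^-$ any $2m/5$ indices from $\{r:\langle \wb_r^{(0)},\vb_1\rangle\leq -\sigma_0/2\}$; disjointness is automatic since the underlying events are mutually exclusive for each $r$.

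The only subtle point—really the sole quantitative obstacle—is checking that the constant $p=\Pr[N(0,1)\geq 1/2]\approx 0.3085$ comfortably exceeds the target fraction $1/5$, which is what makes the Hoeffding concentration usable with merely $m=\Omega(\log(1/\delta))$ and gives the $2m/5$ threshold in the statement; everything else is a routine union bound. A final union bound over the two displayed events yields the total failure probability at most $\delta$, completing the proof.
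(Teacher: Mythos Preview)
Your approach matches the paper's essentially line for line: both observe that $\langle \wb_r^{(0)},\vb_i\rangle\sim N(0,\sigma_0^2)$, apply a Gaussian tail bound plus union bound for the magnitude claim, and then use Hoeffding on the Bernoulli indicators $\ind\{\langle \wb_r^{(0)},\vb_1\rangle\geq\sigma_0/2\}$ with $p=\Pr[N(0,1)\geq 1/2]>0.3$ to extract $2m/5$ indices of each sign. Your remark that only the $O(n_0)$ nontrivial eigendirections of $\Ab$ need to be controlled is exactly the (implicit) justification behind the paper's own union bound, which carries an $n_0$ rather than a $d$ in the log factor.
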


\begin{lemma}\label{lemma:spectral_decomposition_perturbation}
    Let $\Ab$ be the matrix defined in Lemma~\ref{lemma:SimCLR_iterates}, and let $\lambda_i$, $\vb_i$, $i\in [d]$ be the eigenvalues and eigenvectors of $\Ab$ respectively. Let $\bXi$ be any symmetric matrix with $\| \bXi \|_2 \leq \sigma_0\cdot \| \Ab \|_2$, and let $\tilde\lambda_i$, $\tilde\vb_i$, $i\in [d]$ be the eigenvalues and eigenvectors of $\Ab+\bXi$ respectively. Suppose that $\cE_{\mathrm{SimCLR}} \leq 1/4$ and $\sigma_0 \leq 1/4$. 
    Then the following results hold: 
    \begin{itemize}[leftmargin = *]
        \item $|\lambda_i - \tilde{\lambda}_i| \leq \sigma_0 \cdot \| \Ab \|_2$, $i\in[n]$.
        \item $ | \la \vb_1, \tilde\vb_1 \ra | \geq 1 - 4\sigma_0^2 $.
        \item $ | \la \vb_1, \tilde\vb_i \ra | \leq 4\sigma_0 $, $i\geq 2$.
        \item Let $\Pb_{\vb_1}^\perp = \Ib - \vb_1\vb_1^\top$, then $\| \Pb_{\vb_1}^\perp \tilde\vb_1 \|_2\leq 4\sigma_0$.
    \end{itemize}
\end{lemma}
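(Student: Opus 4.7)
This is a standard matrix perturbation statement, so the plan is to invoke classical tools (Weyl and Davis--Kahan) and combine them with the explicit eigengap estimates that Lemma~\ref{lemma:spectral_decomposition} already provides. The first step is to record the quantitative gap for $\Ab$: under Condition~\ref{con:pre-train_fine_tune}, Lemma~\ref{lemma:spectral_decomposition} yields $\lambda_1 \geq (1-\cE_{\mathrm{SimCLR}})\cdot(2\eta/\tau)\|\bmu\|_2^2$ and $\max_{i\geq 2}\lambda_i \leq \cE_{\mathrm{SimCLR}}\cdot(\eta/\tau)\|\bmu\|_2^2$, so with $\cE_{\mathrm{SimCLR}}\leq 1/4$ we get $\lambda_1-\lambda_2\geq \lambda_1/2 = \|\Ab\|_2/2$. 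This is the key quantitative input.

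\textbf{Part 1 (eigenvalue perturbation).} Since $\Ab$ and $\Ab+\bXi$ are symmetric and $\|\bXi\|_2\leq \sigma_0\|\Ab\|_2$, Weyl's inequality immediately gives $|\tilde\lambda_i-\lambda_i|\leq\|\bXi\|_2\leq \sigma_0\|\Ab\|_2$ for every $i$. This also implies that in the perturbed spectrum the gap is preserved: $\lambda_1-\tilde\lambda_2\geq (\lambda_1-\lambda_2)-\sigma_0\|\Ab\|_2\geq \|\Ab\|_2/2 - \sigma_0\|\Ab\|_2 \geq \|\Ab\|_2/4$, using $\sigma_0\leq 1/4$.

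\textbf{Parts 2 and 4 (top eigenvector).} Apply the Davis--Kahan $\sin\theta$ theorem (e.g.\ the variant of Theorem~1 in \cite{yu2015useful}, which was already used in the proof of Lemma~\ref{lemma:spectral_decomposition}) to bound
\begin{align*}
\sin\theta(\vb_1,\tilde\vb_1)\leq \frac{\|\bXi\|_2}{\lambda_1-\tilde\lambda_2}\leq \frac{\sigma_0\|\Ab\|_2}{\|\Ab\|_2/4}=4\sigma_0.
\end{align*}
Since $\|\Pb_{\vb_1}^\perp\tilde\vb_1\|_2=\sin\theta(\vb_1,\tilde\vb_1)$, this proves the fourth bullet. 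For the second bullet, $|\langle\vb_1,\tilde\vb_1\rangle|=\cos\theta(\vb_1,\tilde\vb_1)=\sqrt{1-\sin^2\theta}\geq 1-\sin^2\theta\geq 1-16\sigma_0^2$; by tightening the gap estimate (using the sharper $\lambda_1-\tilde\lambda_2\geq (1-O(\cE_{\mathrm{SimCLR}}+\sigma_0))\|\Ab\|_2$ one can shave constants to land at the stated $1-4\sigma_0^2$), and similarly the bound in the previous step can be sharpened from $4\sigma_0$ to $2\sigma_0$, which is inside the claimed $4\sigma_0$.

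\textbf{Part 3 (orthogonal eigenvectors).} Since $\{\tilde\vb_i\}_{i=1}^d$ forms an orthonormal basis, Parseval gives $\sum_{i\geq 2}|\langle\vb_1,\tilde\vb_i\rangle|^2 = 1-|\langle\vb_1,\tilde\vb_1\rangle|^2 \leq \sin^2\theta(\vb_1,\tilde\vb_1)$, so each individual inner product satisfies $|\langle\vb_1,\tilde\vb_i\rangle|\leq \sin\theta(\vb_1,\tilde\vb_1)\leq 4\sigma_0$. Alternatively, one can apply the Davis--Kahan bound directly to the one-dimensional subspace spanned by $\tilde\vb_i$ versus $\vb_1$, which gives the same constant.

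\textbf{Expected obstacle.} There is no deep obstacle; the argument is essentially bookkeeping around Weyl and Davis--Kahan. The only item that needs care is tracking the constants so that the final bounds come out to exactly $4\sigma_0$ and $4\sigma_0^2$ rather than slightly larger multiples---this just requires using the full strength of the $(1-\cE_{\mathrm{SimCLR}})$ factor in the eigengap and the assumption $\sigma_0\leq 1/4$ to absorb lower-order terms. If one is content with a cleaner constant such as $8\sigma_0$, a three-line Weyl-plus-Davis--Kahan argument suffices, but the bullet statements as written force us to keep slightly sharper numerics throughout.
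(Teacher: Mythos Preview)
Your proposal is correct and follows essentially the same route as the paper: Weyl for the eigenvalue shift, Davis--Kahan (via the variant in \cite{yu2015useful}) against the gap $|\tilde\lambda_2-\lambda_1|$ for the top eigenvector, and orthonormality of $\{\tilde\vb_i\}$ for the remaining inner products. The only cosmetic difference is that the paper first squeezes out $\sin\theta(\vb_1,\tilde\vb_1)\leq 2\sigma_0$ using the sharper denominator $(1-\sigma_0)\lambda_1-\tfrac{\cE_{\mathrm{SimCLR}}}{2(1-\cE_{\mathrm{SimCLR}})}\lambda_1$, which then delivers the stated $1-4\sigma_0^2$ and $4\sigma_0$ constants without a second pass---exactly the tightening you flagged.
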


Therefore, based on the Lemmas~\ref{lemma:initialization_innerproducts}, \ref{lemma:spectral_decomposition_perturbation},~\ref{lemma:SimCLR_iterates}, and~\ref{lemma:spectral_decomposition}, the proof of Theorem~\ref{thm:SimCLR_signal_noise} is presented as follows.
\begin{proof}[Proof of Theorem~\ref{thm:SimCLR_signal_noise}]
Denote
\begin{align*}
    &\cX_{\mathrm{col}} = \mathrm{span}\{ \bmu, \bxi_1,\ldots, \bxi_{n_0}, \tilde\bxi_1,\ldots, \tilde\bxi_{n_0} \}, \qquad
    \cX_{\mathrm{row}} = \mathrm{span}\{ \bmu^\top, \bxi_1^\top,\ldots, \bxi_{n_0}^\top, \tilde\bxi_1^\top,\ldots, \tilde\bxi_{n_0}^\top \}.
\end{align*}
Moreover, let $\eb_1,\ldots,\eb_{2n_0+1}$ be a set of orthogonal bases in $\cX_{\mathrm{col}}$, and let $\bar\bmu = \bmu / \|  \bmu \|_2$, $\Pb_{\cX} = \sum_{i=1}^{2n_0+1} \eb_i\eb_i^\top$, $\Pb_{\cX,\vb_1}^\perp = \sum_{i=1}^{2n_0+1} \eb_i\eb_i^\top - \vb_1 \vb_1^\top$. 

By Lemma~\ref{lemma:SimCLR_iterates}, for $t=0,\ldots, T_{\mathrm{SimCLR}}$, we have
\begin{align*}
    \wb_r^{(t+1)} = \wb_r^{(t)} + ( \Ab + \bXi^{(t)} ) \wb_r^{(t)} , \text{ with }\| \bXi^{(t)} \|_2 \leq \sigma_0  \cdot \| \Ab \|_2,
\end{align*}
where the columns and rows of $\bXi^{(t)}$ are in $\cX_{\mathrm{col}}$ and $\cX_{\mathrm{row}}$ respectively. By definition, it is clear that the columns and rows of $\Ab$ are also in $\cX_{\mathrm{col}}$ and $\cX_{\mathrm{row}}$  respectively. Therefore, we see that the rank of $\Ab + \bXi^{(t)}$ is at most $2n_0 + 1$. 
Denote by $\lambda_1^{(t)},\ldots, \lambda_{2n_0+1}^{(t)}$ and $\vb_1^{(t)},\ldots, \vb_1^{(2n_0+1)}$ the first $2n_0 + 1$ eigenvalues and eigenvectors of $\Ab + \bXi^{(t)}$ respectively. 
Since $\vb_1^{(t)}$ and $-\vb_1^{(t)}$ are both the first eigenvector of $\Ab + \bXi^{(t)}$, without loss of generality, we can assume that $\la \vb_1^{(t)}, \vb_1 \ra \geq 0 $ for all $t\geq 0$.

By Lemma~\ref{lemma:initialization_innerproducts}, there exist disjoint index sets $\cI^+,\cI^- \subseteq [2m]$ with $| \cI^+| = |\cI^-| = 2m/5$ such that 
\begin{align}\label{eq:initialization_condition}
    \la \wb_{r}^{(0)}, \vb_1 \ra \geq \sigma_0/2\text{ for all }r\in \cI^+, \qquad \la \wb_{r}^{(0)}, \vb_1 \ra \leq -\sigma_0/2  \text{ for all }r\in \cI^-.
\end{align}

Note that $\| \Pb_{\cX,\vb_1}^{\perp} \wb_r^{(0)} \|_2$ is essentially the Euclidean norm of a $(2n_0)$-dimensional Gaussian random vector with independent entries from $N(0,\sigma_0^2)$. Therefore, by Bernstein's inequality, with probability at least $1 - d^{-2},$ we have
\begin{align*}
    n_0 \sigma_0^2  \leq\| \Pb_{\cX,\vb_1}^{\perp}\wb_r^{(0)} \|_2^2 \leq 4n_0 \sigma_0^2
\end{align*}
for all $r\in [2m]$. Therefore, we have
\begin{align}\label{eq:initialization_projection_norm_estimate}
    \sigma_0\cdot \sqrt{n_0} \leq \| \Pb_{\cX,\vb_1}^{\perp}\wb_r^{(0)} \|_2 \leq 2\sigma_0\cdot \sqrt{n_0}.
\end{align}
Similarly, with probability at least $1-d^{-2}$, we also have
\begin{align}\label{eq:initialization_projection_norm_full_estimate}
    \| \Pb_{\cX}^{\top}\wb_r^{(0)} \|_2 \leq 4\sigma_0\cdot \sqrt{n_0}.
\end{align}
In the following, we use induction to prove the following results:
\begin{align}
    &\la \vb_1, \wb_r^{(t)} \ra \geq 0,~r\in\cI^+,\label{eq:induction_hypothesis1}\\
    &\la \vb_1, \wb_r^{(t)} \ra \geq \sqrt{\sigma_0} \cdot \| \Pb_{\cX,\vb_1}^{\perp} \wb_r^{(t)} \|_2,~r\in\cI^+,\label{eq:induction_hypothesis2}\\
    &\| \Pb_{\cX,\vb_1}^{\perp} \wb_r^{(t)} \|_2 \geq \sqrt{\sigma_0} \cdot \la  \vb_1, \wb_r^{(t)} \ra, ~r\in[2m].\label{eq:induction_hypothesis3}
\end{align}
We first check that \eqref{eq:induction_hypothesis1}, \eqref{eq:induction_hypothesis2} and \eqref{eq:induction_hypothesis3} hold for $t = 0$. We see that \eqref{eq:induction_hypothesis1} directly follows by \eqref{eq:initialization_condition}. By \eqref{eq:initialization_condition}, we have
\begin{align*}
    \la \vb_1, \wb_r^{(0)} \ra \geq \sigma_0 / 2 \geq 2\sigma_0^{3/2}\cdot \sqrt{n_0} \geq \sqrt{\sigma_0} \cdot \| \Pb_{\cX,\vb_1}^{\perp} \wb_r^{(0)} \|_2
\end{align*}
for all $r\in\cI^+$, where the second inequality follows by the assumption that $\sigma_0 \leq 1/ (16n_0)$, and the third inequality follows by \eqref{eq:initialization_projection_norm_estimate}. Similarly,  we have
\begin{align*}
     \| \Pb_{\cX,\vb_1}^{\perp}\wb_r^{(0)} \|_2 \geq \sigma_0\cdot \sqrt{n_0} \geq 4\sigma_0^{3/2}\cdot \sqrt{n_0} \geq \sqrt{\sigma_0} \cdot \| \Pb_{\cX} \wb_r^{(0)}\|_2 \geq \sqrt{\sigma_0} \cdot \la  \vb_1, \wb_r^{(0)} \ra
\end{align*}
for all $r\in[2m]$, 
where the first inequality follows by \eqref{eq:initialization_projection_norm_estimate}, the second inequality follows by the assumption that $\sigma_0\leq 1/16$, and the third inequality follows by
\eqref{eq:initialization_projection_norm_full_estimate}. Thus, we have verified all the induction hypotheses at $t = 0$. 

Now suppose that \eqref{eq:induction_hypothesis1}, \eqref{eq:induction_hypothesis2} and \eqref{eq:induction_hypothesis3}  hold for all $t=0,1,\ldots, t_0 $, where $t_0 \leq T_{\mathrm{SimCLR}} - 1$. Then by Lemma~\ref{lemma:SimCLR_iterates}, for $t=0,\ldots, T_{\mathrm{SimCLR}}$, we have
\begin{align*}
    \wb_r^{(t+1)} = \wb_r^{(t)} + ( \Ab + \bXi^{(t)} ) \wb_r^{(t)}.
\end{align*}
Then for $t =0,\ldots,t_0$ and $r\in\cI^+$, we have
\begin{align}
    \la \wb_r^{(t+1)} , \vb_1 \ra &= \la \wb_r^{(t)} ,\vb_1 \ra +  \vb_1^\top ( \Ab + \bXi^{(t)} ) \wb_r^{(t)} \nonumber\\
    &= \la \wb_r^{(t)} , \vb_1 \ra +  \sum_{i=1}^{2n_0+1} \lambda_i^{(t)} \vb_1^\top \vb_i^{(t)} \vb_i^{(t)\top} \wb_r^{(t)} \nonumber\\
    &=\la \wb_r^{(t)} , \vb_1 \ra + \lambda_1^{(t)} \vb_1^\top \vb_1^{(t)} \vb_1^{(t)\top} \wb_r^{(t)} + \sum_{i=2}^{2n_0+1} \lambda_i^{(t)} \vb_1^\top \vb_i^{(t)} \vb_i^{(t)\top} \wb_r^{(t)} \nonumber\\
    &=\la \wb_r^{(t)} , \vb_1 \ra + \lambda_1^{(t)} \vb_1^\top \vb_1^{(t)} \vb_1^{(t)\top} \Pb_{\cX} \wb_r^{(t)} + \sum_{i=2}^{2n_0+1} \lambda_i^{(t)} \vb_1^\top \vb_i^{(t)} \vb_i^{(t)\top} \Pb_{\cX} \wb_r^{(t)} \nonumber \\
    &=\la \wb_r^{(t)} , \vb_1 \ra + \lambda_1^{(t)} \vb_1^\top \vb_1^{(t)} \vb_1^{(t)\top} (\Pb_{\cX,\vb_1}^{\perp}+ \vb_1 \vb_1^\top) \wb_r^{(t)} \nonumber\\
    &\quad  + \sum_{i=2}^{2n_0+1} \lambda_i^{(t)} \vb_1^\top \vb_i^{(t)} \vb_i^{(t)\top} (\Pb_{\cX,\vb_1}^{\perp} + \vb_1 \vb_1^\top)\wb_r^{(t)} \nonumber\\
    &= (1 + \lambda_1)\cdot \la \wb_r^{(t)} , \vb_1 \ra  +  I_1 + I_2 + I_3 + I_4,\label{eq:wtv1_calculation}
\end{align}
where the fourth equality follows by the fact that $\vb_i^{t_0} \in \cX_{\mathrm{col}}$, and
\begin{align*}
    &I_1 = \lambda_1^{(t)} \vb_1^\top \vb_1^{(t)} \vb_1^{(t)\top} \vb_1 \vb_1^\top \wb_r^{(t)} - \lambda_1 \cdot  \la \wb_r^{(t)} , \vb_1 \ra,\\
    &I_2 = \lambda_1^{(t)} \vb_1^\top \vb_1^{(t)} \vb_1^{(t)\top} \Pb_{\cX,\vb_1}^{\perp}\wb_r^{(t)} ,\\
    &I_3 = \sum_{i=2}^{2n_0+1} \lambda_i^{(t)} \vb_1^\top \vb_i^{(t)} \vb_i^{(t)\top} \vb_1 \vb_1^\top\wb_r^{(t)},\\
    &I_4 = \sum_{i=2}^{2n_0+1} \lambda_i^{(t)} \vb_1^\top \vb_i^{(t)} \vb_i^{(t)\top} \Pb_{\cX,\vb_1}^{\perp} \wb_r^{(t)}.
\end{align*}
We give upper bounds of $|I_1|$, $|I_2|$, $|I_3|$ and $|I_4|$. By the induction hypothesis that $\vb_1^\top \wb_r^{(t)} \geq 0$, we have
\begin{align*}
    I_1 &\geq (1 - \sigma_0)\cdot \lambda_1 \cdot (1 - 4\sigma_0^2)^2\cdot  \la \wb_r^{(t)} , \vb_1 \ra - \lambda_1 \cdot  \la \wb_r^{(t)} , \vb_1 \ra\\
    & \geq (1 - 2\sigma_0)\cdot \lambda_1 \cdot \la \wb_r^{(t)} , \vb_1 \ra - \lambda_1 \cdot  \la \wb_r^{(t)} , \vb_1 \ra \\
    &\geq -2\sigma_0\lambda_1 \cdot  \la \wb_r^{(t)},\vb_1 \ra,
\end{align*}
where the first inequality follows by Lemma~\ref{lemma:spectral_decomposition_perturbation}. 
Similarly, we also 
\begin{align*}
    I_1 \leq (1 + \sigma_0)\cdot \lambda_1 \cdot  \la \wb_r^{(t)} , \vb_1 \ra - \lambda_1 \cdot  \la \wb_r^{(t)} , \vb_1 \ra \leq \sigma_0\lambda_1 \cdot  \la \wb_r^{(t)}, \vb_1 \ra.
\end{align*}
Therefore, we conclude that 
\begin{align}\label{eq:I1bound}
    |I_1| \leq 2\sigma_0\lambda_1 \cdot  \la \wb_r^{(t)},\vb_1 \ra.
\end{align}
Let $\Pb_{\vb_1}^\perp = \Ib - \vb_1\vb_1^\top$. Then for $I_2$, by the property of project matrices, we have $\Pb_{\cX,\vb_1}^{\perp} = \Pb_{\vb_1}^\perp\Pb_{\cX,\vb_1}^{\perp}$, and therefore
\begin{align}
    |I_2| &= | \lambda_1^{(t)} \vb_1^\top \vb_1^{(t)} (\Pb_{\vb_1}^\perp \vb_1^{(t)})^\top \Pb_{\cX,\vb_1}^{\perp}\wb_r^{(t)} | \nonumber\\
    &\leq (1 + \sigma_0)\lambda_1\cdot 4\sigma_0\cdot \|  \Pb_{\cX,\vb_1}^{\perp}\wb_r^{(t)} \|_2 \nonumber\\
    &\leq 8\sigma_0 \lambda_1\cdot \|  \Pb_{\cX,\vb_1}^{\perp}\wb_r^{(t)} \|_2\nonumber\\
    &\leq 8 \sqrt{\sigma_0}\cdot \lambda_1\cdot \la \wb_r^{(t)},\vb_1 \ra,\label{eq:I2bound}
\end{align}
where the first inequality follows by Lemma~\ref{lemma:spectral_decomposition_perturbation}, and the third inequality follows by the induction hypothesis. 
For $I_3$, we have
\begin{align}
    |I_3| &\leq  \sum_{i=2}^{2n_0+1} |\lambda_i^{(t)}| \cdot |\vb_1^\top \vb_i^{(t)}|^2 \cdot \vb_1^\top\wb_r^{(t)} \nonumber\\
    &\leq 2n_0\cdot \bigg(\frac{\cE_{\mathrm{SimCLR}}}{2(1 - \cE_{\mathrm{SimCLR}}) } + \sigma_0 \bigg)\cdot \lambda_1 \cdot 16\sigma_0^2\cdot  \la \wb_r^{(t)},\vb_1 \ra \nonumber\\
    &\leq n_0 \sigma_0^2 \lambda_1\cdot  \la \wb_r^{(t)},\vb_1 \ra, \label{eq:I3bound}
\end{align}
where in the second inequality we use Lemmas~\ref{lemma:spectral_decomposition} and \ref{lemma:spectral_decomposition_perturbation}, and the last inequality follows by the assumption that $\sigma_0 
 , \cE_{\mathrm{SimCLR}} \leq 1/64$. 
 Finally for $I_4$, we have
 \begin{align}
     |I_4| &\leq \Bigg\| \sum_{i=2}^{2n_0+1} \lambda_i^{(t)} \cdot \la \vb_1, \vb_i^{(t)} \ra\cdot  \vb_i^{(t)\top} \Bigg\|_2 \cdot \| \Pb_{\cX,\vb_1}^{\perp} \wb_r^{(t)} \|_2 \nonumber\\
     & \leq \sqrt{  \sum_{i=2}^{2n_0+1} \lambda_i^{(t)2} \cdot \la \vb_1, \vb_i^{(t)} \ra^2 } \cdot \| \Pb_{\cX,\vb_1}^{\perp} \wb_r^{(t)} \|_2 \nonumber\\
     &\leq \sqrt{  2n_0} \cdot \bigg(\frac{\cE_{\mathrm{SimCLR}}}{2(1 - \cE_{\mathrm{SimCLR}}) } + \sigma_0 \bigg)\cdot \lambda_1 \cdot 4\sigma_0 \cdot \| \Pb_{\cX,\vb_1}^{\perp} \wb_r^{(t)} \|_2 \nonumber\\
     &\leq \sqrt{n_0}\cdot  \sigma_0 \lambda_1\cdot \| \Pb_{\cX,\vb_1}^{\perp} \wb_r^{(t)} \|_2\nonumber\\
     &\leq \sqrt{n_0 \sigma_0}\cdot  \lambda_1\cdot \la \wb_r^{(t)},\vb_1 \ra,\label{eq:I4bound}
 \end{align}
where the second inequality follows by the fact that $\vb_i^{(t)}$, $i=2,\ldots,2n_0+1$ are mutually orthogonal unit vectors, the third inequality follows by Lemmas~\ref{lemma:spectral_decomposition} and \ref{lemma:spectral_decomposition_perturbation}, the fourth inequality follows by the assumption that $\sigma_0 
 , \cE_{\mathrm{SimCLR}} \leq 1/64$, and the fifth inequality follows by the induction hypothesis.

Now plugging \eqref{eq:I1bound}, \eqref{eq:I2bound}, \eqref{eq:I3bound} and \eqref{eq:I4bound} into \eqref{eq:wtv1_calculation}, we obtain
\begin{align}\label{eq:wt_v1_innerproduct_upperlowerbound}
      [1 + (1 - 2\sqrt{n_0\sigma_0} )\lambda_1] \cdot \la \wb_r^{(t)},\vb_1 \ra  \leq \la \wb_r^{(t+1)},\vb_1 \ra \leq [1 + (1+ 2\sqrt{n_0\sigma_0} )\lambda_1] \cdot \la \wb_r^{(t)},\vb_1 \ra
\end{align}
for all $t=0,\ldots,t_0$ and $r\in\cI^+$, 
where we use the assumption that $\sigma_0 \leq  1/(8n_0^{2})$ and $\sigma_0 \leq 1/512$.

Moreover, for $t =0,\ldots,t_0$ and $r\in[2m]$, we also have
\begin{align}
    \Pb_{\cX,\vb_1}^\perp \wb_r^{(t+1)} &=  \Pb_{\cX,\vb_1}^\perp \wb_r^{(t)} + \Pb_{\cX,\vb_1}^\perp ( \Ab + \bXi^{(t)} ) \wb_r^{(t)} \nonumber\\
    &=  \Pb_{\cX,\vb_1}^\perp \wb_r^{(t)} + \lambda_1^{(t)}\cdot \Pb_{\cX,\vb_1}^\perp \vb_1^{(t)} \vb_1^{(t)\top} \wb_r^{(t)}  + \Pb_{\cX,\vb_1}^\perp \Bigg(\sum_{i=2}^{2n_0+1} \lambda_i^{(t)} \vb_i^{(t)} \vb_i^{(t)\top} \Bigg) \wb_r^{(t)}  \nonumber \\
    &=   \Pb_{\cX,\vb_1}^\perp \wb_r^{(t)} + I_5 + I_6 + I_7 + I_8,\label{eq:Pwt_calculation}
\end{align}
where 
\begin{align*}
    &I_5 = \lambda_1^{(t)}\cdot \Pb_{\cX,\vb_1}^\perp \vb_1^{(t)} \vb_1^{(t)\top}  \Pb_{\cX,\vb_1}^\perp \wb_r^{(t)},\\
    &I_6 = \lambda_1^{(t)}\cdot \Pb_{\cX,\vb_1}^\perp \vb_1^{(t)} \vb_1^{(t)\top} \vb_1 \vb_1^\top \wb_r^{(t)},\\
    &I_7 = \Pb_{\cX,\vb_1}^\perp \Bigg(\sum_{i=2}^{2n_0+1} \lambda_i^{(t)} \vb_i^{(t)} \vb_i^{(t)\top} \Bigg)  \Pb_{\cX,\vb_1}^\perp \wb_r^{(t)},\\
    &I_8 = \Pb_{\cX,\vb_1}^\perp \Bigg(\sum_{i=2}^{2n_0+1} \lambda_i^{(t)} \vb_i^{(t)} \vb_i^{(t)\top} \Bigg)  \vb_1\vb_1^\top\wb_r^{(t)}.
\end{align*}
For $I_5$, we have
\begin{align}
    |I_5| &\leq (1+ \sigma_0)\lambda_1 \cdot 16\sigma_0^2 \cdot \| \Pb_{\cX,\vb_1}^\perp \wb_r^{(t)} \|_2 \nonumber\\
    &\leq 32\lambda_1 \sigma_0^2 \cdot \| \Pb_{\cX,\vb_1}^\perp \wb_r^{(t)} \|_2,\label{eq:I5bound}
\end{align}
where the first inequality follows by Lemma~\ref{lemma:spectral_decomposition_perturbation}. For $I_6$, we have
\begin{align}
    |I_6| &\leq (1+ \sigma_0)\lambda_1 \cdot 4\sigma_0 \cdot \la \wb_r^{(t)}, \vb_1\ra \nonumber \\
    &\leq 8\lambda_1 \sigma_0 \cdot \la \wb_r^{(t)}, \vb_1\ra \nonumber \\
    &\leq 8\lambda_1 \sqrt{\sigma_0} \cdot  \| \Pb_{\cX,\vb_1}^\perp \wb_r^{(t)} \|_2,\label{eq:I6bound}
\end{align}
where the first inequality follows by Lemma~\ref{lemma:spectral_decomposition_perturbation}, and the third inequality follows by the induction hypothesis. For $I_7$, we have
\begin{align}
    |I_7| &\leq \max_{i \geq 2} |\lambda_i^{(t)}| \cdot \| \Pb_{\cX,\vb_1}^\perp \wb_r^{(t)} \|_2 \nonumber\\
    &\leq \bigg(\frac{\cE_{\mathrm{SimCLR}}}{2(1 - \cE_{\mathrm{SimCLR}}) } + \sigma_0 \bigg)\cdot \lambda_1 \cdot \| \Pb_{\cX,\vb_1}^\perp \wb_r^{(t)} \|_2\nonumber\\
    &\leq \frac{2}{3}\cdot \cE_{\mathrm{SimCLR}} \cdot  \lambda_1 \cdot \| \Pb_{\cX,\vb_1}^\perp \wb_r^{(t)} \|_2,\label{eq:I7bound}
\end{align}
where the second inequality follows by Lemmas~\ref{lemma:spectral_decomposition} and \ref{lemma:spectral_decomposition_perturbation}, and the third inequality follows by the assumption that $\cE_{\mathrm{SimCLR}}  \leq 1/16$ and $\sigma_0\leq \cE_{\mathrm{SimCLR}}  / 16$. For $I_8$, we have
\begin{align}
    |I_8| &\leq \Bigg\|\sum_{i=2}^{2n_0+1} \lambda_i^{(t)} \cdot \la \vb_1, \vb_i^{(t)}\ra \cdot  \vb_i^{(t)} \Bigg\|_2 \cdot \la \wb_r^{(t)}, \vb_1 \ra \nonumber\\
    &\leq \sqrt{ \sum_{i=2}^{2n_0+1} \lambda_i^{(t)2} \cdot \la \vb_1, \vb_i^{(t)}\ra^2 }\cdot \la \wb_r^{(t)}, \vb_1 \ra \nonumber\\
    &\leq \sqrt{2n_0}\cdot \bigg(\frac{\cE_{\mathrm{SimCLR}}}{2(1 - \cE_{\mathrm{SimCLR}}) } + \sigma_0 \bigg)\cdot \lambda_1 \cdot 4\sigma_0 \cdot \la \wb_r^{(t)}, \vb_1 \ra \nonumber\\
    &\leq \sqrt{n_0}\cdot \lambda_1\sigma_0\cdot \la \wb_r^{(t)}, \vb_1 \ra \nonumber\\
    &\leq \sqrt{n_0 \sigma_0}\cdot \lambda_1 \cdot \| \Pb_{\cX,\vb_1}^\perp \wb_r^{(t)} \|_2,\label{eq:I8bound}
\end{align}
where the second inequality follows by the fact that $\vb_i^{(t)}$, $i=2,\ldots,2n_0+1$ are mutually orthogonal unit vectors, the third inequality follows by Lemmas~\ref{lemma:spectral_decomposition} and \ref{lemma:spectral_decomposition_perturbation}, the fourth inequality follows by the assumption that $\sigma_0 
 , \cE_{\mathrm{SimCLR}} \leq 1/64$, and the fifth inequality follows by the induction hypothesis. Now combining \eqref{eq:Pwt_calculation}, \eqref{eq:I5bound},  \eqref{eq:I6bound},  \eqref{eq:I7bound}, and  \eqref{eq:I8bound} gives
\begin{align}\label{eq:Pwt_norm_upperlowerbound}
    \bigg(1 - \frac{5}{6} \cE_{\mathrm{SimCLR}}  \lambda_1 \bigg) \cdot  \| \Pb_{\cX,\vb_1}^\perp \wb_r^{(t)} \|_2 \leq \| \Pb_{\cX,\vb_1}^\perp \wb_r^{(t+1)} \|_2 \leq  \bigg(1 + \frac{5}{6} \cE_{\mathrm{SimCLR}}   \lambda_1 \bigg)\cdot \| \Pb_{\cX,\vb_1}^\perp \wb_r^{(t)} \|_2
\end{align}
for all $t=0,\ldots,t_0$ and $r\in[2m]$, where we use the assumption that $\sigma_0 \leq \cE_{\mathrm{SimCLR}} ^2/( 64n_0)$ and $\sigma_0 \leq \cE_{\mathrm{SimCLR}}/16$. 


Now by \eqref{eq:wt_v1_innerproduct_upperlowerbound} and \eqref{eq:Pwt_norm_upperlowerbound}, we have
\begin{align*}
    \frac{ \la \wb_r^{(t_0 + 1)},\vb_1 \ra}{ \| \Pb_{\cX,\vb_1}^\perp \wb_r^{(t_0+1)} \|_2 } \geq \frac{1 + (1 - 2\sqrt{n_0\sigma_0} )\lambda_1}{1 + (5/6)\cdot \cE_{\mathrm{SimCLR}}  \cdot  \lambda_1 } \cdot \frac{ \la \wb_r^{(t_0 )},\vb_1 \ra}{ \| \Pb_{\cX,\vb_1}^\perp \wb_r^{(t_0)} \|_2 }\geq  \frac{ \la \wb_r^{(t_0 )},\vb_1 \ra}{ \| \Pb_{\cX,\vb_1}^\perp \wb_r^{(t_0)} \|_2} \geq \sigma_0,
\end{align*}
for all $r\in\cI^+$, where the second inequality follows by the assumption that $\sigma_0 \leq \cE_{\mathrm{SimCLR}} ^2/(64 n_0)$.
This verifies the induction hypothesis \eqref{eq:induction_hypothesis2} at $t = t_0 +1$. Moreover, by \eqref{eq:wt_v1_innerproduct_upperlowerbound} and \eqref{eq:Pwt_norm_upperlowerbound}, we also have
\begin{align*}
    \frac{ \la \wb_r^{(t_0 + 1)},\vb_1 \ra}{ \| \Pb_{\cX,\vb_1}^\perp \wb_r^{(t_0+1)} \|_2 } &\leq \frac{ \| \Pb_{\cX} \wb_r^{(t_0 + 1)} \|_2}{ \| \Pb_{\cX,\vb_1}^\perp \wb_r^{(t_0+1)} \|_2 }\\
    &\leq \frac{ [1 + (1+\sigma_0)\lambda_1]\cdot \| \Pb_{\cX} \wb_r^{(t_0 )} \|_2}{ (1 - 5\cE_{\mathrm{SimCLR}}\lambda_1/6)\cdot   \| \Pb_{\cX,\vb_1}^\perp \wb_r^{(t_0)} \|_2 }\\
    &\leq [1 + (1+\cE_{\mathrm{SimCLR}})\lambda_1]\cdot \frac{  \| \Pb_{\cX} \wb_r^{(t_0 )} \|_2}{   \| \Pb_{\cX,\vb_1}^\perp \wb_r^{(t_0)} \|_2 }\\
    &\leq [1 + (1 + \cE_{\mathrm{SimCLR}} )\lambda_1]^{t_0}\cdot  \frac{  \| \Pb_{\cX} \wb_r^{(0 )} \|_2 }{ \| \Pb_{\cX,\vb_1}^\perp \wb_r^{(0)} \|_2 }\\
    &\leq [1 + (1 - \cE_{\mathrm{SimCLR}} )\lambda_1]^{\frac{1 + \cE_{\mathrm{SimCLR}}}{1 - \cE_{\mathrm{SimCLR}}}\cdot t_0}\cdot  \frac{  \| \Pb_{\cX} \wb_r^{(0 )} \|_2 }{ \| \Pb_{\cX,\vb_1}^\perp \wb_r^{(0)} \|_2 }\\
    &\leq [1 + (1 + \cE_{\mathrm{SimCLR}} )\lambda_1]^{2T_{\mathrm{SimCLR}}}\cdot  \frac{  \| \Pb_{\cX} \wb_r^{(0)} \|_2 }{ \| \Pb_{\cX,\vb_1}^\perp \wb_r^{(0)} \|_2 }
\end{align*}
for all $r\in [2m]$, where the second inequality follows by Lemma~\ref{lemma:spectral_decomposition_perturbation}, the third inequality follows by the assumption that $\sigma_0 \leq \cE_{\mathrm{SimCLR}}/64$, and the fifth inequality follows by the fact that $1+a \leq (1+b)^{a/b}$ for all $a > b >0$. 
Now by the definition of $T_{\mathrm{SimCLR}}$, we know that 
\begin{align*}
    [1 + (1 - \cE_{\mathrm{SimCLR}} )\lambda_1]^{T_{\mathrm{SimCLR}}}\leq \max\Bigg\{ 288 M^{\frac{1}{q-2}}\cdot\frac{\log(2/\sigma_0)^{\frac{1}{q-2}}\cdot \sqrt{\log(dn)} \cdot \sqrt{\log(md)} }{n^{\frac{1}{q-2}} \mathrm{SNR}^{\frac{q}{q-2}}}, 2 \Bigg\}.
\end{align*}
Therefore, we have
\begin{align*}
    \frac{ \la \wb_r^{(t_0 + 1)},\vb_1 \ra}{ \| \Pb_{\cX,\vb_1}^\perp \wb_{r}^{(t_0+1)} \|_2 } &\leq \max\Bigg\{ 288 M^{\frac{1}{q-2}}\cdot\frac{\log(2/\sigma_0)^{\frac{1}{q-2}}\cdot \sqrt{\log(dn)} \cdot \sqrt{\log(md)} }{n^{\frac{1}{q-2}} \mathrm{SNR}^{\frac{q}{q-2}}}, 2 \Bigg\} \cdot  \frac{ \| \Pb_{\cX} \wb_r^{(0 )} \|_2 }{ \| \Pb_{\cX,\vb_1}^\perp \wb_{r}^{(0)} \|_2 } \\
    & \leq \max\Bigg\{ 288 M^{\frac{1}{q-2}}\cdot\frac{\log(2/\sigma_0)^{\frac{1}{q-2}}\cdot \sqrt{\log(dn)} \cdot \sqrt{\log(md)} }{n^{\frac{1}{q-2}} \mathrm{SNR}^{\frac{q}{q-2}}}, 2 \Bigg\} \cdot  \frac{4\sigma_0\cdot \sqrt{n_0} }{ \sigma_0\cdot \sqrt{n_0}  }\\
    &\leq \sigma_0^{-1}
\end{align*}
for all $r\in [2m]$, 
where the second inequality follows by \eqref{eq:initialization_projection_norm_estimate} and \eqref{eq:initialization_projection_norm_full_estimate}, and the last inequality follows by the assumption that $\sigma_0 \leq \tilde{O}(M^{-\frac{1}{q-2}}\cdot n^{\frac{1}{q-2}} \mathrm{SNR}^{\frac{q}{q-2}})$. This verifies the induction hypothesis \eqref{eq:induction_hypothesis3} 
at $t = t_0 +1$. 

Based on the discussion above, by induction, we conclude that \eqref{eq:induction_hypothesis1}, \eqref{eq:induction_hypothesis2}, \eqref{eq:induction_hypothesis3}, \eqref{eq:wt_v1_innerproduct_upperlowerbound}, and \eqref{eq:Pwt_norm_upperlowerbound} hold for all $t=0,\ldots, T_{\mathrm{SimCLR}}$. In other words, we can conclude that:
\begin{align}\label{eq:wt_v1_innerproduct_upperlowerbound_restate}
      [1 + (1 - 2\sqrt{n_0\sigma_0} )\lambda_1] \cdot \la \wb_r^{(t)},\vb_1 \ra  \leq \la \wb_r^{(t+1)},\vb_1 \ra \leq [1 + (1+ 2\sqrt{n_0\sigma_0} )\lambda_1] \cdot \la \wb_r^{(t)},\vb_1 \ra
\end{align}
for all $t=0,\ldots,T_{\mathrm{SimCLR}}$ and $r\in\cI^+$, and
\begin{align}\label{eq:Pwt_norm_upperlowerbound_restate}
    \bigg(1 - \frac{5}{6} \cE_{\mathrm{SimCLR}}  \lambda_1 \bigg) \cdot  \| \Pb_{\cX,\vb_1}^\perp \wb_r^{(t)} \|_2 \leq \| \Pb_{\cX,\vb_1}^\perp \wb_r^{(t+1)} \|_2 \leq  \bigg(1 + \frac{5}{6} \cE_{\mathrm{SimCLR}}   \lambda_1 \bigg)\cdot \| \Pb_{\cX,\vb_1}^\perp \wb_r^{(t)} \|_2
\end{align}
for all $t=0,\ldots,T_{\mathrm{SimCLR}}$ and $r\in[2m]$. Moreover, by Lemma~\ref{lemma:SimCLR_iterates} and the fact that the columns and rows of $\Ab$, $\bXi^{(t)}$ are in $\cX_{\mathrm{col}}$ and $\cX_{\mathrm{row}}$ respectively, we also have
\begin{align*}
    \Pb_{\cX} \wb_{r}^{(t+1)}  = [\Ib + \Ab + \bXi^{(t)}]  \Pb_{\cX} \wb_{r}^{(t)}
\end{align*}
for all $t=0,\ldots, T_{\mathrm{SimCLR}}$ and all $r\in [2m]$. Therefore, by Lemmas~\ref{lemma:spectral_decomposition} and \ref{lemma:spectral_decomposition_perturbation}, we have
\begin{align}\label{eq:total_proj_norm_upperbound}
    \| \Pb_{\cX} \wb_{r}^{(t+1)} \|_2 \leq (1 + (1 + \sigma_0)\lambda_1)\cdot \| \Pb_{\cX} \wb_{r}^{(t)} \|_2
\end{align}
for all $t=0,\ldots, T_{\mathrm{SimCLR}}$ and all $r\in [2m]$. 
By \eqref{eq:initialization_condition} and \eqref{eq:wt_v1_innerproduct_upperlowerbound_restate}, we have
\begin{align}
    \la \wb_r^{(T_{\mathrm{SimCLR}})},\vb_1 \ra &\geq [1 + (1 - 2\sqrt{n_0\sigma_0} )\lambda_1]^{T_{\mathrm{SimCLR}} }\cdot \la \wb_r^{(0)},\vb_1 \ra \nonumber\\
    &\geq [1 + (1 - 2\sqrt{n_0\sigma_0} )\lambda_1]^{T_{\mathrm{SimCLR}} }\cdot \sigma_0 / 2\label{eq:wt_v1_finaliteration_lowerbound}
\end{align}
for all $r\in \cI^+$. 
Moreover, by \eqref{eq:initialization_projection_norm_estimate} and \eqref{eq:Pwt_norm_upperlowerbound_restate}, we also have
\begin{align}
    \| \Pb_{\cX,\vb_1}^\perp \wb_r^{(T_{\mathrm{SimCLR}})} \|_2 &\leq  \bigg(1 + \frac{5}{6} \cE_{\mathrm{SimCLR}}   \lambda_1 \bigg)^{T_{\mathrm{SimCLR}} } \cdot \| \Pb_{\cX,\vb_1}^\perp \wb_r^{(t)} \|_2 \nonumber\\
    &\leq  \bigg(1 + \frac{5}{6} \cE_{\mathrm{SimCLR}}   \lambda_1 \bigg)^{T_{\mathrm{SimCLR}} } \cdot 2\sigma_0\cdot \sqrt{n_0}\label{eq:Pwt_finaliteration_upperbound}
\end{align}
for all $r\in [2m]$. In addition, by \eqref{eq:total_proj_norm_upperbound}, it holds that
\begin{align}\label{eq:total_proj_finaliteration_upperbound}
    \| \Pb_{\cX} \wb_{r}^{(T_{\mathrm{SimCLR}})} \|_2 \leq  \| \Pb_{\cX} \wb_{r}^{(0)} \|_2 \leq  4(1 + (1 + \sigma_0)\lambda_1)^{ T_{\mathrm{SimCLR}} }\cdot\sigma_0\cdot \sqrt{n_0}
\end{align}
for all $t=0,\ldots, T_{\mathrm{SimCLR}}$ and all $r\in [2m]$, where the last inequality above is by \eqref{eq:initialization_projection_norm_full_estimate}. 

Now denote $ \bar\bmu = \bmu / \| \bmu \|_2$, and $\Pb_{\bmu}^\perp = \Ib - \bar\bmu \bar\bmu^\top$. 
Then for all $r\in \cI^+$, we have
\begin{align}
    \la \wb_r^{(T_{\mathrm{SimCLR}})}, \bar\bmu\ra & = \la \wb_r^{(T_{\mathrm{SimCLR}})}, \Pb_{\cX}\bar\bmu\ra \nonumber\\
    & = \la \wb_r^{(T_{\mathrm{SimCLR}})}, ( \Pb_{\cX,\vb_1}^\perp + \vb_1\vb_1^\top)\bar\bmu\ra \nonumber\\
    &= \la \wb_r^{(T_{\mathrm{SimCLR}})}, (\Ib - \Pb_{\cX})\bar\bmu\ra  + \la \wb_r^{(T_{\mathrm{SimCLR}})}, \Pb_{\cX,\vb_1}^\perp \bar\bmu\ra + \la \wb_r^{(T_{\mathrm{SimCLR}})}, \vb_1\ra \cdot \la \vb_1, \bar\bmu\ra \nonumber\\
    &= \la \wb_r^{(T_{\mathrm{SimCLR}})}, \Pb_{\cX,\vb_1}^\perp \bar\bmu\ra + \la \wb_r^{(T_{\mathrm{SimCLR}})}, \vb_1\ra \cdot \la \vb_1, \bar\bmu\ra \nonumber\\
    &\geq  - \| \Pb_{\cX,\vb_1}^\perp\wb_r^{(T_{\mathrm{SimCLR}})}\|_2 + \la \wb_r^{(T_{\mathrm{SimCLR}})}, \vb_1\ra \cdot (1 - \cE_{\mathrm{SimCLR}}), \label{eq:from_v1_to_mu_temp}
\end{align}
where the first inequality follows by Lemma~\ref{lemma:spectral_decomposition}.
Moreover, we also have
\begin{align*}
    \frac{[1 + (1 - 2\sqrt{n_0\sigma_0} )\lambda_1]^{T_{\mathrm{SimCLR}} }\cdot \sigma_0 / 2 }{ 2\sigma_0 \sqrt{n_0}\cdot (1 + 5\cE_{\mathrm{SimCLR}}   \lambda_1 / 6 )^{T_{\mathrm{SimCLR}} }  } &\geq \frac{1}{4\sqrt{n_0}}\cdot [1 + (1 - \cE_{\mathrm{SimCLR}})\cdot \lambda_1]^{T_{\mathrm{SimCLR}} }\\
    & \geq \frac{1}{4\sqrt{n_0}}\cdot\max\Bigg\{ 288 M^{\frac{1}{q-2}}\cdot\frac{\log(2/\sigma_0)^{\frac{1}{q-2}}\cdot \sqrt{\log(dn)} \cdot \sqrt{\log(md)} }{n^{\frac{1}{q-2}} \mathrm{SNR}^{\frac{q}{q-2}}}, 2 \Bigg\}\\
    &\geq \frac{1}{4\sigma_0\sqrt{n_0}}\\
    &\geq 2,
\end{align*}
where the third inequality is by the $\sigma_0 \leq \tilde{O}(M^{-\frac{1}{q-2}}\cdot n^{\frac{1}{q-2}} \mathrm{SNR}^{\frac{q}{q-2}})$, and the last inequality is by $\sigma_0\leq 1/(64n_0)$. 
Therefore, by \eqref{eq:wt_v1_finaliteration_lowerbound} and \eqref{eq:Pwt_finaliteration_upperbound}, we have
\begin{align*}
    \| \Pb_{\cX,\vb_1}^\perp\wb_r^{(T_{\mathrm{SimCLR}})}\|_2 \leq \la \wb_r^{(T_{\mathrm{SimCLR}})}, \vb_1\ra /4,
\end{align*}
and hence by \eqref{eq:from_v1_to_mu_temp}, we have
\begin{align}
    \la \wb_r^{(T_{\mathrm{SimCLR}})}, \bmu\ra &= \la \wb_r^{(T_{\mathrm{SimCLR}})}, \bar\bmu\ra\cdot \|\bmu \|_2 \nonumber \\
    &\geq \la \wb_r^{(T_{\mathrm{SimCLR}})}, \vb_1 \ra\cdot \|\bmu \|_2 /4\nonumber\\
    &\geq [1 + (1 - 2\sqrt{n_0\sigma_0} )\lambda_1]^{T_{\mathrm{SimCLR}} }\cdot \sigma_0 \cdot \|\bmu \|_2 / 8,\label{eq:innerproduct_lowerbound}
\end{align}
for all $r\in\cI^+$, where the last inequality follows by \eqref{eq:wt_v1_finaliteration_lowerbound}. 
In addition, since the update only happens in the subspace $\cX_{\mathrm{col}}$ which is spanned by the data, we have $(\Ib - \Pb_\cX) \wb_r^{(T_{\mathrm{SimCLR}})} = (\Ib - \Pb_\cX) \wb_r^{(0)}$. 
Moreover, we  have
\begin{align}
    \| \Pb_{\bmu}^\perp \wb_r^{(T_{\mathrm{SimCLR}})} \|_2 & = \| (\Ib - \Pb_{\cX} + \Pb_{\cX,\vb_1}^\perp + \vb_1 \vb_1^\top - \bar\bmu \bar\bmu^\top) \wb_r^{(T_{\mathrm{SimCLR}})} \|_2 \nonumber\\
    & \leq \| (\Ib - \Pb_{\cX})\wb_r^{(T_{\mathrm{SimCLR}})}\|_2 + \|(\Pb_{\cX,\vb_1}^\perp + \vb_1 \vb_1^\top - \bar\bmu \bar\bmu^\top) \wb_r^{(T_{\mathrm{SimCLR}})} \|_2 \nonumber\\
    &\leq \| (\Ib - \Pb_{\cX})\wb_r^{(0)}\|_2 + \|\Pb_{\cX,\vb_1}^\perp  \wb_r^{(T_{\mathrm{SimCLR}})} \|_2 + \|(\vb_1 \vb_1^\top - \bar\bmu \bar\bmu^\top) \wb_r^{(T_{\mathrm{SimCLR}})} \|_2,\label{eq:norm_bound_eq1}
\end{align}
where the last inequality follows by the fact that $(\Ib - \Pb_\cX) \wb_r^{(0)}$ is a centered spherical Gaussian random vector with standard deviation bounded by $\sigma_0$, and by Gaussian tail bound, with probability at least $1 - d^{-2}$, 
\begin{align}\label{eq:norm_bound_eq2}
    \| (\Ib - \Pb_\cX) \wb_r^{(0)} \|_3 \leq 4\sigma_0\cdot \sqrt{d\log(md)},
\end{align}
for all $r\in [2m]$. 
Moreover, we have
\begin{align}
    \|(\vb_1 \vb_1^\top - \bar\bmu \bar\bmu^\top) \wb_r^{(T_{\mathrm{SimCLR}})} \|_2 &\leq \|(\vb_1 - \bar\bmu)\vb_1^\top  \wb_r^{(T_{\mathrm{SimCLR}})} \|_2 + \|\bar\bmu (\vb_1- \bar\bmu)^\top\wb_r^{(T_{\mathrm{SimCLR}})} \|_2\nonumber \\
    & = \la \wb_r^{(T_{\mathrm{SimCLR}})} ,\vb_1 \ra\cdot \| \vb_1 - \bar\bmu \|_2 + | (\vb_1- \bar\bmu)^\top\wb_r^{(T_{\mathrm{SimCLR}})} | \nonumber \\
    &\leq \la \wb_r^{(T_{\mathrm{SimCLR}})} ,\vb_1 \ra\cdot \| \vb_1 - \bar\bmu \|_2 + | (\vb_1- \bar\bmu)^\top \vb_1\cdot \vb_1^\top \wb_r^{(T_{\mathrm{SimCLR}})}|  \nonumber\\
    &\quad + | (\vb_1- \bar\bmu)^\top \Pb_{\cX,\vb_1}^\perp \wb_r^{(T_{\mathrm{SimCLR}})} | \nonumber\\
    &\leq \la \wb_r^{(T_{\mathrm{SimCLR}})} ,\vb_1 \ra\cdot \| \vb_1 - \bar\bmu \|_2 +  \la \wb_r^{(T_{\mathrm{SimCLR}})} ,\vb_1 \ra \cdot| (\vb_1- \bar\bmu)^\top \vb_1|  \nonumber\\
    &\quad + \|\vb_1- \bar\bmu\|_2\cdot \| \Pb_{\cX,\vb_1}^\perp \wb_r^{(T_{\mathrm{SimCLR}})} \|_2 \nonumber\\
    &\leq (\sqrt{2}\cdot \cE_{\mathrm{SimCLR}} + \cE_{\mathrm{SimCLR}}^2)\cdot \la \wb_r^{(T_{\mathrm{SimCLR}})} ,\vb_1 \ra \nonumber \\
    &\quad + \sqrt{2}\cdot \cE_{\mathrm{SimCLR}}\cdot  \| \Pb_{\cX,\vb_1}^\perp \wb_r^{(T_{\mathrm{SimCLR}})} \|_2 \nonumber\\
    &\leq 2 \cE_{\mathrm{SimCLR}} \cdot \la \wb_r^{(T_{\mathrm{SimCLR}})} ,\vb_1 \ra  +  \| \Pb_{\cX,\vb_1}^\perp \wb_r^{(T_{\mathrm{SimCLR}})} \|_2 \nonumber \\ &\leq 2 \cE_{\mathrm{SimCLR}} \cdot \| \Pb_{\cX} \wb_r^{(T_{\mathrm{SimCLR}})} \|_2  +  \| \Pb_{\cX,\vb_1}^\perp \wb_r^{(T_{\mathrm{SimCLR}})} \|_2.\label{eq:norm_bound_eq3}
\end{align} 
Plugging \eqref{eq:norm_bound_eq3} and \eqref{eq:norm_bound_eq2} into \eqref{eq:norm_bound_eq1} gives
\begin{align*}
    \| \Pb_{\bmu}^\perp \wb_r^{(T_{\mathrm{SimCLR}})} \|_2 \leq 4\sigma_0\cdot \sqrt{d\log(md)} +  2 \cE_{\mathrm{SimCLR}} \cdot \| \Pb_{\cX} \wb_r^{(T_{\mathrm{SimCLR}})} \|_2  +  2\| \Pb_{\cX,\vb_1}^\perp \wb_r^{(T_{\mathrm{SimCLR}})} \|_2.
\end{align*}
Therefore, we have that for $r\in\cI^+$ and $r^{\prime}\in[2m]$,
\begin{align*}
\frac{ \la \wb_r^{(T_{\mathrm{SimCLR}})}, \bmu\ra }{  \| \Pb_{\bmu}^\perp \wb_{r^{\prime}}^{(T_{\mathrm{SimCLR}})} \|_2  } &\geq \frac{ \la \wb_r^{(T_{\mathrm{SimCLR}})}, \bmu\ra }{ 4\sigma_0\cdot \sqrt{d\log(md)} +  2 \cE_{\mathrm{SimCLR}} \cdot \| \Pb_{\cX} \wb_{r'}^{(T_{\mathrm{SimCLR}})} \|_2  +  2\| \Pb_{\cX,\vb_1}^\perp \wb_{r^\prime}^{(T_{\mathrm{SimCLR}})} \|_2 } \\
    &\geq  \frac{1}{3}\cdot \min\Bigg\{ \frac{  \la \wb_r^{(T_{\mathrm{SimCLR}})}, \bmu\ra }{ 4\sigma_0\cdot \sqrt{d\log(md)}}, \frac{  \la \wb_r^{(T_{\mathrm{SimCLR}})}, \bmu\ra }{  2 \cE_{\mathrm{SimCLR}} \cdot \| \Pb_{\cX} \wb_{r'}^{(T_{\mathrm{SimCLR}})} \|_2 }, \frac{  \la \wb_r^{(T_{\mathrm{SimCLR}})}, \bmu\ra }{ 2\| \Pb_{\cX,\vb_1}^\perp \wb_{r^\prime}^{(T_{\mathrm{SimCLR}})} \|_2 } \Bigg\}
\end{align*}
By \eqref{eq:wt_v1_finaliteration_lowerbound}, we have
\begin{align*}
    \frac{  \la \wb_r^{(T_{\mathrm{SimCLR}})}, \bmu\ra }{ 4\sigma_0\cdot \sqrt{d\log(md)}} &\geq \frac{  [1 + (1 - 2\sqrt{n_0\sigma_0} )\lambda_1]^{T_{\mathrm{SimCLR}} } \cdot \|\bmu \|_2  }{ 4  \sqrt{d\log(md)} } \\
    &\geq \frac{  [1 + (1 -  \cE_{\mathrm{SimCLR}} )\lambda_1]^{T_{\mathrm{SimCLR}} } \cdot \|\bmu \|_2 \cdot \sigma_p }{ 4 \sigma_p \sqrt{d\log(md)} } \\
    &= \frac{  [1 + (1 -\cE_{\mathrm{SimCLR}} )\lambda_1]^{T_{\mathrm{SimCLR}} } \cdot \mathrm{SNR} \cdot \sigma_p }{ 4  \sqrt{\log(md)} } \\
    &\geq 72C^{\frac{1}{q-2}}\sigma_p\cdot\frac{\log(2/\sigma_0)^{\frac{1}{q-2}}\cdot \sqrt{\log(dn)}}{n^{\frac{1}{q-2}} \mathrm{SNR}^{\frac{2}{q-2}}},
\end{align*}
where the first inequality is by the assumption that $\sigma_0\leq   \cE_{\mathrm{SimCLR}}/(4n_0)$, and the second inequality is by the definition of $T_{\mathrm{SimCLR}}$, which implies that 
\begin{align*}
    [1 + (1 - \cE_{\mathrm{SimCLR}} )\lambda_1]^{T_{\mathrm{SimCLR}} }  \geq 288 M^{\frac{1}{q-2}}\cdot\frac{\log(2/\sigma_0)^{\frac{1}{q-2}}\cdot \sqrt{\log(dn)} \cdot \sqrt{\log(md)} }{n^{\frac{1}{q-2}} \mathrm{SNR}^{\frac{q}{q-2}}}.
\end{align*}
By \eqref{eq:total_proj_finaliteration_upperbound} and \eqref{eq:innerproduct_lowerbound}, we have
\begin{align*}
    \frac{  \la \wb_r^{(T_{\mathrm{SimCLR}})}, \bmu\ra }{  2 \cE_{\mathrm{SimCLR}} \cdot \| \Pb_{\cX} \wb_{r'}^{(T_{\mathrm{SimCLR}})} \|_2 } &\geq \frac{ [1 + (1 - 2\sqrt{n_0\sigma_0} )\lambda_1]^{T_{\mathrm{SimCLR}} }\cdot \sigma_0 \cdot \|\bmu \|_2 / 8 }{8 \cE_{\mathrm{SimCLR}} \cdot (1 + (1 + \sigma_0)\lambda_1)^{ T_{\mathrm{SimCLR}} }\cdot\sigma_0\cdot \sqrt{n_0}}\\
    &= \frac{ [1 + (1 - 2\sqrt{n_0\sigma_0} )\lambda_1]^{T_{\mathrm{SimCLR}} }\cdot \|\bmu \|_2  }{64\cE_{\mathrm{SimCLR}} \cdot (1 + (1 + \sigma_0)\lambda_1)^{ T_{\mathrm{SimCLR}} }\cdot \sqrt{n_0}}\\
    &\geq \frac{  \|\bmu \|_2  }{64\cE_{\mathrm{SimCLR}} \cdot (1 + (1 + \sigma_0)\lambda_1)^{ T_{\mathrm{SimCLR}} }\cdot \sqrt{n_0}}\\
    &\geq \frac{  \|\bmu \|_2  }{64\cE_{\mathrm{SimCLR}} \cdot (1 + (1 - \cE_{\mathrm{SimCLR}})\lambda_1)^{ \frac{1 + \sigma_0}{1 -\cE_{\mathrm{SimCLR}}} \cdot T_{\mathrm{SimCLR}} }\cdot \sqrt{n_0}}\\
    &\geq \frac{  \|\bmu \|_2  }{64\cE_{\mathrm{SimCLR}} \cdot (1 + (1 - \cE_{\mathrm{SimCLR}})\lambda_1)^{ 2T_{\mathrm{SimCLR}} }\cdot \sqrt{n_0}},
\end{align*}
where the second inequality follows by the assumption that $\sigma_0 \leq 1/(4n_0)$, the third inequality follows by the fact that $1+a \leq (1+b)^{a/b}$ for all $a > b >0$, and the fourth inequality follows by the assumption that $\sigma_0,\cE_{\mathrm{SimCLR}} \leq 1/4$. Now by the definition of $T_{\mathrm{SimCLR}}$, we know that 
\begin{align*}
    [1 + (1 - \cE_{\mathrm{SimCLR}} )\lambda_1]^{T_{\mathrm{SimCLR}}}\leq \max\Bigg\{ 288 M^{\frac{1}{q-2}}\cdot\frac{\log(2/\sigma_0)^{\frac{1}{q-2}}\cdot \sqrt{\log(dn)} \cdot \sqrt{\log(md)} }{n^{\frac{1}{q-2}} \mathrm{SNR}^{\frac{q}{q-2}}}, 2 \Bigg\}.
\end{align*}
Therefore, by the assumption that 
\begin{align*}
     \cE_{\mathrm{SimCLR}} \leq \frac{\sqrt{d}\cdot n^{\frac{3}{q-2}} \mathrm{SNR}^{\frac{3q}{q-2}}}{64\cdot 288^2\cdot 72 \cdot \sqrt{n_0} \cdot M^{\frac{3}{q-2}}\cdot \log(2/\sigma_0)^{\frac{3}{q-2}}\cdot \log(dn)^{\frac{3}{2}}\cdot \log(md) },
\end{align*}
we have
\begin{align*}
    \frac{  \la \wb_r^{(T_{\mathrm{SimCLR}})}, \bmu\ra }{  2 \cE_{\mathrm{SimCLR}} \cdot \| \Pb_{\cX} \wb_{r'}^{(T_{\mathrm{SimCLR}})} \|_2 }  &\geq \frac{  \|\bmu \|_2  }{64\cE_{\mathrm{SimCLR}}  (1 + (1 - \cE_{\mathrm{SimCLR}})\lambda_1)^{ 2T_{\mathrm{SimCLR}} } \sqrt{n_0}}\\
    & = \frac{  \mathrm{SNR}\cdot \sigma_p\cdot \sqrt{d}  }{64\cE_{\mathrm{SimCLR}}  (1 + (1 - \cE_{\mathrm{SimCLR}})\lambda_1)^{ 2T_{\mathrm{SimCLR}} } \sqrt{n_0}}\\
    &\geq  72M^{\frac{1}{q-2}}\sigma_p\cdot\frac{\log(2/\sigma_0)^{\frac{1}{q-2}}\cdot \sqrt{\log(dn)}}{n^{\frac{1}{q-2}} \mathrm{SNR}^{\frac{2}{q-2}}}.
\end{align*}
Finally, 
by \eqref{eq:wt_v1_finaliteration_lowerbound} and \eqref{eq:Pwt_finaliteration_upperbound}, we also have
\begin{align*}
    \frac{  \la \wb_r^{(T_{\mathrm{SimCLR}})}, \bmu\ra }{ 2\| \Pb_{\cX,\vb_1}^\perp \wb_{r^\prime}^{(T_{\mathrm{SimCLR}})} \|_2 } &\geq \frac{  [1 + (1 - 2\sqrt{n_0\sigma_0} )\lambda_1]^{T_{\mathrm{SimCLR}} } \cdot \|\bmu \|_2  }{ 8 \sqrt{n_0} \cdot \big(1 + \frac{5}{6} \cE_{\mathrm{SimCLR}}   \lambda_1 \big)^{T_{\mathrm{SimCLR}} } }\\
    &\geq  \frac{  [1 + (1 - \cE_{\mathrm{SimCLR}} )\lambda_1]^{T_{\mathrm{SimCLR}} } \cdot \|\bmu \|_2  }{ 8 \sqrt{n_0} }\\
    &=  \frac{  [1 + (1 - \cE_{\mathrm{SimCLR}} )\lambda_1]^{T_{\mathrm{SimCLR}} } \cdot \mathrm{SNR}\cdot \sigma_p \sqrt{d}  }{ 8 \sqrt{n_0} }\\
    &\geq 72M^{\frac{1}{q-2}}\sigma_p\cdot\frac{\log(2/\sigma_0)^{\frac{1}{q-2}}\cdot \sqrt{\log(dn)}}{n^{\frac{1}{q-2}} \mathrm{SNR}^{\frac{2}{q-2}}},
\end{align*}
where the second inequality follows by the assumption that $\sigma_0 \leq \cE_{\mathrm{SimCLR}}^2/(64n_0)$, and the last inequality follows by the choice of $T_{\mathrm{SimCLR}}$ which implies that 
\begin{align*}
    [1 + (1 - \cE_{\mathrm{SimCLR}} )\lambda_1]^{T_{\mathrm{SimCLR}} \cdot   }{  }&\geq 288 M^{\frac{1}{q-2}}\cdot\frac{\log(2/\sigma_0)^{\frac{1}{q-2}}\cdot \sqrt{\log(dn)} \cdot \sqrt{\log(md)} }{n^{\frac{1}{q-2}} \mathrm{SNR}^{\frac{q}{q-2}}}\\
    &\geq 576M^{\frac{1}{q-2}}\cdot\frac{ \sqrt{n_0} \cdot \log(2/\sigma_0)^{\frac{1}{q-2}}\cdot \sqrt{\log(dn)}}{n^{\frac{1}{q-2}} \mathrm{SNR}^{\frac{q}{q-2}} \cdot  \sqrt{d}},
\end{align*}
where the second inequality follows by the assumption that $d\geq 4n_0$.
Therefore, we conclude that for $r\in\cI^+$ and $r^\prime\in[2m]$, we have 
\begin{align}
    \frac{\la \wb_r^{(T_{\mathrm{SimCLR}})}, \bmu\ra}{ \| (\Ib - \bmu \bmu^\top / \| \bmu \|_2^2) \wb_{r^\prime}^{(T_{\mathrm{SimCLR}})} \|_2 } \geq 24M^{\frac{1}{q-2}}\sigma_p\cdot\frac{\log(2/\sigma_0)^{\frac{1}{q-2}}\cdot \sqrt{\log(dn)}}{n^{\frac{1}{q-2}} \mathrm{SNR}^{\frac{2}{q-2}}},\label{eq:pretraintofinetuning}
\end{align}

Now for any ${r^\prime}\in[2m]$, consider the following decomposition for $\wb_{r^\prime}^{(T_{\mathrm{SimCLR}})}$:
    \begin{align*}
        \wb_{r^\prime}^{(T_{\mathrm{SimCLR}})} = \wb_{r^\prime}^\perp + \gamma_{r^\prime}\cdot \bmu / \| \bmu \|_2^2 + \sum_{i=1}^n \rho_{{r^\prime},i} \cdot \bxi_{i}^\text{fine-tuning} / \| \bxi_{i}^\text{fine-tuning} \|_2^{2},
    \end{align*}
     where $\wb_{r^\prime}^\perp$ is perpendicular to $\bmu$ and $\bxi_{1}^\text{fine-tuning},\ldots, \bxi_{n}^\text{fine-tuning}$. 
    Then we directly have 
    \begin{align}\label{eq:gamma_calculation}
        \gamma_r = \la \wb_{r}^{(T_{\mathrm{SimCLR}})} , \bmu \ra
    \end{align}
    for all $r\in \cI^+$. 
    Note that $\wb_{r'}^{(T_{\mathrm{SimCLR}})}$ is independent of $\bxi_i^\text{fine-tuning}$, $i\in [n]$. 
    For any $i\in [n]$, considering the randomness of $\bxi_{i}^\text{fine-tuning}$, we see that $\la \wb_{r^\prime}^{(T_{\mathrm{SimCLR}})} , \bxi_{i}^\text{fine-tuning} \ra $ is a Gaussian random variable with mean zero and standard deviation $\sigma_p\cdot \|\wb_{r^\prime}^{(T_{\mathrm{SimCLR}})} \|_2$. Therefore, by Gaussian tail bound and union bound, with probability at least $1 - d^{-2}$, we have
    \begin{align*}
        |\la \wb_{r^\prime}^{(T_{\mathrm{SimCLR}})} , \bxi_{i}^\text{fine-tuning} \ra| \leq 8\sigma_p\cdot \|\wb_{r^\prime}^{(T_{\mathrm{SimCLR}})} \|_2 \cdot \sqrt{\log(dn)}.
    \end{align*}
    Now denote  $\Eb = [ \bxi_1,\ldots, \bxi_n ]$, $\Db = \diag(  \| \bxi_1^\text{fine-tuning} \|_2^{-2}, \| \bxi_2^\text{fine-tuning} \|_2^{-2}, \ldots, \| \bxi_n^\text{fine-tuning} \|_2^{-2} )$, $\boldsymbol{\rho}_{r^\prime} = [\rho_{r^\prime,1},\ldots,\rho_{{r^\prime},n}]^\top$. Then we have
    \begin{align*}
        \|\wb_{r^\prime}^{(T_{\mathrm{SimCLR}})\top} \Eb \|_\infty &= \|\wb_{r^\prime}^{(T_{\mathrm{SimCLR}})\top} (\Ib - \bmu \bmu^\top / \| \bmu \|_2^2)\Eb \|_\infty \\
        &\leq 8\sigma_p\cdot \|(\Ib - \bmu \bmu^\top / \| \bmu \|_2^2)\wb_{r^\prime}^{(T_{\mathrm{SimCLR}})} \|_2 \cdot \sqrt{\log(dn)}.
    \end{align*}
    By definition, we have
    \begin{align*}
    \la \wb_{r^\prime}^{(T_{\mathrm{SimCLR}})} , \bxi_{i_0} \ra = \sum_{i=1}^n \rho_{{r^\prime},i} \cdot \la \bxi_i, \bxi_{i_0} \ra / \| \bxi_i \|_2^{2},
    \end{align*}
    and
    \begin{align*}
         \Eb\wb_{r^\prime}^{(T_{\mathrm{SimCLR}})} = \Eb^\top \Eb \Db \boldsymbol{\rho}_{r^\prime}.
    \end{align*}
    Moreover, we have
    \begin{align}
        \big| [(\Eb^\top \Eb)^{-1}]_{ij} \big| &= \frac{1}{\sigma_p^2 d}\cdot \big| [ (\Ib + \sigma_p^{-2} d^{-1} \cdot \Eb^\top \Eb - \Ib ) )^{-1}]_{ij} \big| \nonumber\\
        &= \frac{1}{\sigma_p^2 d}\cdot \Bigg| \Bigg[  \Ib + \sum_{k=1}^\infty ( \Ib - \sigma_p^{-2} d^{-1} \cdot \Eb^\top \Eb)^k \Bigg]_{ij} \Bigg|  \nonumber\\
        &\leq \begin{cases}
            3/(2\sigma_p^2d),&  \text{if }i=j,\\
            1/(2n\sigma_p^2d), & \text{if }i\neq j,
        \end{cases} \label{eq:Einverse_bound}
    \end{align}
    where the last inequality follows by the fact that $(\Ib - \sigma_p^{-2} d^{-1} \cdot \Eb^\top \Eb)^k$ is an $n\times n$ matrix whose entries are bounded by $\tilde{O}(d^{-1/2})$ according to Lemma~\ref{lemma:data_innerproducts}. 
    Therefore, we have
    \begin{align}
        \| \boldsymbol{\rho}_{r^\prime} \|_\infty& \leq \| \Db^{-1} ( \Eb^\top \Eb )^{-1}  \Eb \wb_{r^\prime}^{(T_{\mathrm{SimCLR}})} \|_{\infty} \nonumber\\ 
        &\leq \| \Db^{-1}\|_\infty\cdot \| ( \Eb^\top \Eb )^{-1} \|_\infty \cdot\| \Eb \wb_{r^\prime}^{(T_{\mathrm{SimCLR}})} \|_{\infty} \nonumber\\
        &\leq \frac{3}{2}\sigma_p^2d\cdot \frac{2}{\sigma_p^2 d} \cdot 8\sigma_p\cdot \|(\Ib - \bmu \bmu^\top / \| \bmu \|_2^2)\wb_{r^\prime}^{(T_{\mathrm{SimCLR}})} \|_2 \cdot \sqrt{\log(dn)} \nonumber\\
        &=24\sigma_p\cdot \|(\Ib - \bmu \bmu^\top / \| \bmu \|_2^2)\wb_{r^\prime}^{(T_{\mathrm{SimCLR}})} \|_2 \cdot \sqrt{\log(dn)}\label{eq:rho_upper_bound}
    \end{align}
    for all ${r^\prime}\in[2m]$,
    where the third inequality follows by \eqref{eq:Einverse_bound} and Lemma~\ref{lemma:data_innerproducts}. Therefore, combining \eqref{eq:pretraintofinetuning}, \eqref{eq:gamma_calculation} and \eqref{eq:rho_upper_bound}, we have
    \begin{align*}
        \frac{\gamma_r}{ |\rho_{{r^\prime},i}|} \geq \frac{\la \wb_r^{(T_{\mathrm{SimCLR}})}, \bmu\ra}{24\sigma_p\cdot \|(\Ib - \bmu \bmu^\top / \| \bmu \|_2^2)\wb_{r^\prime}^{(T_{\mathrm{SimCLR}})} \|_2 \cdot \sqrt{\log(dn)}} \geq \frac{M^{\frac{1}{q-2}}\log(1/\sigma_0)^{\frac{1}{q-2}}}{n^{\frac{1}{q-2}} \mathrm{SNR}^{\frac{2}{q-2}}}
    \end{align*}
    for all $r\in \cI^+$ and all $r'\in[2m]$. By \eqref{eq:wt_v1_finaliteration_lowerbound}, it is clear that $\gamma_r\geq 2\sigma_0$ for all $r\in \cI^+$. This further implies that
    \begin{align*}
        \frac{\gamma_r / \log(2/ \gamma_r)^{\frac{1}{q-2}}}{ |\rho_{{r^\prime},i}|} \geq \frac{M^{\frac{1}{q-2}}}{n^{\frac{1}{q-2}} \mathrm{SNR}^{\frac{2}{q-2}}}
    \end{align*}
     for all $r\in \cI^+$ and all $r'\in[2m]$. 
    With exactly the same proof, we can also show that 
    \begin{align*}
        \frac{-\gamma_r / \log(-2/ \gamma_r)^{\frac{1}{q-2}} }{ |\rho_{{r^\prime},i}|} \geq  \frac{M^{\frac{1}{q-2}}\log(1/\sigma_0)^{\frac{1}{q-2}}}{n^{\frac{1}{q-2}} \mathrm{SNR}^{\frac{2}{q-2}}}
    \end{align*}
    for all $r\in \cI^-$ and all $r'\in[2m]$. 

    Finally, for all $r\in [2m]$, by Lemma~\ref{lemma:SimCLR_iterates}, we have
\begin{align*}
    \wb_r^{(t+1)} = \wb_r^{(t)} + ( \Ab + \bXi^{(t)} ) \wb_r^{(t)} 
\end{align*}
for $t=0,\ldots, T_{\mathrm{SimCLR}}$.
Therefore, we have
\begin{align*}
    \| \wb_r^\perp \|_2 &\leq \| \wb_r^{(T_{\mathrm{SimCLR}})} \|_2 \\
    &\leq (1 + (1+\sigma_0)\cdot \lambda_1)^{T_{\mathrm{SimCLR}}} \cdot \| \wb_r^{(0)} \|_2\\
    &\leq (1 + (1+\sigma_0)\cdot \lambda_1)^{T_{\mathrm{SimCLR}}} \cdot 2\sigma_0\cdot\sqrt{d}\\
    &\leq (1 + (1-\cE_{\mathrm{SimCLR}})\cdot \lambda_1)^{\frac{1+\sigma_0}{1 - \cE_{\mathrm{SimCLR}}}T_{\mathrm{SimCLR}}} \cdot 2\sigma_0\cdot\sqrt{d}\\
    &\leq (1 + (1-\cE_{\mathrm{SimCLR}})\cdot \lambda_1)^{2T_{\mathrm{SimCLR}}} \cdot 2\sigma_0\cdot\sqrt{d}
\end{align*}
where the fourth inequality follows by the assumption that $\sigma_0,\cE_{\mathrm{SimCLR}} \leq 1/4$. Now by the definition of $T_{\mathrm{SimCLR}}$, we know that 
\begin{align*}
    [1 + (1 - \cE_{\mathrm{SimCLR}} )\lambda_1]^{T_{\mathrm{SimCLR}}}\leq \max\Bigg\{ 288 M^{\frac{1}{q-2}}\cdot\frac{\log(2/\sigma_0)^{\frac{1}{q-2}}\cdot \sqrt{\log(dn)} \cdot \sqrt{\log(md)} }{n^{\frac{1}{q-2}} \mathrm{SNR}^{\frac{q}{q-2}}}, 2 \Bigg\}.
\end{align*}
Therefore, we have
\begin{align*}
    \| \wb_r^\perp \|_2 \leq 2\sigma_0\cdot\sqrt{d}\cdot \max\Bigg\{ 288 M^{\frac{1}{q-2}}\cdot\frac{\log(2/\sigma_0)^{\frac{1}{q-2}}\cdot \sqrt{\log(dn)} \cdot \sqrt{\log(md)} }{n^{\frac{1}{q-2}} \mathrm{SNR}^{\frac{q}{q-2}}}, 2 \Bigg\} \leq \frac{1}{n},
\end{align*} 
where we implement the assumption that $\sigma_0 \leq d^{-1/2}n^{-1} / 4$ and
\begin{align*}
    \sigma_0 \leq \frac{d^{-1/2}n^{-1}\cdot n^{\frac{1}{q-2}} \mathrm{SNR}^{\frac{q}{q-2}} }{ 576M^{\frac{1}{q-2}} \log(2/\sigma_0)^{\frac{1}{q-2}}\cdot \sqrt{\log(dn)} \cdot \sqrt{\log(md)}}.
\end{align*}
This finishes the proof.
\end{proof}

\subsection{Proofs of lemmas in Appendix~\ref{sec:proofofLemma5.1}}

\subsubsection{Proof of Lemma~\ref{lemma:norm_bound2}}
\begin{proof}[Proof of Lemma~\ref{lemma:norm_bound2}]
    Since $\bxi_i, i\in[n_0]$ i.i.d follows $N(\mathbf{0}, \sigma_p^2\cdot(\Ib -\bmu\bmu^\top\cdot \| \bmu \|_2^{-2}))$,  thus
    $\left\|\bxi_i\right\|^2_2$ is sub-exponential random variable with
    \begin{align*}
        \left\| \| \bxi_i \|_2^2 \right\|_{\psi_1} \leq \overline{C}_2 \sigma_p^2, 
    \end{align*}
    where $\overline{C}_2$ is an absolute constant. By Bernstein inequality, with the probability of at least $1-\tilde{\delta}/(2n_0)$ we have that
    \begin{align*}
        - \frac{\overline{C}_2}{\sqrt{c}}\sigma_p^2\sqrt{d\log(4n_0/\tilde{\delta})}    \leq   \|\bxi_i\|_2^2 -d\sigma_p^2 \leq \frac{\overline{C}_2}{\sqrt{c}}\sigma_p^2\sqrt{d\log(4n_0/\tilde{\delta})},
    \end{align*}
    where $c$ is also an absolute constant, and it is equivalent to
     \begin{align*}
        d\sigma_p^2 -C_2\sigma_p^2\sqrt{d\log(4n_0/\tilde{\delta})}    \leq   \|\bxi_i\|_2^2  \leq d\sigma_p^2+C_2\sigma_p^2\sqrt{d\log(4n_0/\tilde{\delta})},
    \end{align*}
    where $C_2$ is an absolute constant that does not depend on other variables.
    Similarly, we could obtain that with the probability of at least $1-\tilde{\delta}/(2n_0)$,
    \begin{align*}
        d\sigma_p^2 -C_2\sigma_p^2\sqrt{d\log(4n_0/\tilde{\delta})}    \leq   \|\tilde{\bxi}_i\|_2^2  \leq d\sigma_p^2+ C_2\sigma_p^2\sqrt{d\log(4n_0/\tilde{\delta})},
    \end{align*}
    Apply a union bound for $\|\bxi_i\|_2^2,~\|\tilde{\bxi}_i\|_2^2,~ i\in[n_0]$ finishes the proof of this lemma.
\end{proof}

\subsection{Proofs of lemmas in Appendix~\ref{sec:proofofLemma5.2}}
\subsubsection{Proof of Lemma~\ref{lemma:upperbDelta}}

In this section, the following Lemma~\ref{lemma:pretrain_inq1}, \ref{lemma:norm_bound} and \ref{lemma:eigenvalue_bound} are introduced to prove Lemma~\ref{lemma:upperbDelta}.

\begin{lemma}\label{lemma:pretrain_inq1}
    Suppose that $\tilde{\delta} > 0$, then with probability at least $1 - \tilde{\delta}$, 
    \begin{align*}
        |\sum_{i=1}^{n_0}\frac{1}{n_0}y_i|\leq \sqrt{\frac{2}{n_0}\log(2/\tilde{\delta})}
    \end{align*}
\end{lemma}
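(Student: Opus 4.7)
The plan is to recognize this as a standard sub-Gaussian tail bound for the sample mean of Rademacher random variables. Recall from Definition~\ref{def:trainingdata} that each $y_i$ is an independent Rademacher variable taking values in $\{-1, +1\}$ with equal probability, so $\mathbb{E}[y_i] = 0$ and $|y_i| \leq 1$ almost surely.

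My approach is to apply Hoeffding's inequality directly to the sum $S_{n_0} = \sum_{i=1}^{n_0} y_i$. Since each $y_i$ is bounded in $[-1,1]$ and has mean zero, Hoeffding's inequality yields
\begin{align*}
    \mathbb{P}\!\left( \left| \frac{1}{n_0}\sum_{i=1}^{n_0} y_i \right| \geq t \right) \leq 2\exp\!\left( - \frac{n_0 t^2}{2} \right)
\end{align*}
for any $t > 0$. Setting the right-hand side equal to $\tilde{\delta}$ and solving for $t$ gives $t = \sqrt{(2/n_0)\log(2/\tilde{\delta})}$, which is exactly the stated bound. Equivalently, one can observe that each $y_i$ is $1$-sub-Gaussian (in fact, $\mathbb{E}[e^{\lambda y_i}] = \cosh(\lambda) \leq e^{\lambda^2/2}$), hence $\frac{1}{n_0}\sum_{i=1}^{n_0} y_i$ is $(1/\sqrt{n_0})$-sub-Gaussian, and a standard Chernoff argument yields the same tail bound.

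There is essentially no obstacle here; the result is a textbook concentration inequality included purely as a deterministic-looking statement to be invoked in the proof of Lemma~\ref{lemma:upperbDelta}. The only thing to be slightly careful about is the factor of $2$ in $\log(2/\tilde{\delta})$, which comes from the two-sided union bound between $\{S_{n_0}/n_0 \geq t\}$ and $\{S_{n_0}/n_0 \leq -t\}$; this matches the stated form exactly.
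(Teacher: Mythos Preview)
Your proposal is correct and matches the paper's own proof essentially verbatim: the paper also invokes Hoeffding's inequality on the independent Rademacher variables $y_i$ to obtain the stated bound. Your additional remarks on the sub-Gaussian viewpoint and the factor of $2$ from the two-sided bound are fine elaborations but not needed beyond what the paper records.
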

\begin{proof}[Proof of Lemma~\ref{lemma:pretrain_inq1}]
Since $y_i, i\in[n_0]$ independent and identically follow Bernoulli distribution, then by Hoeffding inequality, with the probability of at least $1-\tilde{\delta}$, we have
    \begin{align*}
        |\sum_{i=1}^{n_0}\frac{1}{n_0}y_i|\leq \sqrt{\frac{2}{n_0}\log(2/\tilde{\delta})}
    \end{align*}
\end{proof}

\begin{lemma}\label{lemma:norm_bound}
    For any $\tilde{\delta} > 0$, with probability at least $1-\tilde{\delta}$, it holds that    
    \begin{align*}
        dn_0\sigma_p^2- C_1n_0\sigma_p^2\sqrt{d\log(2/\tilde{\delta})}    \leq& \left\| \sum_{i=1}^{n_0} \bxi_i\right\|^2_2 \leq dn_0\sigma_p^2+ C_1n_0\sigma_p^2\sqrt{d\log(2/\tilde{\delta})}\\
        dn_0\sigma_p^2- C_1n_0\sigma_p^2\sqrt{d\log(2/\tilde{\delta})}    \leq& \left\| \sum_{i=1}^{n_0} y_i\bxi_i\right\|^2_2 \leq dn_0\sigma_p^2+ C_1n_0\sigma_p^2\sqrt{d\log(2/\tilde{\delta})}\\
        dn_0\sigma_p^2- C_1n_0\sigma_p^2\sqrt{d\log(2/\tilde{\delta})}    \leq& \left\| \sum_{i=1}^{n_0} y_i\tilde{\bxi}_i\right\|^2_2 \leq dn_0\sigma_p^2+ C_1n_0\sigma_p^2\sqrt{d\log(2/\tilde{\delta})}
    \end{align*}
    where $C_1$ is an absolute constant that does not depend on other variables.
\end{lemma}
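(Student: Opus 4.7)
The plan is to observe that each of the three random vectors inside the norms is (either unconditionally or conditionally on the labels) a centered Gaussian vector in the $(d-1)$-dimensional subspace orthogonal to $\bmu$, and then apply a standard sub-exponential concentration argument to its squared norm.

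First, I would note that since $y_i^2 = 1$, the sums $\sum_i \bxi_i$, $\sum_i y_i \bxi_i$, and $\sum_i y_i \tilde{\bxi}_i$ are, after conditioning on $y_1,\ldots,y_{n_0}$ in the latter two cases, weighted sums of independent Gaussian vectors, each with covariance $\sigma_p^2 (\Ib - \bmu\bmu^\top / \|\bmu\|_2^2)$, and with coefficients of absolute value $1$. By independence of the $\bxi_i$'s (and $\tilde\bxi_i$'s) and the additivity of Gaussian variances, each of the three sums is distributed as $N(\mathbf{0}, n_0 \sigma_p^2 (\Ib - \bmu\bmu^\top/\|\bmu\|_2^2))$. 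In particular, the conditional distribution does not depend on the labels, so the unconditional distribution is the same Gaussian.

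Next, for any such vector $\Sb$, choose an orthonormal basis of the $(d-1)$-dimensional subspace $\mathrm{span}\{\bmu\}^\perp$. Then $\|\Sb\|_2^2 = n_0 \sigma_p^2 \sum_{j=1}^{d-1} Z_j^2$, where $Z_j \overset{\text{iid}}{\sim} N(0,1)$. Each $Z_j^2$ is sub-exponential with $\|Z_j^2\|_{\psi_1} \leq \bar C$ for an absolute constant $\bar C$, so by Bernstein's inequality for independent sub-exponential summands,
\begin{align*}
\PP\Big(\big|\|\Sb\|_2^2 - (d-1)n_0\sigma_p^2\big| > t\Big) \leq 2\exp\Big(-c \min\big\{t^2/(n_0^2\sigma_p^4 (d-1)),\, t/(n_0\sigma_p^2)\big\}\Big).
\end{align*}
Setting the RHS to $\tilde\delta/3$ and solving gives, under the (mild) regime $d \geq \log(6/\tilde\delta)$ where the Gaussian tail dominates, a deviation of order $n_0 \sigma_p^2 \sqrt{d \log(6/\tilde\delta)}$. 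The gap between $(d-1)n_0\sigma_p^2$ and $dn_0\sigma_p^2$ is just $n_0\sigma_p^2$, which is absorbed into the constant $C_1$ multiplying $n_0\sigma_p^2\sqrt{d\log(2/\tilde\delta)}$.

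Finally, I would apply a union bound over the three events (with $\tilde\delta/3$ in place of $\tilde\delta$, again absorbed into constants) to conclude that all three bounds hold simultaneously with probability at least $1-\tilde\delta$. There is no serious obstacle here: the main (trivial) point to handle cleanly is that the $y_i$ factors do not change the Gaussian covariance because $y_i^2=1$, so the argument is identical across the three quantities. The only minor care needed is to justify replacing $d-1$ by $d$ in the centering and to track the constants so that a single absolute $C_1$ works for all three inequalities.
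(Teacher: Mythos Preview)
Your proposal is correct and follows essentially the same approach as the paper: observe that each sum is a centered Gaussian with covariance $n_0\sigma_p^2(\Ib-\bmu\bmu^\top/\|\bmu\|_2^2)$, note that the squared norm is a sum of independent sub-exponential terms, and apply Bernstein's inequality. You are in fact slightly more careful than the paper in handling the conditioning on $y_i$, the $d-1$ versus $d$ centering, and the union bound over the three events.
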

\begin{proof}[Proof of Lemma~\ref{lemma:norm_bound}]
    Since $\bxi_i, i\in[n_0]$ i.i.d follows $N(\mathbf{0},  \sigma_p^2\cdot(\Ib -\bmu\bmu^\top\cdot \| \bmu \|_2^{-2}))$, therefore $\sum_{i=1}^{n_0}\bxi_i \sim N(\mathbf{0}, n_0\sigma_p^2\cdot(\Ib -\bmu\bmu^\top\cdot \| \bmu \|_2^{-2}))$, thus
    $\left\|\sum_{i=1}^{n_0}\bxi_i\right\|^2_2$ is sub-exponential random variable with
    \begin{align*}
        \left\| \bigg\| \sum_{i=1}^{n_0}\bxi_i \bigg\|_2^2 \right\|_{\psi_1} \leq \overline{C}_1 \sigma_p^2 n_0, 
    \end{align*}
    where $\overline{C}_1$ is an absolute constant. 
    By Bernstein inequality, with the probability of at least $1-\tilde{\delta}$ we have
    \begin{align*}
        - \frac{\overline{C}_1}{\sqrt{c}}n_0\sigma_p^2\sqrt{d\log(2/\tilde{\delta})}    \leq \left\| \sum_{i=1}^{n_0} \bxi_i\right\|^2_2 -dn_0\sigma_p^2 \leq \frac{\overline{C}_1}{\sqrt{c}}n_0\sigma_p^2\sqrt{d\log(2/\tilde{\delta})},
    \end{align*}
    where $c$ is also an absolute constant, and it is equivalent to
    \begin{align*}
        dn_0\sigma_p^2- C_1n_0\sigma_p^2\sqrt{d\log(2/\tilde{\delta})}    \leq \left\| \sum_{i=1}^{n_0} \bxi_i\right\|^2_2 \leq dn_0\sigma_p^2+ C_1n_0\sigma_p^2\sqrt{d\log(2/\tilde{\delta})},
    \end{align*}
    where $C_1$ is an absolute constant, which does not depend on other variables.
    Notice that similar results could be proved for $\left\| \sum_{i=1}^{n_0}y_i \bxi_i\right\|^2_2$ and $\left\| \sum_{i=1}^{n_0}y_i\tilde{\bxi}_i\right\|^2_2$.
\end{proof}

\begin{lemma}\label{lemma:eigenvalue_bound}
For any $\tilde{\delta} > 0$, with probability at least $1-\tilde{\delta}$, it holds that    
\begin{align*}
        \Bigg\| \frac{1}{n_0}\sum_{i=1}^{n_0} (\bxi_i \tilde\bxi_i^\top + \tilde\bxi_i \bxi_i^\top)\Bigg\|_2 \leq C_3 \sigma_p^2\cdot \max\Bigg\{\sqrt{\frac{d\log(9/\tilde{\delta})}{n_0}} , \frac{d\log(9/\tilde{\delta})}{n_0}\Bigg\},
    \end{align*}
    where $C_3$ is an absolute constant.
\end{lemma}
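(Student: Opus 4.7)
}
The plan is to bound the operator norm of the random symmetric matrix $M := \frac{1}{n_0}\sum_{i=1}^{n_0}(\bxi_i\tilde\bxi_i^\top + \tilde\bxi_i\bxi_i^\top)$ via a discretization (covering) argument combined with a scalar Bernstein inequality applied to products of independent sub-Gaussian random variables. The key observation is that although a naive bound via $\|\Xi\|_2\|\tilde\Xi\|_2/n_0$ (with $\Xi=[\bxi_1,\ldots,\bxi_{n_0}]$) yields $\sigma_p^2(d+n_0)/n_0$, this is suboptimal in the regime $n_0\gg d$, where one expects the $\sqrt{d/n_0}$ rate that the statement asserts. The independence between each $\bxi_i$ and $\tilde\bxi_i$ is precisely what allows a sharper Bernstein-type bound with the two-regime scaling.

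First I would write $\|M\|_2 = \sup_{\vb\in\mathbb{S}^{d-1}}|\vb^\top M\vb|$ and, for a fixed unit vector $\vb$, express
\begin{align*}
    \vb^\top M\vb = \frac{2}{n_0}\sum_{i=1}^{n_0}(\vb^\top\bxi_i)(\vb^\top\tilde\bxi_i).
\end{align*}
For each $i$, the pair $(\vb^\top\bxi_i,\vb^\top\tilde\bxi_i)$ consists of two independent centered Gaussians with variance at most $\sigma_p^2$, so their product $X_i := (\vb^\top\bxi_i)(\vb^\top\tilde\bxi_i)$ is sub-exponential with $\|X_i\|_{\psi_1}\lesssim\sigma_p^2$, is centered (by independence within the pair), and the $X_i$'s are i.i.d.\ across $i$. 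Bernstein's inequality then gives, for each fixed $\vb$,
\begin{align*}
    \PP\!\left(\left|\tfrac{1}{n_0}\textstyle\sum_i X_i\right|>t\right)\le 2\exp\!\Big(-c\,n_0\min\!\big\{t^2/\sigma_p^4,\;t/\sigma_p^2\big\}\Big)
\end{align*}
for an absolute constant $c>0$.

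Next I would pass from pointwise control to a uniform bound on the sphere. Fix a $\tfrac14$-net $\cN$ of $\mathbb{S}^{d-1}$ with $|\cN|\le 9^d$. A standard argument shows $\|M\|_2\le 2\sup_{\vb\in\cN}|\vb^\top M\vb|$. Applying the scalar bound above with a union bound over $\cN$ yields
\begin{align*}
    \PP\big(\|M\|_2>2t\big)\le 2\cdot 9^d\exp\!\Big(-c\,n_0\min\{t^2/\sigma_p^4,t/\sigma_p^2\}\Big).
\end{align*}
To make the right side at most $\tilde\delta$, I set $c\,n_0\min\{t^2/\sigma_p^4,t/\sigma_p^2\}\ge d\log 9+\log(2/\tilde\delta)$, which (after absorbing constants) is achieved by
\begin{align*}
    t \;=\; C_3\sigma_p^2\max\!\left\{\sqrt{\tfrac{d\log(9/\tilde\delta)}{n_0}},\;\tfrac{d\log(9/\tilde\delta)}{n_0}\right\}
\end{align*}
for a sufficiently large absolute constant $C_3$, where I have merged $d\log 9+\log(2/\tilde\delta)$ into $\log(9/\tilde\delta)$ times a constant (valid since $d\ge 1$ and $\tilde\delta\le 1$). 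This yields exactly the claimed bound.

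The main obstacle, and the only step requiring care, is the Bernstein application: we must verify that $X_i$ is genuinely sub-exponential with the right Orlicz norm (which follows because a product of two independent $\psi_2$ variables is $\psi_1$ with $\|X_i\|_{\psi_1}\le \|\vb^\top\bxi_i\|_{\psi_2}\|\vb^\top\tilde\bxi_i\|_{\psi_2}\lesssim\sigma_p^2$) and that the two-regime form of Bernstein transfers correctly into the max of a square-root and linear rate. The covering/net step is standard. Everything else is bookkeeping of absolute constants.
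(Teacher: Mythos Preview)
Your proposal is correct and follows essentially the same route as the paper: fix a unit vector, write the quadratic form as $\tfrac{2}{n_0}\sum_i (\vb^\top\bxi_i)(\vb^\top\tilde\bxi_i)$, observe each summand is a product of two independent sub-Gaussians and hence sub-exponential with $\psi_1$-norm $\lesssim\sigma_p^2$, apply Bernstein, then extend to the whole sphere via a $1/4$-net of cardinality $\le 9^d$ and a union bound. The only cosmetic difference is that you invoke the standard symmetric-matrix net lemma $\|M\|_2\le 2\sup_{\vb\in\cN}|\vb^\top M\vb|$ directly, whereas the paper spells out an explicit Cauchy--Schwarz-style self-referential inequality to pass from the net to the supremum; both yield the same factor of $2$ and the same final bound.
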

\begin{proof}[Proof of Lemma~\ref{lemma:eigenvalue_bound}]
    Within this proof, we denote function
    \begin{align*}
        \Mb = \frac{1}{n_0}\sum_{i=1}^{n_0} (\bxi_i \tilde\bxi_i^\top + \tilde\bxi_i \bxi_i^\top),
    \end{align*}
    and 
    \begin{align*}
        g(\ab) = \ab^\top \Mb \ab,
    \end{align*}
    for all $\ab \in \RR^{d}$. By Lemma~5.2 in \cite{vershynin2010introduction}, there exists a $1/4$-net $\cN$ covering the $d$-dimensional unit sphere $\SSS^{d-1}$ with $|\cN|\leq 9^d$. Then for any $\ab\in \SSS^{d-1}$, there exists $\hat\ab \in \cN \subseteq \SSS^{d-1}$ such that $\| \hat\ab - \ab\|_2 \leq 1/4$.

    Now for any fixed $\hat\ab_0 \in \cN$, with direct calculation we have
    \begin{align*}
        g(\hat\ab_0) = \frac{2}{n_0}\sum_{i=1}^{n_0} \la \hat\ab_0, \bxi_i \ra \cdot \la\hat\ab_0, \tilde\bxi_i\ra.
    \end{align*}
    Since $\|\hat\ab\|_2 = 1$, $\la \hat\ab_0, \bxi_i \ra,  \la\hat\ab_0, \tilde\bxi_i\ra$ are independent $N(0, \sigma_p)$ random variables, $i=1,\ldots,n_0$. Therefore, by Lemma 5.14 in \cite{vershynin2010introduction}, $\la \hat\ab_0, \bxi_i \ra \cdot \la\hat\ab_0, \tilde\bxi_i\ra$ is sub-exponential with 
    \begin{align*}
        \| \la \hat\ab_0, \bxi_i \ra \cdot \la\hat\ab_0, \tilde\bxi_i\ra \|_{\psi_1} \leq c_1 \cdot \sigma_p^2,
    \end{align*}
    where $c_1$ is an absolute constant. Then by Bernstein-type inequality (Proposition 5.16 in \cite{vershynin2010introduction}), with probability at least $1- 9^{-d}\tilde{\delta}$, we have
    \begin{align*}
        |g(\hat\ab_0)|& = \Bigg|\frac{2}{n_0}\sum_{i=1}^{n_0} \la \hat\ab_0, \bxi_i \ra \cdot \la\hat\ab_0, \tilde\bxi_i\ra \Bigg| \leq 2c_1  \sigma_p^2\cdot \max\Bigg\{\sqrt{\frac{\log(9^d/\tilde{\delta})}{n_0}} , \frac{\log(9^d/\tilde{\delta})}{n_0}\Bigg\}
        \\
        &\leq 2c_1  \sigma_p^2\cdot \max\Bigg\{\sqrt{\frac{d\log(9/\tilde{\delta})}{n_0}} , \frac{d\log(9/\tilde{\delta})}{n_0}\Bigg\} .
    \end{align*}
    Since the above conclusion holds for arbitrary $\hat\ab_0 \in \cN$, by union bound, with probability at least $1-\tilde{\delta} $ we have 
    \begin{align*}
        |g(\hat\ab)|\leq 2c_1  \sigma_p^2\cdot \max\Bigg\{\sqrt{\frac{\log(9^d/\tilde{\delta})}{n_0}} , \frac{\log(9^d/\tilde{\delta})}{n_0}\Bigg\} \leq 2c_1  \sigma_p^2\cdot \max\Bigg\{\sqrt{\frac{d\log(9/\tilde{\delta})}{n_0}} , \frac{d\log(9/\tilde{\delta})}{n_0}\Bigg\} 
    \end{align*}
    for all $\hat\ab \in \cN$. Now for any $\ab \in \SSS^{d-1}$, there exists $\hat\ab \in \cN$ such that $\| \hat\ab - \ab\|_2 \leq 1/4$, and hence
    \begin{align*}
        |g(\ab)| &\leq |g(\hat\ab)| + |g(\ab) - g(\hat\ab)|\\
        &= |g(\hat\ab)| + |\ab^\top \Mb \ab - \hat\ab^\top \Mb \hat\ab|\\
        &\leq 2c_1  \sigma_p^2\cdot \max\Bigg\{\sqrt{\frac{d\log(9/\tilde{\delta})}{n_0}} , \frac{d\log(9/\tilde{\delta})}{n_0}\Bigg\} + |\ab^\top \Mb \ab - \ab^\top \Mb \hat\ab| + |\ab^\top \Mb \hat\ab - \hat\ab^\top \Mb \hat\ab|\\
        &\leq 2c_1  \sigma_p^2\cdot \max\Bigg\{\sqrt{\frac{d\log(9/\tilde{\delta})}{n_0}} , \frac{d\log(9/\tilde{\delta})}{n_0}\Bigg\} + |\ab^\top \Mb (\ab -  \hat\ab)| + |(\ab - \hat\ab)^\top \Mb \hat\ab|
    \end{align*}
    By Cauchy-Schwarz inequality, we have
    \begin{align*}
        &|\ab^\top \Mb (\ab -  \hat\ab)| \leq \sqrt{\ab^\top \Mb \ab} \cdot \sqrt{(\ab -  \hat\ab)^\top  \Mb (\ab -  \hat\ab)} = \sqrt{g(\ab)} \cdot \| \ab -  \hat\ab \|_2\cdot \sqrt{g(\ab -  \hat\ab)} ,\\
        & |(\ab - \hat\ab)^\top \Mb \hat\ab| \leq \sqrt{\hat\ab^\top \Mb \hat\ab} \cdot \sqrt{(\ab -  \hat\ab)^\top  \Mb (\ab -  \hat\ab)} = \sqrt{g(\hat\ab)}\cdot \| \ab -  \hat\ab \|_2 \cdot \sqrt{g(\ab -  \hat\ab)}.
    \end{align*}
    Therefore, we further have
    \begin{align*}
        |g(\ab)| &\leq 
        2c_1  \sigma_p^2\cdot \max\Bigg\{\sqrt{\frac{d\log(9/\tilde{\delta})}{n_0}} , \frac{d\log(9/\tilde{\delta})}{n_0}\Bigg\} + \sqrt{g(\ab)} \cdot \| \ab -  \hat\ab \|_2\cdot \sqrt{g(\ab -  \hat\ab)} \\
        &\quad + \sqrt{g(\hat\ab)}\cdot \| \ab -  \hat\ab \|_2 \cdot \sqrt{g(\ab -  \hat\ab)}\\
        &\leq 2c_1  \sigma_p^2\cdot \max\Bigg\{\sqrt{\frac{d\log(9/\tilde{\delta})}{n_0}} , \frac{d\log(9/\tilde{\delta})}{n_0}\Bigg\}  + \frac{1}{4}\cdot \sup_{\ab}g(\ab) + \frac{1}{4}\cdot \sup_{\ab}g(\ab)\\
        &=  2c_1  \sigma_p^2\cdot \max\Bigg\{\sqrt{\frac{d\log(9/\tilde{\delta})}{n_0}} , \frac{d\log(9/\tilde{\delta})}{n_0}\Bigg\}  + \frac{1}{2}\cdot \sup_{\ab}g(\ab)
    \end{align*}
    for all $\ab\in \SSS^{d-1}$.
    Taking a supremum then gives
    \begin{align*}
        \sup_{\ab} |g(\ab)| \leq 2c_1  \sigma_p^2\cdot \max\Bigg\{\sqrt{\frac{d\log(9/\tilde{\delta})}{n_0}} , \frac{d\log(9/\tilde{\delta})}{n_0}\Bigg\}  + \frac{1}{2}\cdot \sup_{\ab}g(\ab).
    \end{align*}
    Therefore, we conclude that 
    \begin{align*}
        \sup_{\ab} |g(\ab)| \leq 4c_1  \sigma_p^2\cdot \max\Bigg\{\sqrt{\frac{d\log(9/\tilde{\delta})}{n_0}} , \frac{d\log(9/\tilde{\delta})}{n_0}\Bigg\}.
    \end{align*}
    This finishes the proof.
\end{proof}

Based on Lemmas~\ref{lemma:pretrain_inq1}, \ref{lemma:norm_bound} and \ref{lemma:eigenvalue_bound}, the following Lemma~\ref{lemma:upperbDelta} 
is proved. 

\begin{proof}[Proof of Lemma~\ref{lemma:upperbDelta}]

The matrix $\Ab$  defined in Lemma~\ref{lemma:SimCLR_iterates} can be written and simplified in the following way.
\begin{align*}
    \Ab = &- \frac{\eta}{n_0^2\tau} \sum_{i=1}^{n_0} \sum_{i^{\prime}\neq i}  
    (\zb_i \zb_{i^{\prime}}^{\top} + \zb_{i^{\prime}}\zb_i^{\top}  - \zb_i\tilde{\zb_i}^{\top} -\tilde{\zb_i}\zb_i^{\top})\\
    =&-\frac{\eta}{n_0^2\tau} \sum_{i=1}^{n_0} \sum_{i^{\prime}\neq i}  \big[2(y_i y_{i^{\prime}}-1)\bmu\bmu^{\top} +y_i(\bmu \bxi_{i^{\prime}}^{\top} +\bxi_{i^{\prime}}\bmu^{\top})
    +y_{i^{\prime}}(\bmu \bxi_i^{\top} +\bxi_{i}\bmu^{\top})\\
    &-y_i(\bmu \tilde{\bxi_i}^{\top} +\tilde{\bxi_i}\bmu^{\top}      +\bmu \bxi_i^{\top} +\bxi_{i}\bmu^{\top})
+\bxi_{i}\bxi_{i^{\prime}}^{\top}+\bxi_{i^{\prime}}\bxi_{i}^{\top}-\bxi_i \tilde{\bxi_i}^{\top}-\tilde{\bxi_i} \bxi_i^{\top}\big]\\
    =&-\frac{\eta}{n_0^2\tau}\left\{\left[\sum_{i=1}^{n_0} \sum_{i^{\prime}\neq i}2(y_i y_{i^{\prime}}-1)\right]\bmu\bmu^{\top} +2(\sum_{i=1}^{n_0}y_i)\bmu(\sum_{i^{\prime}=1}^{n_0}\bxi_{i^{\prime}})^{\top}+2(\sum_{i=1}^{n_0}y_i)(\sum_{i^{\prime}=1}^{n_0}\bxi_{i^{\prime}})\bmu^{\top}\right.\\
    &\quad\quad-(n_0-1)\sum_{i=1}^{n_0}(y_i\bmu \tilde{\bxi_i}^{\top} +y_i\tilde{\bxi_i}\bmu^{\top})-(n_0+1)\sum_{i=1}^{n_0}(y_i\bmu \bxi_i^{\top} +y_i\bxi_{i}\bmu^{\top})\\
    &\quad\quad\left.+\sum_{i=1}^{n_0} \sum_{i^{\prime}\neq i}(\bxi_{i}\bxi_{i^{\prime}}^{\top}+\bxi_{i^{\prime}}\bxi_{i}^{\top})-\sum_{i=1}^{n_0}(n_0-1)\bxi_i \tilde{\bxi_i}^{\top}-\sum_{i=1}^{n_0}(n_0-1)\tilde{\bxi_i} \bxi_i^{\top}
    \right\}\\
    =&-\frac{\eta}{n_0^2\tau}\left\{\left[2(\sum_{i=1}^{n_0}y_i)^2-2n_0^2\right]\bmu\bmu^{\top} +2(\sum_{i=1}^{n_0}y_i)\bmu(\sum_{i^{\prime}=1}^{n_0}\bxi_{i^{\prime}})^{\top}+2(\sum_{i=1}^{n_0}y_i)(\sum_{i^{\prime}=1}^{n_0}\bxi_{i^{\prime}})\bmu^{\top}\right.\\
    &\quad\quad-(n_0-1)\big[\bmu(\sum_{i=1}^{n_0}y_i\tilde{\bxi_i})^{\top} +(\sum_{i=1}^{n_0}y_i\tilde{\bxi_i})\bmu^{\top}\big]    -(n_0+1)\big[\bmu(\sum_{i=1}^{n_0}y_i\bxi_i)^{\top} +(\sum_{i=1}^{n_0}y_i\bxi_i)\bmu^{\top}\big]\\
    &\quad\quad\left.+2(\sum_{i=1}^{n_0}\bxi_{i})(\sum_{i=1}^{n_0}\bxi_{i})^{\top}-2\sum_{i=1}^{n_0}\bxi_i\bxi_i^{\top}
    -\sum_{i=1}^{n_0}(n_0-1)\bxi_i \tilde{\bxi_i}^{\top}-\sum_{i=1}^{n_0}(n_0-1)\tilde{\bxi_i} \bxi_i^{\top}
    \right\}
    .
\end{align*}
    Then by definition, we have
    \begin{align*}
        \bDelta = -\frac{\eta}{n_0^2\tau}[ \bDelta_1 - \bDelta_2 - \bDelta_3 + \bDelta_4 - \bDelta_5 - \bDelta_6],
    \end{align*}
where 
\begin{align*}
    &\bDelta_1 = 2(\sum_{i=1}^{n_0}y_i)\bmu(\sum_{i^{\prime}=1}^{n_0}\bxi_{i^{\prime}})^{\top} + 2(\sum_{i=1}^{n_0}y_i)(\sum_{i^{\prime}=1}^{n_0}\bxi_{i^{\prime}})\bmu^\top,\\
    &\bDelta_2 = (n_0-1)[\bmu(\sum_{i=1}^{n_0}y_i\tilde{\bxi_i})^{\top} +(\sum_{i=1}^{n_0}y_i\tilde{\bxi_i})\bmu^{\top}],\\
    &\bDelta_3 = (n_0+1)[\bmu(\sum_{i=1}^{n_0}y_i\bxi_i)^{\top} +(\sum_{i=1}^{n_0}y_i\bxi_i)\bmu^{\top}],\\
    &\bDelta_4 = 2(\sum_{i=1}^{n_0}\bxi_{i})(\sum_{i=1}^{n_0}\bxi_{i})^{\top},\\
    &\bDelta_5 = 2\sum_{i=1}^{n_0}\bxi_i\bxi_i^{\top},\\
    &\bDelta_6 = (n_0-1)\sum_{i=1}^{n_0}\bxi_i \tilde{\bxi_i}^{\top}+ (n_0-1)\sum_{i=1}^{n_0}\tilde{\bxi_i} \bxi_i^{\top}.
\end{align*}
Thus, $\| \bDelta_1 \|_2$ can be bounded as follows,
\begin{align}
    \| \bDelta_1 \|_2 \leq& 4 \left|\sum_{i=1}^{n_0}y_i \right| \cdot \| \bmu \|_2\cdot \Bigg\| \sum_{i^{\prime}=1}^{n_0}\bxi_{i^{\prime}} \Bigg\|_2\nonumber\\
    \leq& 4 \sqrt{2n_0}\cdot\|\bmu\|_2\cdot\tilde{O}(\sigma_p\sqrt{n_0d}),\label{eq:bDelta1_bound}
\end{align}  
where the second inequality is by $\left|\sum_{i=1}^{n_0}y_i \right| \leq \sqrt{2n_0\log(2/\tilde{\delta})}$ in Lemma \ref{lemma:pretrain_inq1}, and 
$\left\| \sum_{i=1}^{n_0} \bxi_i\right\|^2_2 \leq dn_0\sigma_p^2+ c_1n_0\sigma_p^2\sqrt{d\log(2/\tilde{\delta})}$ in Lemma \ref{lemma:norm_bound}.
Also, $\bDelta_2,\bDelta_3,\bDelta_4$ are handled similarly as $\bDelta_1$, namely 
\begin{align}
    \| \bDelta_2 \|_2 \leq& 2(n_0-1) \cdot \|\bmu\|_2\cdot \left\|\sum_{i=1}^{n_0}y_i\tilde{\bxi}_i\right\|_2\nonumber\\
    \leq&  2(n_0-1) \cdot \|\bmu\|_2 \cdot \tilde{O}(\sigma_p\sqrt{n_0d})\label{eq:bDelta2_bound}
\end{align}  
where the second inequality is by 
$\left\| \sum_{i=1}^{n_0} y_i\tilde{\bxi}_i\right\|^2_2 \leq dn_0\sigma_p^2+ c_1n_0\sigma_p^2\sqrt{d\log(2/\tilde{\delta})}$ in Lemma \ref{lemma:norm_bound}.

\begin{align}
    \| \bDelta_3 \|_2 \leq& 2(n_0+1)\cdot \|\bmu\|_2\cdot \left\|\sum_{i=1}^{n_0}y_i\bxi_i\right\|_2\nonumber\\
    \leq&  2(n_0+1)\cdot \|\bmu\|_2 \cdot \tilde{O}(\sigma_p\sqrt{n_0d})\label{eq:bDelta3_bound}
\end{align}  
where the second inequality is by 
$\left\| \sum_{i=1}^{n_0} y_i\bxi_i\right\|^2_2 \leq dn_0\sigma_p^2+ c_1n_0\sigma_p^2\sqrt{d\log(2/\tilde{\delta})}$ in Lemma \ref{lemma:norm_bound}.

\begin{align}
    \| \bDelta_4 \|_2 &=\left\|2(\sum_{i=1}^{n_0}\bxi_{i})(\sum_{i=1}^{n_0}\bxi_{i})^{\top}\right\|_2\nonumber\\
    \leq& 2\cdot \left\|\sum_{i=1}^{n_0}\bxi_i\right\|^2_2\nonumber\\
    \leq& 2 \tilde{O}(\sigma_p^2 n_0d)\label{eq:bDelta4_bound}
\end{align}  
where the second inequality is by 
$\left\| \sum_{i=1}^{n_0} \bxi_i\right\|^2_2 \leq dn_0\sigma_p^2+ c_1n_0\sigma_p^2\sqrt{d\log(2/\tilde{\delta})}$ in Lemma \ref{lemma:norm_bound}.

For $\bDelta_5$, by Theorem 5.39 in \cite{vershynin2010introduction}, with probability at least $1-\tilde{\delta}$ , we have that 
\begin{align}
    \| \bDelta_5 \|_2 =& 2\left\|\sum_{i=1}^{n_0}\bxi_i\bxi_i^{\top}\right\|_2\nonumber\\
\leq&2\sigma_p^2\left(\sqrt{d}+c_4\sqrt{n_0}+\frac{1}{\sqrt{c_4}} \cdot \sqrt{\log(2/\tilde{\delta})}\right)^2 \leq 6\sigma_p^2\left(d+c_4^2n_0+\frac{1}{c_4} \cdot \log(2/\tilde{\delta})\right),\label{eq:bDelta5_bound}
\end{align}  
where  $c_4$ is an absolute constant.
The bound for $\|\bDelta_6\|_2$ is already proved in Lemma \ref{lemma:eigenvalue_bound}.
Therefore, by \eqref{eq:bDelta1_bound}, \eqref{eq:bDelta2_bound}, \eqref{eq:bDelta3_bound}, \eqref{eq:bDelta4_bound}, \eqref{eq:bDelta5_bound}, and  Lemma \ref{lemma:eigenvalue_bound}, set $\tilde{\delta}=\delta/6$,
with probability at least $1-\delta$, we have  
\begin{align}
    \|\bDelta\|_2=\| \bSigma - \hat\bSigma \|_{2}
    \leq& \frac{\eta}{n_0^2\tau} 
    \left[\|\bDelta_1\|_2 + \|\bDelta_2\|_2 + \|\bDelta_3\|_2 + \|\bDelta_4\|_2 + \|\bDelta_5\|_2 + \|\bDelta_6\|_2\right] \nonumber\\
    \leq& \frac{\eta}{n_0^2\tau} 
    \Bigg[4\|\bmu\|_2 \sqrt{2n_0}\cdot\tilde{O}(\sigma_p\sqrt{n_0d}) 
    + 2(n_0-1)\|\bmu\|_2 \cdot \tilde{O}(\sigma_p\sqrt{n_0d}) \nonumber\\
    &+ 2(n_0+1)\|\bmu\|_2 \cdot \tilde{O}(\sigma_p\sqrt{n_0d})
    + 2 \tilde{O}(\sigma_p^2 n_0d)
    + 6\sigma_p^2\left(d+c_4^2n_0+\frac{1}{\sqrt{c_4}^2} \cdot \log(2/\tilde{\delta})\right) \nonumber\\
    &+c_3 n_0^2 \sigma_p^2\cdot \max\Bigg\{\sqrt{\frac{d\log(9/\tilde{\delta})}{n_0}} , \frac{d\log(9/\tilde{\delta})}{n_0}\Bigg\}\Bigg] \nonumber\\
    \leq& \Bigg[4\sqrt{2}\|\bmu\|_2^{-1}\cdot\tilde{O}(\sigma_p\sqrt{d}n_0^{-1})  +2\|\bmu\|_2^{-1} \cdot \tilde{O}(\sigma_p\sqrt{d}\frac{1}{\sqrt{n_0}}) \nonumber\\
    &+2\|\bmu\|_2^{-1} \cdot \tilde{O}(\sigma_p\sqrt{d}\frac{1}{\sqrt{n_0}})   +2\|\bmu\|_2^{-2} \cdot\tilde{O}(\sigma_p^2 d n_0^{-1})
    + \|\bmu\|_2^{-2} \cdot\tilde{O}(\sigma_p^2 dn_0^{-1}) \nonumber\\
    &+c_3\sigma_p^2 d \|\bmu\|_2^{-2}\cdot  \max\Bigg\{\sqrt{\frac{\log(9/\tilde{\delta})}{dn_0}} , \frac{\log(9/\tilde{\delta})}{n_0}\Bigg\}\Bigg]
    \frac{\eta}{\tau} \|\bmu\|_2^2, \nonumber\\
    \leq& \Bigg[\tilde{O}(\mathrm{SNR}^{-1} \cdot n_0^{-1}) +\tilde{O}(\mathrm{SNR}^{-1} \cdot \frac{1}{\sqrt{n_0}}) +\tilde{O}(\mathrm{SNR}^{-2} \cdot n_0^{-1}) \nonumber\\
&+c_3\cdot\mathrm{SNR}^{-2}\cdot  \max\Bigg\{\sqrt{\frac{\log(9/\tilde{\delta})}{dn_0}} , \frac{\log(9/\tilde{\delta})}{n_0}\Bigg\}\Bigg]
    \frac{\eta}{\tau} \|\bmu\|_2^2,\nonumber 
\end{align}
where $\mathrm{SNR}= \|\bmu\|_2/(\sigma_p\sqrt{d})$,  and $\frac{\eta}{\tau} \|\bmu\|_2^2$ is the lower bound of $\hat{\lambda}_1$ proved in Lemma \ref{lemma:spectral_decomposition}.
\end{proof}

\subsection{Proofs of lemmas in Appendix~\ref{sec:proofofthm5.3}}
\subsubsection{Proof of Lemma~\ref{lemma:initialization_innerproducts}}
With a proof similar to Lemma~B.3 in \citet{cao2022benign}, we have the following Lemma~\ref{lemma:initialization_innerproducts}. Although the proof is almost the same as in \citet{cao2022benign}, since the results are presented in different forms, for self-consistency, we still present the proof of this Lemma~\ref{lemma:initialization_innerproducts}.
\begin{proof}[Proof of Lemma~\ref{lemma:initialization_innerproducts}] 
Since $\vb$ is a unit vector, for each $r\in [2m]$, $j\cdot \la \wb_{r}^{(0)}, \vb_j \ra$ is a Gaussian random variable with mean zero and variance $\sigma_0^2$. Therefore, by Gaussian tail bound and union bound, with probability at least $1 - \delta/2$, 
\begin{align}\label{eq:Proof_initialization_innerproduct_eq1}
    |\la \wb_{r}^{(0)}, \vb_j \ra|  \leq \sqrt{2\log(16m n_0/\delta)} \cdot \sigma_0
\end{align}
for all $r \in [2m]$ and $j \in [d]$. This proves the first part of the result. For the second part of the result, 
we note that $\PP( \la \wb_{r}^{(0)} , \vb_1 \ra \geq \sigma_0 / 2) = \PP_{Z\sim N(0,1)} (Z \geq 1/2) \geq 0.3$ is an absolute constant. Therefore, binary random variables $\ind\{  \la \wb_{r}^{(0)}, \vb_1 \ra \geq \sigma_0/2  \}$, $r\in [2m]$ are independent $\mathrm{Bernoulli}(p)$ random variables with constant $0.3\leq p\leq 0.5$. By Hoeffding's inequality, with probability at least $1-\delta/4$, 
\begin{align*}
    \Bigg| \sum_{r=1}^{2m} \ind\{  \la \wb_{r}^{(0)}, \vb_1 \ra \geq \sigma_0/2 \} - 2mp \Bigg|  \leq \sqrt{2m\log(8/\delta)}.
\end{align*}
Therefore, with probability at least $1-\delta/4$, 
\begin{align*}
    \sum_{r=1}^{2m} \ind\{  \la \wb_{r}^{(0)}, \vb_1 \ra \geq \sigma_0/2 \} \geq 2mp - \sqrt{2m\log(8/\delta)} \geq 2m/5, 
\end{align*}
where the last inequality holds by the assumption that $m = \tilde\Omega( 1 )$. The inequality above implies that there exist distinct $r_1^+,\ldots, r_{2m/5}^+ \in [2m]$ such that $\la \wb_{r}^{(0)}, \vb_1 \ra \geq \sigma_0/2 $ for all $r\in \{r_1^+,\ldots, r_{2m/5}^+\}$.

With exactly the same proof, we also have that, with probability at least $1-\delta/4$, there exist distinct $r_1^-,\ldots, r_{2m/5}^- \in [2m]$ such that $\la \wb_{r}^{(0)}, \vb_1 \ra \leq -\sigma_0/2 $ for all $r\in \{r_1^-,\ldots, r_{2m/5}^-\}$. It is also clear that as long as the sets $\{r_1^+,\ldots, r_{2m/5}^+\}$ and $\{r_1^-,\ldots, r_{2m/5}^-\}$ exist, they must be disjoint. Therefore, applying a union bound finishes the proof.
\end{proof}

\subsubsection{Proof of Lemma~\ref{lemma:spectral_decomposition_perturbation}}
\begin{proof}[Proof of Lemma~\ref{lemma:spectral_decomposition_perturbation}] The first conclusion that $|\lambda_i - \tilde{\lambda}_i| \leq \sigma_0 \cdot \| \Ab \|_2$, $i\in[n]$ directly follows by Weyl's theorem and the assumption that $\bXi$ is symmetric and $\| \bXi \|_2 \leq \sigma_0\cdot \| \Ab \|_2$. 

For the second result, we have
\begin{align}
    \sin\theta(\tilde{\vb}_1, \vb_1) &\leq \frac{\| \Ab  - (\Ab+\bXi) \|_{2}}{|\tilde{\lambda}_2 -\lambda_1|} \nonumber\\
    &\leq \frac{\sigma_0\cdot \lambda_1}{(1 - \sigma_0)\lambda_1 - \lambda_2} \nonumber\\
    &\leq \frac{\sigma_0\cdot \lambda_1}{(1 - \sigma_0)\lambda_1 - \frac{\cE_{\mathrm{SimCLR}}}{2(1- \cE_{\mathrm{SimCLR}})} \cdot \lambda_1 } \nonumber\\
    &\leq 2 \sigma_0,\nonumber 
\end{align}
where the first inequality follows by the variant of Davis-Kahan Theorem (Theorem 1 in \cite{yu2015useful}), the second inequality follows by the first conclusion of is lemma (which has been proved above) and the assumption that $\| \bXi \|_2 \leq \sigma_0\cdot \lambda_1$, the third inequality follows by Lemma~\ref{lemma:spectral_decomposition}, and the fourth inequality follows by the assumption that $\cE_{\mathrm{SimCLR}} \leq 1/4$ and $\sigma_0 \leq 1/4$. Then we have 
\begin{align*}
    | \la \vb_1, \tilde\vb_1\ra | = \sqrt{ 1 - \sin^2\theta(\tilde{\vb}_1, \vb_1)  } \geq \sqrt{1 - 4\sigma_0^2} \geq 1 - 4\sigma_0^2.
\end{align*}
This proves the second conclusion. For the last result, for $i=2,\ldots, d$, since $\tilde\vb_i$ is perpendicular to $\tilde\vb_1$, we have
\begin{align*}
    |\la \vb_1 , \tilde\vb_i \ra| &= |\la \vb_1 , (\Ib - \tilde\vb_1\tilde\vb_1^\top + \tilde\vb_1\tilde\vb_1^\top)\tilde\vb_i \ra|\\
    & = |\la \vb_1 , (\Ib - \tilde\vb_1\tilde\vb_1^\top)\tilde\vb_i \ra|\\
    &\leq \| (\Ib - \tilde\vb_1\tilde\vb_1^\top)\vb_1 \|_2\\
    &= \sqrt{ 1 - \la \vb_1, \tilde\vb_1\ra^2 }\\
    &\leq \sqrt{ 1 - (1- 4\sigma_0^2)^2 }\\
    &= \sqrt{ 8\sigma_0^2 - 16 \sigma_0^4 }\\
    & \leq \sqrt{ 8\sigma_0^2 }\\
    &\leq 4\sigma_0.
\end{align*}
To prove the last inequality, we have
\begin{align}
    \| (\Ib - \vb_1\vb_1^\top) \tilde\vb_1 \|_2 &= \sqrt{ \| \tilde\vb_1 - \la \tilde\vb_1, \vb_1\ra \cdot \vb_1 \|_2^2 } \nonumber\\
    & = \sqrt{ 1 - 2 \la \tilde\vb_1, \vb_1\ra^2  + \la \tilde\vb_1, \vb_1\ra^2 } \nonumber\\
    & = \sqrt{ 1 - \la\tilde\vb_1, \vb_1\ra^2 } \nonumber\\
    & \leq \sqrt{ 1 - (1-4\sigma_0^2)^2 } \nonumber\\
    & \leq \sqrt{ 8\sigma_0^2 } \nonumber\\
    &\leq 4\sigma_0.\nonumber
\end{align}
This finishes the proof.
\end{proof}

\section{Proofs for supervised fine-tuning}

In this section, the training process of the fine-tuning stage is investigated.
We first present the basic setting and the decomposition of coefficients.

The initialization $\Wb^{(0)}$ of the fine-tuning stage is derived by the pre-training stage, and will be fine-tuned in the following stage based on the CNN model \eqref{eq:CNN}. It can be directly decomposed as
\begin{align}\label{eq:initialization}
\overline{\wb}_{jr}^{(0)}= \wb_{jr}^{\perp}+j\cdot \gamma_{jr}^{(0)}\cdot\|\bmu\|_2^{-2}\cdot\bmu  + \sum_{i=1}^{n} \rho_{jri}^{(0)}\cdot\|\bxi_i\|_2^{-2}\cdot\bxi_i,
\end{align}
where $\wb_{jr}^{\perp}$ is a component of $\overline{\wb}_{jr}^{(0)}$  perpendicular with $\la\wb_{jr}^{\perp},\bmu \ra=0,~\la\wb_{jr}^{\perp},\bxi_i\ra=0, i \in[n]$, and we have  $\max_{j,r}\|\wb_{jr}^{\perp}\|_2\leq 1/n$ by Theorem \ref{thm:SimCLR_signal_noise}.

In the fine-tuning stage, based on the gradient descent algorithm and the CNN structure defined in \eqref{eq:CNN}, the updating rules of $\wb_{j,r},j\in\{-1,+1\},r\in[m]$ is given as 
\begin{align*}
    \wb_{j,r}^{(t+1)} &= \wb_{j,r}^{(t)} - \eta \cdot \nabla_{\wb_{j,r}} L_S(\bW^{(t)})\\
    &= \wb_{j,r}^{(t)} - \frac{\eta}{nm} \sum_{i=1}^n \ell_i'^{(t)} \cdot  \sigma'(\la\wb_{j,r}^{(t)}, \bxi_{i}\ra)\cdot j y_{i}\bxi_{i} - \frac{\eta}{nm} \sum_{i=1}^n \ell_i'^{(t)} \cdot \sigma'(\la\wb_{j,r}^{(t)}, y_{i} \bmu\ra)\cdot j\bmu,
\end{align*}
where $\ell_i'^{(t)} = \ell'[ y_i \cdot f(\Wb^{(t)},\xb_i)]$.



The convolution filters $\wb_{j,r}^{(t)},~r\in[m],~j\in\{+1,-1\}$ can be decomposed into the following format. There exist unique coefficient $\gamma_{j,r}^{(t)}$ and $\rho_{j,r,i}^{(t)}$ such that
\begin{align}\label{eq:dedecomposition0}
     \wb_{j,r}^{(t)} = \wb_{jr}^{\perp} +j \cdot \gamma_{j,r}^{(t)} \cdot \| \bmu \|_2^{-2} \cdot \bmu + \sum_{ i = 1}^n \rho_{j,r,i}^{(t)} \cdot \| \bxi_i \|_2^{-2} \cdot \bxi_{i},~~ t \geq 0.
\end{align}
If further decompose $\rho_{j,r,i}^{(t) }$ into $\overline{\rho}_{j,r,i}^{(t)} := \rho_{j,r,i}^{(t)}\ind(\rho_{j,r,i}^{(t)} \geq 0)$, $\underline{\rho}_{j,r,i}^{(t)} := \rho_{j,r,i}^{(t)}\ind(\rho_{j,r,i}^{(t)} \leq 0)$,
then the decomposition of $\wb_{j,r}^{(t)}$ can be converted into
\begin{align}\label{eq:w_decomposition}
      \wb_{j,r}^{(t)} = \wb_{jr}^{\perp} +j \cdot \gamma_{j,r}^{(t)} \cdot \| \bmu \|_2^{-2} \cdot \bmu + \sum_{ i = 1}^n \overline{\rho}_{j,r,i}^{(t) }\cdot \| \bxi_i \|_2^{-2} \cdot \bxi_{i} + \sum_{ i = 1}^n \underline{\rho}_{j,r,i}^{(t) }\cdot \| \bxi_i \|_2^{-2} \cdot \bxi_{i},
    ~~t \geq 0.
\end{align}

Based on the updating rules \eqref{eq:updating rules} and decomposition \eqref{eq:w_decomposition} of $\wb_{j,r}^{(t)}$,
the updating rules of coefficients $\gamma_{j,r}^{(t)}, \overline{\rho}_{j,r,i}^{(t)}, \underline{\rho}_{j,r,i}^{(t)}$ are as follows
\begin{align}
    &\gamma_{j,r}^{(0)},\overline{\rho}_{j,r,i}^{(0)},\underline{\rho}_{j,r,i}^{(0)} \neq 0,\label{eq:update_initial}\\
    &\gamma_{j,r}^{(t+1)} = \gamma_{j,r}^{(t)} - \frac{\eta}{nm} \cdot \sum_{i=1}^n \ell_i'^{(t)} \cdot \sigma'(\la\wb_{j,r}^{(t)}, y_{i} \cdot \bmu\ra) \cdot \| \bmu \|_2^2, \label{eq:update_gamma1}\\
    &\overline{\rho}_{j,r,i}^{(t+1)} = \overline{\rho}_{j,r,i}^{(t)} - \frac{\eta}{nm} \cdot \ell_i'^{(t)}\cdot \sigma'(\la\wb_{j,r}^{(t)}, \bxi_{i}\ra) \cdot \| \bxi_i \|_2^2 \cdot \ind(y_{i} = j), \label{eq:update_overrho1}\\
    &\underline{\rho}_{j,r,i}^{(t+1)} = \underline{\rho}_{j,r,i}^{(t)} + \frac{\eta}{nm} \cdot \ell_i'^{(t)}\cdot \sigma'(\la\wb_{j,r}^{(t)}, \bxi_{i}\ra) \cdot \| \bxi_i \|_2^2 \cdot \ind(y_{i} = -j).\label{eq:update_underrho1}
\end{align}

It follows that by substitute $\la\wb_{j,r}^{(t)}, y_{i} \cdot \bmu\ra,~\la\wb_{j,r}^{(t)}, \bxi_{i}\ra$ in \eqref{eq:update_gamma1}-\eqref{eq:update_underrho1}, we have,
\begin{align}
    &\gamma_{j,r}^{(0)},\overline{\rho}_{j,r,i}^{(0)},\underline{\rho}_{j,r,i}^{(0)} \neq 0,\nonumber\\
    &\gamma_{j,r}^{(t+1)} = \gamma_{j,r}^{(t)} - \frac{\eta}{nm} \cdot \sum_{i=1}^n \ell_i'^{(t)} \cdot \sigma'( jy_i\cdot\gamma_{j,r}^{(t)} ) \cdot \| \bmu \|_2^2, \nonumber\\
    &\rho_{j,r,i}^{(t+1)} = \rho_{j,r,i}^{(t)} - \frac{\eta}{nm} \cdot \ell_i'^{(t)}\cdot \sigma'( \sum_{i^{\prime}=1}^{n} \rho_{j,r,i^{\prime}}^{(t)} \frac{\la\bxi_{i^{\prime}},\bxi_{i}\ra}{\| \bxi_{i^{\prime}} \|_2^2}) \cdot \| \bxi_i \|_2^2 \cdot, \nonumber\\
    &\overline{\rho}_{j,r,i}^{(t+1)} = \overline{\rho}_{j,r,i}^{(t)} - \frac{\eta}{nm} \cdot \ell_i'^{(t)}\cdot \sigma'( \sum_{i^{\prime}=1}^{n} \overline{\rho}_{j,r,i^{\prime}}^{(t)} \frac{\la\bxi_{i^{\prime}},\bxi_{i}\ra}{\| \bxi_{i^{\prime}} \|_2^2} + \sum_{i^{\prime}=1}^{n} \underline{\rho}_{j,r,i^{\prime}}^{(t)} \frac{\la\bxi_{i^{\prime}},\bxi_{i}\ra}{\| \bxi_{i^{\prime}} \|_2^2} ) \cdot \| \bxi_i \|_2^2 \cdot \ind(y_{i} = j), \nonumber\\
    &\underline{\rho}_{j,r,i}^{(t+1)} = \underline{\rho}_{j,r,i}^{(t)} + \frac{\eta}{nm} \cdot \ell_i'^{(t)}\cdot \sigma'(\sum_{i^{\prime}=1}^{n} \overline{\rho}_{j,r,i^{\prime}}^{(t)} \frac{\la\bxi_{i^{\prime}},\bxi_{i}\ra}{\| \bxi_{i^{\prime}} \|_2^2} + \sum_{i^{\prime}=1}^{n} \underline{\rho}_{j,r,i^{\prime}}^{(t)} \frac{\la\bxi_{i^{\prime}},\bxi_{i}\ra}{\| \bxi_{i^{\prime}} \|_2^2} ) \cdot \| \bxi_i \|_2^2 \cdot \ind(y_{i} = -j).\label{eq:update_decompose_final}
\end{align}

The coefficients initialization \eqref{eq:update_initial} is determined by the pre-training stage, which is given in \eqref{eq:initialization},
while the one-step updating rules for the coefficients are not influenced by the initialization.

Denote $T^{*} = \eta^{-1}\text{poly}(\epsilon^{-1}, \|\bmu\|_{2}^{-1}, d^{-1}\sigma_{p}^{-2}, n, m, d)$ the maximum admissible iterations.

Based on the result of pre-training stage in Theorem~\ref{thm:SimCLR_signal_noise}, it is easy to verify that the following assumptions hold.
\begin{assumption}[Assumptions on the scale of initialization]\label{Ass:ini}
Assume the following equations hold,
\begin{align*}
    &-4m^{\frac{1}{q}}\log(T^{\ast}) \leq -C_0 \leq \gamma_{j,r}^{(0)}\leq 4m^{\frac{1}{q}}\log(T^{\ast})\\
    & 0\leq \overline{\rho}_{j,r,i}^{(0)} \leq 4m^{\frac{1}{q}}\log(T^{\ast})\\
    & 0 \geq \underline{\rho}_{j,r,i}^{(0)} \geq -64nm^{\frac{1}{q}}\sqrt{\frac{\log(4n^{2}/\delta)}{d}} \cdot \log(T^{\ast})
\end{align*}
for all $r\in [m]$,  $j\in \{\pm 1\}$ and all $i\in [n]$, where  $C_0$ is constant such that $0\leq C_0\leq 4m^{\frac{1}{q}}\log(T^{\ast})$.
\end{assumption}

\begin{assumption}[Assumptions on the initialization of $\gamma$]\label{Ass:inigamma}
There exists at least one index $r_1\in[m]$ such that $\gamma_{1,r_1}^{(0)}\geq \gamma_0$, and 
there exists at least one index $r_2\in[m]$ such that $\gamma_{-1,r_2}^{(0)}\geq \gamma_0$. Furthermore, we require that
    \begin{align}\label{eq:inigamma2}
        \max_{j,r} \{0,(-\gamma^{(0)}_{j,r})^q\} \ll 4m^{\frac{1}{q}}\log(T^{\ast})
    \end{align}
\end{assumption}

\subsection{Proof of Theorem~\ref{thm:signal_learning_main}}\label{sec:proofofThm5.5}
In this section, in order to prove Theorem~\ref{thm:signal_learning_main}, the following Lemma~\ref{lemma:phase1_main}-\ref{lemma:signal_polulation_loss} are introduced to analyze the signal learning in the fine-tuning stage.
Two stages of the signal learning  as well as the population loss are analyzed here.

\subsubsection{First Stage of signal learning}\label{sec:first stage}
\begin{lemma}\label{lemma:phase1_main}       
    Under the same conditions as Theorem \ref{thm:signal_learning_main}, in particular if the SNR satisfies that
    \begin{align}
        \mathrm{SNR}^2 \geq  \frac{4\log(2/\gamma_{0}) 8^q \rho_{0}^{q-2}}
        {C_{1}n \gamma_{0}^{q-2}} \label{eq:explicit condition}
    \end{align}
    where $C_1 = O(1)$ is a positive constant, there exists time 
    \begin{align*}
        T_1 = \frac{\log(2/\gamma_{0})8m}{C_{1}\eta q\gamma_{0}^{q-2}\|\bmu\|_{2}^{2}}
    \end{align*}
    such that 
    \begin{align}
    & \max_{ r}\gamma_{j, r}^{(T_{1})} \geq 2, ~\text{for}~ j\in \{\pm 1\}.\label{eq:phase1.1}\\
    & |\rho_{j,r,i}^{(t)}| \leq 2\rho_{0}, ~\text{for all}~ j\in \{\pm 1\}, r\in[m], i \in [n], 0 \leq t \leq T_{1}. \label{eq:phase1.2}
    \end{align}
    where $\rho_{0}=\max_{j,r,i} |\rho_{j,r,i}^{(0)}|$, $\gamma_0$ is as defined in Assumption~\ref{lemma:phase1_main}.
\end{lemma}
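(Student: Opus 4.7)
The plan is a bootstrap/induction on $t \in [0, T_1]$ that runs the signal dynamics forward while freezing the noise dynamics to a constant factor of their initialization. Working hypothesis: the noise bound \eqref{eq:phase1.2} holds up through step $t$. Under this hypothesis, the near-orthogonality of the noise patches (a Lemma~\ref{lemma:data_innerproducts}-style Gaussian concentration giving $|\la \bxi_i, \bxi_{i'}\ra|/\|\bxi_{i'}\|_2^2 = \tilde{O}(1/\sqrt{d})$) implies $\la \wb_{j,r}^{(t)}, \bxi_i\ra = \rho_{j,r,i}^{(t)} + \tilde{O}(n\rho_0/\sqrt{d})$, which, combined with $|\gamma_{j,r}^{(t)}| \leq 2$ prior to threshold-crossing, forces $|y_i f(\Wb^{(t)},\xb_i)| = O(1)$ and hence $-\ell_i'^{(t)} \in [C_1, 1/2]$ for some absolute constant $C_1 > 0$.

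Next I track the indices $r_+, r_-$ with $\gamma_{\pm 1, r_\pm}^{(0)} \geq \gamma_0 > 0$ supplied by Assumption~\ref{Ass:inigamma}. Once $\gamma_{j, r_j}^{(t)} > 0$, the signal update \eqref{eq:update_gamma1} specialises via $\sigma'(z) = q z_+^{q-1}$ to
\begin{align*}
\gamma_{j, r_j}^{(t+1)} \;\geq\; \gamma_{j, r_j}^{(t)} + \frac{C_1 \eta q \|\bmu\|_2^2}{4m}\bigl(\gamma_{j, r_j}^{(t)}\bigr)^{q-1},
\end{align*}
using that only indices $i$ with $y_i = j$ (at least $n/4$ of the samples with high probability) contribute. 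This is a discrete tensor-power iteration of exponent $q-1 > 1$; comparison with the ODE $\dot a = c a^{q-1}$, whose solution blows up in time $\Theta(a_0^{-(q-2)}/c)$, yields $\gamma_{j, r_j}^{(t)} \geq 2$ by the stated $T_1$ (the $\log(2/\gamma_0)$ factor absorbs discretisation loss and the geometric sum $\sum_k 2^{-k(q-2)}$ when tracking $\gamma$ through doubling intervals).

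To close the bootstrap I bound cumulative noise growth on $[0, T_1]$. From \eqref{eq:update_overrho1}--\eqref{eq:update_underrho1}, using $|\ell_i'^{(t)}| \leq 1/2$, $\|\bxi_i\|_2^2 \leq 2\sigma_p^2 d$, and the inductive bound $|\la \wb_{j,r}^{(t)}, \bxi_i\ra| \leq 2\rho_0(1 + \tilde{O}(n/\sqrt{d}))$, each step changes $|\rho_{j,r,i}|$ by at most $(\eta q/(nm))(2\rho_0)^{q-1}\sigma_p^2 d$. Summing:
\begin{align*}
|\rho_{j,r,i}^{(T_1)}| \;\leq\; \rho_0 + T_1 \cdot \frac{\eta q (2\rho_0)^{q-1} \sigma_p^2 d}{nm} \;=\; \rho_0 \Bigl[1 + \frac{8\log(2/\gamma_0)\,2^{q-1}\rho_0^{q-2}}{C_1 n \gamma_0^{q-2}\mathrm{SNR}^2}\Bigr] \;\leq\; 2\rho_0,
\end{align*}
where the last inequality is, up to an absolute constant absorbed in $C_1$, the SNR condition \eqref{eq:explicit condition}; this closes the induction and establishes \eqref{eq:phase1.1}--\eqref{eq:phase1.2}.

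The main obstacle is the coupling between signal and noise coefficients through the shared activation: the noise update depends on $\la \wb_{j,r}^{(t)}, \bxi_i\ra$, which couples to the other $\rho_{j,r,i'}^{(t)}$ via $\la \bxi_{i'}, \bxi_i\ra$ and could \emph{a priori} accelerate noise growth if the cross terms were of order $\rho_0^{q-1}$. The resolution is that Condition~\ref{con:pre-train_fine_tune} forces $d \gg n^2$, so these cross inner products are $\tilde{O}(1/\sqrt{d})$-suppressed and the noise dynamics decouple across samples. A secondary technicality is verifying that the indices $r_\pm$ remain the dominant signal filters throughout Phase~1: monotone nondecrease of $\gamma_{j, r_j}^{(t)}$ (the $\sigma'$-mask precludes sign reversal) together with the initial gap from Theorem~\ref{thm:SimCLR_signal_noise} via Assumption~\ref{Ass:inigamma} suffices, since the tensor-power flow preserves the ordering of initial positive values.
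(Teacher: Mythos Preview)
Your proposal is correct and follows essentially the same approach as the paper: an induction maintaining $\max_{j,r,i}|\rho_{j,r,i}^{(t)}|\le 2\rho_0$ on a horizon determined by the noise dynamics, combined with a lower bound on the signal recursion $\gamma_{j,r_j}^{(t+1)}\ge \gamma_{j,r_j}^{(t)}+\tfrac{C_1\eta q\|\bmu\|_2^2}{4m}(\gamma_{j,r_j}^{(t)})^{q-1}$ using $-\ell_i'^{(t)}\ge C_1$ from the $O(1)$ output bounds, and the SNR condition \eqref{eq:explicit condition} is precisely what makes $T_1$ fit inside the noise horizon. The only cosmetic differences are that the paper first fixes an explicit noise horizon $T_1^+$ and proves \eqref{eq:phase1.2} on all of $[0,T_1^+]$ before turning to the signal (rather than interleaving via a bootstrap), and it uses the cruder exponential lower bound $A^{(t)}\ge(1+\tfrac{C_1\eta q\gamma_0^{q-2}\|\bmu\|_2^2}{4m})^t\gamma_0$ (which directly produces the $\log(2/\gamma_0)$ factor in $T_1$) instead of your ODE comparison.
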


\subsubsection{Second Stage of signal learning}\label{sec:second stage}

Based on the result of First Stage (Section \ref{sec:first stage}) in Lemma \ref{lemma:phase1_main}, at the beginning of the second stage, we have the following properties,
\begin{itemize}
    \item $\max_{r}\gamma_{j, r}^{(T_{1})} \geq 2,~ j \in \{\pm 1\}$. 
    \item $\max_{j,r,i}|\rho_{j,r,i}^{(T_{1})}| \leq  2\rho_{0}$.
\end{itemize}
The learned feature $\gamma_{j,r}^{(t)}$ will not get worse, i.e., for $t \geq T_{1}$, we have that $\gamma_{j,r}^{(t+1)} \geq \gamma_{j,r}^{(t)} $, and therefore $\max_{ r}\gamma_{j, r}^{(t)} \geq 2$. Now we choose $\Wb^{*}$ as follows:
\begin{align}
    \wb^{*}_{j,r} = \wb_{jr}^{\perp} + 2qm\log(2q/\epsilon) \cdot j \cdot  \frac{\bmu}{\|\bmu\|_{2}^{2}},~~j\in\{+1,-1\},~r\in[m]. \label{eq:W*}
\end{align}

Based on the above definition of $\Wb^{*}$, we have the following Lemma~\ref{lm:distance1}.

\begin{lemma}\label{lm:distance1}
    Under the same conditions as Theorem~\ref{thm:signal_learning_main}, we have that $\|\Wb^{(T_{1})} - \Wb^{*}\|_{F} \leq \tilde{O}(m^{3/2}\|\bmu\|_{2}^{-1})+O(nm\rho_{0} (\sigma_p \sqrt{d})^{-1})$.
\end{lemma}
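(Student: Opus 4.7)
The plan is to compute $\Wb^{(T_1)} - \Wb^*$ row by row using the signal--noise decomposition in \eqref{eq:w_decomposition} together with the explicit form of $\wb_{j,r}^*$ in \eqref{eq:W*}, and then bound each row separately. Substituting and using $\overline{\rho}_{j,r,i}^{(T_1)} + \underline{\rho}_{j,r,i}^{(T_1)} = \rho_{j,r,i}^{(T_1)}$, the orthogonal parts $\wb_{jr}^\perp$ cancel and we obtain
\begin{align*}
\wb_{j,r}^{(T_1)} - \wb_{j,r}^* = j\cdot \frac{\gamma_{j,r}^{(T_1)} - 2qm\log(2q/\epsilon)}{\|\bmu\|_2^2}\cdot \bmu + \sum_{i=1}^n \frac{\rho_{j,r,i}^{(T_1)}}{\|\bxi_{i}^{\text{fine-tuning}}\|_2^2}\cdot \bxi_{i}^{\text{fine-tuning}}.
\end{align*}
Because $\bxi_{i}^{\text{fine-tuning}} \sim N(\mathbf{0},\sigma_p^2(\Ib - \bmu\bmu^\top/\|\bmu\|_2^2))$ is exactly orthogonal to $\bmu$, the signal part and noise part contribute separately to $\|\wb_{j,r}^{(T_1)} - \wb_{j,r}^*\|_2^2$, so I can estimate them in isolation.

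For the signal part, I will argue that $|\gamma_{j,r}^{(T_1)} - 2qm\log(2q/\epsilon)| = \tilde{O}(m)$. The lower bound $2qm\log(2q/\epsilon) = \tilde{O}(m)$ is immediate from its definition, and for the upper bound on $|\gamma_{j,r}^{(T_1)}|$ I will use the update rule \eqref{eq:update_decompose_final}: within the Stage~I window $T_1 = \tilde{O}(\eta^{-1}m\gamma_0^{-(q-2)}\|\bmu\|_2^{-2})$, the coefficient $\gamma_{j,r}^{(t)}$ grows like a tensor power iteration driven by $\gamma^{q-1}$, and standard tensor-power arguments (as in the Stage~I analysis of Lemma~\ref{lemma:phase1_main}) show that no $\gamma_{j,r}^{(t)}$ can exceed $\tilde O(m)$ within $T_1$ steps. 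Squaring and summing over all $2m$ filters gives a contribution of at most $\tilde{O}(m^3/\|\bmu\|_2^2)$ to $\|\Wb^{(T_1)} - \Wb^*\|_F^2$, whose square root is $\tilde{O}(m^{3/2}\|\bmu\|_2^{-1})$.

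For the noise part, I will apply the triangle inequality together with $|\rho_{j,r,i}^{(T_1)}|\leq 2\rho_0$ from \eqref{eq:phase1.2} and the concentration bound $\|\bxi_{i}^{\text{fine-tuning}}\|_2 = \Theta(\sigma_p\sqrt{d})$ from Lemma~\ref{lemma:norm_bound2}, obtaining
\begin{align*}
\Big\|\sum_{i=1}^n \frac{\rho_{j,r,i}^{(T_1)}}{\|\bxi_{i}^{\text{fine-tuning}}\|_2^2}\cdot \bxi_{i}^{\text{fine-tuning}}\Big\|_2 \leq \sum_{i=1}^n \frac{|\rho_{j,r,i}^{(T_1)}|}{\|\bxi_{i}^{\text{fine-tuning}}\|_2} = O\!\left(\frac{n\rho_0}{\sigma_p\sqrt{d}}\right)
\end{align*}
per filter. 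Collecting the $2m$ filters (via triangle/Frobenius bounds) then contributes $O(nm\rho_0(\sigma_p\sqrt{d})^{-1})$. Combining the two contributions gives the claimed bound.

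The main technical obstacle is the upper bound on $|\gamma_{j,r}^{(T_1)}|$: Lemma~\ref{lemma:phase1_main} only asserts a lower bound $\max_r \gamma_{j,r}^{(T_1)}\geq 2$, but for this lemma I need a matching $\tilde{O}(m)$ upper bound at time $T_1$. The rest of the argument is straightforward orthogonality and concentration, but controlling the growth of $\gamma$ requires carefully using that the gradient of the loss with respect to $\gamma$ is bounded by $O(\eta\|\bmu\|_2^2/m)\cdot \sum_i \sigma'(jy_i\gamma_{j,r}^{(t)}) = \tilde O(\eta\|\bmu\|_2^2\cdot\gamma^{q-1}/m)$ and then applying a discrete Gr\"onwall-type argument to the Stage~I dynamics.
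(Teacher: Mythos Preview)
Your decomposition and the treatment of the noise part are exactly what the paper does, and your identification of the only nontrivial step --- an upper bound on $|\gamma_{j,r}^{(T_1)}|$ --- is correct. The issue is the mechanism you propose for that bound.

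A pure tensor-power / Gr\"onwall argument does not control $\gamma_{j,r}^{(T_1)}$. The recursion you write, $\gamma^{(t+1)} \leq \gamma^{(t)} + C\eta\|\bmu\|_2^2 m^{-1}(\gamma^{(t)})^{q-1}$ with $q>2$, blows up in discrete time of order $m/(\eta\|\bmu\|_2^2 (\gamma^{(0)})^{q-2})$. The Stage~I horizon $T_1 = \tilde O(\eta^{-1}m\gamma_0^{-(q-2)}\|\bmu\|_2^{-2})$ is calibrated to the \emph{smallest} positive initial value $\gamma_0$; a filter that starts even a constant factor above $\gamma_0$ has a strictly shorter blowup time and would, under this crude upper bound, exceed any fixed polynomial level well before $T_1$. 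Moreover, Lemma~\ref{lemma:phase1_main} only tells you that some $\gamma$ has reached $2$ by time $T_1$, not that $T_1$ is the first such time, so you cannot assume all $\gamma$'s are still $O(1)$ at $T_1$.

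The mechanism that actually caps $\gamma$ is loss saturation: once $\gamma_{j,r}^{(t)}$ is of order $m^{1/q}\log(T^*)$, we have $F_{y_i}(\Wb_{y_i}^{(t)},\xb_i)\geq \sigma(\gamma_{j,r}^{(t)})/m$ large, and hence $|\ell_i'^{(t)}|\leq \exp(-F_{y_i}+O(1))$ becomes exponentially small, killing further growth. This is precisely the content of Proposition~\ref{Prop:noise}, which gives the uniform bound $|\gamma_{j,r}^{(t)}|\leq 4m^{1/q}\log(T^*)$ for all $t\leq T^*$. The paper's proof of Lemma~\ref{lm:distance1} simply invokes Proposition~\ref{Prop:noise} together with $|\rho_{j,r,i}^{(T_1)}|\leq 2\rho_0$ from Lemma~\ref{lemma:phase1_main} and applies the triangle inequality row by row, exactly as in your outline. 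Replace your Gr\"onwall step by a citation of Proposition~\ref{Prop:noise} and your proof goes through.
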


\begin{lemma}\label{thm:signal_proof}
    Under the same conditions as Theorem~\ref{thm:signal_learning_main}, let $T = T_{1} + \Big\lfloor \frac{\|\Wb^{(T_{1})} - \Wb^{*}\|_{F}^{2}}{2\eta \epsilon}
    \Big\rfloor = T_{1} + \tilde{O}(m^{3}\eta^{-1}\epsilon^{-1}\|\bmu\|_{2}^{-2})$. Then we have $\max_{j,r,i}|\rho_{j,r,i}^{(t)}| \leq 4\rho_{0}$ for all $T_{1} \leq t\leq T$. Besides,
    \begin{align*}
    \frac{1}{t - T_{1} + 1}\sum_{s=T_{1}}^{t}L_{S}(\Wb^{(s)}) \leq  \frac{\|\Wb^{(T_{1})} - \Wb^{*}\|_{F}^{2}}{(2q-1) \eta(t - T_{1} + 1)} + \frac{\epsilon}{2q-1} 
    \end{align*}
    for all $T_{1} \leq t\leq T$, and we can find an iteration with training loss smaller than $\epsilon$ within $T $ iterations.
\end{lemma}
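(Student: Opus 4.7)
\textbf{Proof proposal for Lemma~\ref{thm:signal_proof}.}
The plan is to run the standard ``online‑to‑batch'' style convergence argument that is used in \citet{cao2022benign} and \citet{kou2023benign}, but starting from the warm‑start weights $\Wb^{(T_1)}$ produced by the first stage (Lemma~\ref{lemma:phase1_main}) rather than from random initialization. The two claims—the bound on $|\rho_{j,r,i}^{(t)}|$ and the average‑loss bound—will be proved jointly by induction on $t$.

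First I will set up the induction. Define the induction hypothesis $\mathrm{H}(t)$ to be: (i) $\max_{j,r,i}|\rho_{j,r,i}^{(s)}|\le 4\rho_0$ for all $T_1\le s\le t$, and (ii) the averaged training loss bound in the statement holds up to iteration $t$. The base case $t=T_1$ follows directly from Lemma~\ref{lemma:phase1_main} and from $L_S(\Wb^{(T_1)})\le \log 2+ \tilde O(1)$. For the inductive step I will use the convexity of $\ell(z)=\log(1+e^{-z})$ together with the $q$-homogeneity of $\mathrm{ReLU}^q$: for every $\Wb$, $\Wb^\ast$ one has
\begin{align*}
\ell'\bigl(y_i f(\Wb,\xb_i)\bigr)\cdot y_i\bigl(f(\Wb,\xb_i)-\tfrac{1}{q}f(\Wb^\ast,\xb_i)\bigr)
\ge \ell\bigl(y_i f(\Wb,\xb_i)\bigr)-\tfrac{1}{q}\ell'\!\bigl(y_i f(\Wb,\xb_i)\bigr)\cdot(-y_i f(\Wb^\ast,\xb_i)),
\end{align*}
which after summing over $i$ gives $\langle \nabla L_S(\Wb),\Wb-\Wb^\ast\rangle\ge (2q-1)L_S(\Wb)/(2q)\cdot\text{(stuff)} - \epsilon/(2q)$ once $\Wb^\ast$ is chosen as in \eqref{eq:W*} so that $y_i f(\Wb^\ast,\xb_i)\ge 2q\log(2q/\epsilon)$ on every training example. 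The constant $2q-1$ in the statement makes it clear that the authors use the specific variant of this inequality derived from $q$-homogeneity, which I will reproduce carefully.

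Next I will combine this with the descent identity
\begin{align*}
\|\Wb^{(s+1)}-\Wb^\ast\|_F^2 = \|\Wb^{(s)}-\Wb^\ast\|_F^2 - 2\eta\langle\nabla L_S(\Wb^{(s)}),\Wb^{(s)}-\Wb^\ast\rangle + \eta^2\|\nabla L_S(\Wb^{(s)})\|_F^2.
\end{align*}
Using that the step size $\eta$ and the bounds on $\|\bmu\|_2$, $\|\bxi_i\|_2$ from Lemma~\ref{lemma:norm_bound2} control the squared‑gradient term by $O(\eta\cdot L_S(\Wb^{(s)}))$, the $\eta^2$ term can be absorbed into the $(2q-1)L_S(\Wb^{(s)})/(2q)$ term coming from the convexity inequality. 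Telescoping from $s=T_1$ to $s=t$ and dividing by $t-T_1+1$ then yields the advertised averaged‑loss bound, and in particular guarantees some $t^\ast\le T$ with $L_S(\Wb^{(t^\ast)})\le \epsilon$ since $T = T_1+\lfloor\|\Wb^{(T_1)}-\Wb^\ast\|_F^2/(2\eta\epsilon)\rfloor$ and $\|\Wb^{(T_1)}-\Wb^\ast\|_F^2=\tilde O(m^3\|\bmu\|_2^{-2})$ by Lemma~\ref{lm:distance1}.

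The main obstacle, and the step I expect to take the most care, is propagating the noise‑coefficient bound $|\rho_{j,r,i}^{(s+1)}|\le 4\rho_0$. From the update rule \eqref{eq:update_overrho1}–\eqref{eq:update_underrho1}, a single step increases $|\rho_{j,r,i}|$ by at most $(\eta/(nm))|\ell_i'^{(s)}|\,\sigma'(\langle\wb_{j,r}^{(s)},\bxi_i\rangle)\,\|\bxi_i\|_2^2$. Summing over $s$ from $T_1$ to $t-1$, the total increment is at most $(\eta/(nm))\,\|\bxi_i\|_2^2\sum_s |\ell_i'^{(s)}|\,\sigma'(\langle\wb_{j,r}^{(s)},\bxi_i\rangle)$. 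I will bound this by splitting on $|\ell_i'^{(s)}|\le L_S(\Wb^{(s)})$ and using the telescoped average‑loss bound from the previous paragraph, plus the fact that $\sigma'(\langle\wb_{j,r}^{(s)},\bxi_i\rangle)\le (3\rho_0)^{q-1}$ under the induction hypothesis so long as the signal–noise decomposition gives $|\langle\wb_{j,r}^{(s)},\bxi_i\rangle|\lesssim \max_{j,r,i}|\rho_{j,r,i}^{(s)}|+$ cross‑term error from Lemma~\ref{lemma:data_innerproducts}. With the choice $T=T_1+\tilde O(m^3\eta^{-1}\epsilon^{-1}\|\bmu\|_2^{-2})$, a direct check using $\|\bxi_i\|_2^2\le 2\sigma_p^2 d$ and the upper bound on $\sum_s L_S(\Wb^{(s)})$ shows the total increment is at most $2\rho_0$, closing the induction. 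This is essentially the same ``signal–noise decomposition'' bookkeeping from \citet{cao2022benign}, but we have to verify that the warm‑start values $\rho_{j,r,i}^{(T_1)}$ from pre‑training (bounded in Theorem~\ref{thm:SimCLR_signal_noise}) are small enough relative to $\rho_0$ that no new constants leak in.
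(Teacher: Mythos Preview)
Your overall architecture matches the paper's: a one–step descent inequality $\|\Wb^{(s)}-\Wb^*\|_F^2-\|\Wb^{(s+1)}-\Wb^*\|_F^2\ge (2q-1)\eta L_S(\Wb^{(s)})-\eta\epsilon$, telescoped to get the averaged–loss bound, and then an induction on $\Psi^{(t)}=\max_{j,r,i}|\rho_{j,r,i}^{(t)}|$ that feeds on the resulting bound $\sum_{s=T_1}^{T}L_S(\Wb^{(s)})=\tilde O(\eta^{-1}m^3\|\bmu\|_2^{-2})$. Two points need correcting, though.

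First, the descent inequality does \emph{not} follow from ``$y_i f(\Wb^*,\xb_i)\ge 2q\log(2q/\epsilon)$''. Because $f$ is only $q$-homogeneous (not linear), what actually enters is $y_i\langle\nabla f(\Wb^{(t)},\xb_i),\Wb^*\rangle$, which depends on the \emph{current} iterate. The paper isolates this as a separate lemma (their ``gradient stable'' step): with $\Wb^*$ as in \eqref{eq:W*} one has $\langle j\wb_{j,r}^*,\bmu\rangle=2qm\log(2q/\epsilon)$ and $\langle\wb_{j,r}^*,\bxi_i\rangle=0$, so
\[
y_i\langle\nabla f(\Wb^{(t)},\xb_i),\Wb^*\rangle=\tfrac{1}{m}\sum_{j,r}\sigma'(\langle\wb_{j,r}^{(t)},y_i\bmu\rangle)\cdot 2qm\log(2q/\epsilon)\ \ge\ q^22^{q}\log(2q/\epsilon),
\]
where the last inequality uses precisely the first–stage output $\max_r\gamma_{y_i,r}^{(t)}\ge 2$ from Lemma~\ref{lemma:phase1_main} (and its monotonicity in $t$). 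Plugging this into $\ell'_i[qy_if(\Wb^{(t)},\xb_i)-y_i\langle\nabla f(\Wb^{(t)}),\Wb^*\rangle]$, applying convexity of $\ell$, and absorbing $\eta^2\|\nabla L_S\|_F^2\le \eta\cdot O(\max\{\|\bmu\|_2^2,\sigma_p^2d\})\cdot L_S$ via the gradient–norm bound gives the $(2q-1)$ constant. Your displayed convexity inequality, as written, is not the correct identity and would not close.

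Second, there is no need for a \emph{joint} induction. The descent inequality above uses only the coarse bounds of Proposition~\ref{Prop:noise} (through Lemma~\ref{lm: F-yi}, Lemma~\ref{lm: Fyi}, and the gradient–norm lemma), not $\Psi^{(t)}\le 4\rho_0$. So the paper first proves the telescoped loss bound for all $T_1\le t\le T$, and only then runs the induction on $\Psi^{(t)}$: under $\Psi^{(t)}\le 4\rho_0$ one gets $|\langle\wb_{j,r}^{(t)},\bxi_i\rangle|\le O(\rho_0)$, hence a per–step increment $\le \tfrac{\eta q}{nm}|\ell_i'^{(t)}|(O(\rho_0))^{q-1}\sigma_p^2 d$; summing and using $|\ell_i'|\le \ell_i\le nL_S$ (note the factor $n$ you dropped) together with $\sum_s L_S(\Wb^{(s)})=\tilde O(\eta^{-1}m^3\|\bmu\|_2^{-2})$ and the SNR condition closes to $\Psi^{(\tilde T)}\le 4\rho_0$. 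Decoupling the two parts avoids any appearance of circularity and matches the paper's argument exactly.
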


\subsubsection{Population Loss}\label{sec:popu_loss}
In this section, the bound of the test loss is presented.
For a new data point $(\xb,y)$ drawn from the same distribution as training data generated from. Without loss of generality, 
we assume that the data point has the following structure: the first patch is the signal patch and the second patch is the noise patch, i.e., $\xb = [y\bmu, \bxi]$.

\begin{lemma}\label{lemma:signal_polulation_loss}
    Let $T$ the same as defined in Lemma \ref{thm:signal_proof} in Second Stage (Section \ref{sec:second stage}).~
    Under the same conditions as Theorem~\ref{thm:signal_learning_main}, for any $0 \leq t \leq T$ with $L_{S}(\Wb^{(t)}) \leq  \frac{1}{4}$, it holds that $L_{\cD}(\Wb^{(t)}) \leq 6 \cdot L_{S}(\Wb^{(t)}) + \exp(- \tilde\Omega(n^2))$.
\end{lemma}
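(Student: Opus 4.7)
The plan is to bound $L_\cD(\Wb^{(t)})$ by conditioning on whether the test noise patch behaves nicely, and reducing the nicely-behaved case to the training loss via the signal-noise decomposition and the bounds from Lemma~\ref{thm:signal_proof}. Without loss of generality, write the test point as $\xb = [y\bmu, \bxi]$ with $\bxi$ Gaussian and independent of the training data.

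First, I would define a good event $\cE$ on $\bxi$ imposing $|\la \bxi_i^{\text{fine-tuning}},\bxi\ra| \leq C \sigma_p^2 \sqrt{d}\cdot n$ for every $i\in[n]$ and $\|\bxi\|_2^2 \leq 2\sigma_p^2 d$. Conditional on the training data, each $\la \bxi_i^{\text{fine-tuning}},\bxi\ra$ is centered Gaussian with standard deviation $\sigma_p\|\bxi_i^{\text{fine-tuning}}\|_2 = \tilde O(\sigma_p^2\sqrt{d})$, so Gaussian tail bounds and a union bound give $\PP(\cE^c)\leq \exp(-\tilde\Omega(n^2))$. Combined with the polynomial upper bound on $|f(\Wb^{(t)},\xb)|$ in $\|\bxi\|_2$ obtained from the signal-noise decomposition and the $\|\wb_{j,r}^{(t)}\|_2$ bounds, and using the sub-exponential tails of $\|\bxi\|_2^2$, Cauchy--Schwarz yields $\EE[\ell(y f(\Wb^{(t)},\xb))\,\mathbf{1}_{\cE^c}] \leq \exp(-\tilde\Omega(n^2))$, handling the bad event.

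On $\cE$, I would decompose $\la\wb_{j,r}^{(t)},\bxi\ra = \la\wb_{j,r}^\perp,\bxi\ra + \sum_i \rho_{j,r,i}^{(t)}\la\bxi_i^{\text{fine-tuning}},\bxi\ra/\|\bxi_i^{\text{fine-tuning}}\|_2^2$ (the $\bmu$-component vanishes because $\bxi\perp\bmu$). The first piece is Gaussian of standard deviation $\tilde O(\sigma_p/n)$, hence $o(\|\bmu\|_2)$; the second is bounded by $n\cdot\max_{j,r,i}|\rho_{j,r,i}^{(t)}|\cdot \tilde O(n/\sqrt{d}) = o(\|\bmu\|_2)$ under $d\geq \tilde\Omega(n_0^4)$ together with the $\rho$-bound from Lemma~\ref{thm:signal_proof}. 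Consequently $\sigma(\la\wb_{j,r}^{(t)},\bxi\ra)$ is negligible compared to $\sigma(\la\wb_{j,r}^{(t)},y\bmu\ra)$, so $F_j(\Wb_j^{(t)},\xb)$ agrees with its signal-only value up to $o(1)$.

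Finally, for any training index $i$ with $y_i=y$, the signal contribution to $y_i f(\Wb^{(t)},\xb_i)$ coincides with the signal-only test logit, while its noise contribution enters $F_y$ non-negatively through $\sigma\geq 0$, giving the pointwise comparison $y f(\Wb^{(t)},\xb) \geq y_i f(\Wb^{(t)},\xb_i) - c$ for a small absolute constant $c$ on $\cE$. Using the Lipschitzness of $\ell$ together with an elementary inequality of the form $\ell(z-c)\leq C_c\,\ell(z)$ valid when $L_S(\Wb^{(t)})\leq 1/4$ (which keeps most training logits away from very negative values), and averaging over the at-least-$n/4$ training indices with $y_i=y$ (guaranteed by Hoeffding), I would obtain $\EE[\ell(y f(\Wb^{(t)},\xb))\,\mathbf{1}_\cE] \leq 6\,L_S(\Wb^{(t)})$. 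Adding the bad-event contribution yields the stated bound. The main obstacle is calibrating $c$ and the amplification $C_c$ tightly enough that the final constant matches the stated $6$, which requires careful tracking of the $o(1)$ residuals from step three and using $L_S(\Wb^{(t)})\leq 1/4$ to control the tail of the training logits.
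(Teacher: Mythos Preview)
Your strategy mirrors the paper's: split $L_\cD$ on a good event for the test noise $\bxi$, on the good event compare the test logit to a same-label training logit via the shared signal term, and control the complement by Cauchy--Schwarz. There is, however, a gap in your event definition: your $\cE$ controls only $|\la\bxi_i^{\text{fine-tuning}},\bxi\ra|$ and $\|\bxi\|_2$, but your step three also needs $\la\wb_{j,r}^\perp,\bxi\ra$ to be small, and saying its standard deviation is $\tilde O(\sigma_p/n)$ does not bound its realized value on $\cE$ (and Cauchy--Schwarz via $\|\bxi\|_2$ is too crude here). The paper sidesteps this by treating $\tilde\wb_{j,r}^{(t)}=\wb_{j,r}^{(t)}-j\gamma_{j,r}^{(t)}\bmu/\|\bmu\|_2^2$ as a single vector: it bounds $\|\tilde\wb_{j,r}^{(t)}\|_2\le\|\wb_{j,r}^\perp\|_2+\tilde O(n\rho_0/(\sigma_p\sqrt d))$ and then applies one Gaussian tail bound to $\la\tilde\wb_{j,r}^{(t)},\bxi\ra$ directly (Lemma~\ref{lm:signalg}), taking a union bound over $j,r,t$.

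For the comparison step, the paper does not average over all same-label training points nor use a one-sided logit inequality. Instead it picks a single $i$ with $y_i=y$ whose loss is below the class average, so that $\ell_i\le 1$ (this is where $L_S\le 1/4$ and $|I_y|\ge n/4$ enter), hence $\exp(-y_if(\xb_i))\le 2\ell_i$; combined with the two-sided bound $|yf(\xb)-y_if(\xb_i)|\le 1$ (from $\max_{j,r}|\la\wb_{j,r}^{(t)},\bxi\ra|\le 1/2$ and the analogous training bound in Lemma~\ref{lm:signalg2}) this yields $\ell(yf(\xb))\le e^{-yf(\xb)}\le e\cdot e^{-y_if(\xb_i)}\le 2e\,\ell_i$, and $2e<6$. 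Your multiplicative route $\ell(z-c)\le e^c\ell(z)$ is valid for all $z$ and would also close the argument once the event is repaired, with a comparable constant.
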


Then, based on the above lemmas, we provide a simplified version of the proof for Theorem~\ref{thm:signal_learning_main}.
\begin{proof}[Proof of Theorem~\ref{thm:signal_learning_main}]
For the first result in Theorem~\ref{thm:signal_learning_main}, based on the result of the pre-training stage in Theorem \ref{thm:SimCLR_signal_noise}, we have that the conditions  
of Lemma~\ref{lemma:phase1_main} hold.
The result of First Stage signal learning  in Lemma~\ref{lemma:phase1_main} hold. Then, we could define $\Wb^{*}$ as~\eqref{eq:W*}, and by Lemma~\ref{lm:distance1}, we have
\begin{align*}
    \|\Wb^{(T_{1})} - \Wb^{*}\|_{F} \leq \tilde{O}(m^{3/2}\|\bmu\|_{2}^{-1})+O(nm\rho_{0} (\sigma_p \sqrt{d})^{-1}).
\end{align*}
It follows that for any $\epsilon>0$, choose $T=T_{1} + \tilde{O}(m^{3}\eta^{-1}\epsilon^{-1}\|\bmu\|_{2}^{-2})$, by Lemma~\ref{thm:signal_proof}, 
we have that
\begin{align*}
    \frac{1}{T - T_{1} + 1}\sum_{s=T_{1}}^{T}L_{S}(\Wb^{(s)}) \leq  \frac{\|\Wb^{(T_{1})} - \Wb^{*}\|_{F}^{2}}{(2q-1) \eta(T - T_{1} + 1)} + \frac{\epsilon}{2q-1} \leq \frac{3\epsilon}{2q-1} < \epsilon.
\end{align*}
Therefore, there exists some  $T_{1} \leq t\leq T$ with $L_{S}(\Wb^{(t)})\leq \epsilon$. This completes the proof of the first result.
Then combine this with Lemma~\ref{lemma:signal_polulation_loss}, the second result of Theorem~\ref{thm:signal_learning_main} is given by
\begin{align*}
    L_{\cD}(\Wb^{(t)}) \leq 6 \cdot L_{S}(\Wb^{(t)}) + \exp(- \tilde\Omega(n^2)).
\end{align*}
\end{proof}

\subsection{Proof of lemmas in Section~\ref{sec:proofofThm5.5}}\label{sec:B.2}
\subsubsection{Proof of Lemma~\ref{lemma:phase1_main}}


To prove Lemma~\ref{lemma:phase1_main}, we first introduce the following Lemma~\ref{lemma:numberofdata},~\ref{lemma:data_innerproducts},~\ref{lm: F-yi}
~and~\ref{lm: Fyi}.

\begin{lemma}\label{lemma:numberofdata}
    Suppose that $\delta > 0$ and $n\geq\Omega (\log(1/\delta))$
    Then with probability at least $1 - \delta$, 
    \begin{align*}
        |\{i \in [n]: y_i = 1\}|,~ |\{i \in [n]:y_i = -1\}| \geq n/4.
    \end{align*}
\end{lemma}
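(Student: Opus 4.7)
The plan is to prove this by a standard concentration argument on i.i.d. Rademacher labels. Since each $y_i$ is independently $\pm 1$ with equal probability, the count $N_+ := |\{i \in [n] : y_i = 1\}| = \sum_{i=1}^n \mathds{1}\{y_i = 1\}$ is a sum of i.i.d. Bernoulli$(1/2)$ random variables with mean $n/2$. Its complement $N_- := n - N_+$ has the same distribution.

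First I would apply Hoeffding's inequality to $N_+$ to obtain
\[
\PP\bigl(|N_+ - n/2| \geq n/4\bigr) \leq 2\exp(-n/8).
\]
The event $\{N_+ < n/4\}$ is contained in $\{|N_+ - n/2| \geq n/4\}$, so $\PP(N_+ < n/4) \leq 2\exp(-n/8)$. An identical bound holds for $N_-$.

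Next I would take a union bound: $\PP(N_+ < n/4 \text{ or } N_- < n/4) \leq 4\exp(-n/8)$. To make this at most $\delta$, it suffices that $n \geq 8\log(4/\delta)$, which is exactly what the hypothesis $n = \Omega(\log(1/\delta))$ guarantees (with a suitable choice of the hidden constant). Taking complements yields the desired conclusion.

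There is no real obstacle here — this is a one-line Hoeffding argument plus a union bound, and the lemma is used downstream merely to guarantee that both classes are represented in roughly balanced proportion in the labeled sample. The only thing to keep track of is the constant in the $\Omega(\cdot)$ in the hypothesis on $n$, which must be chosen as at least $8$ (or whatever constant is needed so that $4\exp(-n/8) \leq \delta$).
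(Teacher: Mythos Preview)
Your proposal is correct and matches the paper's approach: both apply Hoeffding's inequality to the Bernoulli sum $\sum_{i=1}^n \ind\{y_i=1\}$ and then invoke the assumption $n=\Omega(\log(1/\delta))$ to absorb the deviation, followed by a union bound over the two classes. The only cosmetic difference is that the paper phrases Hoeffding in the ``deviation $\leq \sqrt{2n\log(4/\delta)}$ with probability $1-\delta/2$'' form, whereas you fix the deviation at $n/4$ and bound the failure probability directly; these are equivalent.
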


The following Lemma~\ref{lemma:data_innerproducts} provides an estimate of the norm of $\bxi_i$ and a bound of their inner products between each other.
\begin{lemma}\label{lemma:data_innerproducts}
    Suppose that $\delta > 0$ and $d = \Omega( \log(4n / \delta) ) $. Then with probability at least $1 - \delta$, 
    \begin{align*}
        &\sigma_p^2 d/2 \leq   \|\bxi_i\|_2^2  \leq 3\sigma_p^2 d/2,\\
        & |\la \bxi_i, \bxi_{i'} \ra| \leq 2\sigma_p^2 \cdot \sqrt{d \log(4n^2 / \delta)},
\end{align*}
for all $i,i'\in [n]$.
\end{lemma}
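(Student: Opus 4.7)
The plan is to derive both bounds from standard Gaussian and chi-squared concentration, followed by union bounds over the index pairs. The key structural observation is that each $\bxi_i$ is a centered Gaussian vector with covariance $\sigma_p^2 \Pb_{\bmu}^\perp$, where $\Pb_{\bmu}^\perp = \Ib - \bmu\bmu^\top/\|\bmu\|_2^2$ is a rank-$(d-1)$ orthogonal projector. Consequently $\|\bxi_i\|_2^2 / \sigma_p^2$ is distributed as $\chi^2_{d-1}$, and the cross-inner-products $\la \bxi_i, \bxi_{i'}\ra$ are, conditional on one of the noise vectors, scalar Gaussians whose variance is controlled by the other's squared norm.

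For the norm bound, I would apply the standard Bernstein-type inequality for sub-exponential $\chi^2$ variables: with probability at least $1 - \delta/(2n)$, it holds that $\bigl|\|\bxi_i\|_2^2 - \sigma_p^2(d-1)\bigr| \leq C\sigma_p^2 \sqrt{(d-1)\log(4n/\delta)}$ for some absolute constant $C$. Under the assumption $d = \Omega(\log(4n/\delta))$ with a sufficiently large hidden constant, the deviation term is at most $\sigma_p^2 d / 4$, which, combined with $\sigma_p^2(d-1) \in [\sigma_p^2 d - \sigma_p^2, \sigma_p^2 d]$, gives $\sigma_p^2 d / 2 \leq \|\bxi_i\|_2^2 \leq 3\sigma_p^2 d/2$ for this fixed $i$. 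A union bound over $i \in [n]$ preserves probability at least $1 - \delta/2$.

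For the inner product bound, I would handle the $i \neq i'$ case by conditioning on $\bxi_{i'}$: because $\bxi_i$ is independent of $\bxi_{i'}$ and centered Gaussian with covariance $\sigma_p^2 \Pb_{\bmu}^\perp$, the scalar $\la \bxi_i, \bxi_{i'}\ra$ is Gaussian with mean zero and variance $\sigma_p^2 \|\Pb_{\bmu}^\perp \bxi_{i'}\|_2^2 \leq \sigma_p^2 \|\bxi_{i'}\|_2^2 \leq (3/2)\sigma_p^4 d$ on the good event from the previous step. The standard Gaussian tail bound then yields $|\la \bxi_i, \bxi_{i'}\ra| \leq 2\sigma_p^2 \sqrt{d\log(4n^2/\delta)}$ with probability at least $1 - \delta/(2n^2)$. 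A union bound over the at most $n^2$ ordered pairs $(i,i')$ with $i\neq i'$ controls all off-diagonal terms; the diagonal $i = i'$ case is already covered by the norm bound (and in fact gives a strictly larger quantity of order $\sigma_p^2 d$, which is dominated by the claimed bound once $d = \Omega(\log(4n^2/\delta))$).

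The main obstacle, such as it is, is purely bookkeeping: making sure the hidden constant in $d = \Omega(\log(4n/\delta))$ is large enough that the Bernstein deviation term is absorbed into $\sigma_p^2 d/2$, and that the conditioning argument for the inner product is carried out on the intersection of good events so that the variance estimate used in the Gaussian tail bound is valid. Combining the two events by a final union bound gives the stated failure probability $\delta$.
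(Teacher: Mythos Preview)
Your approach is essentially the paper's: Bernstein-type concentration for the norm, a tail bound for the off-diagonal inner products (the paper invokes Bernstein directly rather than conditioning and using the scalar Gaussian tail, but the two are interchangeable here), then union bounds with the same failure-probability budget.

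One correction is needed in your parenthetical about the diagonal case. You write that $\|\bxi_i\|_2^2 \sim \sigma_p^2 d$ ``is dominated by the claimed bound once $d = \Omega(\log(4n^2/\delta))$,'' but the inequality goes the other way: under that assumption $\sigma_p^2 d \gg 2\sigma_p^2\sqrt{d\log(4n^2/\delta)}$, so the second display is false at $i=i'$. The paper's own proof handles only $i \neq i'$ for the inner-product bound, and every use of the lemma in the paper invokes the second line only for distinct indices; you should simply state the off-diagonal restriction rather than try to absorb the diagonal into it.
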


\begin{lemma}\label{lm: F-yi}
Under Condition~\ref{con:pre-train_fine_tune}
    , suppose \eqref{eq:gammaC.2}, \eqref{eq:gu0001} and \eqref{eq:gu0002} hold at iteration $t$. Then 
    \begin{align*}
    \la \wb_{j,r}^{(t)}, y_{i}\bmu \ra &\leq  \max\limits_{j,r}\{0,-\gamma_{j,r}^{(0)}\}, \\
    \la \wb_{j,r}^{(t)}, \bxi_{i} \ra &\leq  32nm^{\frac{1}{q}}\sqrt{\frac{\log(4n^{2}/\delta)}{d}} \cdot\log(T^{\ast}),
    \end{align*}
    for all $r\in [m]$ and $j \not= y_{i}$.
    Since by Assumption \ref{Ass:ini},~$\max\limits_{j,r}\{-\gamma_{j,r}^{(0)}\} \leq C_0$, we further have that $F_{j}(\Wb_{j}^{(t)}, \xb_{i}) = O(1)$.
\end{lemma}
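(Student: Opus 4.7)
The plan is to combine the signal-noise decomposition~\eqref{eq:w_decomposition} with the orthogonality structure of the data model: $\wb_{jr}^\perp$ is orthogonal to both $\bmu$ and every $\bxi_i$ by construction, and each noise patch $\bxi_i$ is perpendicular to $\bmu$ since it is drawn from $N(\mathbf{0},\sigma_p^2(\Ib-\bmu\bmu^\top/\|\bmu\|_2^2))$. These orthogonalities collapse the decomposition into only a few nontrivial terms when projected onto $\bmu$ or $\bxi_i$, reducing everything to algebraic manipulations on the coefficients $\gamma$, $\overline\rho$, $\underline\rho$.

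For the first bound, the orthogonalities immediately give
\begin{align*}
    \la\wb_{j,r}^{(t)},y_i\bmu\ra = j\,y_i\,\gamma_{j,r}^{(t)} = -\gamma_{j,r}^{(t)},
\end{align*}
since $j\neq y_i$. I then observe that $\gamma_{j,r}^{(t)}$ is monotonically non-decreasing along the trajectory: from~\eqref{eq:update_gamma1},
\begin{align*}
    \gamma_{j,r}^{(t+1)}-\gamma_{j,r}^{(t)} = -\frac{\eta\|\bmu\|_2^2}{nm}\sum_{i=1}^n \ell_i'^{(t)}\sigma'(jy_i\,\gamma_{j,r}^{(t)})\;\geq\;0,
\end{align*}
because $\ell'(\cdot)<0$ for logistic loss and $\sigma'(\cdot)\geq 0$ for $\mathrm{ReLU}^q$. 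Hence $\gamma_{j,r}^{(t)}\geq \gamma_{j,r}^{(0)}$, and so $-\gamma_{j,r}^{(t)}\leq \max\{0,-\gamma_{j,r}^{(t)}\}\leq \max\{0,-\gamma_{j,r}^{(0)}\}$, which yields the first claim after taking a maximum over $(j,r)$.

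For the second bound, after using $\la\wb_{jr}^\perp,\bxi_i\ra=0$ and $\la\bmu,\bxi_i\ra=0$, the decomposition reduces to
\begin{align*}
    \la\wb_{j,r}^{(t)},\bxi_i\ra = \overline\rho_{j,r,i}^{(t)} + \underline\rho_{j,r,i}^{(t)} + \sum_{i'\neq i}\frac{\overline\rho_{j,r,i'}^{(t)}+\underline\rho_{j,r,i'}^{(t)}}{\|\bxi_{i'}\|_2^2}\,\la\bxi_{i'},\bxi_i\ra.
\end{align*}
The diagonal part is controlled by two structural facts specific to $j\neq y_i$: by~\eqref{eq:update_overrho1} the indicator $\ind(y_i=j)$ vanishes so $\overline\rho_{j,r,i}^{(t)}=\overline\rho_{j,r,i}^{(0)}$, which is bounded by Assumption~\ref{Ass:ini}; and $\underline\rho_{j,r,i}^{(t)}\leq 0$ contributes nonpositively. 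For the cross terms I invoke Lemma~\ref{lemma:data_innerproducts}, which gives $|\la\bxi_{i'},\bxi_i\ra|/\|\bxi_{i'}\|_2^2\leq 4\sqrt{\log(4n^2/\delta)/d}$, and then plug in the induction hypotheses \eqref{eq:gu0001} and \eqref{eq:gu0002} to bound $\overline\rho_{j,r,i'}^{(t)}$ and $|\underline\rho_{j,r,i'}^{(t)}|$.

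The main obstacle is the cross-index sum involving $\underline\rho$: since \eqref{eq:gu0002} already carries a factor of $n\sqrt{\log(4n^2/\delta)/d}$, summing over $i'\neq i$ collects an extra $n$, giving a contribution of order $n^2\log(4n^2/\delta)/d$, which must be shown to be lower-order than the target scale $n\sqrt{\log(4n^2/\delta)/d}$. This is precisely where the overparameterization assumption from Condition~\ref{con:pre-train_fine_tune} enters, forcing $n\sqrt{\log(4n^2/\delta)/d}\ll 1$ and making that term negligible. The $\overline\rho$ cross-sum then supplies the dominant $16nm^{1/q}\sqrt{\log(4n^2/\delta)/d}\log(T^\ast)$ contribution; absorbing the $\overline\rho_{j,r,i}^{(0)}$ diagonal piece and the $\underline\rho$ cross-sum into the constant gives the stated $32nm^{1/q}\sqrt{\log(4n^2/\delta)/d}\log(T^\ast)$. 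Finally, the claim $F_j(\Wb_j^{(t)},\xb_i)=O(1)$ follows by substituting these two bounds into~\eqref{eq:CNN}, using $C_0=O(1)$ from Assumption~\ref{Ass:ini} together with the fact that $\big(nm^{1/q}\sqrt{\log(4n^2/\delta)/d}\log(T^\ast)\big)^q = O(1)$ under the dimension assumption on $d$.
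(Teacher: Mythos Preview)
Your approach is essentially the paper's: use the decomposition and orthogonality to reduce $\langle\wb_{j,r}^{(t)},y_i\bmu\rangle$ to $-\gamma_{j,r}^{(t)}$ and then monotonicity of $\gamma$, and reduce $\langle\wb_{j,r}^{(t)},\bxi_i\rangle$ to a diagonal piece plus cross-terms controlled by Lemma~\ref{lemma:data_innerproducts}. The paper packages the second computation as Lemma~\ref{lm:oppositebound} and then quotes it, but the content is what you wrote.

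There is, however, one slip. You say the diagonal term $\overline\rho_{j,r,i}^{(0)}$ is ``bounded by Assumption~\ref{Ass:ini}'' and can be ``absorbed into the constant'' to reach $32nm^{1/q}\sqrt{\log(4n^2/\delta)/d}\,\log(T^\ast)$. But Assumption~\ref{Ass:ini} only gives $\overline\rho_{j,r,i}^{(0)}\le 4m^{1/q}\log(T^\ast)$, which is \emph{larger} than the target scale (precisely because $n\sqrt{\log(4n^2/\delta)/d}\ll 1$ under Condition~\ref{con:pre-train_fine_tune}); you cannot absorb it. The paper's Lemma~\ref{lm:oppositebound} avoids this by taking $\overline\rho_{j,r,i}^{(t)}=0$ when $j\neq y_i$ as the operative convention for the $(\overline\rho,\underline\rho)$ split, so the diagonal contribution is purely $\underline\rho_{j,r,i}^{(t)}\le 0$. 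If you prefer not to rely on that convention, you can instead cite the much tighter initialization bound $|\rho_{j,r,i}^{(0)}|\le \mathrm{SNR}^{2/(q-2)}/(16m^{2/(q-2)}n_0)$ from Theorem~\ref{thm:SimCLR_signal_noise}, which is genuinely negligible.

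A minor simplification: for the $\underline\rho$ cross-sum you used the sharp bound from \eqref{eq:gu0002} and then argued the resulting $n^2\log(4n^2/\delta)/d$ term is lower order. The paper instead uses the coarser endpoint $|\underline\rho_{j,r,i'}^{(t)}|\le 4m^{1/q}\log(T^\ast)$ from the same display, so both cross-sums contribute $16nm^{1/q}\sqrt{\log(4n^2/\delta)/d}\,\log(T^\ast)$ and the $32$ falls out directly---no extra overparameterization argument needed there.
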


\begin{lemma}\label{lm: Fyi}
    Under Condition~\ref{con:pre-train_fine_tune}, suppose \eqref{eq:gammaC.2}, \eqref{eq:gu0001} and \eqref{eq:gu0002} hold at iteration $t$. Then
    \begin{align*}
    \la \wb_{j,r}^{(t)}, y_{i}\bmu \ra &=  \gamma_{j,r}^{(t)}, \\
    \la \wb_{j,r}^{(t)}, \bxi_{i} \ra &\leq \overline{\rho}_{j,r,i}^{(t)} +  32nm^{\frac{1}{q}}\sqrt{\frac{\log(4n^{2}/\delta)}{d}} \cdot \log(T^{\ast})
    \end{align*}
    for all $r\in [m]$, $j = y$ and $i\in [n]$.
    If ~$\max\limits_{j,r,i}\{\gamma_{j,r}^{(t)}, \overline{\rho}_{j,r,i}^{(t)}\} = O(1)$, we further have that $F_{j}(\Wb_{j}^{(t)}, \xb_{i}) = O(1)$.
\end{lemma}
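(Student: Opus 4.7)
The plan is to derive both identities by substituting the signal-noise decomposition
\begin{align*}
\wb_{j,r}^{(t)} = \wb_{jr}^{\perp} + j\cdot\gamma_{j,r}^{(t)}\cdot \|\bmu\|_2^{-2}\bmu + \sum_{i'=1}^n \big(\overline{\rho}_{j,r,i'}^{(t)} + \underline{\rho}_{j,r,i'}^{(t)}\big)\cdot \|\bxi_{i'}\|_2^{-2}\bxi_{i'}
\end{align*}
into $\la \wb_{j,r}^{(t)}, y_i\bmu\ra$ and $\la \wb_{j,r}^{(t)}, \bxi_i\ra$, and exploiting two orthogonality facts baked into the data model: (i) $\wb_{jr}^{\perp}$ is perpendicular to $\bmu$ and to every $\bxi_{i'}$ by construction of the decomposition; (ii) $\la \bmu,\bxi_{i'}\ra = 0$ because each noise patch is drawn from $N(\mathbf{0},\sigma_p^2(\Ib-\bmu\bmu^{\top}\|\bmu\|_2^{-2}))$.

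For the first identity I would compute $\la \wb_{j,r}^{(t)}, y_i\bmu\ra$ termwise. The perpendicular term contributes $0$, the signal term contributes $jy_i\gamma_{j,r}^{(t)}$, which equals $\gamma_{j,r}^{(t)}$ exactly because we are in the case $j=y_i$, and every noise coefficient term vanishes by (ii). This gives the claimed equality with no error term. Nothing hard happens here beyond being careful that the sign $jy_i=+1$ is used only once.

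For the second identity I expand $\la \wb_{j,r}^{(t)}, \bxi_i\ra$ similarly: the perpendicular and signal contributions vanish (by (i) and (ii) respectively), the $i'=i$ term in the noise sum collapses to $\overline{\rho}_{j,r,i}^{(t)}+\underline{\rho}_{j,r,i}^{(t)}$ (using $\la\bxi_i,\bxi_i\ra=\|\bxi_i\|_2^2$), and the $i'\neq i$ cross terms need to be bounded. Since $\underline{\rho}_{j,r,i}^{(t)}\leq 0$, I drop it for an upper bound, which produces the leading $\overline{\rho}_{j,r,i}^{(t)}$. For the cross terms I plug in the induction hypotheses \eqref{eq:gu0001}–\eqref{eq:gu0002} which control $|\overline{\rho}_{j,r,i'}^{(t)}|,|\underline{\rho}_{j,r,i'}^{(t)}|$ by $O(m^{1/q}\log(T^{\ast}))$ up to constants, together with Lemma~\ref{lemma:data_innerproducts}, which yields $|\la\bxi_{i'},\bxi_i\ra|/\|\bxi_{i'}\|_2^2 \leq 4\sqrt{\log(4n^2/\delta)/d}$. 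Summing over the $n-1$ cross indices gives the stated $32nm^{1/q}\sqrt{\log(4n^2/\delta)/d}\cdot \log(T^{\ast})$ bound after absorbing the numerical constants from the two $\rho$-types.

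Finally, for the concluding $F_j = O(1)$ claim, I would substitute the two bounds just obtained into
\begin{align*}
F_j(\Wb_j^{(t)}, \xb_i) = \frac{1}{m}\sum_{r=1}^m\big[\sigma(\la\wb_{j,r}^{(t)}, y_i\bmu\ra) + \sigma(\la\wb_{j,r}^{(t)},\bxi_i\ra)\big],
\end{align*}
and use the hypothesis $\max_{j,r,i}\{\gamma_{j,r}^{(t)},\overline{\rho}_{j,r,i}^{(t)}\}=O(1)$ together with Condition~\ref{con:pre-train_fine_tune} on $d$ (which forces the $nm^{1/q}\sqrt{\log(4n^2/\delta)/d}$ perturbation to be $o(1)$). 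Since $\sigma(z)=(\max\{0,z\})^q$ acts on arguments of constant order, both summands are $O(1)$ uniformly in $r$, and averaging preserves the $O(1)$ bound. The only genuine nuance is the bookkeeping in step three: verifying that the induction hypotheses \eqref{eq:gammaC.2}, \eqref{eq:gu0001}, \eqref{eq:gu0002} really do provide uniform coefficient bounds of the shape $O(m^{1/q}\log(T^{\ast}))$ on both the $\overline{\rho}$ and the (negative) $\underline{\rho}$ side — this is what converts the cross-term estimate into the clean $32nm^{1/q}\sqrt{\log(4n^2/\delta)/d}\cdot\log(T^{\ast})$ expression appearing in the statement.
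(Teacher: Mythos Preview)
Your proposal is correct and follows essentially the same approach as the paper. The paper simply packages your first two steps into auxiliary lemmas (Lemma~\ref{lm: mubound} for $\la\wb_{j,r}^{(t)},\bmu\ra = j\gamma_{j,r}^{(t)}$ and Lemma~\ref{lm:oppositebound} for the $\la\wb_{j,r}^{(t)},\bxi_i\ra$ bound) and then cites them, whereas you inline the identical decomposition-plus-orthogonality argument; the final $F_j=O(1)$ step is handled the same way in both.
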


Based on the above Lemma~\ref{lemma:numberofdata},~\ref{lemma:data_innerproducts},~\ref{lm: F-yi}
~and~\ref{lm: Fyi}, we could prove the Lemma~\ref{lemma:phase1_main} now.
%
\begin{proof}[Proof of Lemma~\ref{lemma:phase1_main}]
    Let
    \begin{align}
        T_1^{+} = \frac{1}{\frac{2\eta q}{nm}\sigma_p^2d\cdot 8^{q-1} \rho_{0}^{q-2} }. \label{eq:T1upper}
    \end{align}
    We first prove the second conclusion \eqref{eq:phase1.2}. 
    Define $\Psi^{(t)} = \max_{j,r,i} |\rho_{j,r,i}^{(t)}|=  \max_{j,r,i}\{ \overline{\rho}_{j,r,i}^{(t)},  -\underline{\rho}_{j,r,i}^{(t)}\} $. 
    We use induction to show that 
    \begin{align}
        \Psi^{(t)} \leq 2\rho_{0}\label{eq:Psi_induction}
    \end{align}
    for all $0 \leq t \leq T_{1}^{+}$. By definition, clearly we have $\Psi^{(0)} = \rho_{0}$. Now suppose that there exists some $\tilde{T} \leq T_1^+$ such that \eqref{eq:Psi_induction} holds for $0 < t \leq \tilde{T}-1$. 
    Then by \eqref{eq:update_decompose_final} we have
    \begin{align*}
        \Psi^{(t+1)} &\leq \Psi^{(t)} + \max_{j,r,i}\bigg\{\frac{\eta}{nm} \cdot |\ell_i'^{(t)}|\cdot  \sigma'\Bigg(\sum_{ i'= 1 }^n \Psi^{(t)} \cdot \frac{ |\la \bxi_{i'}, \bxi_i \ra|}{ \| \bxi_{i'} \|_2^2} + \sum_{ i' = 1}^n \Psi^{(t)} \cdot \frac{|\la \bxi_{i'}, \bxi_i \ra|}{ \| \bxi_{i'} \|_2^2} \Bigg)\cdot \| \bxi_{i} \|_2^2 \bigg\} \\
        &\leq \Psi^{(t)} + \max_{j,r,i}\bigg\{\frac{\eta}{nm} \cdot  \sigma'\Bigg(2\cdot \sum_{ i'= 1 }^n \Psi^{(t)} \cdot \frac{ |\la \bxi_{i'}, \bxi_i \ra| }{ \| \bxi_{i'} \|_2^2}  \Bigg)\cdot \| \bxi_{i} \|_2^2\bigg\} \\
        &= \Psi^{(t)} + \max_{j,r,i}\bigg\{\frac{\eta}{nm} \cdot  \sigma'\Bigg(2\Psi^{(t)}  + 2\cdot \sum_{ i' \neq i }^n \Psi^{(t)} \cdot \frac{ |\la \bxi_{i'}, \bxi_i \ra| }{ \| \bxi_{i'} \|_2^2} \Bigg)\cdot \| \bxi_{i} \|_2^2\bigg\} \\
        &\leq \Psi^{(t)} + \frac{\eta q}{nm} \cdot  \Bigg[\Bigg( 2 + \frac{4n \sigma_p^2 \cdot \sqrt{d \log(4n^2 / \delta)} }{ \sigma_p^2 d /2} \Bigg) \cdot \Psi^{(t)}  \Bigg]^{q-1}\cdot 2 \sigma_p^2 d\\
        &\leq \Psi^{(t)} + \frac{\eta q}{nm} \cdot  \big( 4 \Psi^{(t)}  \big)^{q-1}\cdot 2 \sigma_p^2 d\\
        &\leq \Psi^{(t)} + \frac{\eta q}{nm} \cdot \big(8 \rho_{0} \big)^{q-1} \cdot 2 \sigma_p^2 d,
     \end{align*}
    where the second inequality is by $|\ell_i'^{(t)}| \leq 1$, 
    the third inequality is due to Lemma~\ref{lemma:data_innerproducts}, 
    the fourth inequality follows by the condition that $d \geq 16 n^2 \log (4n^2/\delta)$ in Condition~\ref{con:pre-train_fine_tune}, 
    and the last inequality follows by the induction hypothesis \eqref{eq:Psi_induction}. Taking a telescoping sum over $t=0,1,\ldots, \tilde{T}-1$ then gives
    \begin{align*}
        \Psi^{(\tilde{T})} 
        &\leq  \Psi^{(0)} +\tilde{T} \frac{\eta q}{nm} \cdot \big(8 \rho_{0} \big)^{q-1} \cdot 2 \sigma_p^2 d\\
        &\leq \rho_{0}+ T_{1}^+ \frac{\eta q}{nm} \cdot \big(8 \rho_{0} \big)^{q-1} \cdot 2 \sigma_p^2 d\\
         &\leq 2 \rho_{0},
    \end{align*}
    where the second inequality follows by $\tilde{T} \leq T_1^+$ in our induction hypothesis. Therefore, by induction, we prove that $ \Psi^{(t)} \leq 2 \rho_{0}$ for all $t \leq T_{1}^{+}$. 

To prove the first conclusion \eqref{eq:phase1.1},
without loss of generality, consider $j = 1$ first (similar ideas for the proof of $j=-1$). Denote by $T_{1,1}$ 
the last time for $t$ in  $[0, T_1^{+}]$ satisfying that $\max_{r}\gamma_{1,r}^{(t)}\leq 2$. Then for $t \leq T_{1,1}$, 
$\max_{j,r,i}\{ |\rho_{j,r,i}^{(t)}|\} = O(\rho_{0}\sigma_p^2d)= O(1)$ and $\max_{r}\gamma_{1,r}^{(t)} \leq 2$. 
Therefore, by Lemma~\ref{lm: F-yi}
~and~\ref{lm: Fyi}
, we know that $F_{-1}(\Wb_{-1}^{(t)},\xb_{i}), F_{+1}(\Wb_{+1}^{(t)},\xb_{i}) = O(1)$ for all $i$ with $y_{i} = 1$. Thus, there exists a positive constant $C_{1}$ such that $-\ell'^{(t)}_{i} \geq C_{1}$ for all $i$ with $y_{i} = 1$.

Since \eqref{eq:phase1.1} focuses on the $\max_{r}\gamma_{1,r}^{(t)}$, we only need to consider the training dynamic of $\max_{r}\gamma_{1,r}^{(t)}$, which is positive at time $t=0$ by Assumption \ref{Ass:inigamma}. 
By \eqref{eq:update_decompose_final}, for positive $\gamma_{1,r}^{(t)}$ and $t\leq T_{1,1}$ we have
\begin{align*}
    \gamma_{1,r}^{(t+1)} &= \gamma_{1,r}^{(t)} - \frac{\eta}{nm} \cdot \sum_{i=1}^n \ell_i'^{(t)} \cdot \sigma'(y_{i}  \cdot \gamma_{1,r}^{(t)} )\cdot \|\bmu\|_{2}^{2}\\
    &\geq \gamma_{1,r}^{(t)} + \frac{C_{1}\eta}{nm} \cdot \sum_{y_i=1}  \sigma'(\gamma_{1,r}^{(t)} )\cdot \|\bmu\|_{2}^{2}.
\end{align*}
Denote  $A^{(t)} = \max_{r}\gamma_{1,r}^{(t)}$, $\gamma_0$ is defined in Assumption \ref{Ass:inigamma} with $\max_{r}\gamma_{1,r}^{(0)} \geq \gamma_0 \geq 0$. 
Then we have 
\begin{align*}
   A^{(t+1)}&\geq A^{(t)} + \frac{C_{1}\eta}{nm} \cdot \sum_{y_i=1} \sigma'(A^{(t)})\cdot \|\bmu\|_{2}^{2}\\
    &\geq A^{(t)} +\frac{C_{1} \eta q\|\bmu\|_{2}^{2}}{4m}  \big(A^{(t)}\big)^{q-1}\\
    &\geq \bigg[1 + \frac{C_{1} \eta q\|\bmu\|_{2}^{2}}{4m}\big(A^{(0)}\big)^{q-2}\bigg] A^{(t)}\\
    &\geq \bigg(1 + \frac{C_{1}\eta q\gamma_{0}^{q-2}\|\bmu\|_{2}^{2}}{4m} \bigg) A^{(t)},
\end{align*}
where the second inequality is by the lower bound on the number of positive data  in Lemma~\ref{lemma:numberofdata}
, the third inequality is due to the fact that $A^{(t)}$ is an increasing sequence, and the last inequality follows by 
$A^{(0)} = \max_{r} \la \wb_{1,r}^{(0)}, \bmu \ra \geq \gamma_0$. Therefore, the sequence $A^{(t)}$ will exponentially grow and we have that 
\begin{align*}
   A^{(t)}&\geq A^{(0)}\bigg(1 + \frac{C_{1}\eta q\gamma_{0}^{q-2}\|\bmu\|_{2}^{2}}{4m} \bigg)^{t} \geq 
   A^{(0)} \exp \bigg(\frac{C_{1}\eta q\gamma_{0}^{q-2}\|\bmu\|_{2}^{2}}{8m}t\bigg)\geq 
   \gamma_0 \exp\bigg(\frac{C_{1}\eta q\gamma_{0}^{q-2}\|\bmu\|_{2}^{2}}{8m}t\bigg),
\end{align*}
where the second inequality is due to the fact that $1+z \geq \exp(z/2)$ for $z \leq 2$ and our condition of $\eta\leq O(m q^{-1}\gamma_0^{-(q-2)}\|\bmu\|_{2}^{-2})$ 
 in Condition~\ref{con:pre-train_fine_tune}, and the last inequality follows by $A^{(0)} = \max_{r} \gamma_{1,r}^{(0)}$. 
 Therefore, $A^{(t)}= \max_{r} \gamma_{1,r}^{(t)}$ will reach $2$ within 
$$T_{1} = \frac{\log(2/\gamma_{0})8m}{C_{1}\eta q\gamma_{0}^{q-2}\|\bmu\|_{2}^{2}}$$
 iterations.

We can next verify the value of $T_1$ and  $ T_{1}^{+}$ follow the following relationship
\begin{align*}
T_{1}= \frac{\log(2/\gamma_{0})8m}{C_{1}\eta q\gamma_{0}^{q-2}\|\bmu\|_{2}^{2}} \leq 
\frac{1}{\frac{4\eta q}{nm}\sigma_p^2d\cdot 8^{q-1} \rho_{0}^{q-2} } =  T_{1}^{+}/2 ,
\end{align*}
where the inequality holds due to our SNR condition in \eqref{eq:explicit condition}. Therefore, by the definition of $T_{1,1}$, we have $T_{1,1} \leq T_{1} \leq T_{1}^{+}/2$, where we use the non-decreasing property of $\gamma$. The proof for $j=-1$ is similar, and we can prove that $\max_{r}\gamma_{-1,r}^{(T_{1,-1})} \geq 2$ while $T_{1,-1} \leq T_{1} \leq T_{1}^{+}/2$, which completes the proof.
\end{proof}

\subsubsection{Proof of Lemma~\ref{lm:distance1}}

Before proving the Lemma~\ref{lm:distance1}, we first show the following Proposition \ref{Prop:noise}, which shows that 
the coefficients $\gamma_{j,r}^{(t)}, \overline{\rho}_{j,r,i}^{(t)}, \underline{\rho}_{j,r,i}^{(t)}$ will stay a reasonable scale during the training period $0<t<T^{*}$.
%

\begin{proposition}\label{Prop:noise}
Under Condition~\ref{con:pre-train_fine_tune}, which indicates that $16n\sqrt{\frac{\log(4n^{2}/\delta)}{d}}\leq 0.5$, if Assumption \ref{Ass:ini} holds, then
for $0 \leq t\leq T^*$, we have that
    \begin{align}
    &-4m^{\frac{1}{q}}\log(T^{\ast}) \leq \gamma_{j,r}^{(0)}\leq \gamma_{j,r}^{(t)}\leq 4m^{\frac{1}{q}}\log(T^{\ast}),\label{eq:gammaC.2}\\
    &0\leq  \overline{\rho}_{j,r,i}^{(t)} \leq 4m^{\frac{1}{q}}\log(T^{\ast}),\label{eq:gu0001}\\
    &0\geq \underline{\rho}_{j,r,i}^{(t)} \geq - 64nm^{\frac{1}{q}}\sqrt{\frac{\log(4n^{2}/\delta)}{d}} \cdot\log(T^{\ast}) \geq -4m^{\frac{1}{q}}\log(T^{\ast})   \label{eq:gu0002},
    \end{align}
for all $r\in [m]$,  $j\in \{\pm 1\}$ and $i\in [n]$.
\end{proposition}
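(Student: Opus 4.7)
The plan is to establish the three bounds in Proposition~\ref{Prop:noise} by simultaneous induction on $t \in \{0, 1, \ldots, T^*\}$. The base case $t = 0$ is immediate from Assumption~\ref{Ass:ini}. For the inductive step, assume the three bounds hold at every $s \leq t$ and deduce them at $t+1$. Several of the inequalities fall out essentially for free from the sign structure of the updates. Since $\ell_i'^{(s)} \leq 0$ and $\sigma'(\cdot) \geq 0$, inspection of~\eqref{eq:update_decompose_final} shows that $\gamma_{j,r}^{(s)}$ and $\overline{\rho}_{j,r,i}^{(s)}$ are non-decreasing in $s$ while $\underline{\rho}_{j,r,i}^{(s)}$ is non-increasing. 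Combined with Assumption~\ref{Ass:ini}, this immediately gives $\gamma_{j,r}^{(t+1)} \geq \gamma_{j,r}^{(0)} \geq -4m^{1/q}\log(T^*)$, $\overline{\rho}_{j,r,i}^{(t+1)} \geq 0$, and $\underline{\rho}_{j,r,i}^{(t+1)} \leq 0$ with no further work.

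For the upper bounds on $\gamma_{j,r}^{(t+1)}$ and $\overline{\rho}_{j,r,i}^{(t+1)}$, the plan is a standard contradiction argument. Suppose some coefficient first crosses the threshold $4m^{1/q}\log(T^*)$ at a time $t_0 + 1$. Since the bound still holds at $t_0$, Lemma~\ref{lm: Fyi} applies and yields $y_i F(\Wb^{(t_0)}, \xb_i) = \Omega(\log(T^*))$ for the relevant index, which in turn forces $|\ell_i'^{(t_0)}| \leq 1/T^*$ via the logistic-loss tail. The one-step increment is then at most $\eta \cdot O(\|\bmu\|_2^2 + \sigma_p^2 d)/(nm T^*)$, so that even summing over all $T^*$ preceding iterations produces a cumulative change of order $\eta(\|\bmu\|_2^2 + \sigma_p^2 d)/(nm)$, which is $o(1)$ under the learning-rate condition in Condition~\ref{con:pre-train_fine_tune}. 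This contradicts a jump across a threshold of order $\log(T^*)$.

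The main obstacle is the lower bound on $\underline{\rho}_{j,r,i}^{(t+1)}$, which is the genuinely tight part of the argument and where the $\sqrt{\log(4n^2/\delta)/d}$ factor enters. The update to $\underline{\rho}_{j,r,i}$ is nonzero only when $y_i = -j$, and is triggered by $\sigma'(\langle \wb_{j,r}^{(s)}, \bxi_i \rangle)$. Using~\eqref{eq:w_decomposition} I decompose
\begin{equation*}
\langle \wb_{j,r}^{(s)}, \bxi_i \rangle = \underline{\rho}_{j,r,i}^{(s)} + \overline{\rho}_{j,r,i}^{(s)} + \sum_{i' \neq i} \rho_{j,r,i'}^{(s)} \frac{\langle \bxi_{i'}, \bxi_i \rangle}{\|\bxi_{i'}\|_2^2} + \langle \wb_{j,r}^{\perp}, \bxi_i \rangle.
\end{equation*}
When $y_i = -j$, the second term vanishes, the first term is non-positive and only suppresses the activation, and the last term is $O(\sigma_p \sqrt{d}/n)$ by $\|\wb_{j,r}^\perp\|_2 \leq 1/n$ and a Gaussian tail bound. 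The cross-talk term is the only substantive contributor: Lemma~\ref{lemma:data_innerproducts} forces each factor $|\langle \bxi_{i'}, \bxi_i\rangle| / \|\bxi_{i'}\|_2^2 \leq 4\sqrt{\log(4n^2/\delta)/d}$, so together with the inductive bounds on $\rho$ the positive part of the activation is at most $16 n m^{1/q} \sqrt{\log(4n^2/\delta)/d}\log(T^*)$.

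Feeding this estimate into the $\underline{\rho}$ update and using $\|\bxi_i\|_2^2 \leq 2\sigma_p^2 d$, each step changes $\underline{\rho}_{j,r,i}$ by at most $(\eta / nm) \cdot q \cdot \bigl[16 n m^{1/q}\sqrt{\log(4n^2/\delta)/d}\log(T^*)\bigr]^{q-1} \cdot 2\sigma_p^2 d$. A telescoping sum over the at most $T^*$ iterations, combined with the learning-rate condition $\eta \leq \tilde O((\sigma_p^2 d)^{-1})$ and the dimension condition $d = \tilde \Omega(n^4)$ in Condition~\ref{con:pre-train_fine_tune}, yields $|\underline{\rho}_{j,r,i}^{(t+1)}| \leq 64 n m^{1/q}\sqrt{\log(4n^2/\delta)/d}\log(T^*)$. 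The final inequality $\leq 4m^{1/q}\log(T^*)$ then follows from the standing hypothesis $16n\sqrt{\log(4n^2/\delta)/d} \leq 0.5$. The delicate point in executing the plan will be ensuring that the bound on the cross-talk remains valid throughout the induction even as the exponent $q-1$ amplifies the cross-talk contribution; this is handled precisely by the small-$\eta$ choice, which prevents the per-step increments from overshooting their allotted budget.
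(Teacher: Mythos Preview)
Your induction skeleton and the monotonicity-based easy directions are correct, but both substantive cases contain genuine gaps.

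For the upper bounds on $\gamma_{j,r}^{(t)}$ and $\overline{\rho}_{j,r,i}^{(t)}$, the first-crossing-time contradiction does not close. Crossing the threshold $4m^{1/q}\log(T^*)$ does not require a single-step jump of order $\log(T^*)$; the coefficient can creep up. And at the alleged first crossing time $t_0$ you only know the \emph{upper} bound $\overline{\rho}_{j,r,i}^{(t_0)}\leq 4m^{1/q}\log(T^*)$, from which Lemma~\ref{lm: Fyi} does not give $y_i f(\Wb^{(t_0)},\xb_i)=\Omega(\log(T^*))$; that conclusion needs a \emph{lower} bound on the coefficient. The paper repairs this with an intermediate threshold $2m^{1/q}\log(T^*)$: let $t_{j,r,i}$ be the last time $\overline{\rho}_{j,r,i}^{(t)}\leq 2m^{1/q}\log(T^*)$; bound the single step at $t_{j,r,i}$ crudely via $|\ell_i'|\leq 1$ (this is the term $I_1$, controlled by the $\eta$ condition alone); then for every later $s$ one has $\overline{\rho}_{j,r,i}^{(s)}>2m^{1/q}\log(T^*)$, hence $\langle\wb_{j,r}^{(s)},\bxi_i\rangle\geq m^{1/q}\log(T^*)$ by Lemma~\ref{lm:oppositebound}, and therefore $|\ell_i'^{(s)}|\leq e^{\tilde C_0}\exp(-\log(T^*)^q)$, which makes the tail $I_2$ summable over the remaining $\leq T^*$ steps. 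The same two-regime split is used for $\gamma_{j,r}^{(t)}$.

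For the lower bound on $\underline{\rho}_{j,r,i}^{(t)}$, the telescoping sum over $T^*$ iterations will not deliver the target: since $T^*=\eta^{-1}\mathrm{poly}(\cdot)$, the product $T^*\cdot(\eta/(nm))\cdot\sigma_p^2 d$ is a polynomial in the problem parameters and is not made $o(1)$ by $\eta=\tilde O((\sigma_p^2 d)^{-1})$ alone. The paper instead runs a \emph{one-step} case split that exploits the ReLU cutoff you mention but do not actually use. If $\underline{\rho}_{j,r,i}^{(\tilde T-1)}\leq -32nm^{1/q}\sqrt{\log(4n^2/\delta)/d}\,\log(T^*)$, then Lemma~\ref{lm:oppositebound} forces $\langle\wb_{j,r}^{(\tilde T-1)},\bxi_i\rangle\leq 0$, so $\sigma'=0$ and $\underline{\rho}$ is frozen at that step. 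Otherwise $\underline{\rho}_{j,r,i}^{(\tilde T-1)}\in[-32nm^{1/q}\sqrt{\log(4n^2/\delta)/d}\,\log(T^*),0]$, and a single-step bound using $|\ell_i'|\leq 1$, $\sigma'(z)\leq qz$ for $z\in[0,1]$, and $\eta=O(nm/(q\sigma_p^2 d))$ shows the decrement is at most $32nm^{1/q}\sqrt{\log(4n^2/\delta)/d}\,\log(T^*)$. No summation over time is needed. (A minor point: the term $\langle\wb_{j,r}^\perp,\bxi_i\rangle$ you estimate is in fact exactly zero, since $\wb_{j,r}^\perp$ is by definition orthogonal to every training $\bxi_i$.)
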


Then, based on Proposition~\ref{Prop:noise} and Lemma~\ref{lemma:phase1_main}, we could prove the following Lemma~\ref{lm:distance1}.
\begin{proof}[Proof of Lemma~\ref{lm:distance1}] 
    We have
    \begin{align*}
    \|\Wb^{(T_{1})} - \Wb^{*}\|_{F} 
    &\leq\sum_{j,r} \frac{|\gamma_{j,r}^{(T_{1})}|}{\|\bmu\|_{2}} + \sum_{j,r,i}\frac{|\overline{\rho}_{j,r,i}^{(T_{1})}|}{\|\bxi_{i}\|_{2}} + \sum_{j,r,i}\frac{|\underline{\rho}_{j,r,i}^{(T_{1})}|}{\|\bxi_{i}\|_{2}} + O(m^{3/2}\log(1/\epsilon))\|\bmu\|_{2}^{-1}\\
    &\leq O(m\|\bmu\|^{-1}) + O(nm\rho_{0} (\sigma_p \sqrt{d})^{-1}) +  O(m^{3/2}\log(1/\epsilon))\|\bmu\|_{2}^{-1}\\
    &\leq \tilde{O}(m^{3/2}\|\bmu\|_{2}^{-1})+O(nm\rho_{0} (\sigma_p \sqrt{d})^{-1}),
    \end{align*}
    where the first inequality is by our decomposition of $\Wb^{(T_{1})}$ and the definition of $\Wb^{*}$, 
    the second inequality is by Proposition~\ref{Prop:noise} and Lemma~\ref{lemma:phase1_main}.
\end{proof}

\subsubsection{Proof of Lemma~\ref{thm:signal_proof}}

In this section, Lemma~\ref{lm: gradient upbound} is presented first, then  Lemma~\ref{lm: Gradient Stable} and~\ref{lemma:signal_stage2_homogeneity} are proved before finally  proving Lemma~\ref{thm:signal_proof}.
Based on Proposition~\ref{Prop:noise}, the following Lemma~\ref{lm: gradient upbound} introduces some important properties of the training loss function for $0 \leq t\leq T^{*}$.
\begin{lemma}\label{lm: gradient upbound}
    Under Condition~\ref{con:pre-train_fine_tune}, for $0 \leq t\leq T^{*}$, the following result holds,
    \begin{align*}
    \|\nabla L_{S}(\Wb^{(t)})\|_{F}^{2} \leq  O(\max\{\|\bmu\|_{2}^{2}, \sigma_{p}^{2}d\}) L_{S}(\Wb^{(t)}). 
    \end{align*}
\end{lemma}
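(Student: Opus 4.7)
The plan is to bound each per-filter gradient $\|\nabla_{\wb_{j,r}} L_S(\Wb^{(t)})\|_2^2$ separately and then sum over $j \in \{\pm 1\}$ and $r \in [m]$. First I would write the gradient explicitly using the CNN structure~\eqref{eq:CNN} together with Definition~\ref{def:trainingdata}: since each $\xb_i$ has one signal patch $y_i\bmu$ and one noise patch $\bxi_i$, a direct chain-rule computation gives
\begin{align*}
\nabla_{\wb_{j,r}} L_S(\Wb^{(t)}) = \frac{j}{nm}\sum_{i=1}^n \ell_i'^{(t)}\bigl[\sigma'(\la \wb_{j,r}^{(t)}, y_i\bmu\ra)\bmu + y_i\,\sigma'(\la \wb_{j,r}^{(t)}, \bxi_i\ra)\bxi_i\bigr].
\end{align*}
Applying Cauchy--Schwarz ($\|\sum_{i=1}^n v_i\|_2^2 \leq n\sum_{i=1}^n\|v_i\|_2^2$) together with $\|a+b\|_2^2 \leq 2\|a\|_2^2 + 2\|b\|_2^2$ then yields
\begin{align*}
\|\nabla_{\wb_{j,r}} L_S(\Wb^{(t)})\|_2^2 \leq \frac{2}{nm^2}\sum_{i=1}^n (\ell_i'^{(t)})^2\bigl[\sigma'(\la \wb_{j,r}^{(t)}, y_i\bmu\ra)^2\|\bmu\|_2^2 + \sigma'(\la \wb_{j,r}^{(t)}, \bxi_i\ra)^2\|\bxi_i\|_2^2\bigr].
\end{align*}

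Two structural ingredients are then used to convert the right-hand side into a multiple of $L_S(\Wb^{(t)})$. First, a standard property of the logistic loss $\ell(z)=\log(1+\exp(-z))$ is that $|\ell'(z)| \leq 1$ and $|\ell'(z)| \leq \ell(z)$ for every $z \in \RR$; the latter follows from $\log(1+x) \geq x/(1+x)$ for $x \geq 0$ in the case $z\geq 0$ and is immediate when $z<0$. Hence $(\ell_i'^{(t)})^2 \leq |\ell_i'^{(t)}|\cdot 1 \leq \ell_i^{(t)}$, where $\ell_i^{(t)} := \ell(y_i f(\Wb^{(t)}, \xb_i))$. Second, Proposition~\ref{Prop:noise} combined with Lemmas~\ref{lm: F-yi} and~\ref{lm: Fyi} guarantees that for all $0 \leq t \leq T^*$ the pre-activation inner products $|\la \wb_{j,r}^{(t)}, y_i\bmu\ra|$ and $|\la \wb_{j,r}^{(t)}, \bxi_i\ra|$ stay bounded by $\tilde{O}(1)$ quantities (polylogarithmic in the problem parameters), which in turn implies that $\sigma'(\cdot) = q(\max\{0,\cdot\})^{q-1}$ evaluated at these inner products is $\tilde{O}(1)$.

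Substituting these bounds, together with $\|\bxi_i\|_2^2 \leq \tfrac{3}{2}\sigma_p^2 d$ from Lemma~\ref{lemma:data_innerproducts}, yields
\begin{align*}
\|\nabla_{\wb_{j,r}} L_S(\Wb^{(t)})\|_2^2 \leq \frac{\tilde{O}\bigl(\max\{\|\bmu\|_2^2,\sigma_p^2 d\}\bigr)}{nm^2}\sum_{i=1}^n \ell_i^{(t)} = \frac{\tilde{O}\bigl(\max\{\|\bmu\|_2^2,\sigma_p^2 d\}\bigr)}{m^2}\,L_S(\Wb^{(t)}).
\end{align*}
Summing over the $2m$ filters and absorbing the remaining $1/m$ factor and polylogarithmic terms into the $O(\cdot)$ of the statement concludes the proof. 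The main (and essentially only) obstacle is the uniform $\tilde{O}(1)$ control on the pre-activation inner products throughout training, which is precisely the content of Proposition~\ref{Prop:noise} and its companion Lemmas~\ref{lm: F-yi} and~\ref{lm: Fyi}; once those are in hand, the computation above is routine, with the only minor subtlety being the elementary logistic-loss inequality $|\ell'|\leq\ell$.
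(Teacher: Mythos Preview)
Your decoupling strategy contains a genuine gap. The claim that the pre-activation inner products are ``$\tilde O(1)$ quantities (polylogarithmic in the problem parameters)'' is not what Proposition~\ref{Prop:noise} delivers: it only gives $|\gamma_{j,r}^{(t)}|,\ \overline{\rho}_{j,r,i}^{(t)}\le 4m^{1/q}\log(T^\ast)$, and Condition~\ref{con:pre-train_fine_tune} places no upper bound on $m$. Hence $\sigma'(\langle\wb_{j,r}^{(t)},y_i\bmu\rangle)$ and $\sigma'(\langle\wb_{j,r}^{(t)},\bxi_i\rangle)$ can be as large as $q\bigl(4m^{1/q}\log(T^\ast)\bigr)^{q-1}=\tilde O(m^{(q-1)/q})$. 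After squaring, summing over the $2m$ filters, and dividing by $m^2$, your bound carries an extra factor $m^{(q-2)/q}$, which for $q>2$ is not absorbable into the $O(\cdot)$ of the statement.

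The underlying issue is that by separating $(\ell_i'^{(t)})^2\le \ell_i^{(t)}$ from the size of $\sigma'$, you lose the crucial correlation: when a pre-activation (and hence $\sigma'$) is large, the margin $y_if(\Wb^{(t)},\xb_i)$ is large and $|\ell_i'^{(t)}|$ is exponentially small. The paper keeps these coupled. It first bounds, for each sample, the product $-\ell_i'^{(t)}\,\|\nabla f(\Wb^{(t)},\xb_i)\|_F^2$ directly: via Jensen it shows $\|\nabla f(\Wb^{(t)},\xb_i)\|_F\le 2q\bigl(F_{y_i}^{(q-1)/q}+F_{-y_i}^{(q-1)/q}\bigr)\max\{\|\bmu\|_2,2\sigma_p\sqrt d\}$, uses $F_{-y_i}=O(1)$ from Lemma~\ref{lm: F-yi}, and then exploits the exponential tail of $-\ell'$ to show $\max_{A\ge 0}\bigl[-\ell'(A-O(1))\bigr]\bigl(A^{(q-1)/q}+O(1)\bigr)^2=O(1)$. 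Only after this per-sample product bound does the paper combine samples (Cauchy--Schwarz and $-\ell'\le\ell$) to reach the claimed inequality. Your argument can be repaired by inserting exactly this coupling step rather than bounding $\sigma'$ uniformly.
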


\begin{lemma}\label{lm: Gradient Stable}
    Under the same conditions as Theorem~\ref{thm:signal_learning_main},  we have that 
    $y_{i}\la \nabla f(\Wb^{(t)}, \xb_{i}), \Wb^{*} \ra \geq q^2 2^q  \log(2q/\epsilon)$ for all $i \in [n]$ and $ T_{1} \leq t\leq T^{*}$.
\end{lemma}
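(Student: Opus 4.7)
The plan is to expand the inner product $y_i \la \nabla f(\Wb^{(t)}, \xb_i), \Wb^*\ra$ directly, exploit the orthogonality built into the construction of $\Wb^*$, and then apply the first-stage conclusions to lower bound a single dominant term.

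First, I would compute the gradient. Since $f(\Wb, \xb) = F_{+1}(\Wb_{+1}, \xb) - F_{-1}(\Wb_{-1}, \xb)$ with $F_j(\Wb_j, \xb) = \frac{1}{m}\sum_r [\sigma(\la \wb_{j,r}, \xb^{(1)}\ra) + \sigma(\la \wb_{j,r}, \xb^{(2)}\ra)]$, a direct calculation gives
\begin{align*}
\la \nabla f(\Wb^{(t)}, \xb_i), \Wb^*\ra = \sum_{j,r} \frac{j}{m} \big[ \sigma'(\la \wb_{j,r}^{(t)}, y_i\bmu\ra) \la y_i\bmu, \wb_{j,r}^*\ra + \sigma'(\la \wb_{j,r}^{(t)}, \bxi_i\ra) \la \bxi_i, \wb_{j,r}^*\ra \big].
\end{align*}
Next, I would invoke the specific structure $\wb_{j,r}^* = \wb_{j,r}^\perp + 2qm\log(2q/\epsilon)\cdot j\cdot \bmu/\|\bmu\|_2^2$ together with the fact that $\wb_{j,r}^\perp$ is perpendicular to both $\bmu$ and all $\bxi_i$. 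This collapses the signal inner product to $\la y_i\bmu, \wb_{j,r}^*\ra = 2qm\log(2q/\epsilon)\cdot jy_i$ and kills the noise inner product, $\la \bxi_i, \wb_{j,r}^*\ra = 0$. Multiplying through by $y_i$ and using $j^2 = y_i^2 = 1$ then yields the clean identity
\begin{align*}
y_i \la \nabla f(\Wb^{(t)}, \xb_i), \Wb^*\ra = 2q\log(2q/\epsilon) \sum_{j,r} \sigma'(\la \wb_{j,r}^{(t)}, y_i\bmu\ra).
\end{align*}

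Since $\sigma'(z) = q(\max\{0,z\})^{q-1} \geq 0$, I can lower bound this sum by retaining only the contributions from $j = y_i$. By Lemma~\ref{lm: Fyi} (whose hypotheses are guaranteed to hold throughout $0 \le t \le T^*$ by Proposition~\ref{Prop:noise}), $\la \wb_{y_i,r}^{(t)}, y_i\bmu\ra = \gamma_{y_i,r}^{(t)}$. From Lemma~\ref{lemma:phase1_main} we know $\max_r \gamma_{y_i,r}^{(T_1)} \geq 2$, and as noted at the top of Section~\ref{sec:second stage}, $\gamma_{j,r}^{(t)}$ is non-decreasing for $t \geq T_1$, so there exists $r^*$ with $\gamma_{y_i,r^*}^{(t)} \geq 2$ for every $t \in [T_1, T^*]$. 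Keeping only this term gives
\begin{align*}
y_i \la \nabla f(\Wb^{(t)}, \xb_i), \Wb^*\ra \geq 2q\log(2q/\epsilon) \cdot q \cdot 2^{q-1} = q^2 2^q \log(2q/\epsilon),
\end{align*}
which is exactly the claimed bound.

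I do not anticipate any genuine obstacle here; the statement is essentially a structural consequence of how $\Wb^*$ was chosen (so that its signal direction is exactly aligned with what the gradient picks out for correctly classified signal patches, and its noise direction vanishes against the training noise patches). The only thing to be careful about is ensuring the hypotheses of Lemma~\ref{lm: Fyi} remain valid on the full interval $[T_1, T^*]$, which is precisely what Proposition~\ref{Prop:noise} provides, and verifying that discarding the $j \neq y_i$ terms is legal, which follows because $\sigma' \geq 0$ makes every omitted summand non-negative.
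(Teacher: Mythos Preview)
Your proposal is correct and follows essentially the same approach as the paper's own proof: expand the inner product, use the orthogonality of $\wb_{j,r}^\perp$ to $\bmu$ and $\bxi_i$ to reduce to $2q\log(2q/\epsilon)\sum_{j,r}\sigma'(\la \wb_{j,r}^{(t)}, y_i\bmu\ra)$, then keep only the single $j=y_i$ filter with $\gamma_{y_i,r}^{(t)}\geq 2$ guaranteed by the first-stage result and its monotonicity. Your explicit mention of Proposition~\ref{Prop:noise} to justify the hypotheses of Lemma~\ref{lm: Fyi} on $[T_1,T^*]$ is a nice touch that the paper leaves implicit.
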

\begin{proof}[Proof of Lemma~\ref{lm: Gradient Stable}] 
    Recall that $f(\Wb^{(t)}, \xb_{i}) = (1/m) {\sum_{j,r}}j\cdot \big[\sigma(\la\wb_{j,r}^{(t)},y_{i}\cdot\bmu\ra) + \sigma(\la\wb_{j,r}^{(t)}, \bxi_{i}\ra)\big] $ and the definition of $\Wb^{*}$ in \eqref{eq:W*}, 
    we have 
    \begin{align}
    y_{i}\la \nabla f(\Wb^{(t)}, \xb_{i}), \Wb^{*} \ra 
    &= \frac{1}{m}\sum_{j,r}\sigma'(\la \wb_{j,r}^{(t)}, y_{i}\bmu\ra)\la \bmu,  j\wb_{j,r}^{*}\ra + \frac{1}{m}\sum_{j,r}\sigma'(\la \wb_{j,r}^{(t)}, \bxi_{i}\ra)\la y_{i}\bxi_{i},  j\wb_{j,r}^{*}\ra \notag\\
    &= \frac{1}{m}\sum_{j,r}\sigma'(\la \wb_{j,r}^{(t)}, y_{i}\bmu\ra)2qm\log(2q/\epsilon) 
    %
    \label{eq: inner}
    \end{align}
    where the second equality holds because $\la \bmu, j\wb_{j,r}^{*}\ra = 2q m \log(2q/\epsilon)$,~$\la y_i\bxi_i, j\wb_{j,r}^{*}\ra =0$ by~$\la \bmu, j\wb_{jr}^{\perp}\ra =0$,~$\la y_i\bxi_i, j\wb_{jr}^{\perp}\ra =0$ in the definition of $\Wb^{*}$ \eqref{eq:W*}.

    Next we will give a bound for the inner-product term in \eqref{eq: inner}. By Lemma~\ref{lm: Fyi} and the initialization and non-decreasing property of $\gamma_{j,r}^{(t)}$ in Second Stage (Section \ref{sec:second stage}), we have that for $j = y_{i}$
    \begin{align}
    \max_{r} \la \wb_{j,r}^{(t)}, y_{i}\bmu \ra = \max_{r} \gamma_{j,r}^{(t)}  \geq 2 . \label{eq:GS1}
    \end{align}
    Plug \eqref{eq:GS1} into \eqref{eq: inner} can we obtain
    \begin{align*}
    y_{i}\la \nabla f(\Wb^{(t)}, \xb_{i}), \Wb^{*} \ra 
    &\geq q^2 2^q\log(2q/\epsilon) 
    \end{align*}
   This completes the proof.
\end{proof}

\begin{lemma}\label{lemma:signal_stage2_homogeneity}
    Under the same conditions as Theorem~\ref{thm:signal_learning_main}, we have that  
    \begin{align*}
    \|\Wb^{(t)} - \Wb^{*}\|_{F}^{2} - \|\Wb^{(t+1)} - \Wb^{*}\|_{F}^{2} \geq (2q-1)\eta L_{S}(\Wb^{(t)}) - \eta\epsilon
    \end{align*}
    for all $ T_{1} \leq t\leq T^{*}$.
\end{lemma}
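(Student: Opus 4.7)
The plan is to apply the standard quadratic identity for gradient descent iterates, namely
\begin{align*}
\|\Wb^{(t)} - \Wb^{*}\|_{F}^{2} - \|\Wb^{(t+1)} - \Wb^{*}\|_{F}^{2} = 2\eta\la \nabla L_{S}(\Wb^{(t)}), \Wb^{(t)} - \Wb^{*}\ra - \eta^{2}\|\nabla L_{S}(\Wb^{(t)})\|_{F}^{2},
\end{align*}
and then lower bound the inner product while upper bounding the gradient norm. The gradient norm bound is immediate from Lemma~\ref{lm: gradient upbound}, together with the assumption $\eta \leq \tilde O(\min\{\|\bmu\|_2^{-2}, (\sigma_p^2 d)^{-1}\})$, which yields $\eta\|\nabla L_{S}(\Wb^{(t)})\|_{F}^{2} \leq L_{S}(\Wb^{(t)})$. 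So the core of the work is the inner-product lower bound.

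To bound the inner product, I would first use the $q$-homogeneity of $\sigma$ (and hence of $f(\Wb,\xb_i)$ in $\Wb$) to write $y_i \la \nabla f(\Wb^{(t)},\xb_i),\Wb^{(t)}\ra = q\, y_i f(\Wb^{(t)},\xb_i)$. Abbreviating $u_i := y_i f(\Wb^{(t)},\xb_i)$ and $g_i := y_i\la \nabla f(\Wb^{(t)},\xb_i),\Wb^{*}\ra$, and using $\ell'(u_i) < 0$, the inner product unpacks as
\begin{align*}
2\la \nabla L_{S}(\Wb^{(t)}), \Wb^{(t)} - \Wb^{*}\ra = \frac{2}{n}\sum_{i=1}^{n} |\ell'(u_i)|\,(g_i - q u_i).
\end{align*}
Now I would apply convexity of the logistic loss $\ell$ at the reference point $g_i/q$, which gives $\ell(g_i/q) \geq \ell(u_i) + \ell'(u_i)(g_i/q - u_i)$, equivalently $|\ell'(u_i)|(g_i - q u_i) \geq q[\ell(u_i) - \ell(g_i/q)]$. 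By Lemma~\ref{lm: Gradient Stable}, $g_i \geq q^{2}2^{q}\log(2q/\epsilon)$, so $g_i/q \geq \log(2q/\epsilon)$ and hence $\ell(g_i/q) \leq e^{-g_i/q} \leq \epsilon/(2q)$. Averaging over $i$ yields
\begin{align*}
2\la \nabla L_{S}(\Wb^{(t)}), \Wb^{(t)} - \Wb^{*}\ra \geq 2q\,L_{S}(\Wb^{(t)}) - \epsilon.
\end{align*}

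Finally, combining everything with the gradient-norm bound $\eta\|\nabla L_{S}(\Wb^{(t)})\|_{F}^{2} \leq L_{S}(\Wb^{(t)})$,
\begin{align*}
\|\Wb^{(t)} - \Wb^{*}\|_{F}^{2} - \|\Wb^{(t+1)} - \Wb^{*}\|_{F}^{2} \geq \eta[2q L_{S}(\Wb^{(t)}) - \epsilon] - \eta L_{S}(\Wb^{(t)}) = (2q-1)\eta L_{S}(\Wb^{(t)}) - \eta\epsilon,
\end{align*}
which is exactly the claim. The only subtle step is the choice of reference point $g_i/q$ (rather than $g_i$): it is tailored precisely so that the $q$-homogeneous cross term $qu_i$ appears with the right coefficient, and simultaneously so that the threshold $q^{2}2^{q}\log(2q/\epsilon)$ from Lemma~\ref{lm: Gradient Stable} translates into $\ell(g_i/q) \leq \epsilon/(2q)$; I expect verifying that this scaling delivers exactly the factor $(2q-1)$ (rather than, say, $2q$ or $2(q-1)$) to be the main bookkeeping hurdle, but no new technical ingredient is needed beyond convexity of $\ell$ and homogeneity of $\sigma$.
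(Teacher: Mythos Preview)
Your proposal is correct and follows essentially the same approach as the paper: the quadratic identity for gradient descent, $q$-homogeneity of $f$, Lemma~\ref{lm: Gradient Stable} for the inner-product lower bound, convexity of $\ell$ at a reference point scaled by $1/q$, and Lemma~\ref{lm: gradient upbound} with the learning-rate condition for the gradient-norm term. The only cosmetic difference is that the paper first replaces $g_i$ by its fixed lower bound $q^{2}2^{q}\log(2q/\epsilon)$ and then applies convexity at the fixed point $q\,2^{q}\log(2q/\epsilon)$, whereas you apply convexity at the data-dependent point $g_i/q$ and then invoke the lower bound; both routes yield the same $(2q-1)$ coefficient.
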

\begin{proof}[Proof of Lemma~\ref{lemma:signal_stage2_homogeneity}]
    Here we assume the neural network is $q$ homogeneous, namely $\la \nabla f(\Wb^{(t)}, \xb_{i}), \Wb^{(t)} \ra=q f(\Wb^{(t)}, \xb_{i})$, thus we have
    \begin{align*}
    &\|\Wb^{(t)} - \Wb^{*}\|_{F}^{2} - \|\Wb^{(t+1)} - \Wb^{*}\|_{F}^{2}\\
    &=  2\eta \la \nabla L_{S}(\Wb^{(t)}), \Wb^{(t)} - \Wb^{*}\ra - \eta^{2}\|\nabla L_{S}(\Wb^{(t)})\|_{F}^{2}\\
    &=  \frac{2\eta}{n}\sum_{i=1}^{n}\ell'^{(t)}_{i}[qy_{i}f(\Wb^{(t)},\xb_{i}) - y_i\la \nabla f(\Wb^{(t)}, \xb_{i}), \Wb^{*} \ra] - \eta^{2}\|\nabla L_{S}(\Wb^{(t)})\|_{F}^{2}\\
    &\geq   \frac{2\eta}{n}\sum_{i=1}^{n}\ell'^{(t)}_{i}[qy_{i}f(\Wb^{(t)},\xb_{i}) - q^2 2^q\log(2q/\epsilon)] - \eta^{2}\|\nabla L_{S}(\Wb^{(t)})\|_{F}^{2}\\
    &\geq \frac{2q\eta}{n}\sum_{i=1}^{n}[\ell\big(y_{i}f(\Wb^{(t)}, \xb_{i})\big) - \ell(q 2^q\log(2q/\epsilon))] - \eta^{2}\|\nabla L_{S}(\Wb^{(t)})\|_{F}^{2}\\
    &\geq \frac{2q\eta}{n}\sum_{i=1}^{n}[\ell\big(y_{i}f(\Wb^{(t)}, \xb_{i})\big) - \epsilon/(2q)] - \eta^{2}\|\nabla L_{S}(\Wb^{(t)})\|_{F}^{2}\\
    &\geq (2q-1)\eta L_{S}(\Wb^{(t)}) - \eta\epsilon,
    \end{align*}
    where the first inequality is by Lemma~\ref{lm: Gradient Stable}, 
    the second and third inequality is due to the convexity of the cross entropy function and the property of loss function, 
    and the last inequality is by Lemma~\ref{lm: gradient upbound} and by  $\eta\leq O\Big(\min\{ \|\bmu\|_{2}^{-2},(\sigma_{p}^2\sqrt{d})^{-2} \} \Big)$ in Condition~\ref{con:pre-train_fine_tune}.
\end{proof}

Based on the above lemmas,
the proof of Lemma~\ref{thm:signal_proof} is presented as follows.
\begin{proof}[Proof of Lemma~\ref{thm:signal_proof}]
    By Lemma~\ref{lemma:signal_stage2_homogeneity}, for any $t\in[T_1,T]$, we have that for $s \leq t$
    \begin{align*}
    \|\Wb^{(s)} - \Wb^{*}\|_{F}^{2} - \|\Wb^{(s+1)} - \Wb^{*}\|_{F}^{2} \geq (2q-1)\eta L_{S}(\Wb^{(s)}) - \eta\epsilon 
    \end{align*}
    holds.
    Taking a summation, we obtain that 
    \begin{align}
    \sum_{s=T_{1}}^{t}L_{S}(\Wb^{(s)}) &\leq \frac{\|\Wb^{(T_{1})} - \Wb^{*}\|_{F}^{2} + \eta\epsilon (t - T_{1} + 1)}{(2q-1) \eta}\label{eq:vanillasum}
    \end{align}
    for all $T_{1} \leq t\leq T$. 
    Dividing $(t-T_{1}+1)$ on both side of \eqref{eq:vanillasum} gives that 
    \begin{align*}
    \frac{1}{t - T_{1} + 1}\sum_{s=T_{1}}^{t}L_{S}(\Wb^{(s)}) \leq  \frac{\|\Wb^{(T_{1})} - \Wb^{*}\|_{F}^{2}}{(2q-1) \eta(t - T_{1} + 1)} + \frac{\epsilon}{2q-1}.  
    \end{align*}
    Then we can take $t= T$ where $T= T_{1} + \Big\lfloor \frac{\|\Wb^{(T_{1})} - \Wb^{*}\|_{F}^{2}}{2\eta \epsilon} \Big\rfloor$ and have that 
    \begin{align*}
    \frac{1}{T - T_{1} + 1}\sum_{s=T_{1}}^{T}L_{S}(\Wb^{(s)}) \leq  \frac{\|\Wb^{(T_{1})} - \Wb^{*}\|_{F}^{2}}{(2q-1) \eta(T - T_{1} + 1)} + \frac{\epsilon}{2q-1} \leq \frac{3\epsilon}{2q-1} < \epsilon, 
    \end{align*}
    where we use the fact that $q> 2$ and the choice of $T$. 
    Since the mean is smaller than $\epsilon$, we can conclude that there exist $T_{1} \leq t \leq T$ such that $L_{S}(\Wb^{(t)}) < \epsilon$. 
    
    ~\newline
    Secondly, we will prove that $\max_{j,r,i}|\rho_{j,r,i}^{(t)}| \leq 4\rho_{0}$ for all $ t \in [T_1, T]$. 
    Plugging $T  = T_{1} + \Big\lfloor \frac{\|\Wb^{(T_{1})} - \Wb^{*}\|_{F}^{2}}{2\eta \epsilon}
    \Big\rfloor$ into \eqref{eq:vanillasum} gives that 
    \begin{align}
    \sum_{s=T_{1}}^{T}L_{S}(\Wb^{(s)}) &\leq \frac{2\|\Wb^{(T_{1})} - \Wb^{*}\|_{F}^{2}}{(2q-1) \eta}  = \tilde{O}(\eta^{-1}m^{3}\|\bmu\|_{2}^{-2})+ O(\eta^{-1}n^2 m^2{\rho_{0}}^2 (\sigma_p \sqrt{d})^{-2}), \label{eq: sum1}
    \end{align}
    where the inequality is due to $\|\Wb^{(T_{1})} - \Wb^{*}\|_{F} \leq \tilde{O}(m^{3/2}\|\bmu\|_{2}^{-1}) +O(nm\rho_{0} (\sigma_p \sqrt{d})^{-1})$ in Lemma~\ref{lm:distance1}. 
    Define $\Psi^{(t)} =  \max_{j,r,i}|\rho_{j,r,i}^{(t)}|$. 
    We will  use induction to prove $\Psi^{(t)} \leq 4\rho_{0}$ for all $ t \in [T_1, T]$. At $t = T_1$, by the properties at the beginning of Second Stage (Section \ref{sec:second stage}), 
    we have $\Psi^{(T_1)} \leq 2 \rho_{0}$. 
    Now suppose that there exists $\tilde{T} \in [T_1, T]$ such that $\Psi^{(t)} \leq 4\rho_{0}$ for all $t \in [T_1, \tilde{T}-1]$. 
    Then we prove it also holds for $t=\tilde{T}$:
    For $t \in [T_1, \tilde{T}-1]$, by \eqref{eq:update_decompose_final}, we have
    \begin{align*}
        \Psi^{(t+1)} &\leq \Psi^{(t)} + \max_{j,r,i}\bigg\{\frac{\eta }{nm} \cdot |\ell_i'^{(t)}| \cdot \sigma'\Bigg( 2\sum_{ i'= 1 }^n \Psi^{(t)} \cdot \frac{ |\la \bxi_{i'}, \bxi_i \ra|}{ \| \bxi_{i'} \|_2^2}  \Bigg)\cdot \| \bxi_{i'} \|_2^2 \bigg\} \\
        &= \Psi^{(t)} + \max_{j,r,i}\bigg\{\frac{\eta }{nm} \cdot |\ell_i'^{(t)}| \cdot \sigma'\Bigg( 2\Psi^{(t)} + 2\sum_{ i'\neq i }^n \Psi^{(t)} \cdot \frac{ |\la \bxi_{i'}, \bxi_i \ra|}{ \| \bxi_{i'} \|_2^2}  \Bigg)\cdot \| \bxi_{i'} \|_2^2 \bigg\} \\
        &\leq \Psi^{(t)} + \frac{\eta q}{nm} \cdot \max_{i}|\ell_i'^{(t)}|\cdot \Bigg[
        \Bigg( 2+ \frac{4n \sigma_p^2 \cdot \sqrt{d \log(4n^2 / \delta)} }{ \sigma_p^2 d /2} \Bigg) \cdot \Psi^{(t)}  \Bigg]^{q-1}\cdot 2 \sigma_p^2 d\\
        &\leq \Psi^{(t)} + \frac{\eta q}{nm} \cdot \max_{i}|\ell_i'^{(t)}|\cdot \big( 4 \cdot \Psi^{(t)}  \big)^{q-1}\cdot 2 \sigma_p^2 d,
    \end{align*}
    where the second inequality is due to  Lemma~\ref{lemma:data_innerproducts}, 
    and the last inequality follows by the assumption that $d \geq 1024n^2 \log (4n^2/\delta)$ in Condition~\ref{con:pre-train_fine_tune}. 
    Taking a telescoping sum over $t=T_1,\ldots, \tilde{T}-1$, we have that 
    \begin{align*}
        \Psi^{(T)} 
        &\overset{(i)}{\leq} \Psi^{(T_{1})} + \frac{\eta q}{nm} \sum_{s=T_{1}}^{\tilde{T}-1}\max_{i} |\ell_i'^{(s)}|\tilde{O}(\sigma_{p}^{2}d)\cdot(2\rho_{0})^{q-1}\\
        &\overset{(ii)}{\leq} \Psi^{(T_{1})} + \frac{\eta q}{nm} 4^{q-1} 2^q \sigma_{p}^{2}d(2\rho_{0})^{q-1} \sum_{s=T_{1}}^{\tilde{T}-1}\max_{i} \ell_{i}^{(s)}\\
        &\overset{(iii)}{\leq} \Psi^{(T_{1})} + \eta q m^{-1}  4^{q-1} 2^q  \sigma_{p}^{2}d   (2\rho_{0})^{q-1} \sum_{s=T_{1}}^{\tilde{T}-1}L_{S}(\Wb^{(s)})\\
        &\overset{(iv)}{\leq} \Psi^{(T_{1})} + \tilde{O}(q m^{2}  4^{q-1} 2^q   \mathrm{SNR}^{-2})\cdot(2\rho_{0})^{q-1}\\
        &\leq 2\rho_{0}+ \tilde{O}( qm^2 4^{q-1} 2^q (2\rho_{0})^{q-2} \mathrm{SNR}^{-2})\cdot2\rho_{0}\\
        &\overset{(v)}{\leq}  2\rho_{0} +\rho_{0} + \rho_{0}\\
        &=4\rho_{0},
    \end{align*}

    where (i) is by out induction hypothesis that $\Psi^{(t)} \leq 4\rho_{0}$ for $t\in [T_1, \tilde{T}-1]$, 
    (ii) is by $|\ell'| \leq \ell$ , 
    (iii) is by $\max_{i}\ell_{i}^{(s)} \leq \sum_{i}\ell_{i}^{(s)} = n L_{S}(\Wb^{(s)})$, 
    (iv) is due to $\sum_{s=T_{1}}^{\tilde{T} - 1}L_{S}(\Wb^{(s)}) \leq \sum_{s=T_{1}}^{T}L_{S}(\Wb^{(s)})  = \tilde{O}(\eta^{-1}m^{3}\|\bmu\|_{2}^{-2})+ O(\eta^{-1}n^2m^2 {\rho_{0}}^2 (\sigma_p \sqrt{d})^{-2})$ in \eqref{eq: sum1}, 
    (v) is by the condition for $\mathrm{SNR}:$  $\mathrm{SNR}^{2} \geq \tilde{\Omega}(2qm^2 4^{q-1} 2^q (2\rho_{0})^{q-2})=\tilde{\Omega}( 2qm^2 16^{q-1} \rho_{0}^{q-2})$
    and $\rho_{0} \leq O((\frac{1}{8}qn^2m)^{-\frac{1}{q}})$ obtained from Theorem \ref{thm:SimCLR_signal_noise}.
    %
    This completes the induction.
\end{proof}

\subsubsection{Proof of Lemma~\ref{lemma:signal_polulation_loss}}

We first present the following Lemma~\ref{lm:oppositebound}, which shows the bound of $\la \wb_{j,r}^{(t)} ,\bxi_{i} \ra$.

\begin{lemma}\label{lm:oppositebound}
    Under Condition~\ref{con:pre-train_fine_tune},  suppose \eqref{eq:gammaC.2}, \eqref{eq:gu0001} and \eqref{eq:gu0002} hold at iteration $t$. Then
    \begin{align*}
    \underline{\rho}_{j,r,i}^{(t)} - 32nm^{\frac{1}{q}}\sqrt{\frac{\log(4n^{2}/\delta)}{d}} \cdot\log(T^{\ast})   \leq \la \wb_{j,r}^{(t)} ,\bxi_{i} \ra &\leq \underline{\rho}_{j,r,i}^{(t)} + 32nm^{\frac{1}{q}}\sqrt{\frac{\log(4n^{2}/\delta)}{d}} \cdot\log(T^{\ast}), ~j\not= y_{i},  \\  
    \overline{\rho}_{j,r,i}^{(t)} -32nm^{\frac{1}{q}}\sqrt{\frac{\log(4n^{2}/\delta)}{d}}\cdot\log(T^{\ast}) \leq \la\wb_{j,r}^{(t)} ,\bxi_{i} \ra &\leq \overline{\rho}_{j,r,i}^{(t)} +  32nm^{\frac{1}{q}}\sqrt{\frac{\log(4n^{2}/\delta)}{d}} \cdot\log(T^{\ast}), ~j = y_{i} 
    \end{align*}
    for all $r\in [m]$,  $j\in \{\pm 1\}$ and $i\in [n]$.
\end{lemma}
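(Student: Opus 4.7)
The plan is to start from the signal-noise decomposition \eqref{eq:w_decomposition} of $\wb_{j,r}^{(t)}$ and directly compute its inner product with $\bxi_i$. Since $\wb_{jr}^{\perp}$ is orthogonal to every $\bxi_i$ by construction, and because the noise $\bxi_i$ is drawn from $N(\mathbf{0},\sigma_p^2(\Ib-\bmu\bmu^\top/\|\bmu\|_2^2))$ and therefore lies in $\mathrm{span}\{\bmu\}^\perp$ almost surely, both the $\wb_{jr}^\perp$-term and the $\bmu$-term vanish when we inner-product with $\bxi_i$. What remains is
\begin{align*}
\la \wb_{j,r}^{(t)},\bxi_i\ra
 = \overline\rho_{j,r,i}^{(t)} + \underline\rho_{j,r,i}^{(t)}
 + \sum_{i'\neq i}\bigl(\overline\rho_{j,r,i'}^{(t)} + \underline\rho_{j,r,i'}^{(t)}\bigr)\cdot \frac{\la \bxi_{i'},\bxi_i\ra}{\|\bxi_{i'}\|_2^2}.
\end{align*}
So the lemma reduces to isolating the ``active'' diagonal coefficient and controlling the remaining two pieces.

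Next I will use the sign structure of the gradient updates \eqref{eq:update_overrho1}--\eqref{eq:update_underrho1}: the update for $\overline\rho_{j,r,i}^{(t)}$ is gated by $\ind(y_i=j)$ and the update for $\underline\rho_{j,r,i}^{(t)}$ is gated by $\ind(y_i=-j)$. Hence when $j=y_i$ the coefficient $\underline\rho_{j,r,i}^{(t)}$ is frozen at its initialization $\underline\rho_{j,r,i}^{(0)}$, and when $j\neq y_i$ the coefficient $\overline\rho_{j,r,i}^{(t)}$ is frozen at $\overline\rho_{j,r,i}^{(0)}$. In either case, the ``non-matching'' diagonal term is at most $|\underline\rho_{j,r,i}^{(0)}|\leq 64nm^{1/q}\sqrt{\log(4n^2/\delta)/d}\cdot\log(T^*)$ or $\overline\rho_{j,r,i}^{(0)}\leq 4m^{1/q}\log(T^*)$ respectively, by Assumption~\ref{Ass:ini}; the latter will be absorbed by the dimension condition $16n\sqrt{\log(4n^2/\delta)/d}\leq 0.5$ imposed by Condition~\ref{con:pre-train_fine_tune}.

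For the off-diagonal cross-correlation sum, the hypotheses \eqref{eq:gammaC.2}--\eqref{eq:gu0002} guarantee $\overline\rho_{j,r,i'}^{(t)}\leq 4m^{1/q}\log(T^*)$ and $|\underline\rho_{j,r,i'}^{(t)}|\leq 4m^{1/q}\log(T^*)$ for every $i'$, so $|\overline\rho_{j,r,i'}^{(t)}+\underline\rho_{j,r,i'}^{(t)}|\leq 8m^{1/q}\log(T^*)$. Lemma~\ref{lemma:data_innerproducts} then supplies $\|\bxi_{i'}\|_2^2\geq \sigma_p^2 d/2$ and $|\la\bxi_{i'},\bxi_i\ra|\leq 2\sigma_p^2\sqrt{d\log(4n^2/\delta)}$, so each off-diagonal summand is bounded in absolute value by a term of order $m^{1/q}\sqrt{\log(4n^2/\delta)/d}\cdot\log(T^*)$. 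Summing over the $n-1$ values of $i'\neq i$ gives a bound of order $nm^{1/q}\sqrt{\log(4n^2/\delta)/d}\cdot\log(T^*)$. Combining this cross-sum estimate with the frozen-initial-value bound from the previous step yields, in each of the two cases $j=y_i$ and $j\neq y_i$, the desired two-sided estimate with the stated constant $32$.

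The step requiring care is not the decomposition itself but the constant budgeting: one must simultaneously absorb (i) the initialization scale of the non-matching coefficient (which for $\underline\rho^{(0)}$ already comes with an $n\sqrt{\log(4n^2/\delta)/d}$ factor) and (ii) the accumulated off-diagonal terms, into a single $32nm^{1/q}\sqrt{\log(4n^2/\delta)/d}\cdot\log(T^*)$. This is handled by applying the condition $16n\sqrt{\log(4n^2/\delta)/d}\leq 0.5$ to reduce $|\underline\rho_{j,r,i}^{(0)}|$ relative to $4m^{1/q}\log(T^*)$, after which the totals fit inside the advertised constant. No additional induction is required, since the entire argument is a one-step computation valid under the standing coefficient bounds at iteration $t$.
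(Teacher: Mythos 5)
Your overall structure matches the paper's: expand $\la\wb_{j,r}^{(t)},\bxi_i\ra$ via the decomposition \eqref{eq:w_decomposition}, note that $\wb_{jr}^\perp$ and the $\bmu$-component drop out by orthogonality, isolate the diagonal $i'=i$ term, and control the off-diagonal sum via Lemma~\ref{lemma:data_innerproducts} together with the uniform coefficient bounds from \eqref{eq:gu0001}--\eqref{eq:gu0002}. The off-diagonal budget you compute ($n$ terms, each at most $8m^{1/q}\log(T^*)\cdot 4\sqrt{\log(4n^2/\delta)/d}$, totaling $32nm^{1/q}\sqrt{\log(4n^2/\delta)/d}\log(T^*)$) is exactly what the paper does and accounts for the \emph{entire} stated error constant.

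The gap is in your treatment of the ``non-matching'' diagonal coefficient. The paper's proof opens with the statement that, when $j\neq y_i$, $\overline\rho_{j,r,i}^{(t)}=0$ (and symmetrically $\underline\rho_{j,r,i}^{(t)}=0$ when $j=y_i$); this is the operative convention for the $\overline\rho/\underline\rho$ split throughout the fine-tuning analysis (it is also invoked at the start of the proof of Proposition~\ref{Prop:noise}). Under that convention the only diagonal contribution is the one that survives into the lemma's conclusion, and nothing else needs to be absorbed. You instead argue that the non-matching coefficient is ``frozen at its initialization'' and propose to absorb it into the $32nm^{1/q}\sqrt{\log(4n^2/\delta)/d}\cdot\log(T^*)$ budget. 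This cannot work as stated, and the reasoning you give is in fact backwards: the condition $16n\sqrt{\log(4n^2/\delta)/d}\leq 0.5$ forces $32nm^{1/q}\sqrt{\log(4n^2/\delta)/d}\cdot\log(T^*)\leq m^{1/q}\log(T^*)$, which is \emph{smaller} than the bound $\overline\rho_{j,r,i}^{(0)}\leq 4m^{1/q}\log(T^*)$ you are trying to squeeze in; increasing $d$ only makes the target tighter, not looser. Even in the other case, a non-matching term of size up to $64nm^{1/q}\sqrt{\log(4n^2/\delta)/d}\cdot\log(T^*)$ added on top of the off-diagonal sum of the same order already overshoots the stated constant $32$. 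To close the argument you should either adopt the paper's convention that the non-matching coefficient is identically zero for all $t$, or track the extra initial term explicitly and restate the lemma with a correspondingly larger constant.
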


We then prove the following two lemmas before proving Lemma~\ref{lemma:signal_polulation_loss}.

\begin{lemma}\label{lm:signalg2}
    Under the same conditions as Theorem~\ref{thm:signal_learning_main}, we have that $\max_{j,r}|\la \wb_{j,r}^{(t)}, \bxi_{i} \ra| \leq 1/2$ for all $0 \leq t \leq T$, where $T$
is defined in Lemma \ref{thm:signal_proof} in Second Stage (Section \ref{sec:second stage}).
\end{lemma}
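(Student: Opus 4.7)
The plan is to control $|\la \wb_{j,r}^{(t)},\bxi_i\ra|$ for every $t\in[0,T]$ by combining the coefficient bounds already established in the two-stage analysis with the almost-orthogonality estimate in Lemma~\ref{lm:oppositebound}. I would begin by noting that Lemma~\ref{lm:oppositebound} immediately yields, for all $j,r,i$,
\begin{align*}
|\la \wb_{j,r}^{(t)},\bxi_i\ra|\leq \max_{j,r,i}|\rho_{j,r,i}^{(t)}| + 32\,n\,m^{1/q}\sqrt{\log(4n^{2}/\delta)/d}\cdot\log(T^{\ast}),
\end{align*}
provided the induction hypotheses \eqref{eq:gammaC.2}--\eqref{eq:gu0002} of Proposition~\ref{Prop:noise} hold at iteration $t$, which is guaranteed throughout $[0,T]\subseteq[0,T^{\ast}]$.

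Next I would split the time interval into the two phases. For $0\leq t\leq T_1$, Lemma~\ref{lemma:phase1_main} gives $\max_{j,r,i}|\rho_{j,r,i}^{(t)}|\leq 2\rho_0$, and for $T_1\leq t\leq T$, Lemma~\ref{thm:signal_proof} upgrades this to $\max_{j,r,i}|\rho_{j,r,i}^{(t)}|\leq 4\rho_0$, where $\rho_0=\max_{j,r,i}|\rho_{j,r,i}^{(0)}|$. Thus uniformly over $t\in[0,T]$,
\begin{align*}
\max_{j,r}|\la \wb_{j,r}^{(t)},\bxi_i\ra|\leq 4\rho_0 + 32\,n\,m^{1/q}\sqrt{\log(4n^{2}/\delta)/d}\cdot\log(T^{\ast}).
\end{align*}

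It remains to show each of the two terms is at most $1/4$. For the first, Theorem~\ref{thm:SimCLR_signal_noise} supplies the initialization bound $\rho_0\leq \mathrm{SNR}^{2/(q-2)}/(16m^{2/(q-2)}n_0)$ inherited from SimCLR pre-training; since Condition~\ref{con:pre-train_fine_tune} forces $n_0=\tilde\Omega(\max\{\mathrm{SNR}^{-2},1\})$ and $m=\Omega(\log(1/\delta))$, a direct calculation gives $4\rho_0\leq 1/4$. For the second, the lower bound $d\geq \tilde\Omega(n_0^{4})$ in Condition~\ref{con:pre-train_fine_tune} (and the polynomial size of $T^{\ast}$, which makes $\log(T^{\ast})$ only logarithmic in the relevant parameters) ensures $32\,n\,m^{1/q}\sqrt{\log(4n^{2}/\delta)/d}\cdot\log(T^{\ast})\leq 1/4$. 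Summing the two contributions yields the desired bound $1/2$.

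The step I expect to require the most care is the last one: verifying that the specific constants in Condition~\ref{con:pre-train_fine_tune} (particularly the interplay between the lower bound on $d$, the upper bound on $\sigma_0$, and the admissible horizon $T^{\ast}$) are compatible with the numeric threshold $1/4$ for each of the two pieces, rather than merely giving an $o(1)$ bound. Everything else reduces to invoking the previously established coefficient bounds in the correct time window and applying Lemma~\ref{lm:oppositebound}.
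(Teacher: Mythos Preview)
Your proposal is correct and follows essentially the same route as the paper's proof: apply Lemma~\ref{lm:oppositebound} to reduce to $|\rho_{j,r,i}^{(t)}|$ plus the cross-term, bound the former by $4\rho_0$ via the two-stage analysis (Lemmas~\ref{lemma:phase1_main} and~\ref{thm:signal_proof}), and then invoke Theorem~\ref{thm:SimCLR_signal_noise} and Condition~\ref{con:pre-train_fine_tune} to make each piece at most $1/4$. One minor point: your citation of $d\geq\tilde\Omega(n_0^4)$ for the cross-term is not the operative condition---what is actually used (and what the paper implicitly relies on inside Condition~\ref{con:pre-train_fine_tune}) is a bound of the form $d\gtrsim n^2\cdot\mathrm{polylog}$, so that $32\,n\,m^{1/q}\sqrt{\log(4n^2/\delta)/d}\,\log(T^\ast)\leq 1/4$; the $n_0^4$ piece of the dimension condition serves a different purpose in the pre-training analysis.
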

\begin{proof}[Proof of Lemma~\ref{lm:signalg2}]
    We can get the upper bound of the inner products between the parameter and the noise as follows:
    \begin{align*}
    |\la \wb_{j,r}^{(t)},  \bxi_{i}\ra| &\overset{(i)}{\leq} |\rho_{j,r,i}^{(t)}|  + 8n\sqrt{\frac{\log(4n^{2}/\delta)}{d}} \cdot 4m^{\frac{1}{q}}\log(T^{*}) \\
    &\overset{(ii)}{\leq} 4\rho_{0} + 8n\sqrt{\frac{\log(4n^{2}/\delta)}{d}} \cdot 4m^{\frac{1}{q}}\log(T^{*})\\
    &\overset{(iii)}{\leq} 1/2
    \end{align*}
    for all $j\in \{\pm 1\}$, $r\in [m]$ and $i\in[n]$,
    where (i) is by Lemma~\ref{lm:oppositebound}, 
    (ii) is by $\max_{j,r,i}|\rho_{j,r,i}^{(t)}| \leq 4\rho_{0}$ in Lemma~\ref{thm:signal_proof}, 
    and (iii) is due to the condition $8n\sqrt{\frac{\log(4n^{2}/\delta)}{d}} \cdot 4m^{\frac{1}{q}}\log(T^{*}) \leq 1/4$ in Condition~\ref{con:pre-train_fine_tune} and the result 
    $\rho_{0} \leq 1/16$ in Theorem \ref{thm:SimCLR_signal_noise}.
\end{proof}

The following Lemma \ref{lm:signalg} provides the upper bound for $\max_{j,r}|\la \wb_{j,r}^{(t)}, \bxi \ra|$, where $\bxi$ is from the test population.
\begin{lemma}\label{lm:signalg}
    Under the same conditions as Theorem~\ref{thm:signal_learning_main}, with probability at least $1 - 4mT \cdot \exp(-\Omega(n^{2} ))$, we have that $\max_{j,r}|\la \wb_{j,r}^{(t)}, \bxi \ra| \leq 1/2$ for all $0 \leq t \leq T$.
\end{lemma}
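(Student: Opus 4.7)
The plan is to bound $|\la \wb_{j,r}^{(t)}, \bxi\ra|$ by combining the signal-noise decomposition \eqref{eq:w_decomposition} with Gaussian concentration applied to the fresh test noise $\bxi$, which is independent of the training data and of the trained weights $\wb_{j,r}^{(t)}$. The overall structure parallels the proof of Lemma~\ref{lm:signalg2}, but the deterministic bounds on $\la \bxi_i, \bxi_{i'}\ra$ from Lemma~\ref{lemma:data_innerproducts} must be replaced by Gaussian tail estimates for inner products involving the new sample $\bxi$.

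First, I would expand
\begin{align*}
\la \wb_{j,r}^{(t)}, \bxi\ra = \la \wb_{jr}^{\perp}, \bxi\ra + j\gamma_{j,r}^{(t)}\frac{\la \bmu, \bxi\ra}{\|\bmu\|_2^2} + \sum_{i=1}^n \rho_{j,r,i}^{(t)} \frac{\la \bxi_i, \bxi\ra}{\|\bxi_i\|_2^2}.
\end{align*}
Because $\bxi \sim N(\mathbf{0}, \sigma_p^2(\Ib - \bmu\bmu^\top/\|\bmu\|_2^2))$ lies almost surely in $\{\bmu\}^{\perp}$, the signal term vanishes. Conditioning on the training data and the entire fine-tuning trajectory (so that $\wb_{j,r}^{(t)}$ and $\wb_{jr}^{\perp}$ are fixed), $\la \wb_{j,r}^{(t)}, \bxi\ra$ is then a mean-zero Gaussian with variance $\sigma_p^2 \|\Pb_{\bmu}^{\perp}\wb_{j,r}^{(t)}\|_2^2$ where $\Pb_{\bmu}^{\perp} = \Ib - \bmu\bmu^\top/\|\bmu\|_2^2$.

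Second, I would bound $\|\Pb_{\bmu}^{\perp}\wb_{j,r}^{(t)}\|_2^2$ using the two ingredients made available by the earlier analysis: $\|\wb_{jr}^{\perp}\|_2 \leq 1/n$ from Theorem~\ref{thm:SimCLR_signal_noise} and $\max_{j,r,i}|\rho_{j,r,i}^{(t)}| \leq 4\rho_0$ from Lemma~\ref{thm:signal_proof}. Expanding $\|\sum_i \rho_{j,r,i}^{(t)} \bxi_i/\|\bxi_i\|_2^2\|_2^2$ into diagonal and off-diagonal pieces and applying Lemma~\ref{lemma:data_innerproducts} (which gives $\|\bxi_i\|_2^2 \geq \sigma_p^2 d/2$ and $|\la \bxi_i, \bxi_{i'}\ra| \leq 2\sigma_p^2\sqrt{d\log(4n^2/\delta)}$ for $i\neq i'$), the diagonal contribution $O(n\rho_0^2/(\sigma_p^2 d))$ dominates under Condition~\ref{con:pre-train_fine_tune}'s lower bound on $d$. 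Combining with the $\wb_{jr}^{\perp}$ contribution yields $\sigma_p^2 \|\Pb_{\bmu}^{\perp}\wb_{j,r}^{(t)}\|_2^2 \leq O(\sigma_p^2/n^2 + n\rho_0^2/d)$; the scale conditions in Condition~\ref{con:pre-train_fine_tune}, together with the shrinking bound on $\rho_0$ inherited from the SimCLR output in Theorem~\ref{thm:SimCLR_signal_noise}, then force this variance to be $O(1/n^2)$.

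Third, a one-dimensional Gaussian tail bound gives $\PP(|\la \wb_{j,r}^{(t)}, \bxi\ra| > 1/2) \leq 2\exp(-\Omega(n^2))$ for any fixed $j,r,t$; a union bound over $j\in\{\pm 1\}$, $r\in [m]$, and $t\in\{0,1,\ldots,T\}$ then yields the claimed factor $4mT$. I expect the main technical obstacle to be precisely the variance calculation in step two: the $1/n$ upper bound on $\|\wb_{jr}^{\perp}\|_2$, the bound on $\rho_0$, and the largeness of $d$ must all conspire so that the variance contracts at rate $1/n^2$ rather than merely $1/n$, and one has to track the $\sigma_p$ factors carefully (which are controlled implicitly through the assumed conditions on $\sigma_0$, $d$, and $\mathrm{SNR}$) to obtain an $\exp(-\Omega(n^2))$ tail instead of a weaker $\exp(-\Omega(n))$ tail.
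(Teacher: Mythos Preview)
Your proposal is correct and follows essentially the same route as the paper: the paper subtracts off the $\bmu$-component by setting $\tilde\wb_{j,r}^{(t)} = \wb_{j,r}^{(t)} - j\gamma_{j,r}^{(t)}\bmu/\|\bmu\|_2^2$ (your $\Pb_{\bmu}^{\perp}\wb_{j,r}^{(t)}$), bounds its norm using $\|\wb_{jr}^{\perp}\|_2\le 1/n$ and $|\rho_{j,r,i}^{(t)}|\le 4\rho_0$, then applies the Gaussian tail bound and union bound exactly as you describe. The only cosmetic difference is that the paper bounds $\|\tilde\wb_{j,r}^{(t)}\|_2$ by a direct triangle inequality $\|\wb_{jr}^{\perp}\|_2 + \sum_i |\rho_{j,r,i}^{(t)}|/\|\bxi_i\|_2 \le \|\wb_{jr}^{\perp}\|_2 + \tilde O(n\rho_0/(\sigma_p\sqrt{d}))$ rather than your diagonal/off-diagonal expansion, which is slightly cruder but suffices for the same $\exp(-\Omega(n^2))$ conclusion.
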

\begin{proof}[Proof of Lemma~\ref{lm:signalg}]
    Define $\tilde{\wb}_{j,r}^{(t)} = \wb_{j,r}^{(t)} - j \cdot \gamma_{j,r}^{(t)} \cdot \frac{\bmu}{\|\bmu\|_{2}^{2}}$, then we have $\la\tilde{\wb}_{j,r}^{(t)}, \bxi\ra = \la\wb_{j,r}^{(t)}, \bxi\ra$.
    Since $\tilde{\wb}_{j,r}^{(t)}= \wb_{jr}^{\perp}+\sum_{ i = 1}^n \rho_{j,r,i}^{(t)} \cdot \| \bxi_i \|_2^{-2} \cdot \bxi_{i}$, we have 
    \begin{align}
    \|\tilde{\wb}_{j,r}^{(t)}\|_{2} \leq \|\wb_{jr}^{\perp}\|_2 +4n\rho_{0} \frac{2}{\sigma_p \sqrt{d}}= \|\wb_{jr}^{\perp}\|_2 +\tilde{O}( \frac{n\rho_{0}}{\sigma_p\sqrt{d}} ) , \label{eq:tildew}
    \end{align} 
    where the inequality is due to the bound for $\rho_{j,r,i}^{(t)}$ and $\| \bxi_i \|_2$.
    
    By \eqref{eq:tildew}, $\max_{j,r}\|\tilde{\wb}_{j,r}^{(t)}\|_{2} \leq \max_{j,r}\|\wb_{jr}^{\perp}\|_2+\tilde{C}_{2}\frac{n\rho_{0}}{\sigma_p\sqrt{d}}$, where $\tilde{C}_{2} = \tilde{O}(1)$. 
    Clearly $\la \tilde{\wb}_{j,r}^{(t)}, \bxi \ra $ is a Gaussian distribution with mean zero and standard deviation smaller than $\max_{j,r}\|\wb_{jr}^{\perp}\|_2+\tilde{C}_{2}\frac{n\rho_{0}}{\sqrt{d}}$. Therefore, the probability is bounded by 
    \begin{align*}
    \mathbb{P}\big(|\la \tilde{\wb}_{j,r}^{(t)}, \bxi \ra|\geq 1/2\big) 
    \leq& 2\exp\bigg(- \frac{1}{8\big[\frac{\tilde{C}_{2}^{2} n^{2} {\rho_{0}}^{2}}{d}+2\frac{\tilde{C}_{2}n\rho_{0}}{\sqrt{d}} \max_{j,r}\|\wb_{jr}^{\perp}\|_2 +(\max_{j,r}\|\wb_{jr}^{\perp}\|_2)^{2}\big]}\bigg)\\
    \leq& 2\exp\bigg(- \frac{1}{8\big[\frac{\tilde{C}_{2}^{2} n^{2} {\rho_{0}}^{2}}{d}+2\frac{\tilde{C}_{2}\rho_{0}}{\sqrt{d}}+n^{-2}\big]}\bigg)\\
    \leq &2\exp\big(-\Omega\big(n^{2} \big)\big),
    \end{align*}
    where the second inequality is by the assumption $\max_{j,r}\|\wb_{jr}^{\perp}\|_2\leq 1/n$,  the third inequality is by $\rho_{0} \leq O(\sqrt{d}/n^{2})$ in Theorem~\ref{thm:SimCLR_signal_noise}.
    Applying a union bound over $j,r, t$ completes the proof.
\end{proof}

Based on Lemmas~\ref{lm:signalg2} and \ref{lm:signalg}, we now prove Lemma~\ref{lemma:signal_polulation_loss}.
\begin{proof}[Proof of Lemma~\ref{lemma:signal_polulation_loss}]
    Let event $\cE$ to be the event that Lemma~\ref{lm:signalg} holds. Then we can divide $L_{\cD}(\Wb^{(t)})$ into two parts:
    
    \begin{align}
    \EE\big[\ell\big(yf(\Wb^{(t)}, \xb)\big)\big] &= \underbrace{\EE[\ind(\cE)\ell\big(yf(\Wb^{(t)}, \xb)\big)]}_{I_{1}} + \underbrace{\EE[\ind(\cE^{c})\ell\big(yf(\Wb^{(t)}, \xb)\big)]}_{I_{2}}.\label{generalize all}
    \end{align}
    In the following analysis, we bound $I_1$ and $I_2$ respectively.
    
    \noindent\textbf{Bounding $I_1$:} 
    Denote $I_j=\{i|y_i=j\},~j=\pm 1$.
    Since we have $$L_{S}(\Wb^{(t)})=\frac{1}{n}\left[\sum_{i^{\prime}\in I_{+}}\ell\big(y_{i^{\prime}}f(\Wb^{(t)}, \xb_{i^{\prime}})\big)
    +\sum_{i^{\prime}\in I_{-}}\ell\big(y_{i^{\prime}}f(\Wb^{(t)}, \xb_{i^{\prime}})\big)\right] \leq \frac{1}{4},$$
    thus, $\sum_{i^{\prime}\in I_{j}}\ell\big(y_{i^{\prime}}f(\Wb^{(t)}, \xb_{i^{\prime}})\big) \leq\frac{1}{4},~j=\pm 1$.
    It follows that for $j=\pm 1$, we have 
    \begin{align*}
        \frac{1}{|I_{j}|}\sum_{i^{\prime}\in I_{j}}\ell\big(y_{i^{\prime}}f(\Wb^{(t)}, \xb_{i^{\prime}})\big) \leq \frac{n}{|I_{j}|}\frac{1}{4} \leq 1
    \end{align*}
    where the last inequality is by Lemma \ref{lemma:numberofdata}.
    Therefore, there must exist one $(\xb_{i},y_{i})$ with $y=y_i\in I_{j_0}$ such that\\
    $\ell\big(y_{i}f(\Wb^{(t)}, \xb_{i})\big)\leq  \frac{1}{|I_{j_0}|}\sum_{i^{\prime}\in I_{j_0}}\ell\big(y_{i^{\prime}}f(\Wb^{(t)}, \xb_{i^{\prime}})\big) \leq 1$, which implies that $y_{i}f(\Wb^{(t)},\xb_{i}) \geq 0$. 
    Therefore, we have that
    \begin{align}
    \exp(-y_{i}f(\Wb^{(t)},\xb_{i}))\overset{(i)}{\leq} 2\log\big(1+\exp(-y_{i}f(\Wb^{(t)},\xb_{i}))\big) = 2\ell\big(y_{i}f(\Wb^{(t)}, \xb_{i})\big) \leq 2L_{S}(\Wb^{(t)}),\label{eq:I1bbd}    
    \end{align}
    where (i) is by $z \leq 2\log(1+z),~\text{for}~ z \leq 1$ and here we have $\exp(-y_{i}f(\Wb^{(t)},\xb_{i}))\leq 1$. 
    If event $\cE$ holds, we have that 
    \begin{align}
    |yf(\Wb^{(t)}, \xb) - y_{i}f(\Wb^{(t)}, \xb_{i})| \nonumber
    &\leq \frac{1}{m}\sum_{j,r}\sigma(\la\wb_{j,r}^{(t)}, \bxi_{i}\ra) + \frac{1}{m}\sum_{j,r}\sigma(\la\wb_{j,r}^{(t)}, \bxi\ra ) \nonumber\\
    &\leq \frac{1}{m}\sum_{j,r}\sigma(1/2) + \frac{1}{m}\sum_{j,r}\sigma(1/2) \nonumber \\
    &\leq 1,\label{eq:I1bbd2} 
    \end{align}
    where the second inequality is by $\max_{j,r}|\la \wb_{j,r}^{(t)}, \bxi \ra| \leq 1/2$ in Lemma~\ref{lm:signalg} and $\max_{j,r}|\la \wb_{j,r}^{(t)}, \bxi_{i} \ra| \leq 1/2$ in Lemma~\ref{lm:signalg2}.
    Thus, we have that 
    \begin{align*}
    I_{1}&\leq \EE[\ind(\cE) \exp(-yf(\Wb^{(t)}, \xb))] \\   
    &\leq e\cdot\EE[\ind(\cE) \exp(-y_{i}f(\Wb^{(t)}, \xb_{i}))] \\
    &\leq 2e\cdot\EE[\ind(\cE)  L_{S}(\Wb^{(t)})],
    \end{align*}
    where the first inequality is by the property of cross-entropy loss that $\ell(z) \leq \exp(-z)$ for all $z$, 
    the second inequality is by $-yf(\Wb^{(t)}, \xb) \leq 1 -y_{i}f(\Wb^{(t)}, \xb_{i})$ in \eqref{eq:I1bbd2}, 
    and the third inequality is by $\exp(-y_{i}f(\Wb^{(t)},\xb_{i})) \leq 2L_{S}(\Wb^{(t)})$ in \eqref{eq:I1bbd}. Dropping the event in the expectation gives $I_{1} \leq 6L_{S}(\Wb^{(t)})$. 
    
    ~\newline
    \noindent\textbf{Bounding $I_2$:} Next we bound the second term $I_{2}$. We choose an arbitrary training data $(\xb_{i'},y_{i'})$ such that $y_{i'} = y$. Then we have 
    \begin{align}\label{eq:1.1}
    \ell\big(yf(\Wb^{(t)}, \xb)\big) &=\log(1 + \exp(-yF_{+}(\Wb_{+}^{(t)}, \xb)) + yF_{-}(\Wb_{-}^{(t)}, \xb))) \notag\\
    &\leq \log(1 + \exp(F_{-y}(\Wb_{-y}^{(t)}, \xb))) \notag\\   
    &\leq 1 + F_{-y}(\Wb_{-y}^{(t)}, \xb)\notag\\  
    &= 1 + \frac{1}{m}\sum_{j=-y,r\in [m]}\sigma(\la\wb_{j,r}^{(t)}, y\bmu\ra) + \frac{1}{m}\sum_{j=-y,r\in [m]}\sigma(\la\wb_{j,r}^{(t)}, \bxi\ra)\notag\\
    &\leq 1+4m^{\frac{1}{q}}\log(T^{*})  + \frac{1}{m}\sum_{j=-y,r\in [m]}\sigma(\la\wb_{j,r}^{(t)}, \bxi\ra)\notag\\ 
    &\leq1+4m^{\frac{1}{q}}\log(T^{*})+ \tilde{O}((n\rho_{0}\sigma_p^{-1} d^{-\frac{1}{2}})^{q}))\|\bxi\|_{2}^q,
    \end{align}
    where the first inequality is due to $F_{y}(\Wb^{(t)}, \xb) \geq 0$, 
    the second inequality is by the property of cross-entropy loss, i.e., $\log(1+\exp(z)) \leq 1+z$ for all $z\geq 0$, 
    the third inequality is by Lemma \ref{lm: F-yi} and  \eqref{eq:inigamma2} in Assumption~\ref{Ass:inigamma}, i.e., 
    $\frac{1}{m}\sum_{j=-y,r\in [m]}\sigma(\la\wb_{j,r}^{(t)}, y\bmu\ra) \leq \frac{1}{m}\sum_{j=-y,r\in [m]}\sigma(-\gamma_{j,r}^{(t)}) \leq \frac{1}{m}\sum_{j=-y,r\in [m]}\sigma(-\gamma_{j,r}^{(0)}) \leq \max_{j,r} \{0,(-\gamma^{(0)}_{j,r})^q\} \ll  4m^{\frac{1}{q}}\log(T^{*})$,  
    and the last inequality is by \eqref{eq:tildew}, we have $\la\tilde{\wb}_{j,r}^{(t)}, \bxi\ra = \la\wb_{j,r}^{(t)}, \bxi\ra \leq \|\tilde{\wb}_{j,r}^{(t)}\|_{2} \cdot\|\bxi\|_{2} \leq \tilde{O}(n\rho_{0}\sigma_p^{-1} d^{-\frac{1}{2}})\|\bxi\|_{2}$. Then we further have that
    \begin{align*}
    I_{2} &\leq   \sqrt{\EE[\ind(\cE^{c})]} \cdot \sqrt{\EE\Big[\ell\big(yf(\Wb^{(t)}, \xb)\big)^{2}\Big]}\\
    &\leq \sqrt{\mathbb{P}(\cE^{c})} \cdot \sqrt{[1+4m^{\frac{1}{q}}\log(T^{*})]^2 + \tilde{O}(n^{2q} {\rho_{0}}^{2q} \sigma_p^{-2q} d^{-q} )\EE[\|\bxi\|_{2}^{2q}]}\\
    &\leq \exp [ -\tilde{\Omega}(n^{2} )  +\text{polylog}(n) ]\\
    &\leq \exp(- \tilde\Omega(n^2)),
    \end{align*}
    where the first inequality is by Cauchy-Schwartz inequality, 
    the second inequality is by \eqref{eq:1.1}, 
    the third inequality is by Lemma~\ref{lm:signalg}, the definition of $\bxi$, and the result $\rho_{0} \leq 1/16$ in Theorem~\ref{thm:SimCLR_signal_noise}. 
    
    Plugging the bounds of $I_{1}$, $I_{2}$ into \eqref{generalize all} completes the proof.
\end{proof}

\subsection{Proof of lemmas in Section~\ref{sec:B.2}}
In this section, we prove the lemmas used in the proof of Section~\ref{sec:B.2}. These lemmas  are mainly concerned with the properties of data and the basic properties of the  coefficients $\gamma_{j,r}^{(t)}, \overline{\rho}_{j,r,i}^{(t)}, \underline{\rho}_{j,r,i}^{(t)}$.

We first prove the following Lemmas~\ref{lemma:numberofdata} and \ref{lemma:data_innerproducts}, which are related to the data distribution.
\begin{proof}[Proof of Lemma~\ref{lemma:numberofdata}]
    Since $y_i$ follow Rademache distribution, then by Hoeffding's inequality, with probability at least $1 - \delta/2$, 
\begin{align*}
    \Big| \sum_{i=1}^n \ind\{y_i = 1\}  - \frac{n}{2} \Big| \leq \sqrt{2n\log(4/\delta)}.
\end{align*} 
By our assumption  $n\geq\Omega (\log(1/\delta))$,~
it follows that
\begin{align*}
    |\{i \in [n]: y_i = 1\}| = \sum_{i=1}^n \ind\{y_i = 1\} \geq \frac{n}{2} -   \sqrt{2n\log(4/\delta)} \geq \frac{n}{4}.
\end{align*}
Same result could be obtained for $ |\{i \in [n]:y_i = -1\}|$.
Apply a union bound finishes the proof of this lemma.
\end{proof}

\begin{proof}[Proof of Lemma~\ref{lemma:data_innerproducts}]
    Since $\bxi_i, i\in[n]$ i.i.d follows $N(\mathbf{0}, \sigma_p^2\cdot(\Ib -\bmu\bmu^\top\cdot \| \bmu \|_2^{-2}))$, 
    the proof follows exactly same proof as Lemma~\ref{lemma:norm_bound2}. %
    By Bernstein inequality, with the probability of at least $1-\delta/(2n)$, we have that
     \begin{align*}
        d\sigma_p^2 -C\sigma_p^2\sqrt{d\log(4/\delta})    \leq   \|\bxi_i\|_2^2  \leq d\sigma_p^2+C\sigma_p^2\sqrt{d\log(4n/\delta}),
    \end{align*}
    where $C$ is an absolute constant that does not depend on other variables. By assumption $d = \Omega( \log(4n / \delta) ) $, it follows that
    \begin{align*}
        \sigma_p^2 d/2 \leq   \|\bxi_i\|_2^2  \leq 3\sigma_p^2 d/2
    \end{align*}
    For the second result, by Bernstein inequality, for all $i,i'\in[n]$ with $i\neq i'$, with the probability of at least $1-\delta/(2n^2)$, we have that
    \begin{align*}
        |\la \bxi_i, \bxi_{i'} \ra| \leq 2\sigma_p^2 \cdot \sqrt{d \log(4n^2 / \delta)}.
    \end{align*}
    Apply a union bound for 
    finishes the proof of this lemma.
\end{proof}

We then prove a series of lemmas that will be used in the proof of Proposition~\ref{Prop:noise} by induction.

\begin{lemma}\label{lm: mubound}
    For any $t\geq 0$, it holds that $\la \wb_{j,r}^{(t)}, \bmu \ra =  j\cdot\gamma_{j,r}^{(t)}$ for all $r\in [m]$,  $j\in \{\pm 1\}$.
\end{lemma}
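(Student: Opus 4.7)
The plan is straightforward: read off the result directly from the signal-noise decomposition \eqref{eq:w_decomposition} by taking the inner product with $\bmu$ and using orthogonality. First, I would recall that by Definition~\ref{def:trainingdata}, the noise vectors $\bxi_i$ are generated from $N(\mathbf{0},\sigma_p^2(\Ib-\bmu\bmu^\top\|\bmu\|_2^{-2}))$, whose covariance lies entirely in the subspace orthogonal to $\bmu$. Consequently $\la\bxi_i,\bmu\ra=0$ almost surely for every $i\in[n]$. Second, by the definition of $\wb_{jr}^\perp$ given immediately after \eqref{eq:initialization}, we also have $\la\wb_{jr}^\perp,\bmu\ra=0$.

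With these two orthogonality facts in hand, the lemma reduces to a one-line computation. Taking the inner product of \eqref{eq:w_decomposition} with $\bmu$, all the $\bxi_i$-terms vanish because $\la\bxi_i,\bmu\ra=0$, the $\wb_{jr}^\perp$-term vanishes because $\la\wb_{jr}^\perp,\bmu\ra=0$, and only the signal term survives, giving
\begin{align*}
\la\wb_{j,r}^{(t)},\bmu\ra = j\cdot\gamma_{j,r}^{(t)}\cdot\|\bmu\|_2^{-2}\cdot\la\bmu,\bmu\ra = j\cdot\gamma_{j,r}^{(t)},
\end{align*}
which is the claim. There is no real obstacle here; the lemma is essentially a sanity-check that the signal-coefficient $\gamma_{j,r}^{(t)}$ appearing in the decomposition is precisely the projection of $\wb_{j,r}^{(t)}$ onto $\bmu/\|\bmu\|_2^2$, up to the sign $j$. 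The only subtle point worth noting is that the decomposition \eqref{eq:w_decomposition} inherits the orthogonality of $\wb_{jr}^\perp$ to $\bmu$ from the initialization decomposition \eqref{eq:initialization}, and that the gradient descent updates \eqref{eq:update_gamma1}--\eqref{eq:update_underrho1} do not alter $\wb_{jr}^\perp$, so the orthogonality structure is preserved across all iterations $t\ge 0$. This is what legitimizes applying the argument uniformly in $t$.
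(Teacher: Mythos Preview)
Your proposal is correct and follows essentially the same approach as the paper's proof: take the inner product of the decomposition \eqref{eq:w_decomposition} with $\bmu$ and use the orthogonality $\la\bxi_i,\bmu\ra=0$ and $\la\wb_{jr}^\perp,\bmu\ra=0$ to isolate the signal term. If anything, your write-up is slightly more careful than the paper's, which omits the $\wb_{jr}^\perp$ term in the displayed computation and only cites the orthogonality of $\bxi$ and $\bmu$.
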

\begin{proof}[Proof of Lemma~\ref{lm: mubound}]
    For any $t\geq 0$, we have that
    \begin{align*}
     \la \wb_{j,r}^{(t)}, \bmu \ra &= j\cdot\gamma_{j,r}^{(t)}  + \sum_{i'= 1}^n \overline{\rho}_{j,r,i'}^{(t)}  \| \bxi_{i'} \|_2^{-2} \cdot \la \bxi_{i'}, \bmu \ra + \sum_{ i' = 1}^n \underline{\rho}_{j,r,i'}^{(t)}  \| \bxi_{i'} \|_2^{-2} \cdot \la \bxi_{i'}, \bmu \ra \\
     &= j\cdot\gamma_{j,r}^{(t)},
    \end{align*}
    where the equality is by our orthogonal assumption of $\bxi$ and $\bmu$.  
\end{proof}

\begin{proof}[Proof of Lemma~\ref{lm:oppositebound}]
    For $j \not= y_{i}$, we have $\overline{\rho}_{j,r,i}^{(t)} = 0$ and
    \begin{align*}
        \la \wb_{j,r}^{(t)},\bxi_{i} \ra &= \sum_{ i'= 1 }^n \overline{\rho}_{j,r,i'}^{(t)}  \| \bxi_{i'} \|_2^{-2} \cdot \la \bxi_{i'}, \bxi_i \ra + \sum_{ i' = 1}^n \underline{\rho}_{j,r,i'}^{(t)}  \| \bxi_{i'} \|_2^{-2} \cdot \la \bxi_{i'}, \bxi_i \ra \\
        &\leq  4\sqrt{\frac{\log(4n^{2}/\delta)}{d}}\sum_{i'\not= i}|\overline{\rho}_{j,r,i'}^{(t)}| +4\sqrt{\frac{\log(4n^{2}/\delta)}{d}}\sum_{i' \not = i}|\underline{\rho}_{j,r,i'}^{(t)}| + \underline{\rho}_{j,r,i}^{(t)} \\
        &\leq \underline{\rho}_{j,r,i}^{(t)} + 32nm^{\frac{1}{q}}\sqrt{\frac{\log(4n^{2}/\delta)}{d}} \cdot\log(T^{\ast}),
    \end{align*}
    where the second inequality is by Lemma~\ref{lemma:data_innerproducts} and the last inequality is by $|\overline{\rho}^{(t)}_{j,r,i'}|, |\underline{\rho}_{j,r,i'}^{(t)}| \leq 4m^{\frac{1}{q}}\log(T^{\ast})$ in \eqref{eq:gu0001}. 
    
    For $y_{i} = j$, we have that $\underline{\rho}_{j,r,i}^{(t)} = 0$ and 
    \begin{align*}
    \la \wb_{j,r}^{(t)}, \bxi_{i} \ra &= \sum_{ i'= 1 }^n \overline{\rho}_{j,r,i'}^{(t)}  \| \bxi_{i'} \|_2^{-2} \cdot \la \bxi_{i'}, \bxi_i \ra + \sum_{ i' = 1}^n \underline{\rho}_{j,r,i'}^{(t)}  \| \bxi_{i'} \|_2^{-2} \cdot \la \bxi_{i'}, \bxi_i \ra \\
    &\leq \overline{\rho}_{j,r,i}^{(t)} +  4\sqrt{\frac{\log(4n^{2}/\delta)}{d}}\sum_{ i'\not= i } |\overline{\rho}_{j,r,i'}^{(t)}| + 4\sqrt{\frac{\log(4n^{2}/\delta)}{d}}\sum_{ i' \not= i} |\underline{\rho}_{j,r,i'}^{(t)}|\\
    &\leq \overline{\rho}^{(t)}_{j,r,i} + 32nm^{\frac{1}{q}}\sqrt{\frac{\log(4n^{2}/\delta)}{d}} \log(T^{\ast}),
    \end{align*}
    where the first inequality is by Lemma~\ref{lemma:numberofdata} and the second inequality is by $|\overline{\rho}^{(t)}_{j,r,i'}|, |\underline{\rho}_{j,r,i'}^{(t)}| \leq 4m^{\frac{1}{q}}\log(T^{\ast})$ in \eqref{eq:gu0001}. 
    Similarly, we can show that $\la \wb_{j,r}^{(t)}, \bxi_{i} \ra \geq \underline{\rho}_{j,r,i}^{(t)} - 32n m^{\frac{1}{q}}\sqrt{\log(4n^{2}/\delta)/d}\cdot\log(T^{\ast})$ and $\la\wb_{j,r}^{(t)}, \bxi_{i} \ra \geq \overline{\rho}_{j,r,i}^{(t)} - 32n m^{\frac{1}{q}}\sqrt{\log(4n^{2}/\delta)/d}\cdot\log(T^{\ast})$.
    This completes the proof.
\end{proof}

\begin{proof}[Proof of Lemma~\ref{lm: F-yi}]
    For $j \not= y_{i}$, by Lemma \ref{lm: mubound}, we have that 
    \begin{align}
    \la \wb_{j,r}^{(t)}, y_{i}\bmu \ra = y_{i}\cdot j \cdot \gamma_{j,r}^{(t)} = -\gamma_{j,r}^{(t)}\leq \left\{
        \begin{aligned}
            &0,~\text{if}~\gamma_{j,r}^{(t)}\geq0\\
            &-\gamma_{j,r}^{(0)},~\text{otherwise}\\
        \end{aligned}
    \right.
    \label{eq:F-yi1}.
    \end{align}
    
    Also, we have
    \begin{align}
    \la \wb_{j,r}^{(t)}, \bxi_{i} \ra \leq \underline{\rho}_{j,r,i}^{(t)} + 32nm^{\frac{1}{q}}\sqrt{\frac{\log(4n^{2}/\delta)}{d}}  \cdot\log(T^{\ast})\leq  32nm^{\frac{1}{q}}\sqrt{\frac{\log(4n^{2}/\delta)}{d}}  \cdot\log(T^{\ast}), \label{eq:F-yi2}
    \end{align}
    where the first inequality is by Lemma~\ref{lm:oppositebound} and the second inequality is due to $\underline{\rho}_{j,r,i}^{(t)} \leq 0$.
    Thus, we can get that 
    \begin{align*}
    F_{j}(\Wb_{j}^{(t)}, \xb_{i}) &= \frac{1}{m}\sum_{r=1}^{m}[\sigma(\la \wb_{j,r}^{(t)}, -j\cdot \bmu\ra) + \sigma(\la \wb_{j,r}^{(t)} ,\bxi_{i} \ra)]\\
    &\leq 2 \cdot 2^q \max\limits_{j,r} \bigg\{-\gamma_{j,r}^{(t)}, 32nm^{\frac{1}{q}}\sqrt{\frac{\log(4n^{2}/\delta)}{d}} \cdot \log(T^{\ast})\bigg\}^{q}\\
    & = O(1),
    \end{align*}
    where the first inequality is by \eqref{eq:F-yi1}, \eqref{eq:F-yi2} and the last line is by  Condition~\ref{con:pre-train_fine_tune} which implies $128nm^{\frac{1}{q}} \sqrt{\frac{\log(4n^{2}/\delta)}{d}} \cdot \log(T^{\ast})\leq 1$ and 
    $-\gamma_{j,r}^{(t)} \leq\max\limits_{j,r}\{0,-\gamma_{j,r}^{(0)}\} \leq C_0$
    in Assumption \ref{Ass:ini}.
\end{proof}

    \begin{proof}[Proof of Lemma~\ref{lm: Fyi}]
    For $j = y_{i}$, we have that 
    \begin{align}
    \la \wb_{j,r}^{(t)}, y_{i}\bmu \ra =  \gamma_{j,r}^{(t)}, \label{eq:Fyi1}
    \end{align}
    where the equality is by Lemma~\ref{lm: mubound}. We also have that
    \begin{align}
    \la \wb_{j,r}^{(t)}, \bxi_{i} \ra \leq \overline{\rho}_{j,r,i}^{(t)} + 32nm^{\frac{1}{q}}\sqrt{\frac{\log(4n^{2}/\delta)}{d}} \cdot\log(T^{\ast}), \label{eq:Fyi2}
    \end{align}
    where the inequality is by Lemma~\ref{lm:oppositebound}. 
    If $\max\{\gamma_{j,r}^{(t)}, \overline{\rho}_{j,r,i}^{(t)}\} = O(1)$, we have the following bound 
    \begin{align*}
    F_{j}(\Wb_{j}^{(t)}, \xb_{i}) &= \frac{1}{m}\sum_{r=1}^{m}[\sigma(\la \wb_{j,r}^{(t)}, j\cdot \bmu) + \sigma(\la \wb_{j,r}^{(t)} ,\bxi_{i} \ra)]\\
    &\leq 2\cdot 3^{q} \max_{j,r,i} \bigg\{\gamma_{j,r}^{(t)}, \overline{\rho}_{j,r,i}^{(t)},  32nm^{\frac{1}{q}}\sqrt{\frac{\log(4n^{2}/\delta)}{d}} \cdot \log(T^{\ast})\bigg\}^{q}\\
    &= O(1),
    \end{align*}
    where the first inequality is by \eqref{eq:Fyi1}, \eqref{eq:Fyi2}, and the last line is by  $\max_{j,r,i}\{\gamma_{j,r}^{(t)}, \overline{\rho}_{j,r,i}^{(t)}\} = O(1)$ and  Condition~\ref{con:pre-train_fine_tune}
     ~which implies $128nm^{\frac{1}{q}}\sqrt{\frac{\log(4n^{2}/\delta)}{d}} \cdot \log(T^{\ast})\leq 1$.
\end{proof}

Now, we prove the Proposition \ref{Prop:noise} by induction. 
\begin{proof}[Proof of Proposition~\ref{Prop:noise}] 
    By Assumption \ref{Ass:ini}, the results in Proposition~\ref{Prop:noise} hold at $t = 0$. 
    Suppose that there exists $\tilde{T} \leq T^{*}$ such that the results in Proposition~\ref{Prop:noise} hold for all time $0 \leq t \leq \tilde{T}-1$,
    we aim to prove Proposition~\ref{Prop:noise} also hold for $t = \tilde{T}$.
\begin{enumerate}
    \item Proof of \eqref{eq:gu0002} holds for $t = \tilde{T}$, i.e., $\underline{\rho}^{(t)}_{j,r,i} \geq - 64nm^{\frac{1}{q}}\sqrt{\frac{\log(4n^{2}/\delta)}{d}} \cdot\log(T^{\ast})$ for $t = \tilde{T}$, $r\in [m]$,  $j\in \{\pm 1\}$ and $i\in [n]$:
    
    Notice that $\underline{\rho}_{j,r,i}^{(t)} = 0$, for $j = y_{i}$. Therefore, we only need to consider the case that $j \not= y_{i}$.
    When $ - 64nm^{\frac{1}{q}}\sqrt{\frac{\log(4n^{2}/\delta)}{d}} \cdot\log(T^{\ast}) \leq   \underline{\rho}_{j,r,i}^{(\tilde{T}-1)} \leq - 32nm^{\frac{1}{q}}\sqrt{\frac{\log(4n^{2}/\delta)}{d}} \cdot\log(T^{\ast})$, by Lemma~\ref{lm:oppositebound} we have that 
    \begin{align*}
    \la \wb_{j,r}^{(\tilde{T}-1)} ,\bxi_{i} \ra \leq  \underline{\rho}_{j,r,i}^{(\tilde{T}-1)} + 32nm^{\frac{1}{q}}\sqrt{\frac{\log(4n^{2}/\delta)}{d}} \cdot\log(T^{\ast}) \leq 0,
    \end{align*}
    and thus by \eqref{eq:update_underrho1},
    \begin{align*}
    \underline{\rho}_{j,r,i}^{(\tilde{T})} &= \underline{\rho}_{j,r,i}^{(\tilde{T}-1)} + \frac{\eta}{nm} \cdot \ell_i'^{(\tilde{T}-1)}\cdot \sigma'(\la\wb_{j,r}^{(\tilde{T}-1)}, \bxi_{i}\ra) \cdot \ind(y_{i} = -j)\|\bxi_{i}\|_{2}^{2}\\
    &= \underline{\rho}_{j,r,i}^{(\tilde{T}-1)}\\
    &\geq  - 64nm^{\frac{1}{q}}\sqrt{\frac{\log(4n^{2}/\delta)}{d}}\cdot\log(T^{\ast}),
    \end{align*}
    where the last inequality is by induction hypothesis. 
    When $  - 64nm^{\frac{1}{q}}\sqrt{\frac{\log(4n^{2}/\delta)}{d}}\cdot\log(T^{\ast}) \leq - 32nm^{\frac{1}{q}}\sqrt{\frac{\log(4n^{2}/\delta)}{d}}\cdot\log(T^{\ast}) \leq \underline{\rho}_{j,r,i}^{(\tilde{T}-1)} \leq 0$,
    by Lemma~\ref{lm:oppositebound} we have that 
    \begin{align}\label{C.2Proof1}
    \la \wb_{j,r}^{(\tilde{T}-1)} ,\bxi_{i} \ra \leq  \underline{\rho}_{j,r,i}^{(\tilde{T}-1)} + 32nm^{\frac{1}{q}}\sqrt{\frac{\log(4n^{2}/\delta)}{d}}\cdot\log(T^{\ast}) \leq 32nm^{\frac{1}{q}}\sqrt{\frac{\log(4n^{2}/\delta)}{d}} \cdot\log(T^{\ast}),
    \end{align}
    thus we have that 
    \begin{align*}
    \underline{\rho}_{j,r,i}^{(\tilde{T})} &= \underline{\rho}_{j,r,i}^{(\tilde{T}-1)} + \frac{\eta}{nm} \cdot \ell_i'^{(\tilde{T}-1)}\cdot \sigma'(\la\wb_{j,r}^{(T-1)}, \bxi_{i}\ra) \cdot \ind(y_{i} = -j)\|\bxi_{i}\|_{2}^{2}\\
    &\geq  - 32nm^{\frac{1}{q}}\sqrt{\frac{\log(4n^{2}/\delta)}{d}}\cdot\log(T^{\ast}) - \frac{\eta}{nm} \frac{3\sigma_{p}^{2}d}{2}\sigma'\bigg(32nm^{\frac{1}{q}}\sqrt{\frac{\log(4n^{2}/\delta)}{d}}\cdot \log(T^{\ast})\bigg)\\
    &\geq  - 32nm^{\frac{1}{q}}\sqrt{\frac{\log(4n^{2}/\delta)}{d}}\cdot\log(T^{\ast}) - \frac{\eta}{nm} \frac{3\sigma_{p}^{2}d}{2}q \bigg(32nm^{\frac{1}{q}}\sqrt{\frac{\log(4n^{2}/\delta)}{d}} \cdot\log(T^{\ast})\bigg)\\
    &\geq  - 64nm^{\frac{1}{q}}\sqrt{\frac{\log(4n^{2}/\delta)}{d}} \cdot\log(T^{\ast}),
    \end{align*}
    where we use $\ell_i'^{(\tilde{T}-1)}\geq -1$ and $\|\bxi_{i}\|_{2} \leq \frac{3}{2}\sigma_{p}^{2}d$, and \eqref{C.2Proof1} in the first inequality, 
    the second inequality is by $32nm^{\frac{1}{q}}\sqrt{\frac{\log(4n^{2}/\delta)}{d}}\cdot\log(T^{\ast}) \leq 1$ in Condition~\ref{con:pre-train_fine_tune} 
    , and the third inequality is by  $\eta = O\big(nm/(q\sigma_{p}^{2}d)\big)$ in Condition~\ref{con:pre-train_fine_tune}.

    \item The proof of upper bound of $\overline{\rho}_{j,r,i}^{(t)}$ in \eqref{eq:gu0001} holds for $t = \tilde{T}$: We have
    \begin{align}
        |\ell_i'^{(t)}| &= \frac{1}{1 + \exp\{ y_i \cdot [F_{+1}(\Wb_{+1}^{(t)},\xb_i) - F_{-1}(\Wb_{-1}^{(t)},\xb_i)] \} }\notag\\
        & \leq \exp\{ -y_{i} \cdot [F_{+1}(\Wb_{+1}^{(t)},\xb_i) - F_{-1}(\Wb_{-1}^{(t)},\xb_i)]\}\notag\\
        & \leq \exp\{ -  F_{y_{i}}(\Wb_{y_{i}}^{(t)},\xb_i) + \tilde{C}_0 \}\notag \\
        & \leq \exp\{ -  \frac{1}{m}\sum_{r^{\prime}=1}^{m}[\sigma(\la \wb_{y_i,r^{\prime}}^{(t)}, y_i\cdot \bmu) + \sigma(\la \wb_{y_i,r^{\prime}}^{(t)} ,\bxi_{i} \ra)] + \tilde{C}_0) \}.\label{eq:logit}
    \end{align}
    where the second inequality is due to Lemma~\ref{lm: F-yi} holds, there exists constant $\tilde{C}_0$ such that
    $F_{j}(\Wb_{j}^{(t)},\xb_i)\leq \tilde{C}_0, j=-y_i$,
     and the third inequality is by the definition of $F_{y_i}$. 
    Moreover, recall the update rule of $\gamma_{j,r}^{(t)}$ and $\overline{\rho}_{j,r,i}^{(t)}$
    in \eqref{eq:update_overrho1} and \eqref{eq:update_underrho1},
    \begin{align*}
       \gamma_{j,r}^{(t+1)} &= \gamma_{j,r}^{(t)} - \frac{\eta}{nm} \cdot \sum_{i=1}^n \ell_i'^{(t)} \cdot \sigma'(\la\wb_{j,r}^{(t)}, y_{i} \cdot \bmu\ra)\|\bmu\|_{2}^{2},\\
       \overline{\rho}_{j,r,i}^{(t+1)} &= \overline{\rho}_{j,r,i}^{(t)} - \frac{\eta}{nm} \cdot \ell_i'^{(t)}\cdot \sigma'(\la\wb_{j,r}^{(t)}, \bxi_{i}\ra) \cdot \ind(y_{i} = j)\|\bxi_{i}\|_{2}^{2}.
    \end{align*}
    Assume there exists $\overline{\rho}_{j,r,i}^{(t)} > 2m^{\frac{1}{q}}\log(T^{\ast})$ for some $t\in [0,T^*]$, if this does not hold, $\overline{\rho}_{j,r,i}^{(t)} \leq 2m^{\frac{1}{q}}\log(T^{\ast}) \leq 4m^{\frac{1}{q}}\log(T^{\ast})$  holds for all $t\in[0,T^*]$, which indicates \eqref{eq:gu0001} holds.
    Thus, denote $t_{j,r,i}$ to be the last time $t < T^{*}$ that $\overline{\rho}_{j,r,i}^{(t)} \leq 2m^{\frac{1}{q}}\log(T^{\ast})$.
    Then we have that 
    \begin{align}
    \overline{\rho}_{j,r,i}^{(\tilde{T})} &= \overline{\rho}_{j,r,i}^{(t_{j,r,i})} - \underbrace{\frac{\eta}{nm} \cdot \ell_i'^{(t_{j,r,i})}\cdot \sigma'(\la\wb_{j,r}^{(t_{j,r,i})}, \bxi_{i}\ra) \cdot \ind(y_{i} = j)\|\bxi_{i}\|_{2}^{2}}_{I_{1}}\notag\\
    &\qquad - \underbrace{\sum_{t_{j,r,i}<t<\tilde{T}}\frac{\eta}{nm} \cdot \ell_i'^{(t)}\cdot \sigma'(\la\wb_{j,r}^{(t)}, \bxi_{i}\ra) \cdot \ind(y_{i} = j)\|\bxi_{i}\|_{2}^{2}}_{I_{2}}.\label{eq:zeta}
    \end{align}
    We first bound $I_{1}$ as follows,
    \begin{align*}
    |I_{1}| \leq& qn^{-1}m^{-1}\eta\bigg(\overline{\rho}_{j,r,i}^{(t_{j,r,i})} + 32nm^{\frac{1}{q}}\sqrt{\frac{\log(4n^{2}/\delta)}{d}} \cdot\log(T^{\ast})\bigg)^{q-1} \frac{3}{2}\sigma_{p}^{2}d\\ 
    \leq& q2^{q}n^{-1}m^{-1}\eta [4m^{\frac{1}{q}}\log(T^{\ast})]^{q-1} \sigma_{p}^{2}d\\
    \leq& m^{\frac{1}{q}}\log(T^{\ast}),
    \end{align*}
    where the first inequality is by Lemmas~\ref{lemma:data_innerproducts} and~\ref{lm:oppositebound}, 
    the second inequality is by  $32nm^{\frac{1}{q}}\sqrt{\frac{\log(4n^{2}/\delta)}{d}}\cdot\log(T^{\ast}) \leq 2m^{\frac{1}{q}}\log(T^{\ast})$ and  $\overline{\rho}_{j,r,i}^{(t)} \leq 2m^{\frac{1}{q}}\log(T^{\ast})$, 
    the last inequality is by $\eta \leq nm/\{6q [4m^{\frac{1}{q}}\log(T^{\ast})]^{q-2} \sigma_{p}^{2}d\}$ in Condition~\ref{con:pre-train_fine_tune}.
    
    We then give a bound for $I_{2}$. For $t_{j,r,i}<t<\tilde{T}$ and $y_{i} = j$, we can lower bound $\la\wb_{j,r}^{(t)}, \bxi_{i}\ra$ as follows, 
     \begin{align*}
    \la\wb_{j,r}^{(t)}, \bxi_{i}\ra &\geq  \overline{\rho}_{j,r,i}^{(t)} - 32nm^{\frac{1}{q}}\sqrt{\frac{\log(4n^{2}/\delta)}{d}}\cdot\log(T^{\ast}) \\
    &\geq 2m^{\frac{1}{q}}\log(T^{\ast}) - 32nm^{\frac{1}{q}}\sqrt{\frac{\log(4n^{2}/\delta)}{d}} \cdot\log(T^{\ast})\\
    &\geq m^{\frac{1}{q}}\log(T^{\ast}), 
    \end{align*}
    where the first inequality is by Lemma~\ref{lm:oppositebound}, 
    the second inequality is by $\overline{\rho}_{j,r,i}^{(t)} > 2m^{\frac{1}{q}}\log(T^{\ast})$ due to the definition of $t_{j,r,i}$, 
    the last inequality is by $32nm^{\frac{1}{q}}\sqrt{\frac{\log(4n^{2}/\delta)}{d}}\cdot\log(T^{\ast}) \leq m^{\frac{1}{q}}\log(T^{\ast})$. Similarly, for $t_{j,r,i}<t<\tilde{T}$ and $y_{i} = j$, we can also upper bound $\la\wb_{j,r}^{(t)}, \bxi_{i}\ra$ as follows, 
    
    \begin{align*}
    \la\wb_{j,r}^{(t)}, \bxi_{i}\ra &\leq \overline{\rho}_{j,r,i}^{(t)} + 8nm^{\frac{1}{q}}\sqrt{\frac{\log(4n^{2}/\delta)}{d}}\cdot4\log(T^{\ast})\\
    &\leq 4m^{\frac{1}{q}}\log(T^{\ast}) + 32nm^{\frac{1}{q}}\sqrt{\frac{\log(4n^{2}/\delta)}{d}}\cdot\log(T^{\ast})\\
    &\leq 8m^{\frac{1}{q}}\log(T^{\ast}), 
    \end{align*}
    where the first inequality is by Lemma~\ref{lm:oppositebound}, 
    the second inequality is by induction hypothesis $\overline{\rho}_{j,r,i}^{(t)} \leq 4m^{\frac{1}{q}}\log(T^{\ast})$,
    the last inequality is by $32nm^{\frac{1}{q}}\sqrt{\frac{\log(4n^{2}/\delta)}{d}}\cdot\log(T^{\ast}) \leq m^{\frac{1}{q}}\log(T^{\ast})$. Thus, plugging the upper and lower bounds of $\la\wb_{j,r}^{(t)}, \bxi_{i}\ra$ into $I_{2}$ gives
    \begin{align*}
    |I_{2}| &= \sum_{t_{j,r,i}<t<\tilde{T}}\frac{\eta}{nm} \cdot |\ell_i'^{(t)}|\cdot \sigma'(\la\wb_{j,r}^{(t)}, \bxi_{i}\ra) \cdot \ind(y_{i} = j)\|\bxi_{i}\|_{2}^{2}  \\
    &\leq \sum_{t_{j,r,i}<t<\tilde{T}}\frac{\eta}{nm} \cdot \exp(-\frac{1}{m}\sigma(\la\wb_{j,r}^{(t)}, \bxi_{i}\ra) +\tilde{C}_0) \cdot \sigma'(\la\wb_{j,r}^{(t)}, \bxi_{i}\ra) \cdot \ind(y_{i} = j)\|\bxi_{i}\|_{2}^{2}\\
    &\leq \frac{e^{\tilde{C}_0} \eta T^{*}}{nm}  \exp(-\frac{(4m^{\frac{1}{q}}\log(T^{*}))^{q}}{m\cdot4^{q}})  q(4m^{\frac{1}{q}}\log(T^{*}))^{q-1}2^{q-1}  \frac{3}{2}\sigma_{p}^{2}d\\
    &\leq 0.25 T^{*}\exp(-\frac{(4m^{\frac{1}{q}}\log(T^{*}))^{q}}{m\cdot4^{q}}) \cdot 4m^{\frac{1}{q}}\log(T^{*})\\
    &= 0.25 T^{*}\exp(-\log(T^{*})^{q})\cdot 4m^{\frac{1}{q}}\log(T^{*}) \\
    &\leq m^{\frac{1}{q}}\log(T^{*}),
    \end{align*}
    where the first inequality is by \eqref{eq:logit},
    the second inequality is by Lemma~\ref{lemma:data_innerproducts} and upper and lower bound of $\la\wb_{j,r}^{(t)}, \bxi_{i}\ra$ given above, 
    the third inequality is by  $\eta = O\big( nm/\{e^{\tilde{C}_0} q2^{q+2}[4m^{\frac{1}{q}}\log(T^{\ast})]^{q-2} \sigma_{p}^{2}d\}\big)$ in Condition~\ref{con:pre-train_fine_tune},~
    and the last inequality is due to the fact that $\log(T^{*})^{q} \geq \log(T^{*})$. 
    Plugging the bound of $I_{1}, I_{2}$ into \eqref{eq:zeta} completes the proof for $\overline{\rho}_{j,r,i}^{(t)}$.

    \item 
    Similarly, we can prove $\gamma_{j,r}^{(0)}\leq \gamma_{j,r}^{(t)} \leq 4m^{\frac{1}{q}}\log(T^{*})$ in \eqref{eq:gammaC.2}.
   By $|\gamma_{j,r}^{(0)}|\leq 4m^{\frac{1}{q}}\log(T^{*})$ in Assumption \ref{Ass:ini} and $\gamma_{j,r}^{(t)}$ is increasing, we have
    $\gamma_{j,r}^{(0)}\leq \gamma_{j,r}^{(t)}$ naturally holds.
    Therefore, we only need to prove $\gamma_{j,r}^{(t)}\leq 4m^{\frac{1}{q}}\log(T^{*})$ holds for all~$0 \leq t\leq T^*$:
    Assume there exists $\gamma_{j,r}^{(t)} > 2m^{\frac{1}{q}}\log(T^{*})$ for some $t\in [0,T^*]$, if this does not hold, $\gamma_{j,r}^{(t)} \leq 2m^{\frac{1}{q}}\log(T^{*}) \leq 4m^{\frac{1}{q}}\log(T^{*})$ holds for all $t\in[0,T^*]$ indicates \eqref{eq:gammaC.2} holds.
    Thus, denote $\tilde{t}_{j,r}$ to be the last time $t < T^{*}$ that $\gamma_{j,r}^{(t)}\leq 2m^{\frac{1}{q}}\log(T^{*})$ hold.
    Then we have that 
    \begin{align}
    \gamma_{j,r}^{(\tilde{T})} &= \gamma_{j,r}^{(\tilde{t}_{j,r})} - \underbrace{\frac{\eta}{nm} \cdot \sum_{i=1}^{n} \ell_i'^{(\tilde{t}_{j,r})}\cdot \sigma'(\la\wb_{j,r}^{(\tilde{t}_{j,r})}, y_i\bmu \ra) \cdot \|\bmu\|_{2}^{2}}_{I^{\prime}_{1}}\notag\\
    &\qquad - \underbrace{\sum_{\tilde{t}_{j,r}<t<\tilde{T}}\frac{\eta}{nm} \cdot \sum_{i=1}^{n} \ell_i'^{(t)}\cdot \sigma'(\la\wb_{j,r}^{(t)}, y_i\bmu \ra) \cdot  \|\bmu\|_{2}^{2}}_{I^{\prime}_{2}}.\label{eq:gamma}
    \end{align}
    We first bound $I^{\prime}_{1}$ as follows,
    \begin{align*}
    |I^{\prime}_{1}| \leq \eta n^{-1}m^{-1} n q\left(\gamma_{j,r}^{(\tilde{t}_{j,r})}\right)^{q-1} \|\bmu\|_2^2
    \leq  q m^{-1}\eta (2m^{\frac{1}{q}}\log(T^{*}))^{q-1} \|\bmu\|_2^2
    \leq m^{\frac{1}{q}}\log(T^{*}),
    \end{align*}
    where the first inequality is by Lemma~\ref{lm: F-yi} and \ref{lm: Fyi}, 
    the second inequality is by  $\gamma_{j,r}^{(\tilde{t}_{j,r})} \leq 2m^{\frac{1}{q}}\log(T^{*})$, 
    the last inequality is by $\eta \leq m\cdot2^{q-3}/\{q [4m^{\frac{1}{q}}\log(T^{*})]^{q-2}\|\bmu\|_2^2\}$ in Condition~\ref{con:pre-train_fine_tune}.
    
    We then bound $I^{\prime}_{2}$, we have
    \begin{align}
        |I^{\prime}_{2}| =& \sum_{t_{j,r,i}<t<\tilde{T}}\frac{\eta}{nm} \cdot \sum_{i=1}^{n}|\ell_i'^{(t)}|\cdot \sigma'(\la\wb_{j,r}^{(t)}, y_i\bmu\ra) \cdot  \|\bmu\|_{2}^{2}\nonumber\\
        =& \sum_{t_{j,r,i}<t<\tilde{T}}\frac{\eta}{nm} \cdot \left[\sum_{i=1}^{n}|\ell_i'^{(t)}|\cdot \sigma'(\la\wb_{j,r}^{(t)}, y_i\bmu\ra) \cdot \ind(y_{i} = j) \|\bmu\|_{2}^{2}\right.\nonumber\\
        &\left. \quad\quad\quad\quad\quad\quad+ \sum_{i=1}^{n}|\ell_i'^{(t)}|\cdot \sigma'(\la\wb_{j,r}^{(t)}, y_i\bmu\ra) \cdot \ind(y_{i} \neq j) \|\bmu\|_{2}^{2}  \right]\nonumber\\
        =& \sum_{t_{j,r,i}<t<\tilde{T}}\frac{\eta}{nm} \cdot \sum_{i=1}^{n}|\ell_i'^{(t)}|\cdot \sigma'(\la\wb_{j,r}^{(t)}, y_i\bmu\ra) \cdot \ind(y_{i} = j) \|\bmu\|_{2}^{2}\label{eq:I2prime1}
    \end{align}
    where the third equality is by $\la \wb_{j,r}^{(t)}, y_{i}\bmu \ra = -\gamma_{j,r}^{(t)}\leq0$ in Lemma \ref{lm: F-yi}.
    %
    For $t_{j,r,i}<t<\tilde{T}$, we upper bound $\la\wb_{j,r}^{(t)}, y_i\bmu\ra, j=y_i$ (namely $\la\wb_{y_i,r}^{(t)}, y_i\bmu\ra$) as
    \begin{align*}
    \la\wb_{j,r}^{(t)}, y_i\bmu\ra = y_{i}\cdot j\cdot\gamma_{j,r}^{(t)} =\gamma_{j,r}^{(t)} \leq 4m^{\frac{1}{q}}\log(T^{*})
    \end{align*}
    where the equality is by Lemma~\ref{lm: Fyi}, 
    the second inequality is by induction hypothesis $|\gamma_{j,r}^{(t)}| \leq 4m^{\frac{1}{q}}\log(T^{*})$.
    For $\tilde{t}_{j,r}<t<\tilde{T}$ and $y_{i} = j$, we can also lower bound $\la\wb_{j,r}^{(t)}, y_i\bmu\ra$ (namely $\la\wb_{y_i,r}^{(t)}, y_i\bmu\ra$) as follows, 
    \begin{align*}
        \la\wb_{j,r}^{(t)}, y_i\bmu\ra = y_{i}\cdot j \cdot \gamma_{j,r}^{(t)} =   \gamma_{j,r}^{(t)} \geq 2m^{\frac{1}{q}}\log(T^{*})
    \end{align*}
    where the inequality  by $\gamma_{j,r}^{(t)} > 2m^{\frac{1}{q}}\log(T^{*})$ due to the definition of $\tilde{t}_{j,r}$.
    Thus, plugging the upper bound of $\la\wb_{j,r}^{(t)}, y_i\bmu\ra$ and the lower bound of $\la\wb_{j,r}^{(t)}, y_i\bmu\ra$ when $y_i=j$ into $|I^{\prime}_{2}|$ \eqref{eq:I2prime1} gives
    \begin{align*}
    |I^{\prime}_{2}| &= \sum_{t_{j,r,i}<t<\tilde{T}}\frac{\eta}{nm} \cdot \sum_{i=1}^{n}|\ell_i'^{(t)}|\cdot \sigma'(\la\wb_{j,r}^{(t)}, y_i\bmu\ra) \cdot \ind(y_{i} = j) \cdot   \|\bmu\|_{2}^{2}  \\
    &\leq \sum_{t_{j,r,i}<t<\tilde{T}}\frac{\eta}{nm} \cdot \sum_{i=1}^{n} \exp\big(-\frac{1}{m}\sigma(\la\wb_{j,r}^{(t)}, y_i\bmu\ra) +\tilde{C}_0 \big) \cdot \sigma'(\la\wb_{j,r}^{(t)}, y_i\bmu\ra) \cdot \ind(y_{i} = j) \cdot  \|\bmu\|_{2}^{2}\\
    &\leq \frac{e^{\tilde{C}_0} \eta T^{*}}{m} \exp\bigg(-\frac{(4m^{\frac{1}{q}}\log(T^{*}))^{q}}{m\cdot2^{q}}\bigg)q(4m^{\frac{1}{q}}\log(T^{*}))^{q-1}  \|\bmu\|_{2}^{2}\\
    &\leq 0.25 T^{*}\exp\bigg(-\frac{(4m^{\frac{1}{q}}\log(T^{*}))^{q}}{m\cdot2^{q}}\bigg) \cdot 4m^{\frac{1}{q}}\log(T^{*})\\
    &\leq 0.25 T^{*}\exp\big(-\log(T^{*})^{q}\big)\cdot 4m^{\frac{1}{q}}\log(T^{*}) \\
    &\leq m^{\frac{1}{q}}\log(T^{*}),
    \end{align*}
    where the first inequality is by \eqref{eq:logit},
    the second inequality is by the upper bound of $\la\wb_{j,r}^{(t)}, y_i\bmu\ra$ and the lower bound of $\la\wb_{j,r}^{(t)}, y_i\bmu\ra$ ($y_i=j$) given above, 
    the third inequality is by  $\eta \leq O\big( m/\{4 e^{\tilde{C}_0} q[4m^{\frac{1}{q}}\log(T^{*})]^{q-2}\|\bmu\|_{2}^{2} \}\big)$ in Condition~\ref{con:pre-train_fine_tune},~
    and the last inequality is due to the fact that $\log(T^{*})^{q} \geq \log(T^{*})$. 
    Plugging the bound of $I^{\prime}_{1}, I^{\prime}_{2}$ into \eqref{eq:gamma} completes the proof for $\gamma_{j,r}^{(t)}$. 
\end{enumerate}
    Therefore, Proposition~\ref{Prop:noise} holds for $t= \tilde{T}$, which completes the induction. 
\end{proof}

Finally, the following Lemma~\ref{lm: gradient upbound}, which is based on Proposition~\ref{Prop:noise}, is proved.
\begin{proof}[Proof of Lemma~\ref{lm: gradient upbound}]
   Firstly, we prove that 
    \begin{align}
    -\ell'\big(y_{i}f(\Wb^{(t)}, \xb_{i})) \cdot \|\nabla f(\Wb^{(t)}, \xb_{i}\big)\|_{F}^2 = O(\max\{\|\bmu\|_{2}^{2}, \sigma_{p}^{2}d\}).  \label{eq: exptail2}  
    \end{align}
    Without loss of generality,  
    we suppose that $y_{i} = 1$ and $\xb_{i} = [\bmu^{\top},\bxi_{i}]$. Then we have that 
    \begin{align*}
    \|\nabla f(\Wb^{(t)}, \xb_{i}) \|_{F}&\leq \frac{1}{m}\sum_{j,r}\bigg\|  \big[\sigma'(\la\wb_{j,r}^{(t)},\bmu\ra)\bmu + \sigma'(\la\wb_{j,r}^{(t)}, \bxi_i\ra)\bxi_i\big] \bigg\|_{2}\\
    &\leq \frac{1}{m}\sum_{j,r}\sigma'(\la\wb_{j,r}^{(t)}, \bmu\ra)\|\bmu\|_{2} +   \frac{1}{m}\sum_{j,r}\sigma'(\la\wb_{j,r}^{(t)}, \bxi_{i}\ra)\|\bxi_{i}\|_{2}\\
    &\leq 2q\bigg[F_{+1}(\Wb^{(t)}_{+1}, \xb_{i})\bigg]^{(q-1)/q}\max\{\|\bmu\|_{2}, 2\sigma_{p}\sqrt{d}\} \\
    &\qquad + 2q\bigg[F_{-1}(\Wb^{(t)}_{-1}, \xb_{i})\bigg]^{(q-1)/q}\max\{\|\bmu\|_{2}, 2\sigma_{p}\sqrt{d}\}\\
    &\leq 2q\Bigg\{\bigg[F_{+1}(\Wb^{(t)}_{+1}, \xb_{i})\bigg]^{(q-1)/q} +\big[1+4m^{\frac{1}{q}}\log(T^{*})\big]^{(q-1)/q}\Bigg\}\max\{\|\bmu\|_{2}, 2\sigma_{p}\sqrt{d}\},
    \end{align*}
    where the first and second inequalities are by triangle inequality, the third inequality is by Jensen's inequality and Lemma~\ref{lemma:data_innerproducts}. The last inequality is by Lemma~\ref{lm: F-yi}, i.e.,
     $\frac{1}{m}\sum_{j=-y,r\in [m]}\sigma(\la\wb_{j,r}^{(t)}, y\bmu\ra) \leq \frac{1}{m}\sum_{j=-y,r\in [m]}\sigma(-\gamma_{j,r}^{(t)}) \leq \frac{1}{m}\sum_{j=-y,r\in [m]}\sigma(-\gamma_{j,r}^{(0)}) \leq \max_{j,r} \{0,(-\gamma^{(0)}_{j,r})^q\} \ll  4m^{\frac{1}{q}}\log(T^{*})$,  
    therefore, $F_{-1}(\Wb^{(t)}_{-1}, \xb_{i}) \leq 1+4m^{\frac{1}{q}}\log(T^{*})$ by Lemma~\ref{lm: F-yi}. 
    Denote $A = F_{+1}(\Wb_{+1}^{(t)}, \xb_{i})$, then we have $A \geq 0$.
    Thus,
    \begin{align*}
    &-\ell'\big(y_{i}f(\Wb^{(t)},\xb_{i})\big) \cdot \|\nabla f(\Wb^{(t)}, \xb_{i})\|_{F}^2\\
    &\qquad\leq -\ell'(A -1-4m^{\frac{1}{q}}\log(T^{*}))\cdot4q^2\bigg\{A^{(q-1)/q}+ \big[1+4m^{\frac{1}{q}}\log(T^{*})\big]^{(q-1)/q}\bigg\}^{2}\cdot \max\{\|\bmu\|_{2}, 2\sigma_{p}\sqrt{d}\}^{2}\\
    &\qquad = -4q^{2}\ell'(A-1-4m^{\frac{1}{q}}\log(T^{*}))\bigg\{A^{(q-1)/q} + \big[1+4m^{\frac{1}{q}}\log(T^{*})\big]^{(q-1)/q}\bigg\}^{2}\cdot \max\{\|\bmu\|_{2}^{2},4 \sigma_{p}^{2}d\}\\
    &\qquad\leq \bigg\{\max_{z>0}-4q^{2}\ell'(z-1-4m^{\frac{1}{q}}\log(T^{*}))\{z^{(q-1)/q} + \big[1+4m^{\frac{1}{q}}\log(T^{*})\big]^{(q-1)/q}\}^{2}\bigg\} \cdot\max\{\|\bmu\|_{2}^{2},4 \sigma_{p}^{2}d\}\\
    &\qquad \overset{(i)}{=} O(\max\{\|\bmu\|_{2}^{2}, \sigma_{p}^{2}d\}),
    \end{align*}
    where (i) is by $\max_{z\geq 0}-4q^{2}\ell'(z-1-4m^{\frac{1}{q}}\log(T^{*}))(z^{(q-1)/q} + \big[1+4m^{\frac{1}{q}}\log(T^{*})\big]^{(q-1)/q})^{2}< \infty$ because $\ell'$ has an exponentially decaying tail. 
    Now we can upper bound the gradient norm $\|\nabla L_{S}(\Wb^{(t)})\|_{F}$ as follows,
    \begin{align*}
    \|\nabla L_{S}(\Wb^{(t)})\|_{F}^{2} &\leq \bigg[\frac{1}{n}\sum_{i=1}^{n}\ell'\big(y_{i}f(\Wb^{(t)},\xb_{i})\big)\|\nabla f(\Wb^{(t)}, \xb_{i})\|_{F}\bigg]^{2}\\
    &\leq \bigg[\frac{1}{n}\sum_{i=1}^{n}\sqrt{-O(\max\{\|\bmu\|_{2}^{2}, \sigma_{p}^{2}d\})\ell'\big(y_{i}f(\Wb^{(t)},\xb_{i})\big)}\bigg]^{2}\\
    &\leq O(\max\{\|\bmu\|_{2}^{2}, \sigma_{p}^{2}d\}) \cdot \frac{1}{n}\sum_{i=1}^{n}-\ell'\big(y_{i}f(\Wb^{(t)},\xb_{i})\big)\\
    &\leq O(\max\{\|\bmu\|_{2}^{2}, \sigma_{p}^{2}d\}) L_{S}(\Wb^{(t)}),
    \end{align*}
    where the first inequality is by triangle inequality, the second inequality is by \eqref{eq: exptail2}, the third inequality is by Cauchy-Schwartz inequality and the last inequality is due to the property of the cross entropy loss $-\ell' \leq \ell$.
\end{proof}

\bibliographystyle{ims}
\bibliography{ref}

\end{document}